\documentclass[12pt,paper=letter]{scrreprt}

\RequirePackage{lastpage}
\RequirePackage{multicol}
\RequirePackage{array}
\RequirePackage{trimspaces}
\RequirePackage[bottom,multiple]{footmisc}
\PassOptionsToPackage{dvipsnames}{xcolor}
\RequirePackage{pdfpages}
\RequirePackage{xr}
\RequirePackage{fancyhdr}
\RequirePackage[footnotesize,bf]{caption}
\RequirePackage{emptypage}
\RequirePackage{url}
\RequirePackage{graphicx}
\RequirePackage{lmodern}
\RequirePackage[T1]{fontenc}
\RequirePackage{microtype}
\RequirePackage{natbib}
\RequirePackage{amsthm}
\newtheorem{theorem}{Theorem}[chapter]
\newtheorem{lemma}[theorem]{Lemma}
\newtheorem{corollary}[theorem]{Corollary}
\newtheorem{proposition}{Proposition}[chapter]
\newtheorem{remark}{Remark}[chapter]
\newtheorem{definition}{Definition}[chapter]
\newtheorem{example}{Example}[chapter]

\usepackage{acro}
\usepackage{bm}
\usepackage[utf8]{inputenc}
\usepackage{mwe}
\usepackage{graphicx}
\usepackage{amsfonts,amsmath,amssymb}
\usepackage{enumitem}
\usepackage{mathtools}
\usepackage{algpseudocode,algorithm}
\usepackage{nicefrac}
\usepackage{color}
\usepackage[colorlinks,citecolor=blue,urlcolor=blue,linkcolor=blue,pdfborder={0 0 0}]{hyperref}
\usepackage{url}
\usepackage[scr = boondoxo, scrscaled = 1]{mathalfa}
\usepackage[capitalize]{cleveref}

\crefname{equation}{}{}
\crefname{nlem}{Lemma}{Lemmas}
\crefname{assumption}{Assumption}{Assumptions}
\let\oldproposition\proposition
\renewcommand{\proposition}{%
  \crefalias{theorem}{nprop}%
  \oldproposition}
  \let\oldremark\remark
\renewcommand{\remark}{%
  \crefalias{theorem}{nrem}%
  \oldremark}
\let\oldcorollary\corollary
\renewcommand{\corollary}{%
  \crefalias{theorem}{ncor}%
  \oldcorollary}
\let\oldlemma\lemma
\renewcommand{\lemma}{%
  \crefalias{theorem}{nlem}%
  \oldlemma}
 \let\olddefinition\definition
 \renewcommand{\definition}{%
   \crefalias{theorem}{ndef}%
   \olddefinition}
\usepackage{longtable} %
\usepackage{fullpage}

\newcommand{\eps}{\epsilon}

\newcommand{\dee}{\mathop{\mathrm{d}\!}}
\newcommand{\dno}{\mathrm{d}}
\newcommand{\dt}{\,\dee t}
\newcommand{\ddt}{\frac{\dno}{\dno t}}

\newcommand{\dx}{\,\dee x}
\newcommand{\dy}{\,\dee y}
\newcommand{\dz}{\,\dee z}

\newcommand{\dr}{\,\dee r}

\newcommand{\domega}{\,\dee \omega}

\def\balign#1\ealign{\begin{align}#1\end{align}}
\def\baligns#1\ealigns{\begin{align*}#1\end{align*}}
\def\balignat#1\ealign{\begin{alignat}#1\end{alignat}}
\def\balignats#1\ealigns{\begin{alignat*}#1\end{alignat*}}
\def\bitemize#1\eitemize{\begin{itemize}#1\end{itemize}}
\def\benumerate#1\eenumerate{\begin{enumerate}#1\end{enumerate}}

\newenvironment{talign*}
 {\csname align*\endcsname}
 {\endalign}
\newenvironment{talign}
 {\csname align\endcsname}
 {\endalign}

\def\balignst#1\ealignst{\begin{talign*}#1\end{talign*}}
\def\balignt#1\ealignt{\begin{talign}#1\end{talign}}
\newcommand{\qtext}[1]{\quad\text{#1}\quad}

\def\Ito{It\^o\xspace}

\def\mbb#1{\mathbb{#1}}

\newcommand{\sss}[1]{\scriptscriptstyle{#1}}

\def\reals{\mathbb{R}} %
\def\R{\mathbb{R}}
\newcommand{\RR}{\mathbb{R}}
\newcommand{\Rd}{{\R}^d}

\def\naturals{\mathbb{N}} %

\def\complex{\mathbb{C}} %

\def\<{\left\langle} %
\def\>{\right\rangle}

\def\iff{\Leftrightarrow}

\def\defeq{\triangleq} %
\def\defn{\defeq} %
\def\defined{\defeq} %

\newcommand{\textfrac}[2]{{\textstyle\frac{#1}{#2}}}

\newcommand{\psdge}{\succcurlyeq}

\newcommand\ci{\perp\!\!\!\perp}

\def\norm#1{\left\|{#1}\right\|} %
\newcommand{\dualnorm}[1]{\norm{#1}^*} %
\newcommand{\onenorm}[1]{\norm{#1}_1} %
\newcommand{\twonorm}[1]{\norm{#1}_2} %
\newcommand{\infnorm}[1]{\norm{#1}_{\infty}} %
\def\staticnorm#1{\|{#1}\|} %
\newcommand{\inner}[2]{\langle{#1},{#2}\rangle} %
\def\indic#1{\mbb{I}\left[{#1}\right]} %
\def\polylog{\operatorname{polylog}}
\def\E{\mbb{E}} %
\def\Earg#1{\E\left[{#1}\right]}
\def\Esub#1{\E_{#1}}

\def\P{\mbb{P}} %

\newcommand{\Gsn}{\mathcal{N}}

\newcommand{\grad}{\nabla} %
\newcommand{\Hess}{\nabla^2} %

\newcommand{\toas}{\stackrel{a.s.}{\to}}

\def\KL#1#2{\textnormal{KL}({#1}\Vert{#2})}

\newcommand{\iid}{\textrm{i.i.d.}\xspace}
\newcommand{\dist}{\sim}
\newcommand{\distiid}{\overset{\textrm{\tiny\iid}}{\dist}}

\providecommand{\argmax}{\mathop\mathrm{arg max}} %
\providecommand{\argmin}{\mathop\mathrm{arg min}}

\def\supp#1{\mathrm{supp}({#1})}

\newtheorem{assumption}{Assumption}[chapter]

\newcommand{\Div}[1]{\grad\cdot{#1}} %

\newcommand{\normal}{\mathrm{n}} %

\newcommand{\evalat}[2]{{#1}\big|_{#2}}

\newcommand{\Pset}[0]{\mathcal{P}} %
\newcommand{\p}{p} %
\newcommand{\xset}[0]{\mathcal{X}} %
\newcommand{\gset}[0]{\mathcal{G}} %
\newcommand{\steinset}[0]{\steinsetarg{}} %
\newcommand{\steinsetarg}[1]{\gset_{\norm{\cdot}_{#1}}} %
\newcommand{\gsteinset}[2]{\gset_{\norm{\cdot}_{#1},Q,{#2}}} %
\newcommand{\sset}{\sigma} %
\newcommand{\equisubs}[2]{{[#1]\choose #2}}

\newcommand{\operator}{\mathcal{T}} %
\newcommand{\opsub}[1]{\mathcal{T}_{#1}} %
\newcommand{\opsubarg}[3]{(\opsub{#1}{#2})({#3})} %

\newcommand{\diffusion}[1]{\mathcal{T}{#1}} %

\newcommand{\TP}{\mathcal{T}_P}
\newcommand{\TPpsi}{\mathcal{T}_{P,\psi}}

\newcommand{\AP}{\mathcal{A}_P}

\newcommand{\nbr}[1][x]{\mathcal{N}_{#1}} %

\newcommand{\langevin}{\mathcal{T}_P} %
\newcommand{\ssdn}[1]{\mathcal{SS}({#1},\operator{},\gset_n)} %
\newcommand{\langssdn}[1]{\mathcal{SS}({#1},\langevin{},\gset_n)} %
\newcommand{\stein}[3]{\mathcal{S}({#1},{#2},{#3})} %
\newcommand{\opstein}[2]{\stein{#1}{\operator{}}{#2}} %
\newcommand{\langstein}[2]{\stein{#1}{\langevin{}}{#2}} %
\newcommand{\diffstein}[2]{\stein{#1}{\diffusion{}}{#2}} %
\newcommand{\pvar}[0]{z}%
\newcommand{\PVAR}[0]{\MakeUppercase{\pvar}} %

\newcommand{\process}[2]{\PVAR_{#1,#2}} %

\newcommand{\sr}{s_r} %

\newcommand{\KSD}{\ensuremath{\operatorname{KSD}}\xspace}
\renewcommand{\k}{k}
  
\newcommand{\kb}{\k} %
\newcommand{\Kb}{K} %

\newcommand{\ks}{\k_{\p}}%

\newcommand{\embedtozero}[1][\Kb]{\Pset_{\!\sss{#1,0}}}

\newcommand{\dd}{\mathrm{d}}

\def\F{\mathcal{F}}

\def\H{\mathcal{H}}

\newcommand{\Dset}{\mathscr{D}}
\newcommand{\DL}[1]{\Dset^{#1}_{\sss{L^1}}}

\newcommand{\HK}{\H_{\k}}
\newcommand{\Hks}{\H_{\ks}}

\newcommand{\tilt}{a}
\newcommand{\growth}{\theta}

\def\Id{\mathrm{Id}} %

\newcommand{\EE}{\mathbb{E}}	%

\newcommand{\mcC}{\mathcal{C}}

\newcommand{\convP}{\overset{P}{\to}}

\newcommand{\Lp}[1][p]{\mathcal{L}^{#1}}

\newcommand{\FTop}[0]{\mathscr{F}} %

\newcommand{\feat}{\Phi} %
\newcommand{\statfeat}{F}  %
\newcommand{\statfeatscalar}{f}  %
\newcommand{\statkernel}{\Psi} %
\newcommand{\basekernel}{k}  %

\newcommand{\FSD}{\ensuremath{\operatorname{\mathrm{\Phi}SD}}\xspace}

\newcommand{\RFSD}{\ensuremath{\operatorname{R\mathrm{\Phi}SD}}\xspace}

\newcommand{\isdist}{\nu}

\newcommand{\approxdist}[1][]{Q_{N}}

\newcommand{\approxdensity}[1][]{q_{}}
\newcommand{\targetdist}{P}

\newcommand{\bnprop}{\begin{proposition}}
\newcommand{\enprop}{\end{proposition}}
\newcommand{\bexa}{\begin{example}}
\newcommand{\eexa}{\end{example}}

\newcommand{\bnrmk}{\begin{remark}}
\newcommand{\enrmk}{\end{remark}}

\newcommand{\bnumdefn}{\begin{definition}}
\newcommand{\enumdefn}{\end{definition}}

\DeclareAcronym{cd}{short = CD, long = contrastive divergence}
\DeclareAcronym{clt}{short = CLT, long = central limit theorem}
\DeclareAcronym{elbo}{short = ELBO, long = evidence lower bound}
\DeclareAcronym{gan}{short = GAN, long = generative adversarial network}
\DeclareAcronym{ksd}{short = KSD, long = kernel Stein discrepancy}
\DeclareAcronym{map}{short = MAP, long = maximum a posteriori}
\DeclareAcronym{mle}{short = MLE, long = maximum likelihood estimation}
\DeclareAcronym{mcmc}{short = MCMC, long = Markov chain Monte Carlo}
\DeclareAcronym{mmd}{short = MMD, long = maximum mean discrepancy}
\DeclareAcronym{rkhs}{short = RKHS, long = reproducing kernel Hilbert space}
\DeclareAcronym{sde}{short = SDE, long = stochastic differential equation}
\DeclareAcronym{ssd}{short = SSD, long = stochastic Stein discrepancy}
\DeclareAcronym{svgd}{short = SVGD, long = Stein variational gradient descent}
\DeclareAcronym{ula}{short = ULA, long = unadjusted Langevin algorithm}
\DeclareAcronym{vae}{short = VAE, long = variational autoencoder}

\newcommand{\vecf}{v}
\newcommand{\vv}[1]{#1}
\newcommand{\df}{\mathrm{d}}
\def\bb#1\ee{\begin{align*}#1\end{align*}}
\def\bbb#1\eee{\begin{align}#1\end{align}}

\newcommand{\ff}{\phi}

\newcommand{\vx}{x}

\newcommand{\KLL}{\mathrm{KL}}
\newcommand{\vtheta}{\theta}

\usepackage{authblk}

\author[1]{Qiang Liu}
\author[2]{Lester Mackey}
\author[3,4]{Chris Oates}
\affil[1]{University of Texas at Austin}
\affil[2]{Microsoft Research}
\affil[3]{Newcastle University}
\affil[4]{The Alan Turing Institute}
\title{Probabilistic Inference and Learning with Stein's Method}

\begin{document}

\maketitle

\begin{abstract}
This monograph provides a rigorous overview of theoretical and methodological aspects of probabilistic inference and learning with Stein's method.
Recipes are provided for constructing Stein discrepancies from Stein operators and Stein sets, and properties of these discrepancies such as computability, separation, convergence detection, and convergence control are discussed.
Further, the connection between Stein operators and Stein variational gradient descent is set out in detail.
The main definitions and results are precisely stated, and references to all proofs are provided.    
\end{abstract}

\clearpage

\tableofcontents

\clearpage

\chapter{Introduction}
\label{chap: intro}

For over 50 years, Stein's method has provided a powerful \emph{theoretical} tool in probability theory, enabling explicit upper bounds on the convergence of random variables to their limiting distributions  \citep{stein1972bound}.
Several excellent references summarize this line of research and we refer the interested reader to \citet{barbour2005introduction,chen2010normal,Ross2011,LeyReSw2017}.
Our focus is rather on Stein's method as a \emph{methodological} tool that gives rise to new and powerful algorithms for probabilistic inference and learning. 
Our exposition is motivated by the introduction of computable Stein discrepancies in \citet{gorham2015measuring}, but the applications we explore have roots dating back to the early work of \citet{stein1986approximate,stein2004use}. 
A recent high-level survey of these developments can be found in \citet{anastasiou2021stein}, but otherwise the main results are distributed across the literature.
The present text therefore aims to provide a singular reference for rigorous definitions and results relevant to probabilistic inference and learning with Stein’s method.

\section{Stein's Method in a Nutshell}

Suppose $P$ is a probability distribution of interest supported on an appropriate space $\mathcal{X}$; we are especially interested in settings where most expectations under $P$ are not easily computed. 
Stein's method can be thought of as a recipe to measure how well integration under $P$ is approximated by integration under a given surrogate distribution $Q$.
For example, $P$ may represent the intractable posterior distribution arising in a Bayesian analysis, while $Q$ may represent a \emph{sample approximation} $Q = \frac{1}{n}\sum_{i=1}^n\delta_{x_i}$ with discrete points $\{x_i\}_{i=1}^n \subset \mathcal{X}$. 
To measure how well $Q$ approximates $P$, one would ideally compute an \emph{integral probability metric} \citep{muller1997integral} that measures the maximum discrepancy 
\begin{align*}
\sup_{f\in\mathcal{F}} \left|\E_Q[f(X)]-\E_P[f(X)]\right|
=
\sup_{f\in\mathcal{F}}\left|\frac{1}{n}\sum_{i=1}^nf(x_i) -\int f(x) \; \mathrm{d}P(x)\right|
\end{align*}
between sample and target expectations over a class of test functions $\mathcal{F}$. 
However, most standard integral probability metrics (including the Wasserstein metrics, Dudley metric, and maximum mean discrepancies described in \cref{sec: divergences}) require explicit integration under $P$, rendering them incomputable for our target distributions of interest. 
To address this challenge, we will adapt an ingenious idea that Charles M. Stein (1920-2016) introduced to quantify approximation error in the \ac{clt}. 
It is important to note that our presentation is adapted for \emph{methodological} applications of Stein's method, and the aspects that we emphasize are rather different from the presentations found in the aforementioned references on \emph{theoretical} applications of Stein's method.

At the heart of \emph{Stein's method} are a \emph{Stein set} $\mathcal{G}$ and a \emph{Stein operator} $\mathcal{T}_P : \mathcal{G} \rightarrow \mathcal{L}^1(P)$, which together generate mean-zero expectations under $P$:
\begin{align}
    \int (\mathcal{T}_P g)(x) \; \mathrm{d}P(x) = 0 
    \qtext{for all}
    g\in\mathcal{G}. \label{eq: stein identity}
\end{align}
Here $\mathcal{L}^1(P)$ denotes the set of functions that are integrable with respect to $P$, to ensure the integral in \eqref{eq: stein identity} is well-defined.
For instance, if $P$ is the standard normal distribution in dimension $d = 1$, and $\mathcal{G}$ is the set of all monomials 
$g(x) = x^m$ with $m \in \naturals$,
then 
\begin{align}
(\mathcal{T}_P g)(x) \defeq g'(x) - x g(x) \label{eq: intro stein op}
\end{align}
defines a valid Stein operator. Indeed we can explicitly compute the expectation 
\begin{align*}
    \int (\mathcal{T}_P g)(x) \; \mathrm{d}P(x) 
    & = \int m x^{m-1} - x^{m+1} \; \mathrm{d}P(x) \\
    & = \left\{ \begin{array}{ll} 0 & m \; \text{is even} \\ m (m-2)!! - m!! & m \; \text{is odd} \end{array} \right\} = 0
\end{align*}
where $n!!$ denotes the double factorial, that is, the product of all numbers from 1 to $n$ that have the same parity as $n$.

Given any valid combination of Stein set $\mathcal{G}$ and Stein operator $\mathcal{T}_P$, one can construct a \emph{Stein discrepancy}
\begin{align}
    \mathcal{S}(Q,\mathcal{T}_P,\mathcal{G}) \defeq \sup_{\substack{g \in \mathcal{G}}}
    \left|\int (\mathcal{T}_P g)(x) \; \mathrm{d}Q(x)\right|  \label{eq: first stein discrep def}
\end{align}
to measure the extent to which $Q$ differs from $P$.
The topology induced by a Stein discrepancy depends critically on the choice of $\mathcal{G}$ and $\mathcal{T}_P$, and an important challenge is to understand how to construct Stein discrepancies \eqref{eq: first stein discrep def} that are practically useful. 
To be slightly more precise, one can ask whether $\mathcal{G}$ and $\mathcal{T}_P$ can be chosen so that some (or all) of the following desiderata hold:
\begin{itemize}
    \item \textbf{Separation}:  $\mathcal{S}(Q,\mathcal{T}_P , \mathcal{G}) = 0$ if and only if $Q$ and $P$ are equal (i.e., the Stein discrepancy is a valid statistical divergence).
    \item \textbf{Convergence Detection}: $\mathcal{S}(Q_n,\mathcal{T}_P , \mathcal{G}) \rightarrow 0$ whenever $Q_n$ converge to $P$, in a sense to be specified.  
    \item \textbf{Convergence Control}: $\mathcal{S}(Q_n,\mathcal{T}_P , \mathcal{G}) \nrightarrow 0$ whenever $Q_n$ fails to converge to $P$, in a sense to be specified.
    \item \textbf{Computability}:  For any distribution $Q_n = \sum_{i=1}^n w_i \delta_{x_i}$ with finite support set $\{x_i\}_{i=1}^n \subset \mathcal{X}$, the Stein discrepancy $\mathcal{S}(Q_n,\mathcal{T}_P,\mathcal{G})$ can be explicitly computed.
\end{itemize}

\noindent Several positive and negative results concerning the properties of Stein discrepancies have appeared over the last decade, but they are somewhat scattered in the probability, statistics, and machine learning literature. 
One aim of the present manuscript is to gather together rigorous definitions and results in a singular reference text, with references to proofs of all results provided.

\section{Illustrating Stein's Method as a Methodological Tool}
\label{sec: Stein as a tool}

As a first illustration of the methodological potential of Stein's method, consider the standard setting of Bayesian posterior inference on $\mathcal{X} = \mathbb{R}^d$  \citep{GelmanCaStDuVeRu2014}.  
Given a prior density $\pi$ and a likelihood $\mathcal{L}$, the posterior distribution $P$ has a density $p$ that can be characterized up to a normalizing constant via Bayes' theorem:
\begin{align}
    p(x) \propto \pi(x) \mathcal{L}(x).
\end{align}
Unfortunately, in most cases, the normalizing constant, also called the \emph{marginal likelihood}, 
\begin{align}
    \int \pi(x) \mathcal{L}(x) \; \mathrm{d}x \label{eq: marginal likelihood}
\end{align}
is unavailable in closed form, and exact integration under $p$ is intractable for most functions of interest.  The same form of intractability naturally arises in the settings of maximum likelihood estimation \citep{geyer1991markov}, post-selection inference \citep{tian2016magic}, and probabilistic inference more generally \citep{neal1993probabilistic}. 
To approximate $P$ and its expectations, a plethora of practical options are now available including \ac{mcmc} \citep{brooks2011handbook}, approximate \ac{mcmc} \citep[see, e.g.,][]{welling2011bayesian}, variational inference \citep{blei2017variational}, and  quadrature. However, until recently, practitioners lacked tools suitable for measuring and comparing the quality of such diverse and potentially inconsistent sample approximations. 

To fill this gap, \citet{gorham2015measuring} introduced the notion of a Stein discrepancy and demonstrated its promise as a practical sample quality measure.
They began by deriving a multivariate and $P$-targeted generalization of Stein's original operator for the standard normal \cref{eq: intro stein op}: 
\begin{align}\label{eq:langevin-stein}
    (\mathcal{T}_P g)(x) \defeq (\nabla \cdot g)(x) + g(x) \cdot (\nabla \log p)(x).
\end{align}
This \emph{Langevin} Stein operator generates mean-zero functions under $P$ under mild conditions (see \cref{chap: Stein operators}) and 
can be computed even when the normalizing constant \eqref{eq: marginal likelihood} is unknown. 
Indeed, the operator \cref{eq:langevin-stein} depends on $P$ only via the 
\emph{Stein score}, 
\begin{align*}
    s_p(x) & \defeq (\nabla \log p)(x)
    = (\nabla \log \pi)(x) + (\nabla \log \mathcal{L})(x),
\end{align*}
which is a function only of the prior and likelihood derivatives. 
Remarkably, when coupled with a suitable Stein set $\mathcal{G}$ of vector fields $g : \mathbb{R}^d \rightarrow \mathbb{R}^d$, the resulting Stein discrepancy simultaneously enjoys all of the aforementioned properties of separation, convergence detection, convergence control, and computability (see \cref{chap: Stein discrepancies}).

Stein discrepancies are now widely used to measure sample quality, but the methodological implications of Stein's method are broader, giving rise to new effective solutions for tasks such as gradient estimation, training generative models, goodness-of-fit testing, bias correction, sample quality improvement, and sampling; some of the diverse problems in probabilistic inference and learning that are now tackled using Stein's method are outlined in \cref{sec:ksd_application}.

\section{Outline of the Monograph}

This monograph aims for a systematic and rigorous presentation of probabilistic inference and learning using Stein's method.
Throughout we will assume the reader has taken undergraduate-level courses in real analysis, but we will not assume familiarity with probability, measure theory, or functional analysis; the necessary concepts will introduced in \cref{chap: background}.
An advanced reader may prefer to initially skip \cref{chap: background} and return to it only when required.
Stein operators are discussed in detail in \cref{chap: Stein operators} and their associated Stein discrepancies are discussed in \cref{chap: Stein discrepancies}.
Recent developments in Stein dynamics are covered in \cref{chap: Stein dynamics}.
Although it is not possible to cover \emph{all} of the recent and emerging applications of Stein's method for probabilistic inference and learning, we have selected some notable examples to present in \cref{sec:ksd_application}.

\section{Acknowledgments}

QL was supported by NSF CAREER 1846421, the Institute for Foundations of Machine Learning (IFML), and Office of Naval Research.  
CJO was supported by EPSRC EP/W019590/1, the Alan Turing Institute, and a Philip Leverhulme Prize PLP-2023-004.

\section{Notation}

In the following table, the generic function $f$ is scalar-valued, $g$ is vector-valued, and $A$ and $B$ are matrix-valued.

\begin{longtable*}{p{.20\textwidth} p{.70\textwidth} } 
$\defeq$ & defined as being equal \\
$[n]$ & $\{1  , \dots , n \}$ \\
$\binom{[n]}{m}$ & $\{\sset\subseteq [n] : |\sset| = m\}$, all size $m$ subsets of $[n]$ \\
$\|\cdot\|$ & a generic norm on $\mathbb{R}^d$ \\
$\mathbb{I}_S$ & indicator function $\mathbb{I}_S(x) = 1$ if $x \in S$, else $\mathbb{I}_S(x) = 0$ \\
$\mathcal{P}$ & the set of all probability measures on a measurable space that will be clear from context \\
$\|\cdot\|_p$ & the norm $\|x\|_p \defeq (|x_1|^p + \dots + |x_d|^p)^{1/p}$ on $\mathbb{R}^d$, $p \in [1,\infty)$ \\
$\|\cdot\|_\infty$ & the norm $\|x\|_\infty \defeq \max\{ |x_1| , \dots , |x_d| \}$ on $\mathbb{R}^d$ \\
$\|\cdot\|_F$ & the Frobenius norm $\|A\|_F \defeq (\sum_{i,j} A_{i,j}^2)^{1/2}$ \\
$\|\cdot\|^*$ & the dual norm $\|x\|^* \defeq \sup_{y \in \mathbb{R}^d, \|y\| = 1} \langle x , y \rangle$ on $\mathbb{R}^d$ \\
& \emph{or} $\|A\|^* \defeq  \sup_{y \in \mathbb{R}^d , \|y\| = 1} \|A y\|^*$ on $\mathbb{R}^{d \times d}$ \\
$\|\cdot\|_{\mathrm{op}}$ & the operator norm $\|L\|_{\mathrm{op}} \defeq \sup_{\|x\|_{\mathcal{X}} \leq 1} |L(x)|_{\mathcal{Y}}$, for a continuous linear operator $L : \mathcal{X} \rightarrow \mathcal{Y}$ from a normed space $(\mathcal{X},\|\cdot\|_{\mathcal{X}})$ to a semi-normed space $(\mathcal{Y},|\cdot|_{\mathcal{Y}})$ \\
$a \cdot b$ & the dot product between $a,b \in \mathbb{R}^d$ \\
$A:B$ & the double dot product $\mathrm{tr}(AB^\top)$ \\
$e_j$ & the $j$th basis vector of $\mathbb{R}^d$ \\
$\partial_j f$ & the $j$th partial derivative $x \mapsto \partial_{x_j} f(x)$ \\
$\partial^\alpha f$ & the mixed partial derivative $\partial_1^{\alpha_1} \dots \partial_d^{\alpha_d} f $ \\
$\nabla f$ & the gradient $[\nabla f]_i \defeq \partial_i f$  \\
$\nabla g$ & the gradient $[\nabla g]_{i,j} \defeq \partial_j g_i$ \\
$\nabla \cdot g$ & the divergence $\sum_i \partial_i g_i$ \\
$\nabla \cdot A$ & the vector divergence $[\nabla \cdot A]_j \defeq \nabla \cdot (A^\top e_j)$ \\
$\Delta f$ & the Laplacian $\nabla \cdot \nabla f$  \\
$(\nabla^k f)(x)$ & the $k$th order gradient of $f$ at $x \in \mathbb{R}^d$ \\
&  i.e. a function $(v_1 , \dots , v_k) \mapsto (\nabla^k f)(x)[v_1 , \dots , v_k]$ \\
$C(\mathbb{R}^d)$ & the set of continuous $f : \mathbb{R}^d \rightarrow \mathbb{R}$ \\
$C(\mathbb{R}^d,\mathbb{R}^p)$ & the set of continuous $g : \mathbb{R}^d \rightarrow \mathbb{R}^p$ \\
$C_0(\R^d)$ & the set of continuous $f : \mathbb{R}^d \rightarrow \mathbb{R}$ vanishing at infinity on $\mathbb{R}^d$ \\
$C^m(\mathbb{R}^d)$ & the set of $f : \mathbb{R}^d \rightarrow \mathbb{R}$ with $\nabla^k f$ continuous for all $k \leq m$ \\
$C^{(m,m)}(\mathbb{R}^d)$ & the set of matrix-valued $A$ such that $\partial_x^\alpha \partial_y^\beta A(x,y)$ is continuous for multi-indices $\alpha$, $\beta$ satisfying $\|\alpha\|_1, \|\beta\|_1 \leq m$ \\
$C_b^m(\mathbb{R}^d)$ & the set of $m$ times continuously differentiable $f : \mathbb{R}^d \rightarrow \mathbb{R}$ for which derivatives of all orders $\leq m$ are bounded \\
$C^{(m,m)}_b(\mathbb{R}^d)$ & the set of matrix-valued $A$ such that $\partial_x^\alpha \partial_y^\beta A(x,y)$ is bounded and continuous for multi-indices $\alpha$, $\beta$ satisfying $\|\alpha\|_1, \|\beta\|_1 \leq m$ \\
$M_0(g)$ & $\sup_{x} \|g(x)\|_{\mathrm{op}}$ \\
$M_k(g)$ & the $k$th order Lipschitz constant \\
 & $$\sup_{x \neq y} \frac{ \|(\nabla^{k-1} g)(x) - (\nabla^{k-1} g)(y)\|_{\mathrm{op}}}{\|x-y\|_2}$$ \\
$F_k(A)$ & $\sup_{x, \|v_1\|_2 = 1 , \dots , \|v_k\|_2 = 1} \| (\nabla^k A)(x)[v_1 , \dots , v_k] \|_F$ \\
$M_1^*(A)$ & the dual Lipschitz constant \\
 & $$\sup_{x \neq y} \frac{ \| A(x) - A(y)\|_{\mathrm{op}}^*}{\|x-y\|_2}$$ \\
$\FTop(f)$ & the generalized Fourier transform of $f : \mathbb{R}^d \rightarrow \mathbb{C}$
\end{longtable*}

\section{Acronyms}

The following acronyms are used:%

\bigskip

\acsetup{format/short=\normalfont,format/replace=false}
\printacronyms[template=tabular,heading = none]
\chapter{Background}
\label{chap: background}

This monograph assumes familiarity with real analysis but not with any probability, measure theory, or functional analysis; the necessary concepts will be introduced.
An experienced reader may wish to skip this Chapter and return to it only when required.

\section{Probability and Measure Theory}

Though familiarity with measure theory is not required to understand most of the results that we present in Chapters \ref{chap: Stein operators}, \ref{chap: Stein discrepancies}, and \ref{chap: Stein dynamics}, we briefly introduce the core concepts so that our main definitions can be precisely stated.

\subsection{Measures}

Our starting point is a (non-empty) set $\Omega$.

\begin{definition}[$\sigma$-algebra]
A collection $\mathcal{S}$ of subsets of $\Omega$ for which
\begin{enumerate}
\item $\Omega \in \mathcal{S}$
\item $\Omega \setminus S \in \mathcal{S}$ whenever $S \in \mathcal{S}$
\item $S_1 \cup S_2 \cup \dots \in \mathcal{S}$ whenever $S_1, S_2, \dots \in \mathcal{S}$
\end{enumerate}
is called a \emph{$\sigma$-algebra} of $\Omega$.
\end{definition}

\noindent The pair $(\Omega,\mathcal{S})$ is called a \emph{measurable space}.
If the $\sigma$-algebra is clear from context, we may refer to the measurable space using just $\Omega$.

\begin{definition}[Measure]
Let $\mathcal{S}$ be a $\sigma$-algebra on $\Omega$.
A map $\mu : \mathcal{S} \rightarrow [0,\infty]$ is called a \emph{measure} if
\begin{enumerate}
\item $\mu(\emptyset) = 0$
\item $\mu(S_1 \cup S_2 \cup \dots) = \mu(S_1) + \mu(S_2) + \dots$ whenever the sets $S_1,S_2,\dots$ are pairwise disjoint.
\end{enumerate}
\end{definition}

\noindent The triple $(\Omega,\mathcal{S},\mu)$ is called a \emph{measure space}.
If the $\sigma$-algebra and measure are clear from context, we may refer to the measure space using just $\Omega$.
A set $S \in \mathcal{S}$ is called a \emph{null set} if $\mu(S) = 0$.
A measure $\mu$ is called a \emph{probability measure} if, in addition, $\mu(\Omega) = 1$, and we call $(\Omega,\mathcal{S},\mu)$ a \emph{probability space}.

\begin{example}[Atomic measure]
    \label{ex: atomic measure}
    Let $(\Omega,\mathcal{S})$ be a measurable space and fix $\omega \in \Omega$.
    Then 
    \begin{align*}
        \delta_\omega(S) \defeq \left\{ \begin{array}{ll} 1 & \omega \in S \\ 0 & \omega \notin S \end{array} \right.
    \end{align*}
    defines a probability measure on $(\Omega,\mathcal{S})$ called an \emph{atomic measure} at $\omega \in \Omega$.
\end{example}

\noindent The $\sigma$-algebra determines which events $S \subset \Omega$ we are allowed to measure.
For example, a probability measure $\mu$ constructed with respect to the \emph{trivial} $\sigma$-algebra $\mathcal{S} = \{\emptyset,\Omega\}$ contains only the vacuous information that $\mu(\emptyset) = 0$ and $\mu(\Omega) = 1$.

\begin{example}[Discrete distributions]
\label{def: pmf}
Let $\Omega$ be a countable set, and let $\mathcal{S}$ be the $\sigma$-algebra consisting of all subsets of $\Omega$.
Then a probability measure is uniquely determined by the values $\mu(\omega)$ for each $\omega \in \Omega$, and we call $\omega \mapsto \mu(\omega)$ the \emph{probability mass function}.
\end{example}

If $\Omega$ carries additional mathematical structure, this may entail a natural choice for a $\sigma$-algebra.
Here we will consider $\Omega$ to be a topological space that is \emph{Hausdorff}, which we recall means that any two distinct elements $\omega_1, \omega_2 \in \Omega$ can be separated by open sets $\Omega_1, \Omega_2 \subset \Omega$ with $\omega \in \Omega_1$, $\omega_2 \in \Omega_2$ and $\Omega_1 \cap \Omega_2 = \emptyset$.
Such a space $\Omega$ is further said to be \emph{locally compact} if, for each $\omega_1 \in \Omega$, there exists a compact set $\Omega_1 \subset \Omega$ with $\omega_1 \in \Omega_1$.
It is straightforward to check that examples of locally compact Hausdorff spaces include $\Omega = \mathbb{R}^d$ and $\Omega = [0,1]^d$.

\begin{example}[Borel measures]
Let $\Omega$ be a locally compact Hausdorff space, and let $\mathcal{B}_\Omega$ be the smallest $\sigma$-algebra that contains the open sets of $\Omega$, called the \emph{Borel} $\sigma$-algebra.
A measure $\mu : \mathcal{S} \rightarrow [0,1]$ is called a \emph{Borel} measure.
\end{example}

\noindent The Borel $\sigma$-algebra can be insufficient, since not every subset of a set of Borel measure 0 is a Borel measurable set.
In the context of $\Omega = \mathbb{R}^d$, the Lebesgue measure \emph{completes} the Borel measure and will form the basis of Lebesgue integration in \Cref{subsec: lebesgue spaces}.
For a box $B = (a_1,b_1) \times \dots \times (a_d,b_d) \subset \mathbb{R}^d$, let $\text{vol}(B) \defeq (b_1 - a_1) \dots (b_d - a_d)$ denote the volume of the box.

\begin{example}[Lebesgue measures]
\label{ex: lebesgue measure}
For a subset $S \subset \Omega = \mathbb{R}^d$, the \emph{Lebesgue outer measure} is defined as
$$
\lambda^*(S) \defeq \inf \left\{ \sum_{i=1}^\infty \text{vol}(B_i) : (B_i) \; \text{a sequence of boxes with} \; S \subset \cup_i B_i \right\}.
$$
The elements of the \emph{Lebesgue} $\sigma$-algebra $\mathcal{L}_\Omega$ are the sets $S$ such that the Carath\'{e}odory criterion, $\lambda^*(A \cap S) + \lambda^*(A \cap S^c) = \lambda^*(A)$ for all $A \subset \mathbb{R}^d$, is satisfied.
For each $S \in \mathcal{L}_\Omega$, the \emph{Lebesgue measure} $\lambda(S)$ is defined as $\lambda(S) = \lambda^*(S)$.
\end{example}

\begin{definition}[Measurable function]
A function $f : \Omega_1 \rightarrow \Omega_2$ between measurable spaces $(\Omega_1,\mathcal{S}_1)$ and $(\Omega_2,\mathcal{S}_2)$ is said to be \emph{measurable} if $f^{-1}(S) = \{\omega \in \Omega_1 : f(\omega) \in S\} \in \mathcal{S}_2$ whenever $S \in \mathcal{S}_1$.
\end{definition}

\subsection{Lebesgue Spaces}
\label{subsec: lebesgue spaces}

Let $(\Omega,\mathcal{S},\mu)$ be a measure space.
A function $\phi : \Omega \rightarrow [0,\infty]$ is called \emph{simple} if there exist $a_1,\dots,a_n \in [0,\infty]$, $S_1,\dots,S_n \in \mathcal{S}$, such that
$$
\phi(\omega) = \sum_{i=1}^n a_i \mathrm{1}_{S_i}(\omega)
$$
for all $\omega \in \Omega$, and we define an integral
$$
\int \phi \; \mathrm{d}\mu \defeq \sum_{i=1}^n a_i \mu(S_i) .
$$

\begin{definition}[Lebesgue integral]
For a positive measurable function $f : \Omega \rightarrow [0,\infty]$, mapping from $(\Omega,\mathcal{S})$ to $(\mathbb{R} , \mathcal{B}_{\mathbb{R}})$, the \emph{Lebesgue integral} is defined as
$$
\int f \; \mathrm{d} \mu \defeq \sup\left\{ \int \phi \; \mathrm{d}\mu : \phi \; \text{is a simple function and} \; \phi \leq f \right\} ,
$$
while if $f$ also takes negative values then we define its Lebesgue integral as
$$
\int f \; \mathrm{d}\mu \defeq \int f_+ \; \mathrm{d}\mu - \int f_- \; \mathrm{d}\mu
$$
where $f_\pm = f \cdot 1_{S_\pm}$ and $S_\pm = f^{-1}([0,\pm \infty])$, whenever at least one of these integrals is finite.
\end{definition}

The nomenclature of the Lebesgue integral derives from its original construction, involving Lebesgue measure $\lambda$ from \Cref{ex: lebesgue measure}, but here we present it for general measures $\mu$.
In the particular case of Lebesgue measure $\lambda$ on $\Omega = \mathbb{R}^d$, we follow standard convention and often write $\int f(x) \; \mathrm{d}x$ as an alternative notation for $\int f \; \mathrm{d}\lambda$.
It will sometimes be convenient to emphasize the argument of integration, in which case we write $\int f(\omega) \; \mathrm{d}\mu(\omega)$.
On the other hand, the shorthand
\begin{align}
\mu(f) \defeq \int f \; \mathrm{d}\mu \label{eq: int short}
\end{align}
will be used extensively in \Cref{chap: Stein operators}.
It will also be convenient to extend the definition of the Lebesgue integral to functions $f : \Omega \rightarrow \mathbb{R}^d$, which we achieve by applying the above construction to each coordinate function $f_i$, $i = 1,\dots,d$.

For $\mu$ and $\nu$ probability measures on a measurable space $(\Omega,\mathcal{S})$, we say that $\mu$ is \emph{absolutely continuous} with respect to $\nu$ (written $\mu \ll \nu$) if $\mu(S) = 0$ for every set $S \in \mathcal{S}$ for which $\nu(S) = 0$.

\begin{theorem}[Radon--Nikodym]
\label{thm: radon nikodym}
    Let $\mu$ and $\nu$ be probability measures on a measurable space $(\Omega,\mathcal{S})$ with $\mu \ll \nu$.
    Then there exists a measurable function $p : \Omega \rightarrow [0,\infty)$ such that $\mu(S) = \int p(\omega) 1_S(\omega) \; \mathrm{d}\nu(\omega)$ for any $S \in \mathcal{S}$.
\end{theorem}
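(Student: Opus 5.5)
I would prove this along von Neumann's Hilbert-space route. Everything needed (Lebesgue integrals, $L^2$) is available, and the only outside tool is the Riesz representation theorem on a Hilbert space.

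Set $\rho \defeq \mu + \nu$, a finite measure with $\rho(\Omega)=2$. Define the linear functional $L(f) \defeq \int f \, \mathrm{d}\mu$ on $L^2(\rho)$. By Cauchy--Schwarz,
\begin{align*}
|L(f)| \le \int |f| \, \mathrm{d}\rho \le \sqrt{2}\,\|f\|_{L^2(\rho)} ,
\end{align*}
so $L$ is bounded and well defined on equivalence classes. The Riesz representation theorem then gives $h \in L^2(\rho)$ with $\int f \, \mathrm{d}\mu = \int f h \, \mathrm{d}\rho$ for all $f \in L^2(\rho)$. Rearranging yields the key identity
\begin{align*}
\int f(1-h)\, \mathrm{d}\mu = \int f h \, \mathrm{d}\nu .
\end{align*}

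Next I would show $0 \le h < 1$ $\rho$-almost everywhere.

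\textbf{Lower bound.} Take $f = 1_{\{h<0\}}$. Then $\mu(\{h<0\}) = \int_{\{h<0\}} h\,\mathrm{d}\rho \le 0$, so the integral of the negative function $h$ over $\{h<0\}$ against $\rho$ is zero. Hence $\rho(\{h<0\})=0$.

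\textbf{Upper bound.} Take $f = 1_{\{h\ge 1\}}$. The key identity gives $0 \ge \int_{\{h \ge 1\}} (1-h)\,\mathrm{d}\mu = \int_{\{h\ge 1\}} h \,\mathrm{d}\nu \ge \nu(\{h\ge1\})$. So $\nu(\{h\ge 1\})=0$, and absolute continuity gives $\mu(\{h\ge1\})=0$.

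This is the single place where $\mu \ll \nu$ enters, and it is the step I expect to need the most care. After modifying $h$ on a $\rho$-null set, I may assume $0\le h<1$ everywhere.

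To extract the density, take $S\in\mathcal{S}$ and plug $f = (1+h+\dots+h^{n})1_S$ into the key identity. This $f$ is bounded, hence in $L^2(\rho)$. The identity becomes
\begin{align*}
\int_S (1-h^{n+1})\, \mathrm{d}\mu = \int_S h(1+\dots+h^n)\, \mathrm{d}\nu .
\end{align*}
As $n\to\infty$, $h^{n+1}\downarrow 0$ pointwise because $h<1$, so the left side tends to $\mu(S)$ by monotone convergence. The right side increases to $\int_S p\,\mathrm{d}\nu$ with
\begin{align*}
p \defeq \frac{h}{1-h} ,
\end{align*}
again by monotone convergence. This function is measurable and takes values in $[0,\infty)$, and the limit gives $\mu(S) = \int p\,1_S \,\mathrm{d}\nu$, which is the claimed statement.

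The one delicate point beyond the a.e. bounds is applying monotone convergence. Both integrands are nonnegative and monotone in $n$, so it applies directly.
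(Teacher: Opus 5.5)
The paper states the Radon--Nikodym theorem as background and gives no proof of its own, so there is no argument of the paper's to compare against. Your proof is von Neumann's Hilbert-space argument, the one in \citet[Thm.~6.10]{rudin1987real}, and it is correct as written.

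The individual steps all hold:
\begin{itemize}
\item The functional is bounded precisely because $\rho$ is finite.
\item The bounds $\rho(\{h<0\})=0$ and $\nu(\{h\ge 1\})=0$ hold with no hypothesis on $\mu$ and $\nu$.
\item $\mu \ll \nu$ is used exactly once, to remove $\mu(\{h=1\})$. Without it, the identity still forces $\mu(\{h>1\})=0$, and $\mu$ restricted to $\{h=1\}$ is exactly the singular part. So the same computation gives the Lebesgue decomposition for free.
\item The geometric-series step with monotone convergence on both sides is fine, since both integrands are nonnegative and nondecreasing in $n$.
\end{itemize}

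The usual alternative is purely measure-theoretic: take a maximizer of $\int g\,\mathrm{d}\nu$ over $g\ge 0$ with $\int_S g\,\mathrm{d}\nu \le \mu(S)$ for all $S$, then rule out a leftover mass using a Hahn decomposition. Compared with that, your route avoids signed measures entirely. It relies only on tools the paper introduces in the same chapter, namely the Riesz representation theorem and the Hilbert structure of $L^2$, which makes it a natural fit for a self-contained background section.
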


\noindent The function $p$ appearing in \Cref{thm: radon nikodym} is called a \emph{Radon--Nikodym derivative} of $\mu$ with respect to $\nu$ and is denoted $\mathrm{d}\mu / \mathrm{d} \nu$. 
In the particular case where $\nu$ is the Lebesgue measure $\lambda$ from \Cref{ex: lebesgue measure}, we recover the familiar concept of a probability density function:

\begin{definition}[Continuous distributions]
\label{def: cts disn}
Let $\Omega = \mathbb{R}^d$.
Let $\mu$ be a probability measure such that $\mu(S) = \int p(\omega) 1_S(\omega) \; \mathrm{d} \lambda(\omega)$ for some measurable function $p : \mathbb{R}^d \rightarrow [0,\infty)$, where $\lambda$ is the Lebesgue measure from \Cref{ex: lebesgue measure}.
Then $p = \mathrm{d}\mu / \mathrm{d}\lambda$ is known as a \emph{probability density function} for $\mu$ with respect to $\lambda$, or simply a \emph{(Lebesgue) density} for $\mu$. 
\end{definition}

Let $1 \leq s < \infty$.
Consider the real vector space $\mathcal{L}^s(\mu)$ of functions $f : \Omega \rightarrow \mathbb{R}$ for which the Lebesgue integral
$$
|f|_{\mathcal{L}^s(\mu)} \defeq \left( \int |f|^s \; \mathrm{d}\mu \right)^{1/s} < \infty
$$
is well-defined.
The space $\mathcal{L}^s(\mu)$ is a seminormed vector space when equipped with $|\cdot|_{\mathcal{L}^s(\mu)}$ but not a normed space because $|f|_{\mathcal{L}^s(\mu)} = 0$ only implies that $f$ is non-zero on a $\mu$-null set and not that $f = 0$ on $\Omega$.
(An elementary discussion of norms on spaces of functions can be found in \Cref{sec: functional analysis}.)
A useful convention used in this monograph is that, for a vector- or matrix-valued function $f$, we will use the shorthand $f \in \mathcal{L}^s(\mu)$ to denote that all components of $f$ are in $\mathcal{L}^s(\mu)$.
In situations where the measure $\mu$ on $\Omega$ is unambiguous we may write $\mathcal{L}^s(\Omega)$.
The Lebesgue spaces are obtained by identification of functions that agree up to a Lebesgue null set:

\begin{definition}[Lebesgue spaces]
\label{def: Lebesgue space}
Let $1 \leq s < \infty$.
The \emph{Lebesgue space} $L^s(\mu)$ is the vector space whose elements are the equivalence classes $[f] \defeq \{g \in \mathcal{L}^s(\mu) : g \sim f\}$ of $\mathcal{L}^s(\mu)$ under the equivalence relation that $f \sim g$ if and only if $|f-g|_{\mathcal{L}^s(\mu)} = 0$.
The space $L^s(\mu)$ becomes a normed vector space when equipped with 
$$
\|[f]\|_{L^s(\mu)} \defeq |g|_{\mathcal{L}^s(\mu)} , \quad g \in [f] ,
$$
the latter being identical for all $g$ and thus well-defined. 
\end{definition}

\noindent In the case where $\mu$ is the Lebesgue measure $\lambda$ on $\mathbb{R}^d$, it is common to write $\mathcal{L}^s(\lambda)$ as $\mathcal{L}^s(\mathbb{R}^d)$ and $L^s(\lambda)$ as $L^s(\mathbb{R}^d)$.

\subsection{Random Variables and Independence}

An (almost) equivalent, and often more intuitive, representation of probability measures is through the lens of random variables, which are now precisely defined:

\begin{definition}[Random variable]
Let $(\Omega,\mathcal{S},\mu)$ be a probability space, called the \emph{sample space}, and let $(\mathcal{X},\mathcal{S}_{\mathcal{X}})$ be a measurable space, called the \emph{state space}.
A \emph{random variable} with these sample and state spaces is a measurable function $X : \Omega \rightarrow \mathcal{X}$.
\end{definition}

\noindent A random variable $X$ is associated with a probability measure $P$ on the state space $\mathcal{X}$, called the \emph{law} of $X$, defined as $P(S) = \mu(X^{-1}(S))$ for each measurable set $S \in \mathcal{S}_{\mathcal{X}}$, and we write $X \sim P$ as shorthand.
The \emph{expectation} of a random variable $X$ is defined as the integral $\mu(X)$, and is conventionally denoted $\mathbb{E}[X]$.
Since the expectation is fully determined by the law, it is common to also write $\mathbb{E}_{Y \sim P}[Y]$ where $Y$ is understood to be any random variable with law $P$.
The set $\mathcal{S}_X$ whose elements are the sets $X^{-1}(S)$, $S \in \mathcal{S}_{\mathcal{X}}$, itself satisfies the axioms of a $\sigma$-algebra and is therefore called the $\sigma$-algebra \emph{generated} by $X$.

\begin{definition}[Independence]
Let $(\Omega,\mathcal{S},\mu)$ be a probability space and let each $\mathcal{X}_i$ be a measurable space, with index $i$ running over a possibly uncountably infinite set $I$.
A collection $(X_i)_{i \in I}$ of random variables $X_i : \Omega \rightarrow \mathcal{X}_i$ are said to be \emph{independent} if
$$
\mu\left( \cap_{i \in I} S_i \right) = \prod_{i \in I} \mu(S_i)
$$
for all $(S_i)_{i \in I}$ with $S_i \in \mathcal{S}_{\mathcal{X}_i}$ for each $i \in I$.
\end{definition}

\begin{definition}[Conditional expectation]
Let $X : \Omega \rightarrow \mathbb{R}^d$ and $Y : \Omega \rightarrow \mathcal{Y}$ be random variables on a common probability space $(\Omega,\mathcal{S},\mu)$.
A \emph{conditional expectation} of $X$ given $Y$ is a measurable function, denoted $\mathbb{E}[X|Y]$, from $(\Omega,\mathcal{S}_Y)$ to $\mathbb{R}^d$ for which 
$$
\int \mathbb{E}[X|Y] \cdot 1_S \; \mathrm{d}\mu = \int X \cdot 1_S \; \mathrm{d}\mu
$$
for all $S \in \mathcal{S}_Y$.
\end{definition}

\noindent The \emph{conditional probability} of an event $S \in \mathcal{S}_{X}$ given $Y$ is $\mu(S | Y) = \mathbb{E}[\mathrm{1}_S | Y]$.

\begin{definition}[Conditional independence]
Let $(\Omega,\mathcal{S},\mu)$ be a probability space and let $\mathcal{X}_i$ and $\mathcal{Y}$ be measurable spaces, with index $i$ running over a possibly uncountably infinite set $I$.
A collection $(X_i)_{i \in I}$ of random variables $X_i : \Omega \rightarrow \mathcal{X}_i$ are said to be \emph{conditionally independent} given a random variable $Y : \Omega \rightarrow \mathcal{X}$ if, $\mu$-almost surely,
$$
\mu\left( \cap_{i \in I} S_i \mid Y \right) = \prod_{i \in I} \mu(S_i | Y)
$$
for all $(S_i)_{i \in I}$ with $S_i \in \mathcal{S}_{X_i}$ for each $i \in I$.
\end{definition}

\noindent The shorthand $X_i \ci  X_j | Y$ is used to denote the statement that the random variables $X_i$ and $X_j$ are conditionally independent given $Y$.

\subsection{Markov Processes}
\label{subsec: Markov processes}

In dealing with collections of random variables $(X_i)_{i \in I}$, the structure of the index set $I$, together with the conditional independence relationships among the random variables, are used to signify different settings of interest.

\begin{definition}[Stochastic process]
Let $\Omega$ be a probability space and $\mathcal{X}$ be a measurable space.
A \emph{stochastic process} is a collection $(X_t)_{t \in T}$ of random variables $X_t : \Omega \rightarrow \mathcal{X}$ with index $t$ running over a set $T$.
\end{definition}

\noindent In this monograph we will primarily be concerned with two cases; (1) the index set $T$ is discrete, e.g. $t =  0, 1,2,\dots$, with the index $t$ interpreted as a time associated with the random variable $X_t$; in this case the stochastic process is said to be a \emph{discrete time} process; (2) there is a continuous temporal index, e.g. $t \in [0,\infty)$; in this case the stochastic processes is said to be a \emph{continuous time} process.
In both cases the index set $T$ is ordered.

\begin{definition}[Markov process]
A stochastic process $(X_t)_{t \in T}$ with the property that
$$
X_r \ci X_t | X_s
$$
for all $r < s < t$ is said to be a \emph{Markov process}.
\end{definition}

\noindent Intuitively, this property means that the future states of the stochastic process are conditionally independent of the past states given the current state, and thus knowledge of the current state is sufficient for predicting future states.

The law associated to each random variable $X_t$ is denoted $P_t$, which we recall is defined as $P_t(S) \defeq \mu(X_t^{-1}(S))$.

\begin{definition}[Markov kernel]
\label{def: Markov kernel}
A Markov process is \emph{time-homogeneous} if, for all $t \geq 0$, there exists a \emph{Markov kernel} $K_t : \mathcal{X} \times \mathcal{S}_{\mathcal{X}} \rightarrow [0,\infty]$ such that
\begin{enumerate}
\item $x \mapsto K_t(x,S)$ is measurable for each $S \in \mathcal{S}_{\mathcal{X}}$
\item $S \mapsto K_t(x,S)$ is a probability measure for each $x \in \mathcal{X}$
\item it holds that
$$
P_{s+t}(S) = \int K_t(x,S) \; \mathrm{d}P_s(x)
$$
for all $s \geq 0$ and $S \in \mathcal{X}_{\mathcal{S}}$.
\end{enumerate}
\end{definition}

\noindent The Markov processes that we consider in this monograph are all time-homogeneous, and are thus completely characterised by their initial distribution together with their Markov kernel.
Often we will implicitly assume time-homogeneity when describing a time-homogeneous Markov process in terms of its Markov kernel.

\begin{definition}[Invariance]
\label{def: invariant}
A Markov kernel $K_t$ is said to leave a probability measure $P$ \emph{invariant} if 
\begin{align*}
P(S) = \int K_t(x,S) \; \mathrm{d}P(x) 
\end{align*}
for all $t > 0$ and all $S \in \mathcal{S}_{\mathcal{X}}$.
\end{definition}

\begin{example}[Overdamped Langevin diffusion I]
\label{ex: overdamped Langevin}
Let $P$ be a continuous distribution on $\mathbb{R}^d$ with positive and differentiable density $p$ on $\mathbb{R}^d$.
The continuous time process $(X_t)_{t \geq 0}$ defined by the \ac{sde}
\begin{align*}
\mathrm{d}X_t = \frac{1}{2} (\nabla \log p)(X_t) \; \mathrm{d}t + \mathrm{d}W_t & & t > 0 \\
X_0 \sim P_0 & &
\end{align*}
is a time-homogeneous Markov process, with a Markov kernel for which $P$ is invariant, called the \emph{overdamped Langevin diffusion} with \emph{initial distribution} $P_0$.
Here $\nabla \log p : \mathbb{R}^d \rightarrow \mathbb{R}^d$ denotes the gradient of $\log p : \mathbb{R}^d \rightarrow \mathbb{R}$ and $(W_t)_{t \geq 0}$ denotes a standard $d$-dimensional Wiener process on $\mathbb{R}^d$; see Chapter 5 of \citet{oksendal2013stochastic}.
\end{example}

\begin{definition}[Generator of a Markov process]
\label{def: generator}
Let $(X_t)_{t \geq 0}$ be a continuous time Markov process with Markov kernel $K_t$.
If it exists, the \emph{infinitesimal generator} of the process is defined as the operator
$$
(A f)(x) \defeq \lim_{t \rightarrow 0} \; \frac{1}{t} \left\{ \int f \; \mathrm{d} K_t(x,\cdot) - f(x) \right\}
$$
acting on sufficiently regular functions $f : \mathcal{X} \rightarrow \mathbb{R}$.
\end{definition}

\begin{example}[Overdamped Langevin diffusion II]
\label{ex: overdamped II}
The infinitesimal generator of the overdamped Langevin diffusion from \Cref{ex: overdamped Langevin} is the differential operator $(A f)(x) = (\Delta f)(x) + (\nabla f)(x) \cdot (\nabla \log p)(x)$; see Section 7.3 of \citet{oksendal2013stochastic}.
\end{example}

\begin{definition}[Ergodic process]
A stochastic process $(X_t)_{t \in T}$ is said to be \emph{ergodic} if the associated measures $P_t$ converge to a limit $P$ as $t \rightarrow \infty$, where the sense of convergence is to be specified.
\end{definition}

\noindent Some different senses of convergence are discussed in \Cref{sec: divergences}.
One may also consider refined notions of ergodicity in which, for example, the rate of convergence is specified.
Further background on ergodicity of Markov processes can be found in \citet{gallegos2024equivalences}.

\section{Functional Analysis}
\label{sec: functional analysis}

Functional analysis concerns the mathematical properties of function spaces.
In this section we let $\mathcal{X}$ be a set.

\begin{definition}[Function space]
A \emph{function space} $\mathcal{F}$, in this book, is a vector space (over the reals) whose elements are functions of the form $f : \mathcal{X} \rightarrow \mathbb{R}^d$, for some $d \in \mathbb{N}$.
\end{definition}

The most common function spaces that we will encounter consist of functions that are scalar-valued (i.e. $d = 1$), but we will also require vector-valued functions in \Cref{sec: ksds} and examples of such function spaces are discussed in \Cref{subsec: vvrkhs}.

\begin{example}[Polynomial space I] \label{ex: poly space}
A familiar function space is the space of polynomials of fixed maximal order on $\mathcal{X} = \mathbb{R}$: this is the set $\mathcal{F}_p \defeq \{f(x) = a_0 + a_1 x + \dots + a_p x^p \; : \; a_0, a_1,\dots,a_p \in \mathbb{R}\}$ equipped with pointwise addition and scalar multiplication, meaning that $(\alpha f + \beta g)(x) = \alpha f(x) + \beta g(x)$ for all $f,g \in \mathcal{F}_p$, $\alpha,\beta \in \mathbb{R}$ and $x \in \mathbb{R}$.
\end{example}

Function spaces can often be endowed with additional mathematical structure that is useful for theory and computation, and in particular we will refer to a \emph{normed space} in the context of a function space when the function space is equipped with a norm $\|\cdot\|_{\mathcal{F}} : \mathcal{F} \rightarrow [0,\infty)$, and an \emph{inner product space} when the function space is equipped with an inner product $\langle \cdot , \cdot \rangle_{\mathcal{F}} : \mathcal{F} \times \mathcal{F} \rightarrow \mathbb{R}$.
All inner products give rise to an associated \emph{induced} norm, defined as $\|f\|_{\mathcal{F}} = \sqrt{\langle f , f \rangle_{\mathcal{F}}}$, but not all norms are induced by an inner product.

\begin{example}[Polynomial space II] \label{ex: inner prod poly}
The polynomial space $\mathcal{F}_p$ from \Cref{ex: poly space} is an inner product space when equipped with $\langle f , g \rangle_{\mathcal{F}_p} = a_0 b_0 + \dots + a_p b_p$ where $f(x) = a_0 + \dots + a_p x^p$ and $g(x) = b_0 + \dots + b_p x^p$.
\end{example}

\begin{example}[Sup and bounded Lipschitz norms]
\label{ex: bl norm}
    Let $\rho$ be a metric on $\mathcal{X}$.
    For a function space $\mathcal{F}$ whose elements are continuous and bounded functions $f : \mathcal{X} \rightarrow \mathbb{R}^d$, the \emph{sup norm} and \emph{Lipschitz semi-norm}, are defined, respectively, as
    \begin{align*}
        \|f\|_\infty \defeq \sup_{x \in \mathcal{X}} \|f(x)\| , \qquad 
        |f|_{\text{Lip},\rho} \defeq \sup_{x \neq y, \; x,y \in \mathcal{X}} \frac{\|f(x) - f(y)\|}{\rho(x,y)} .
    \end{align*}
The \emph{bounded Lipschitz norm} is defined as $\|f\|_{\text{BL},\rho} \defeq \|f\|_\infty + |f|_{\text{Lip},\rho}$.    
\end{example}

\noindent In settings where the metric $\rho$ is unambiguous, we will simply write $|f|_{\text{Lip}}$ for $|f|_{\text{Lip},\rho}$.

A sequence $(f_i)_{i \in \mathbb{N}}$ of elements in a normed space $\mathcal{F}$ is said to be \emph{Cauchy} if for all $\epsilon > 0$ there exists $n \in \mathbb{N}$ such that $\|f_i - f_j \|_{\mathcal{F}} < \epsilon$ for all $i,j > n$.
A normed space $\mathcal{F}$ is said to be \emph{complete} if all Cauchy sequences in $\mathcal{F}$ also have a limit in $\mathcal{F}$ (meaning that there exists $f \in \mathcal{F}$ such that $\|f_i - f\|_{\mathcal{F}} \rightarrow 0$ as $i \rightarrow \infty$).

\begin{definition}[Hilbert space]
An inner product space that is complete (with respect to the norm induced by the inner product) is called a \emph{Hilbert space}.
\end{definition}

\begin{example}[Polynomial space III] \label{ex: inner prod poly}
The polynomial space $\mathcal{F}_p$ from \Cref{ex: poly space} is Hilbert.
Indeed, the map $\varphi : \mathbb{R}^{p+1} \rightarrow \mathcal{F}_p$ that sends a coefficient vector $a \in \mathbb{R}^{p+1}$ to the corresponding polynomial $f(x) = a_0 + \dots + a_p x^p$ is an isometry of $\mathbb{R}^{p+1}$ and $\mathcal{F}_p$, under which the completeness of $\mathbb{R}^{p+1}$ is preserved.
\end{example}

\begin{example}[Lebesgue space $L^2(\mu)$]
    \label{ex: Leb is Hilb}
    The Lebesgue spaces $L^p(\mu)$, introduced in \Cref{def: Lebesgue space}, are strictly speaking \emph{not} function spaces because their elements are not functions \emph{per se}, but rather equivalence classes of functions.
    Nevertheless, in the case $p = 2$ the Lebesgue space admits a natural inner product
    \begin{align*}
        \langle [f] , [g] \rangle_{L^2(\mu)} = \int f(\omega) g(\omega) \; \mathrm{d}\mu(\omega) ,
    \end{align*}
    for which the induced norm is $\|\cdot\|_{L^2(\mu)}$, and it can be verified that $L^2(\mu)$ equipped with this inner product is Hilbert.
\end{example}

One of the most important mathematical results on Hilbert spaces is the \emph{Riesz representer theorem}:

\begin{theorem}[Riesz representation] \label{thm: Riesz}
Let $\mathcal{H}$ be a Hilbert space and $L : \mathcal{H} \rightarrow \mathbb{R}$ be a continuous linear functional, meaning that $L$ is linear and that there exists a constant $C$ such that $|L(f)| \leq C \|f\|_{\mathcal{H}}$ for all $f \in \mathcal{H}$.
Then $L(f) = \langle f , g \rangle_{\mathcal{H}}$ for some $g \in \mathcal{H}$.
\end{theorem}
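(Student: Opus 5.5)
The plan is the standard orthogonal-complement argument. Consider the kernel $N \defeq \{ f \in \mathcal{H} : L(f) = 0 \}$. Since $L$ is linear, $N$ is a subspace. Since $|L(f) - L(h)| \leq C\|f-h\|_{\mathcal{H}}$, the map $L$ is continuous, so $N$ is closed.

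If $N = \mathcal{H}$, then $L \equiv 0$ and we take $g = 0$. Otherwise we look for a nonzero $z \in \mathcal{H}$ orthogonal to every element of $N$; this is the main step. I would first pick any $h \notin N$ and take $n_0 \in N$ minimising $\|h - n\|_{\mathcal{H}}$ over $n \in N$. Existence of the minimiser is the key analytic point, and it is where completeness enters.

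To construct $n_0$, let $\delta = \inf_{n \in N}\|h-n\|_{\mathcal{H}}$ and take a minimising sequence $(n_i)$. The parallelogram law, which holds because the norm is induced by an inner product, gives
\begin{align*}
\|n_i - n_j\|_{\mathcal{H}}^2 = 2\|h-n_i\|_{\mathcal{H}}^2 + 2\|h-n_j\|_{\mathcal{H}}^2 - 4\|h - \tfrac{1}{2}(n_i+n_j)\|_{\mathcal{H}}^2 .
\end{align*}
Since $\tfrac{1}{2}(n_i+n_j) \in N$, the last term is at least $4\delta^2$, so the right-hand side is at most $2\|h-n_i\|_{\mathcal{H}}^2 + 2\|h-n_j\|_{\mathcal{H}}^2 - 4\delta^2 \to 0$. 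Hence $(n_i)$ is Cauchy. Because $\mathcal{H}$ is complete it has a limit $n_0$, and because $N$ is closed, $n_0 \in N$. By continuity of the norm, $\|h - n_0\|_{\mathcal{H}} = \delta$.

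Now set $z = h - n_0$. Then $z \neq 0$, since $h \notin N$ while $n_0 \in N$. To see $z \perp N$, take any $m \in N$ and $t \in \mathbb{R}$. The function $t \mapsto \|z - t m\|_{\mathcal{H}}^2$ is minimised at $t = 0$, because $n_0 + tm \in N$. Expanding the square and differentiating at $t=0$ gives $\langle z, m\rangle_{\mathcal{H}} = 0$.

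With $z$ in hand, take any $f \in \mathcal{H}$ and note that $L(f) z - L(z) f \in N$, since $L$ applied to it gives $L(f)L(z) - L(z)L(f) = 0$. Pairing this element with $z$ and using $z \perp N$ gives
\begin{align*}
0 = L(f)\|z\|_{\mathcal{H}}^2 - L(z)\langle f, z\rangle_{\mathcal{H}} .
\end{align*}
Therefore $L(f) = \langle f, g\rangle_{\mathcal{H}}$ with $g \defeq L(z) z / \|z\|_{\mathcal{H}}^2$, and this $g$ does not depend on $f$.

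Uniqueness is not asked for, but it follows at once: if two representers $g, g'$ existed, then $\langle f, g - g'\rangle_{\mathcal{H}} = 0$ for all $f$, and choosing $f = g - g'$ gives $g = g'$.

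The only genuine obstacle is the existence of the nearest point $n_0$. The rest is algebra.
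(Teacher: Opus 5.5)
The paper does not prove this theorem. It is quoted in the background chapter as a standard fact of functional analysis, so there is no argument of the paper's to measure yours against.

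Your proof is the classical nearest-point (projection) argument, and it is correct and complete as written:
\begin{itemize}
\item boundedness of $L$ makes $N$ closed;
\item the parallelogram law plus completeness of $\mathcal{H}$ yields the minimiser $n_0 \in N$;
\item the inequality $\|z - tm\|_{\mathcal{H}}^2 \geq \|z\|_{\mathcal{H}}^2$ for all $t \in \mathbb{R}$ forces $\langle z, m\rangle_{\mathcal{H}} = 0$;
\item pairing $L(f)z - L(z)f \in N$ with $z$ gives the representer $g = L(z)z/\|z\|_{\mathcal{H}}^2$.
\end{itemize}

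The expansion $\|z - tm\|_{\mathcal{H}}^2 = \|z\|_{\mathcal{H}}^2 - 2t\langle z, m\rangle_{\mathcal{H}} + t^2\|m\|_{\mathcal{H}}^2$ relies on real scalars. This is consistent with the paper's convention that its function spaces are real and with $L$ taking values in $\mathbb{R}$.

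One merit of this route, compared with expanding in a countable orthonormal basis, is that it needs no separability. That suits the way the paper applies the theorem to arbitrary reproducing kernel Hilbert spaces, to obtain $k_x$ and the kernel mean element $\phi_\mu$. Your argument also shows that completeness is the one hypothesis doing real work, since it is used only to produce $n_0$.
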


\noindent The element $g$ in \Cref{thm: Riesz} is called the \emph{Riesz representer} of $L$ in $\mathcal{H}$.

\subsection{Reproducing Kernel Hilbert Spaces}

The Lebesgue space $L^2(\mu)$ from \Cref{ex: Leb is Hilb} is Hilbert, but its elements are not functions that can be pointwise evaluated.
For the most part in this monograph, we will want to exploit the convenient mathematical structure of Hilbert spaces whilst excluding certain Hilbert spaces, such as $L^2(\mu)$, whose elements are more abstract.
To proceed, we will restrict attention to Hilbert spaces $\mathcal{H}$ for which pointwise evaluation $f \mapsto f(x)$ is a continuous linear functional on $\mathcal{H}$, so that individual function values $f(x)$ are well-defined for all elements $f \in \mathcal{H}$.

\begin{definition}[Reproducing kernel Hilbert space] \label{def: rkhs}
Let $\mathcal{X}$ be a set.
A \emph{reproducing kernel Hilbert space} is a Hilbert space $\mathcal{H}$ of real-valued functions on $\mathcal{X}$, such that for each $x \in \mathcal{X}$, the map $f \mapsto f(x)$ is a continuous linear functional on $\mathcal{H}$.
\end{definition}

From \Cref{thm: Riesz}, we deduce that $f \mapsto f(x)$ has a Riesz representer which we denote $k_x \in \mathcal{H}$.
The \emph{reproducing kernel} of the \ac{rkhs} is defined as a bivariate function $k : \mathcal{X} \times \mathcal{X} \rightarrow \mathbb{R}$ with $k(x,y) = \langle k_x,k_y \rangle_{\mathcal{H}}$.
This construction implies that $k_x(y) = \langle k_y , k_x \rangle_{\mathcal{H}} = k(y,x)$ and we will therefore use the notation $k_x$ and $k(\cdot,x)$ interchangeably in the text.
It is straightforward to verify that a reproducing kernel $k$ is symmetric and positive semi-definite and is therefore an instance of a \emph{kernel}:

\begin{definition}[Kernel] \label{def: kernel}
A bivariate function $k : \mathcal{X} \times \mathcal{X} \rightarrow \mathbb{R}$ is said to be a \emph{kernel} on a set $\mathcal{X}$ if it is
\begin{enumerate}
\item \emph{symmetric:} $k(x,y) = k(y,x)$ for all $x,y \in \mathcal{X}$
\item \emph{positive semi-definite:} for all $w_1,\dots,w_n \in \mathbb{R}$, $x_1,\dots,x_n \in \mathcal{X}$, and $n \in \mathbb{N}$, it holds that 
$$
\sum_{i=1}^n \sum_{j = 1}^n w_i w_j k(x_i , x_j) \geq 0 .
$$
\end{enumerate}
\end{definition}

\noindent This second property of a kernel is closely related to a familiar concept from linear algebra:
A matrix $K \in \mathbb{R}^{n \times n}$ is said to be \emph{positive semi-definite} if $w^\top K w \geq 0$ for all $w \in \mathbb{R}^n$, and we write $K \succeq 0$.
If the inequality is strict for all $w \neq 0$, we say that the matrix $K$ is \emph{positive definite}, written $K \succ 0$.
Thus the second property in \Cref{def: kernel} states that all matrices $K$ of the form $K_{i,j} = k(x_i, x_j)$ are positive semi-definite for all choices of $x_1,\dots,x_n \in \mathcal{X}$ and $n \in \mathbb{N}$.
If all such matrices are (strictly) positive definite then we say that $k$ is a \emph{positive definite} kernel.

A fundamental result in functional analysis is that \acp{rkhs} are completely characterized by their kernel: 

\begin{theorem}[Existence and uniqueness]
\label{thm: existence rkhs}
For all kernels $k : \mathcal{X} \times \mathcal{X} \rightarrow \mathbb{R}$, there exists a unique \ac{rkhs} for which $k$ is a reproducing kernel, denoted $\mathcal{H}_k$.
This Hilbert space is characterized as the unique Hilbert space for which
\begin{enumerate}
    \item $k_x \in \mathcal{H}_k$ for all $x \in \mathcal{X}$,
    \item $f(x) = \langle f , k_x \rangle_{\mathcal{H}_k}$ for all $x \in \mathcal{X}$, $f \in \mathcal{H}_k$,
\end{enumerate}
with the latter called the \emph{reproducing property} of the kernel.
\end{theorem}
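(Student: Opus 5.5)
We prove this Moore--Aronszajn theorem by first building a pre-Hilbert space from the kernel sections $k_x$ and then completing it inside the space of functions on $\mathcal{X}$.

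\textbf{Step 1: the pre-Hilbert space.} Let $\mathcal{H}_0$ be the linear span of $\{k_x : x \in \mathcal{X}\}$, viewed as functions on $\mathcal{X}$. For $f = \sum_i a_i k_{x_i}$ and $g = \sum_j b_j k_{y_j}$, define
\begin{align*}
\langle f , g \rangle_0 \defeq \sum_{i}\sum_j a_i b_j k(x_i,y_j).
\end{align*}
This is well defined, i.e.\ independent of the chosen representations, because it equals both $\sum_j b_j f(y_j)$ and $\sum_i a_i g(x_i)$. It is bilinear, and symmetric by symmetry of $k$. Positive semi-definiteness of $k$ (\Cref{def: kernel}) gives $\langle f,f\rangle_0 \ge 0$. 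By construction the reproducing property $\langle f , k_x\rangle_0 = f(x)$ holds on $\mathcal{H}_0$. Cauchy--Schwarz, which needs only semi-definiteness, then gives $|f(x)| \le \|f\|_0 \sqrt{k(x,x)}$. Hence $\langle f,f\rangle_0=0$ forces $f \equiv 0$, so $\langle\cdot,\cdot\rangle_0$ is a genuine inner product.

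\textbf{Step 2: completion (the main obstacle).} Take a Cauchy sequence $(f_n)$ in $\mathcal{H}_0$. The bound $|f_n(x)-f_m(x)| \le \|f_n-f_m\|_0\sqrt{k(x,x)}$ shows that $f_n$ converges pointwise to some function $f$ on $\mathcal{X}$. Let $\mathcal{H}_k$ be the set of all such pointwise limits, and set $\langle f,g\rangle \defeq \lim_n \langle f_n,g_n\rangle_0$. This limit exists and does not depend on the chosen sequences, because Cauchy sequences are bounded in norm.

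The delicate point is that this norm must be well defined on the limit functions themselves. That is, if $f_n$ is Cauchy and $f_n \to 0$ pointwise, we need $\|f_n\|_0 \to 0$. To show this, fix $\epsilon>0$ and choose $N$ with $\|f_n-f_m\|_0<\epsilon$ for all $n,m\ge N$. Write $f_N=\sum_i a_i k_{x_i}$. Then
\begin{align*}
\|f_n\|_0^2 = \langle f_n - f_N, f_n\rangle_0 + \sum_i a_i f_n(x_i).
\end{align*}
The first term is at most $\epsilon \sup_m\|f_m\|_0$, and the second tends to $0$ as $n \to \infty$ because $f_n \to 0$ pointwise. So $\limsup_n \|f_n\|_0^2 \le \epsilon \sup_m \|f_m\|_0$ for every $\epsilon$, which gives $\|f_n\|_0 \to 0$.

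It then follows routinely that $\mathcal{H}_0$ is dense in $\mathcal{H}_k$ and that $\mathcal{H}_k$ is complete: approximate a Cauchy sequence in $\mathcal{H}_k$ by elements of $\mathcal{H}_0$ and take a diagonal sequence. The reproducing property passes to the limit,
\begin{align*}
\langle f,k_x\rangle = \lim_n \langle f_n,k_x\rangle_0 = \lim_n f_n(x) = f(x).
\end{align*}
Evaluation is therefore continuous, so $\mathcal{H}_k$ is an RKHS in the sense of \Cref{def: rkhs}, and its reproducing kernel is $\langle k_x,k_y\rangle = k(x,y)$.

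\textbf{Step 3: uniqueness.} Let $\mathcal{G}$ be any Hilbert space of functions satisfying properties 1 and 2 of the statement. Then $\mathcal{H}_0 \subseteq \mathcal{G}$, and on $\mathcal{H}_0$ the inner product of $\mathcal{G}$ coincides with $\langle\cdot,\cdot\rangle_0$. Because convergence in norm implies pointwise convergence, the closure of $\mathcal{H}_0$ in $\mathcal{G}$ is exactly $\mathcal{H}_k$, with the same norm. If $h \in \mathcal{G}$ is orthogonal to $\mathcal{H}_0$, then $h(x)=\langle h,k_x\rangle_{\mathcal{G}}=0$ for every $x$, so $h = 0$. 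Hence $\mathcal{G}=\mathcal{H}_k$ isometrically.
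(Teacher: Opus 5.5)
Your proof is correct and follows the same constructive route the paper points to: take the span of the $k_x$ with inner product $\sum_{i,j} u_i v_j k(x_i,y_j)$ and complete it, as in Section 4.2 of \citet{steinwart2008support}, realizing the completion concretely as pointwise limits and getting uniqueness from the orthogonal-complement argument. One ordering point: the claim that $\langle f,g\rangle$ does not depend on the approximating sequences does not follow from boundedness of Cauchy sequences alone, but from the lemma you prove next (a Cauchy sequence tending to $0$ pointwise tends to $0$ in norm).
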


\noindent The proof of \Cref{thm: existence rkhs} is beyond the scope of this monograph; see Section 4.2 of \citet{steinwart2008support}.
The proof itself is constructive, and demonstrates that the elements of $\mathcal{H}_k$ are (limits of) finite linear combinations of the Riesz representers;
\begin{align*}
    \mathcal{H}_k \defeq \overline{ \mathrm{span}\left\{ k_x \; | \; x \in \mathcal{X} \right\} }
\end{align*}
where the bar notation denotes the completion taken with respect to the inner product 
\begin{align*}
    \left( \sum_{i=1}^n u_i k_{x_i} , \sum_{j=1}^m v_j k_{y_j} \right) \mapsto \sum_{i=1}^n \sum_{j=1}^m u_i v_j k(x_i,y_j) .
\end{align*}

One route to establishing whether a general bivariate function $k$ is or is not a kernel is to consider its Taylor series; we will not discuss this further, but see e.g. Lemma 4.8 of \citet{steinwart2008support}.
Common examples of kernels on $\mathcal{X} = \mathbb{R}^d$ include the following:

\begin{example}[Gaussian kernel] \label{ex: Gauss kernel}
The \emph{Gaussian} kernel has the form
$k(x,y) = \exp(- \|x - y\|^2 / 2 \ell^2)$, for some $\ell > 0$.
\end{example}

\begin{example}[Inverse multi-quadric kernel] \label{ex: IMQ kernel}
The \emph{inverse multi-quadric} kernel has the form
$k(x,y) = (\ell^2 + \|x - y\|^2)^{-\beta}$, for some $\ell > 0$, $\beta \in (0,\infty)$.
\end{example}

The Hilbert spaces $\mathcal{H}_k$ reproduced by the Gaussian and inverse multi-quadric kernels are "small", in the sense that their elements are functions whose derivatives of all orders exist.
To reproduce larger Hilbert spaces -- for example, containing functions with only a finite number of derivatives -- we can employ alternative kernels, such as the Mat\'{e}rn kernel:

\begin{example}[Mat\'{e}rn kernel] \label{ex: Matern kernel}
The Mat\'{e}rn kernel has the form
$$
k(x,y) = \exp\left( - \frac{\|x-y\|}{\ell} \right) \frac{s!}{(2s)!} \sum_{i=0}^s \frac{(s+i)!}{i! (s-i)!} \left( 2 \frac{\|x-y\|}{\ell} \right)^{s-i} 
$$
for some $\ell > 0$, $s \in \mathbb{N}_0$.
\end{example}

\noindent Elements of the Hilbert space reproduced by the Mat\'{e}rn kernel have partial derivatives up to order $s$ existing in the weak sense, as elements of $L^2(\mathbb{R})$.
The parameter $\ell$, appearing in all of these kernels, is called a \emph{bandwidth} or a \emph{length-scale} of the kernel.

Each of the kernels in \Cref{ex: Gauss kernel,ex: IMQ kernel,ex: Matern kernel} had the property that the value of $k(x,y)$ depended on $x$ and $y$ only through the difference $x - y$; such kernels are called \emph{translation-invariant}.
A famous theorem of Bochner characterizes all continuous translation-invariant kernels on $\mathbb{R}^d$:

\begin{theorem}[Bochner's Theorem; Theorem 6.6 of \citealp{wendland2004scattered}]
\label{thm: bochner}
    A continuous function $k : \mathbb{R}^d \times \mathbb{R}^d \rightarrow \mathbb{R}$ is a translation-invariant kernel if and only if there exists a finite non-negative (Borel) measure $\mu$ on $\mathbb{R}^d$, called the \emph{spectral measure}, such that
    \begin{align}
        k(x,y) = \frac{1}{(2 \pi)^d} \int e^{-\mathrm{i}\langle x-y , \omega \rangle} \; \mathrm{d}\mu(\omega) \label{eq: Bochner thm}
    \end{align}
    for all $x,y \in \mathbb{R}^d$.
\end{theorem}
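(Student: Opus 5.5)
Bochner's theorem has two directions, and the reverse one is elementary, so I would dispose of it first. Suppose $k$ is given by \eqref{eq: Bochner thm} for a finite non-negative Borel measure $\mu$. Then $k$ depends on $(x,y)$ only through $x-y$, so it is translation-invariant. It is continuous by dominated convergence, since the integrand has modulus one and $\mu$ is finite. It is real-valued and symmetric provided $\mu$ is symmetric, $\mu(-S)=\mu(S)$; I would state this implicitly assumed symmetry explicitly, since otherwise the right-hand side need not be real. For positive semi-definiteness, take weights $w_1,\dots,w_n$ and points $x_1,\dots,x_n$ and compute
\begin{align*}
\sum_{i,j} w_i w_j k(x_i,x_j) = \frac{1}{(2\pi)^d}\int \Big|\sum_i w_i e^{-\mathrm{i}\langle x_i,\omega\rangle}\Big|^2 \,\mathrm{d}\mu(\omega) \ge 0 .
\end{align*}

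For the forward direction, write $k(x,y)=\phi(x-y)$ with $\phi$ continuous and positive definite. Positive semi-definiteness applied to the points $0$ and $x$ gives $|\phi(x)|\le\phi(0)$. The plan is to build $\mu$ as a weak limit of regularised spectral densities. For $\epsilon>0$ set $\phi_\epsilon(x)=\phi(x)e^{-\epsilon\|x\|^2}$. This is still positive definite by the Schur product theorem, because the Gaussian kernel of \Cref{ex: Gauss kernel} is a kernel, and it is now integrable. Define
\begin{align*}
f_\epsilon(\omega)=\int \phi_\epsilon(x)e^{\mathrm{i}\langle x,\omega\rangle}\,\mathrm{d}x .
\end{align*}
To see that $f_\epsilon\ge 0$, I would approximate the double integral
\begin{align*}
\iint \phi_\epsilon(x-y)\,g(x)\overline{g(y)}\,\mathrm{d}x\,\mathrm{d}y \ge 0
\end{align*}
by Riemann sums of positive-definite quadratic forms, choosing $g(x)=e^{\mathrm{i}\langle x,\omega\rangle}e^{-\delta\|x\|^2}$ and letting $\delta\to 0$. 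Fourier inversion then gives
\begin{align*}
\phi_\epsilon(x)=\frac{1}{(2\pi)^d}\int f_\epsilon(\omega) e^{-\mathrm{i}\langle x,\omega\rangle}\,\mathrm{d}\omega ,
\end{align*}
with total mass $\int f_\epsilon/(2\pi)^d=\phi(0)$. Fourier inversion here needs $f_\epsilon\in L^1$. I would obtain that by inserting a Gaussian damping factor into the inversion integral, applying monotone convergence (valid because $f_\epsilon\ge 0$), and using continuity of $\phi_\epsilon$ at $0$.

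The main obstacle is passing to the limit $\epsilon\to0$. The measures $\mu_\epsilon=f_\epsilon\,\mathrm{d}\omega$ all have the same total mass $(2\pi)^d\phi(0)$. To extract a weakly convergent subsequence I need tightness, which does not follow from the uniform mass bound alone. I would use the standard characteristic-function tail bound: $\mu_\epsilon(\{\|\omega\|_\infty>2/r\})$ is controlled by averages of $\phi_\epsilon(0)-\mathrm{Re}\,\phi_\epsilon(x)$ over $\|x\|_\infty\le r$, which are uniformly small in $\epsilon$ by continuity of $\phi$ at $0$. Prokhorov's theorem then yields $\mu_{\epsilon_m}\to\mu$ weakly. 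Because $\omega\mapsto e^{-\mathrm{i}\langle x,\omega\rangle}$ is bounded and continuous, the representation passes to the limit and gives \eqref{eq: Bochner thm}. Finally, $\mu$ is symmetric because $\phi$ is real and even, so $f_\epsilon$ is even for every $\epsilon$ and evenness is preserved under weak limits.
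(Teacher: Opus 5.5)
Your proof is correct. The paper gives no argument of its own for this statement; it simply defers to Theorem 6.6 of Wendland, so there is no internal proof to compare against. What you supply is the classical self-contained route to Bochner's theorem:
\begin{itemize}
\item Gaussian damping makes $\phi$ integrable.
\item The integral form of positive definiteness gives positivity of its Fourier transform.
\item Monotone convergence gives $f_\epsilon\in L^1$, so Fourier inversion applies.
\item Tightness plus Prokhorov (in effect, L\'evy's continuity theorem) removes the damping.
\end{itemize}
The citation buys brevity. Your argument is tailored to the paper's own conventions, namely real weights in the definition of a kernel and the $(2\pi)^{-d}$ normalization. For these, your construction yields $\mu(\mathbb{R}^d)=(2\pi)^d k(0,0)$.

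Two small points are worth making explicit.

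First, the double-integral step uses a complex test function $g$, so it needs Hermitian positive semi-definiteness. The paper's definition, and your Schur-product argument, only give positivity for real weights. This is harmless: for a real symmetric $K\succeq 0$ and $w=a+\mathrm{i}b$ one has $\sum_{i,j}w_i\overline{w_j}K_{ij}=a^\top Ka+b^\top Kb\ge 0$. Equivalently, you can test separately with the cosine and sine parts of $g$.

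Second, your symmetry caveat in the easy direction is not actually needed. The statement presupposes that $k$ is real-valued, so the formula gives $k(y,x)=\overline{k(x,y)}=k(x,y)$ directly. Your positivity computation with real weights also holds for any finite non-negative $\mu$, symmetric or not.
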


\noindent That is, all translation-invariant kernels $k$ can be viewed as the Fourier transform of a certain non-negative measure $\mu$.  
Further, it can be shown that a continuous, translation-invariant kernel $k$ is positive definite whenever the \emph{carrier} of the spectral measure $\mu$ in \eqref{eq: Bochner thm}, defined as
\begin{align*}
    \mathrm{carrier}(\mu) \defeq  \mathbb{R}^d \setminus \{ U : U \; \text{is open and} \; \mu(U) = 0 \} ,
\end{align*}
contains an open subset; see \citet{chang1996strictly} and Theorem 6.8 of \citet{wendland2004scattered}.

Access to an explicit formula for a reproducing kernel allows us to perform calculations in \acp{rkhs}, for example to calculate norms:
\begin{align*}
\left\| \sum_{i=1}^n w_i k(\cdot,x_i) \right\|_{\mathcal{H}_k} & = \sqrt{ \left\langle \sum_{i=1}^n w_i k(\cdot,x_i) , \sum_{j=1}^n w_j k(\cdot,x_j) \right\rangle_{\mathcal{H}_k} } \\
& = \sqrt{ \sum_{i=1}^n \sum_{j=1}^n w_i w_j \langle k(\cdot, x_i) , k(\cdot x_j) \rangle_{\mathcal{H}_k} } \\
& = \sqrt{ \sum_{i=1}^n \sum_{j=1}^n w_i w_j k(x_i,x_j) } .
\end{align*}
This is quite a remarkable feature of \acp{rkhs}, since in general it is difficult to obtain an explicit characterization of the elements of these spaces directly from the kernel.
Some notable exceptions, where both the \ac{rkhs} and the kernel are explicit, include:

\begin{example}[Polynomial space IV] 
\label{ex: poly space kernel}
The polynomial space $\mathcal{F}_p$ from \Cref{ex: poly space} is an \ac{rkhs} with kernel $k(x,y) = 1 + (xy) + \dots + (xy)^p$.
\end{example}

\begin{example}[Gaussian kernel II] 
The Gaussian kernel $k$ from \Cref{ex: Gauss kernel} reproduces (in dimension $d=1$ for simplicity) the Hilbert space
\begin{align*}
    \mathcal{H}_k = \left\{ f = \sum_{i=0}^\infty c_i \phi_i \; : \; \|f\|_{\mathcal{H}_k}^2 \defeq  \sum_{i=1}^\infty c_i^2 < \infty \right\} ,
\end{align*}
where 
\begin{align*}
    \phi_i(x) \defeq  \frac{1}{\ell^i \sqrt{i!}} x^i \exp\left( - \frac{x^2}{2\ell^2} \right) .
\end{align*}
See \citet{steinwart2006explicit}.
\end{example}

\subsection{Kernel Mean Embedding}
\label{subsec: kernel mean embed}

The convenient mathematical structure of \acp{rkhs} stands in contrast to the limited structures that are available on sets of probability measures, which are not vector spaces in general.
Nevertheless, it is possible to embed (sufficiently regular) probability measures into \acp{rkhs}, in order that their mathematical structure can be exploited.
In this section, $k : \mathcal{X} \times \mathcal{X} \rightarrow \mathbb{R}$ is a positive definite kernel defined for a measurable space $\mathcal{X}$.
It is assumed that $k_x$ is measurable for all $x \in \mathcal{X}$, which is equivalent to assuming that the elements of $\mathcal{H}_k$ are measurable \citep[Lemma 4.24]{steinwart2008support}.
Let $\mathcal{P}_{\mathcal{H}_k}$ be the set of probability distributions $\mu$ on $\mathcal{X}$ for which the map $\mathrm{I}_\mu : \mathcal{H}_k \rightarrow \mathbb{R}$ given by $\mathrm{I}_\mu (f) = \int f \; \mathrm{d}\mu$ is a continuous linear functional.

\begin{definition}[Kernel mean embedding]
\label{def: kernel mean element}
For $\mu \in \mathcal{P}_{\mathcal{H}_k}$, the Riesz representer of $\mathrm{I}_\mu$ is called the \emph{kernel mean element}, denoted $\phi_\mu \in \mathcal{H}_k$.
The map
\begin{eqnarray*}
    \phi : \mathcal{P}_k & \rightarrow & \mathcal{H}(k) \\
    \mu & \mapsto & \phi_\mu
\end{eqnarray*}
is called the \emph{kernel mean embedding} of $\mathcal{P}_{\mathcal{H}_k}$ into $\mathcal{H}_k$.
\end{definition}

This definition is well-defined as a consequence of \Cref{thm: Riesz}.
The kernel mean embedding enables us to immediately make use of the Hilbert space structure of $\mathcal{H}_k$ to perform analysis and computation on $\mathcal{P}_{\mathcal{H}_k}$.
For example we may measure the dissimilarity between a pair of measures $\mu$ and $\nu$ in terms of the magnitude of the vector $\phi_\mu - \phi_\nu$ that connects their kernel mean elements $\phi_\mu$ and $\phi_\nu$; we will return to this idea in detail in \Cref{subsec: mmd}.

To calculate the kernel mean element associated to a probability measure $\mu$, we can take $f = k_x$ in the Riesz representation statement $\mathrm{I}_\mu(f) = \langle f , \phi_\mu \rangle_{\mathcal{H}_k}$ to see immediately that 
\begin{align}
    \phi_\mu(x) = \int k(x,y) \; \mathrm{d}\mu(y).  \label{eq: kernel mean element}
\end{align}
For certain pairs of probability measure $\mu$ and kernel $k$, such as the Gaussian distribution paired with the Gaussian kernel, it is possible to analytically calculate the kernel mean element \citep{briol2025dictionary}.
Outside of these tractable cases, numerical methods can be used to approximate the kernel mean element.
For example, if $\int k(x,x) \; \mathrm{d}\mu(x) < \infty$ then the \emph{Monte Carlo} estimator
\begin{align}
    \phi_\mu^n(\cdot) \defeq  \frac{1}{n} \sum_{i=1}^n k(\cdot, Y_i), \qquad Y_1,\dots,Y_n \stackrel{\mathrm{iid}}{\sim} \mu \label{eq: approx kernel mean}
\end{align}
provides a strongly consistent approximation to $\phi_\mu$, in the sense that $\|\phi_\mu^n - \phi_\mu\|_{\mathcal{H}_k} \rightarrow 0$ almost surely as $n \rightarrow \infty$.
It is useful also to note that $\phi_\mu^n$ can be interpreted as the \emph{exact} kernel mean element associated to the approximating measure $\mu_n = \frac{1}{n} \sum_{i=1}^n \delta_{Y_i}$.
See \Cref{subsec: mmd} for further discussion on Monte Carlo estimation of the kernel mean element.
Several more sophisticated approaches to approximating a kernel mean element have been developed, under names such as quasi-Monte Carlo, determinantal point processes, gradient flows, and kernel herding; we do not attempt to survey these in detail here, but a subset of these algorithms with particular relevance to Stein's method are described in \Cref{sec:improving}.

In general it is not possible to embed all probability measures into the same Hilbert space, and for a given kernel $k$ the set $\mathcal{P}_{\mathcal{H}_k}$ determines which measures can be safely embedded.
Sufficient conditions for $\mu \in \mathcal{P}_{\mathcal{H}_k}$ will now be discussed.
Recall that $\mathcal{L}^p(\mu)$ denotes the vector space of functions $f : \mathcal{X} \rightarrow \mathbb{R}$ for which the semi-norm $|f|_{\mathcal{L}^p(\mu)} = \int |f|^p \; \mathrm{d}\mu$ is finite, from \Cref{subsec: lebesgue spaces}.
Our presentation here follows \citet[][Appendix C]{barp2022targeted}.

\begin{definition}[Scalarly integrable]
A map $\Phi : \mathcal{X} \rightarrow \mathcal{H}_k$ is said to be \emph{scalarly} $\mu$-\emph{integrable} if $\{ x \mapsto \langle h , \Phi(x) \rangle_{\mathcal{H}_k} : h \in \mathcal{H}_k \} \subset \mathcal{L}^1(\mu)$.
\end{definition}

\noindent The argument used in the following proof can be traced back to \citet{dunford1937integration}, with our account based on \citet[][Lemma 2.1.1]{schwabik2005topics}.

\begin{proposition} \label{prop: Dunford}
If $\Phi : \mathcal{X} \rightarrow \mathcal{H}_k$ is scalarly $\mu$-integrable, then 
\begin{eqnarray*}
\mathrm{I}_\Phi : \mathcal{H}_k & \rightarrow & \mathbb{R} \\
h & \mapsto & \int \langle h , \Phi(x) \rangle_{\mathcal{H}_k} \; \mathrm{d}\mu(x) 
\end{eqnarray*}
is a continuous linear functional.
\end{proposition}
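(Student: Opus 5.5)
Linearity of $\mathrm{I}_\Phi$ is immediate, because $h \mapsto \langle h , \Phi(x) \rangle_{\mathcal{H}_k}$ is linear for each $x$ and integration against $\mu$ is linear on $\mathcal{L}^1(\mu)$. The real content is continuity, that is, the existence of a constant $C$ with $|\mathrm{I}_\Phi(h)| \leq C \|h\|_{\mathcal{H}_k}$. My plan is the classical Dunford argument via the closed graph theorem. The idea is to show that the auxiliary map
\begin{align*}
T : \mathcal{H}_k \rightarrow L^1(\mu), \qquad T h \defeq [\, x \mapsto \langle h , \Phi(x) \rangle_{\mathcal{H}_k} \,]
\end{align*}
is bounded. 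This map is well-defined precisely because of scalar $\mu$-integrability, and it is linear.

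Given boundedness of $T$, the conclusion is one line:
\begin{align*}
|\mathrm{I}_\Phi(h)| \leq \int |\langle h , \Phi(x) \rangle_{\mathcal{H}_k}| \, \mathrm{d}\mu(x) = \|Th\|_{L^1(\mu)} \leq \|T\|_{\mathrm{op}} \|h\|_{\mathcal{H}_k}.
\end{align*}

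To prove $T$ is bounded, I would invoke the closed graph theorem. Both $\mathcal{H}_k$ (a Hilbert space) and $L^1(\mu)$ are Banach spaces; completeness of $L^1(\mu)$ is the Riesz--Fischer theorem, which I would quote. So it suffices to show the graph of $T$ is closed. Suppose $h_n \rightarrow h$ in $\mathcal{H}_k$ and $T h_n \rightarrow g$ in $L^1(\mu)$. Then, for each fixed $x$,
\begin{align*}
|\langle h_n , \Phi(x) \rangle - \langle h , \Phi(x) \rangle| \leq \|h_n - h\|_{\mathcal{H}_k} \|\Phi(x)\|_{\mathcal{H}_k} \rightarrow 0,
\end{align*}
so $Th_n \rightarrow Th$ pointwise on $\mathcal{X}$. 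On the other hand, $L^1(\mu)$ convergence implies that a subsequence $Th_{n_j}$ converges to $g$ $\mu$-almost everywhere. Hence $g = Th$ $\mu$-almost everywhere, that is, $g = Th$ as elements of $L^1(\mu)$, and the graph is closed.

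The main obstacle is this closed-graph step. It needs two ingredients not stated earlier in the paper: the completeness of $L^1(\mu)$ and the fact that $L^1$ convergence yields an almost everywhere convergent subsequence. I would cite both as standard facts. I would also take care to pass from the seminormed $\mathcal{L}^1(\mu)$ to the quotient space $L^1(\mu)$ of \Cref{def: Lebesgue space}, so that the closed graph theorem applies to genuine Banach spaces.

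An alternative route avoids the closed graph theorem, if preferred. Apply the uniform boundedness principle to the family of functionals $h \mapsto \int_{\{\|\Phi\|_{\mathcal{H}_k} \leq n\}} \langle h , \Phi \rangle_{\mathcal{H}_k} \, \mathrm{d}\mu$, each of which is bounded by $n\|h\|_{\mathcal{H}_k}$. These functionals converge pointwise in $h$ to $\mathrm{I}_\Phi(h)$ by dominated convergence, so $\mathrm{I}_\Phi$ is bounded. This route requires $x \mapsto \|\Phi(x)\|_{\mathcal{H}_k}$ to be measurable, so I regard the closed graph argument as the primary plan.
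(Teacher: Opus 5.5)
Your proposal is correct and follows the paper's proof essentially line for line. The paper also shows that $T : \mathcal{H}_k \to \mathcal{L}^1(\mu)$ has closed graph by combining the pointwise convergence $\langle h_n , \Phi(x)\rangle_{\mathcal{H}_k} \to \langle h , \Phi(x)\rangle_{\mathcal{H}_k}$ with an almost-everywhere convergent subsequence, invokes the closed graph theorem, and concludes $|\mathrm{I}_\Phi(h)| \leq \|T\|_{\mathrm{op}} \|h\|_{\mathcal{H}_k}$; your passage to the quotient $L^1(\mu)$, so that the closed graph theorem is applied between genuine Banach spaces, is a small tidying of the paper's direct use of the seminormed $\mathcal{L}^1(\mu)$.
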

\begin{proof}
First we claim that the graph of the linear map
\begin{eqnarray*}
T : \mathcal{H}_k & \rightarrow & \mathcal{L}^1(\mu) \\
h & \mapsto & ( x \mapsto \langle h , \Phi(x) \rangle_{\mathcal{H}_k} ) .
\end{eqnarray*}
is closed.
To see this, let $h_n \rightarrow h$ in $\mathcal{H}_k$ and suppose that $T(h_n) \rightarrow g$ in $\mathcal{L}^1(\mu)$.
The claim is that $g$ and $T(h)$ are equal in $\mathcal{L}^1(\mu)$.
Since every sequence converging in $\mathcal{L}^1(\mu)$ has an almost surely converging subsequence, there is a subsequence $(h_{n_i})_{i \in \mathbb{N}}$ such that
$$
\langle h_{n_i} , \Phi(x) \rangle_{\mathcal{H}_k} \rightarrow g(x)
$$
for $\mu$-almost all $x \in \mathcal{X}$.
Since $\langle h_n , \Phi(x) \rangle_{\mathcal{H}_k} \rightarrow \langle h , \Phi(x) \rangle_{\mathcal{H}_k}$ for all $x \in \mathcal{X}$, it follows that $g(x) = \langle h , \Phi(x) \rangle_{\mathcal{H}_k} = T(h)(x)$ for $\mu$-almost all $x \in \mathcal{X}$.
Thus $g$ and $T(h)$ are equal in $\mathcal{L}^1(\mu)$ and the graph of $T$ is indeed closed.
The conditions of the closed graph theorem, which states that a linear map $T$ between Banach spaces is bounded if and only if its graph is closed, have now been verified.
Thus $T$ is continuous and 
$$
\|T\|_{\mathrm{op}} = \sup_{\|h\|_{\mathcal{H}_k} \leq 1} \int |T(h)| \; \mathrm{d}\mu < \infty .
$$
This allows us to conclude that $\mathrm{I}_\Phi$ is a continuous linear functional, since
\begin{align*}
|\mathrm{I}_\Phi(h)|
= \left| \int \langle h , \Phi(x) \rangle_{\mathcal{H}_k} \; \mathrm{d}\mu(x) \right| 
& \leq \int | \langle h , \Phi(x) \rangle_{\mathcal{H}_k} | \; \mathrm{d}\mu(x) \\
& = \int |T(h)| \; \mathrm{d}P\mu
\leq \|T\|_{\mathrm{op}} \|h\|_{\mathcal{H}_k} ,
\end{align*}
as required.
\end{proof}

These results furnish the following elegant characterization of the set $\mathcal{P}_{\mathcal{H}_k}$ of probability measures that can be safely embedded into $\mathcal{H}_k$:

\begin{proposition}[Characterization of $\mathcal{P}_k$]
\label{cor: embeddability in general}
$\mu \in \mathcal{P}_{\mathcal{H}_k}$ if and only if $\mathcal{H}_k \subset \mathcal{L}^1(\mu)$.
\end{proposition}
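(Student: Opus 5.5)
The plan is to prove the two implications separately. The reverse implication will come from \Cref{prop: Dunford} applied to the canonical feature map $\Phi(x) \defeq k_x$. The forward implication is essentially the definition of $\mathcal{P}_{\mathcal{H}_k}$.

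For the forward direction, suppose $\mu \in \mathcal{P}_{\mathcal{H}_k}$. By definition, the map $\mathrm{I}_\mu(f) = \int f \, \mathrm{d}\mu$ is a well-defined (continuous linear) functional on all of $\mathcal{H}_k$. For this to make sense, every $f \in \mathcal{H}_k$ must be $\mu$-integrable in the Lebesgue sense. To be careful, I will apply the definition to both $f$ and $|f|$, or read well-definedness of the Lebesgue integral for real-valued functions in the sense used for $\mathcal{L}^1(\mu)$, so that $\int |f| \, \mathrm{d}\mu < \infty$. Measurability of $f$ is guaranteed by the standing assumption that each $k_x$ is measurable, which is equivalent to all elements of $\mathcal{H}_k$ being measurable. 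Hence $\mathcal{H}_k \subset \mathcal{L}^1(\mu)$. The only subtlety is interpretive: I take membership in $\mathcal{P}_{\mathcal{H}_k}$ to include that $\int f \, \mathrm{d}\mu$ exists as a finite integral for each $f$, i.e.\ $f \in \mathcal{L}^1(\mu)$.

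For the reverse direction, suppose $\mathcal{H}_k \subset \mathcal{L}^1(\mu)$ and set $\Phi(x) = k_x \in \mathcal{H}_k$. By the reproducing property in \Cref{thm: existence rkhs}, $\langle h, \Phi(x) \rangle_{\mathcal{H}_k} = h(x)$ for every $h \in \mathcal{H}_k$. So the family $\{x \mapsto \langle h, \Phi(x)\rangle_{\mathcal{H}_k} : h \in \mathcal{H}_k\}$ is exactly $\mathcal{H}_k$, which lies in $\mathcal{L}^1(\mu)$ by assumption; that is, $\Phi$ is scalarly $\mu$-integrable. \Cref{prop: Dunford} then shows that
$$
\mathrm{I}_\Phi(h) = \int \langle h, k_x \rangle_{\mathcal{H}_k} \, \mathrm{d}\mu(x) = \int h \, \mathrm{d}\mu = \mathrm{I}_\mu(h)
$$
is a continuous linear functional on $\mathcal{H}_k$. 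Therefore $\mu \in \mathcal{P}_{\mathcal{H}_k}$.

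The substantive step is the continuity in the reverse direction. It is not obvious, because mere integrability of every $h$ gives no a priori uniform bound $\int |h| \, \mathrm{d}\mu \le C \|h\|_{\mathcal{H}_k}$. This bound is exactly what the closed graph argument inside \Cref{prop: Dunford} supplies, so once scalar integrability of $\Phi = k_{(\cdot)}$ is identified with the hypothesis $\mathcal{H}_k \subset \mathcal{L}^1(\mu)$, the rest is immediate.
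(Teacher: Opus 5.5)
Your proposal is correct and follows the paper's proof essentially verbatim. The reverse direction identifies scalar $\mu$-integrability of the canonical feature map $x \mapsto k(\cdot,x)$ with the hypothesis $\mathcal{H}_k \subset \mathcal{L}^1(\mu)$ and then invokes \Cref{prop: Dunford}; the forward direction, which the paper states contrapositively, is read off from the requirement that $\mathrm{I}_\mu$ be a well-defined real-valued functional on $\mathcal{H}_k$. One small caution: you cannot literally apply the definition to $|f|$, since $|f|$ need not lie in $\mathcal{H}_k$, but your alternative justification is the right one --- a finite integral $\int f\,\mathrm{d}\mu = \int f_+\,\mathrm{d}\mu - \int f_-\,\mathrm{d}\mu$ forces both parts, and hence $\int |f|\,\mathrm{d}\mu$, to be finite.
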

\begin{proof}
Take $\Phi(x) = k(\cdot,x)$ to be the \emph{canonical feature map}, so that $k$ being measurable implies the scalar functions $x \mapsto \langle h , \Phi(x) \rangle_{\mathcal{H}_k} = h(x)$ are measurable and $\mathrm{I}_\mu = \mathrm{I}_\Phi$.
If $\mathcal{H}_k \subset \mathcal{L}^1(\mu)$ then $x \mapsto k(\cdot,x)$ is scalarly $\mu$-integrable, and \Cref{prop: Dunford} shows that $\mathrm{I}_\mu$ is a continuous linear functional.
Conversely, if $\mathcal{H}_k \not\subset \mathcal{L}^1(\mu)$ then it is clear that $\mathrm{I}_\mu$ is not continuous and thus $\mu \notin \mathcal{P}_{\mathcal{H}_k}$.
\end{proof}

\noindent The question of which measures can be embedded then reduces to whether $\mathcal{H}_k \subset \mathcal{L}^1(\mu)$.
A well-known sufficient condition is provided in the following result:

\begin{proposition}
\label{prop: sufficient for embed}
If $\int \sqrt{k(x,x)} \mathrm{d}\mu(x) < \infty$ then $\mathcal{H}_k \subset \mathcal{L}^1(\mu)$.
\end{proposition}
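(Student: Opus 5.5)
The plan is a pointwise bound from the reproducing property and Cauchy--Schwarz, followed by integration against $\mu$. Fix $h \in \mathcal{H}_k$. By the reproducing property of \Cref{thm: existence rkhs}, $h(x) = \langle h , k_x \rangle_{\mathcal{H}_k}$ for every $x \in \mathcal{X}$. Applying Cauchy--Schwarz in $\mathcal{H}_k$ gives
\begin{align*}
|h(x)| \leq \|h\|_{\mathcal{H}_k} \|k_x\|_{\mathcal{H}_k} = \|h\|_{\mathcal{H}_k} \sqrt{\langle k_x , k_x \rangle_{\mathcal{H}_k}} = \|h\|_{\mathcal{H}_k} \sqrt{k(x,x)} .
\end{align*}
This holds for every $x$, with no exceptional null set.

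Next, integrate this bound against $\mu$. Measurability of $h$ is given by the standing assumption of this subsection: $k_x$ is measurable for all $x$, so every element of $\mathcal{H}_k$ is measurable. Then $|h|$ is a nonnegative measurable function, and monotonicity of the Lebesgue integral yields
\begin{align*}
\int |h| \; \mathrm{d}\mu \leq \|h\|_{\mathcal{H}_k} \int \sqrt{k(x,x)} \; \mathrm{d}\mu(x) < \infty .
\end{align*}
Hence $h \in \mathcal{L}^1(\mu)$. Since $h$ was arbitrary, $\mathcal{H}_k \subset \mathcal{L}^1(\mu)$.

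The only delicate point is measurability, in particular of $x \mapsto \sqrt{k(x,x)}$. This does not actually need to be established separately: the hypothesis that $\int \sqrt{k(x,x)}\,\mathrm{d}\mu(x)$ is finite already presupposes that the integral is well defined. The rest is routine.

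The bound also shows directly that $h \mapsto \int h \,\mathrm{d}\mu$ satisfies $|\mathrm{I}_\mu(h)| \leq C\|h\|_{\mathcal{H}_k}$ with $C = \int \sqrt{k(x,x)}\,\mathrm{d}\mu(x)$. So one obtains $\mu \in \mathcal{P}_{\mathcal{H}_k}$ without passing through \Cref{cor: embeddability in general}, although that proposition gives the same conclusion.
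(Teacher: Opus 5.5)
Your proposal is correct and follows the paper's proof: you bound $|h(x)|$ by Cauchy--Schwarz via the reproducing property, use $\|k_x\|_{\mathcal{H}_k} = \sqrt{k(x,x)}$, and integrate against $\mu$. Your remarks on measurability and the direct bound $|\mathrm{I}_\mu(h)| \leq C\|h\|_{\mathcal{H}_k}$ are valid but go beyond what the statement requires.
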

\begin{proof}
For $h \in \mathcal{H}_k$, from the reproducing property and Cauchy--Schwarz,
\begin{align*}
\int |h(x)| \; \mathrm{d}\mu(x) = \int |\langle h , k(\cdot,x) \rangle | \; \mathrm{d}\mu(x) \leq \|h\|_{\mathcal{H}_k} \int \|k(\cdot,x)\|_{\mathcal{H}_k} \; \mathrm{d}\mu(x) ,
\end{align*}
where the reproducing property again yields $\|k(\cdot,x)\|_{\mathcal{H}_k} = \sqrt{k(x,x)}$, as required.
\end{proof}

\noindent One can weaken the above integrability condition under mild assumptions on $k$ and $\mathcal{X}$: 

\begin{proposition} \label{prop: double integral condition}
If $k$ is continuous, $\mathcal{X}$ is separable, and 
$$
\iint |k(x,y)| \; \mathrm{d}\mu(x) \mathrm{d}\mu(y) < \infty, 
$$
then $\mathcal{H}_k \subset \mathcal{L}^1(\mu)$.
\end{proposition}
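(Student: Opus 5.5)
The plan is to bound $\int |h| \, \mathrm{d}\mu$ directly for each $h \in \mathcal{H}_k$. We write $|h| = s\,h$ with the measurable sign function $s(x) \defeq \mathrm{sign}(h(x))$. We then represent $\int s h \, \mathrm{d}\mu$ as an inner product $\langle h , m \rangle_{\mathcal{H}_k}$ with a "signed kernel mean element" $m = \int s(y) k(\cdot,y) \, \mathrm{d}\mu(y)$. Its norm is controlled by the double integral in the hypothesis. Because $\int \sqrt{k(x,x)} \, \mathrm{d}\mu(x)$ need not be finite, we cannot apply \Cref{prop: sufficient for embed} directly, so we first truncate.

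\textbf{Step 1 (measurability of the feature map).} Let $\Phi(x) = k(\cdot,x)$. Since $\|\Phi(x)-\Phi(y)\|_{\mathcal{H}_k}^2 = k(x,x) - 2k(x,y) + k(y,y)$, continuity of $k$ makes $\Phi : \mathcal{X} \to \mathcal{H}_k$ continuous. Separability of $\mathcal{X}$ then gives that $\Phi(\mathcal{X})$ is separable. By Pettis' measurability theorem, $\Phi$ is strongly (Bochner) measurable. Also $x \mapsto k(x,x)$ is continuous, hence measurable, and $h = \langle h, \Phi(\cdot)\rangle$ is continuous, so $s$ is measurable.

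\textbf{Step 2 (truncation and Bochner integral).} Set $A_n \defeq \{x : k(x,x) \le n\}$, a measurable set with $A_n \uparrow \mathcal{X}$. On $A_n$ we have $\|s(y)\Phi(y)\|_{\mathcal{H}_k} \le \sqrt{n}$, so the Bochner integral $m_n \defeq \int_{A_n} s(y)\Phi(y) \, \mathrm{d}\mu(y) \in \mathcal{H}_k$ exists. Bochner integrals commute with continuous linear functionals, which yields two identities:
\begin{align*}
\langle h, m_n\rangle_{\mathcal{H}_k} &= \int_{A_n} s h \, \mathrm{d}\mu = \int_{A_n} |h| \, \mathrm{d}\mu , \\
\|m_n\|_{\mathcal{H}_k}^2 &= \int_{A_n}\int_{A_n} s(x)s(y)k(x,y) \, \mathrm{d}\mu(x)\mathrm{d}\mu(y) \le C \defeq \iint |k| \, \mathrm{d}\mu \, \mathrm{d}\mu .
\end{align*}
For the second identity we apply the first identity with $h$ replaced by $m_n$, and then use the reproducing property inside the integral. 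Fubini is justified by boundedness of $k$ on $A_n \times A_n$, which follows from Cauchy--Schwarz: $|k(x,y)| \le \sqrt{k(x,x)k(y,y)} \le n$.

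\textbf{Step 3 (conclusion).} By Cauchy--Schwarz, $\int_{A_n}|h| \, \mathrm{d}\mu \le \|h\|_{\mathcal{H}_k}\sqrt{C}$ uniformly in $n$. Monotone convergence then gives $\int |h| \, \mathrm{d}\mu \le \sqrt{C}\,\|h\|_{\mathcal{H}_k} < \infty$. Hence $\mathcal{H}_k \subset \mathcal{L}^1(\mu)$. In view of \Cref{cor: embeddability in general}, this also gives continuity of $\mathrm{I}_\mu$ explicitly.

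The main obstacle is Step 1 together with the interchange of integral and inner product in Step 2. This is exactly where continuity of $k$ and separability of $\mathcal{X}$ are used: they make the $\mathcal{H}_k$-valued integrand strongly measurable, so the Bochner integral exists. Without these hypotheses we would have only scalar measurability, and the truncated mean element $m_n$ might not be well defined.
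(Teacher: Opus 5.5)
Your proof is correct, and it takes a genuinely different route from the paper.

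The paper argues entirely by citation to \citet{carmeli2006vector}:
\begin{itemize}
\item measurability of $k$ together with the double-integral condition makes $k$ ``$\infty$-bounded'';
\item continuity of $k$ and separability of $\mathcal{X}$ make $\mathcal{H}_k$ separable;
\item for separable $\mathcal{H}_k$, that boundedness property is equivalent to $\mathcal{H}_k \subset \mathcal{L}^1(\mu)$.
\end{itemize}
You instead prove the inclusion by hand. You truncate to $A_n = \{k(x,x) \le n\}$, build a signed mean element $m_n$ with $\|m_n\|_{\mathcal{H}_k}^2 \le C$, and finish with Cauchy--Schwarz and monotone convergence.

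Your route is self-contained and quantitative. It gives $\int |h| \, \mathrm{d}\mu \le \sqrt{C}\,\|h\|_{\mathcal{H}_k}$, hence continuity of $\mathrm{I}_\mu$ with norm at most $\sqrt{C}$, without the closed graph argument behind \Cref{prop: Dunford}. The paper's route is shorter on the page but delegates all the work to the reference.

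One refinement: Step 1 and the Bochner integral are unnecessary, so your closing diagnosis of where the hypotheses are used is off. Because $|g(x)| \le \|g\|_{\mathcal{H}_k}\sqrt{k(x,x)}$, the functional $g \mapsto \int_{A_n} s\, g \, \mathrm{d}\mu$ is bounded by $\sqrt{n}\,\|g\|_{\mathcal{H}_k}$. You may therefore take $m_n$ to be its Riesz representer (\Cref{thm: Riesz}), using only scalar measurability. The reproducing property then gives $m_n(y) = \int_{A_n} s(x) k(x,y) \, \mathrm{d}\mu(x)$ and $\|m_n\|_{\mathcal{H}_k}^2 = \int_{A_n} s\, m_n \, \mathrm{d}\mu \le C$. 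This is an iterated integral in the same order as the hypothesis, so no Fubini step is needed either. In this form, separability of $\mathcal{X}$ is never used. Continuity of $k$ only serves to make $x \mapsto k(x,x)$ (hence $A_n$) and the elements of $\mathcal{H}_k$ (hence $s$) measurable.
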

\begin{proof}
Since $k$ is measurable and real-valued, $k$ is \emph{strongly measurable} in the sense of \citet[][Section 3.1]{carmeli2006vector}.
Further, since $k$ is strongly measurable and $\iint |k(x,y)| \; \mathrm{d}\mu(x) \mathrm{d}\mu(y) < \infty$, then $k$ is \emph{$\infty$-bounded} in the sense of \citet[][Definition 4.1]{carmeli2006vector}; see \citet[][Corollary 4.3]{carmeli2006vector}.
Since $k$ is continuous and $\mathcal{X}$ is separable, it follows that $\mathcal{H}_k$ is separable \citep[][Corollary 5.2]{carmeli2006vector}.
Since $\mathcal{H}_k$ is separable, $k$ being $\infty$-bounded is equivalent to $\mathcal{H}_k \subset \mathcal{L}^1(\mu)$ \citep[][Proposition 4.4]{carmeli2006vector}.
\end{proof}

\subsection{Vector-Valued Reproducing Kernel Hilbert Spaces}
\label{subsec: vvrkhs}

The development of multivariate Stein discrepancies later in this monograph relies on a generalization of \acp{rkhs} to spaces of functions that are vector-valued.
To generalize the concept of a bounded linear functional, which was used to define (scalar-valued) \acp{rkhs} in \Cref{def: rkhs}, we say that a map $L : \mathcal{H} \rightarrow \mathbb{R}^d$ acting on a Hilbert space $\mathcal{H}$ is a \emph{continuous linear operator} if $L$ is linear and there exists a constant $C$ such that $\|L(f)\| \leq C \|f\|_{\mathcal{H}}$ for all $f \in \mathcal{H}$.

\begin{definition}[Vector-valued reproducing kernel Hilbert space] \label{def: vvrkhs}
Let $\mathcal{X}$ be a set.
A \emph{vector-valued reproducing kernel Hilbert space} is a Hilbert space $\mathcal{H}$ of vector-valued functions on $\mathcal{X}$, such that for each $x \in \mathcal{X}$, the map $f \mapsto f(x)$ is a continuous linear operator on $\mathcal{H}$.
\end{definition}

If $L : \mathcal{H} \rightarrow \mathbb{R}^d$ is a continuous linear operator, then the components $L_i$ of $L$ are each continuous linear functionals and the Riesz representation theorem can be applied.
It follows that, if $\mathcal{H}$ is a vector-valued \ac{rkhs} whose elements are functions of the form $f : \mathcal{X} \rightarrow \mathbb{R}^d$, then for each $x \in \mathcal{X}$ and $i \in \{1,\dots,d\}$
there exists a Riesz representer $K_{x,i} \in \mathcal{H}$ for the functional $f \mapsto f_i(x)$, meaning that $f_i(x) = \langle f , K_{x,i} \rangle_{\mathcal{H}}$ for all $f \in \mathcal{H}$.
The \emph{reproducing kernel} of a vector-valued \ac{rkhs} is defined as a bivariate function $K : \mathcal{X} \times \mathcal{X} \rightarrow \mathbb{R}^{d \times d}$ with $K_{i,j}(x,y) = \langle K_{x,i} , K_{y,j} \rangle_{\mathcal{H}}$.
A reproducing kernel $K$ can be verified to be an instance of a \emph{matrix-valued kernel}:

\begin{definition}[Matrix-valued kernel] \label{def: kernel}
A bivariate function $K : \mathcal{X} \times \mathcal{X} \rightarrow \mathbb{R}^{d \times d}$, for some $d \in \mathbb{N}$, is said to be a \emph{kernel} on a set $\mathcal{X}$ if it is
\begin{enumerate}
    \item \emph{transpose-symmetric}; $K(x,y) = K(y,x)^\top$ for all $x,y \in \mathcal{X}$
    \item \emph{positive semi-definite}; 
    \begin{align}
    \sum_{i=1}^n \sum_{j=1}^n \langle u_i , K(x_i,x_j) u_j \rangle \geq 0 \label{eq: PSD VV}
    \end{align}
    for all $x_1, \dots, x_n \in \mathcal{X}$, all $u_1,\dots,u_n \in \mathbb{R}^d$, and all $n \in \mathbb{N}$. 
\end{enumerate} 
\end{definition}

\noindent For clarity we emphasize that $\langle \cdot , \cdot \rangle$, appearing in \eqref{eq: PSD VV} is the usual Euclidean inner product on $\mathbb{R}^d$.

Let $K_x \defeq K(\cdot,x)$, so that $K_x : \mathcal{X} \rightarrow \mathbb{R}^{d \times d}$ is matrix-valued.
In analogy with the case of a scalar-valued kernel, there is a unique Hilbert space reproduced by a given kernel:

\begin{theorem}[Existence and uniqueness]
\label{thm: existence vvrkhs}
For all kernels $K : \mathcal{X} \times \mathcal{X} \rightarrow \mathbb{R}^{d \times d}$, there exists a unique Hilbert space for which $K$ is a reproducing kernel, denoted $\mathcal{H}_K$.
This Hilbert space is characterized as the unique Hilbert space for which
\begin{enumerate}
    \item $K_x u \in \mathcal{H}_K$ for all $x \in \mathcal{X}$, $u \in \mathbb{R}^d$,
    \item $\langle f , K_x u \rangle_{\mathcal{H}_K} = \langle f(x) , u \rangle$ for all $f \in \mathcal{H}_K$, $x \in \mathcal{X}$, $u \in \mathbb{R}^d$,
\end{enumerate}
with the latter called the \emph{reproducing property} of the kernel.
\end{theorem}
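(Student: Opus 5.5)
The plan is to mirror the scalar construction behind \Cref{thm: existence rkhs}, treating existence and uniqueness separately. For \emph{existence}, I will start from the pre-Hilbert space
\begin{align*}
\mathcal{H}_0 \defeq \mathrm{span}\{ K_x u : x \in \mathcal{X}, u \in \mathbb{R}^d \}
\end{align*}
and equip it with the bilinear form
\begin{align*}
\left\langle \sum_i K_{x_i} u_i , \sum_j K_{y_j} v_j \right\rangle_0 \defeq \sum_{i,j} \langle u_i , K(x_i,y_j) v_j \rangle .
\end{align*}

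The first step is to show this form is well defined, i.e.\ independent of the chosen representations. Take $f = \sum_i K_{x_i} u_i$. For any $g = \sum_j K_{y_j} v_j$, transpose-symmetry gives
\begin{align*}
\langle f , g \rangle_0 = \sum_j \langle f(y_j) , v_j \rangle .
\end{align*}
The right-hand side depends on $f$ only through its values, and by symmetry of the argument the form depends on $g$ only through its values. Bilinearity and symmetry are then immediate. Positive semi-definiteness of $\langle \cdot , \cdot \rangle_0$ is exactly condition \eqref{eq: PSD VV}.

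The formula above, with $g = K_x u$, already gives the reproducing identity $\langle f , K_x u\rangle_0 = \langle f(x), u\rangle$ on $\mathcal{H}_0$. Definiteness then follows by Cauchy--Schwarz, which holds for any positive semi-definite symmetric form:
\begin{align*}
|\langle f(x),u\rangle| \le \|f\|_0 \sqrt{\langle u, K(x,x)u\rangle} .
\end{align*}
So $\|f\|_0=0$ forces $f\equiv 0$, and $\langle \cdot , \cdot \rangle_0$ is a genuine inner product.

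The main obstacle is completing $\mathcal{H}_0$ \emph{inside a space of functions}. The same bound shows that evaluation $f\mapsto f(x)$ is continuous on $\mathcal{H}_0$, with operator norm at most $\|K(x,x)\|^{1/2}$. Hence an $\|\cdot\|_0$-Cauchy sequence $(f_n)$ converges pointwise to some function $f$. I define $\mathcal{H}_K$ as the set of such pointwise limits, with $\|f\|\defeq\lim_n \|f_n\|_0$. Two things must then be checked:
\begin{enumerate}
\item the norm is independent of the Cauchy sequence chosen;
\item a pointwise limit that is identically zero has norm zero.
\end{enumerate}
For the second, the standard trick applies. Suppose $f_n\to 0$ pointwise, and fix $\epsilon>0$ and $m$ with $\|f_n-f_m\|_0<\epsilon$ for all $n\ge m$. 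Then
\begin{align*}
\|f_n\|_0^2 = \langle f_n , f_n - f_m\rangle_0 + \langle f_n, f_m\rangle_0 .
\end{align*}
The first term is at most $\epsilon \sup_n\|f_n\|_0$. The second term tends to $0$ as $n\to\infty$, because $f_m$ is a finite combination of the $K_{x}u$ and the reproducing identity turns $\langle f_n, f_m\rangle_0$ into a finite sum of pointwise values of $f_n$. After this, completeness and the fact that evaluation is a continuous linear operator follow routinely, and both properties 1 and 2 of the theorem extend from $\mathcal{H}_0$ by continuity. Hence $\mathcal{H}_K$ is a vector-valued \ac{rkhs} with kernel $K$.

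For \emph{uniqueness}, suppose $\mathcal{H}$ is any Hilbert space of functions satisfying properties 1 and 2. Then $\mathcal{H}_0\subset\mathcal{H}$, and the inner products agree on $\mathcal{H}_0$ by the reproducing property, so the closure of $\mathcal{H}_0$ in $\mathcal{H}$ is isometric to $\mathcal{H}_K$. If $h\in\mathcal{H}$ is orthogonal to $\mathcal{H}_0$, then $\langle h(x),u\rangle=\langle h, K_x u\rangle_{\mathcal{H}}=0$ for all $x$ and $u$, so $h=0$. Therefore $\mathcal{H}=\overline{\mathcal{H}_0}$. Finally, convergence in either norm implies pointwise convergence, so the two completions contain the same functions with the same norms, i.e.\ $\mathcal{H}=\mathcal{H}_K$.
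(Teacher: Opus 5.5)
Your proposal is correct and takes essentially the same route as the paper. The paper gives no proof of its own: it defers to \citet[Prop.~2.1]{carmeli2006vector} and sketches exactly this Moore--Aronszajn construction, completing $\mathrm{span}\{K_x u : x \in \mathcal{X}, u \in \mathbb{R}^d\}$ under the inner product $\sum_{i,j}\langle u_i , K(x_i,y_j) v_j\rangle$. Your arguments for well-definedness, definiteness via Cauchy--Schwarz, realization of the completion as functions, and uniqueness via the orthogonal complement fill in the details that sketch omits. One small tidy-up: you never argue your first check (the norm does not depend on the chosen Cauchy sequence) explicitly, but it follows at once from your second check applied to the difference of two Cauchy sequences with the same pointwise limit.
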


\noindent The proof of \Cref{thm: existence vvrkhs} can be found in \citet[Proposition 2.1]{carmeli2006vector}.
The elements of $\mathcal{H}_K$ are (limits of) finite linear combinations of vector-valued functions of the form $K_x u$, where $u \in \mathbb{R}^d$; that is,
\begin{align*}
    \mathcal{H}_K \defeq \overline{\mathrm{span} \{ K_x u : x \in \mathcal{X}, u \in \mathbb{R}^d \} }
\end{align*}
where here the completion is taken with respect to the inner product 
\begin{align*}
    \left( \sum_{i=1}^n K_{x_i} u_i , \sum_{j=1}^m K_{y_j} v_j \right) \mapsto \sum_{i=1}^n \sum_{j=1}^m \langle u_i , K(x_i , y_j) v_j \rangle .
\end{align*}

The simplest examples of matrix-valued kernels are those constructed from scalar-valued kernels, for which the corresponding vector-valued \acp{rkhs} can be explicitly characterized:

\begin{example}[Diagonal matrix-valued kernel]
    Let $k_i : \mathcal{X} \times \mathcal{X} \rightarrow \mathbb{R}$ be a (scalar-valued) kernel for each $i \in \{1,\dots,d\}$.
    Then $K(x,y) \defeq  \mathrm{diag}(k_1(x,y),\dots,k_d(x,y))$ is a matrix-valued kernel.
    The associated vector-valued Hilbert space has inner product
    $$
    \langle f , g \rangle_{\mathcal{H}_K} = \sum_{i=1}^d \langle f_i , g_i \rangle_{\mathcal{H}_{k_i}}.
    $$
\end{example}

\begin{example}[Tensor product matrix-valued kernel]
    Let $k : \mathcal{X} \times \mathcal{X} \rightarrow \mathbb{R}$ be a (scalar-valued) kernel and let $B \in \mathbb{R}^{d \times d}$ be a symmetric positive definite matrix.
    Then $K(x,y) \defeq  k(x,y) B$ is a matrix-valued kernel.
    The associated vector-valued Hilbert space has inner product
    $$
    \langle f , g \rangle_{\mathcal{H}_K} = \sum_{i=1}^d \sum_{j=1}^d B_{i,j} \langle f_i , g_j \rangle_{\mathcal{H}_k}.
    $$
\end{example}

\section{Statistical Divergences}
\label{sec: divergences}

This section introduces tools that can be used to quantify the extent to which two probability distributions differ.
The abstract formulation is that of a \emph{statistical divergence}:

\begin{definition}[Statistical divergence]
\label{def: divergence}
    Let $\mathcal{P}$ be the set of probability measures on a common measurable space $(\Omega,\mathcal{S})$.
    A map $D : \mathcal{P} \times \mathcal{P} \rightarrow [0,\infty]$ is called a \emph{statistical divergence} if $D(\mu,\nu) \geq 0$ with equality if and only if $\mu = \nu$.
\end{definition}

Any metric on $\mathcal{P}$ is automatically a statistical divergence, but in general a statistical divergence need not be symmetric or satisfy a triangle inequality.
In what follows we review several existing statistical divergences, noting that explicit computation of these divergences may be difficult or impossible in the context of the problems that motivate the use of Stein's method.

\subsection{Classical Divergences}

Perhaps the most classical statistical divergence is the Kullback--Leibler divergence.
Recall that the concept of a Radon--Nikodym derivative introduced in \Cref{subsec: lebesgue spaces}.

\begin{definition}[Kullback--Leibler divergence]
\label{def: KL divergence}
    Let $\mu$ and $\nu$ be probability measures on a measurable space $(\Omega,\mathcal{S})$ with $\mu \ll \nu$.
    The \emph{Kullback--Leibler divergence} is defined as
    \begin{align*}
        \mathrm{KL}(\mu || \nu) \defeq \int \log \left( \frac{\mathrm{d}\mu}{\mathrm{d}\nu} \right) \; \mathrm{d} \mu
    \end{align*}
    with the convention that $\mathrm{KL}(\mu || \nu)$ is infinite whenever the integral is ill-defined.
\end{definition}

The Kullback--Leibler divergence is indeed a statistical divergence, since $\log(x) \leq x - 1$ implies $$
\mathrm{KL}(\mu || \nu) = - \int \log \left(\frac{\mathrm{d}\nu}{\mathrm{d}\mu} \right) \; \mathrm{d}\mu \geq - \int \frac{\mathrm{d}\nu}{\mathrm{d}\mu} - 1 \; \mathrm{d}\mu = 0
$$ 
with equality if and only if $(\mathrm{d}\mu / \mathrm{d}\nu)(\omega) = 1$ for $\nu$-almost all $\omega \in \Omega$.
The Kullback--Leibler divergence is central to the field of information theory, where it is known as the \emph{relative entropy} from $\nu$ to $\mu$.
Compared to other statistical divergences (see below), the Kullback--Leibler divergence is a strong notion of divergence, requiring that both $\mu$ and $\nu$ have the same support.
For example, if $x_n \rightarrow x$ with $x_n \neq x$ for all $n \in \mathbb{N}$, $\mu_n = \delta_{x_n}$ and $\nu = \delta_x$, then $\mathrm{KL}(\mu_n || \nu) \nrightarrow 0$.
Here $\delta_x$ denotes the atomic measure at $x$, defined in \Cref{ex: atomic measure}.

An alternative and often more useful notion is that of \emph{weak convergence}.
There are several equivalent formulations of weak convergence, but from the point of view of statistical divergences we can metrize weak convergence using the Dudley metric \citep[][Chapter 11]{dudley2018real}:

\begin{definition}[Dudley metric]
    \label{def: dudley metric}
    Let $(\mathcal{X},\rho)$ be a metric space equipped with the Borel $\sigma$-algebra.
    Let $\|\cdot\|_{\text{BL},\rho}$ denote the bounded Lipschitz norm from \Cref{ex: bl norm}.
    For $\mu$ and $\nu$ (Borel) probability distributions on $\mathcal{X}$, the \emph{Dudley} (or \emph{bounded Lipschitz}) \emph{metric} is defined as
    \begin{align*}
    \mathrm{BL}_\rho(\mu,\nu) \defeq \sup_{\|f\|_{\text{BL},\rho} \leq 1} \mu(f) - \nu(f) ,
    \end{align*}
    where we recall the shorthand $\mu(f) = \int f \; \mathrm{d}\mu$ introduced in \eqref{eq: int short}.
\end{definition}

\noindent A sequence $(\mu_n)_{n \in \mathbb{N}}$ of probability distributions is said to \emph{converge in distribution} to a a probability distribution $\nu$ if $\mathrm{BL}_\rho(\mu_n , \nu) \rightarrow 0$, and we use the shorthand $P_n \stackrel{d}{\rightarrow} P$.
Equivalently, a sequence of random variables $X_n : \Omega \rightarrow \mathcal{X}$ defined on a common probability space $\Omega$ with $X_n \sim P_n$, is said to \emph{converge weakly} to a random variable $X : \Omega \rightarrow \mathcal{X}$ with $X \sim P$ if $P_n \stackrel{d}{\rightarrow} P$.
It is straightforward to verify that $\mathrm{BL}_\rho$ is a metric on the set of all (Borel) probability distributions on $\mathcal{X}$, and in particular it is a statistical divergence.

Computation of the Kullback--Leibler divergence and the Dudley metric requires detailed information about both the distributions $\mu$ and $\nu$, either through their Radon--Nikodym derivative or their generalized moments. 
However, in our motivating context it is the case that one of the distributions involved is implicitly defined via an intractable normalization constant.
In such cases alternative statistical divergences are needed.

\subsection{Wasserstein Metric}
\label{subsec: wass metric}

An important statistical divergence is given by the \emph{Wasserstein metric}.
For $\mu$ and $\nu$ probability distributions on a metric space $(\mathcal{X},\rho)$, a \emph{coupling} of $\mu$ and $\nu$ is a probability distribution on $\mathcal{X} \times \mathcal{X}$ whose marginals are $\mu(S) = \gamma(S \times \mathcal{X})$ and $\nu(S) = \gamma(\mathcal{X} \times S)$.
Let $\Gamma(\mu,\nu)$ denote the set of all possible couplings of $\mu$ and $\nu$.
Let $\mathcal{P}_s(\mathcal{X})$ denote the set of probability distributions $\mu$ on a measurable space $\mathcal{X}$ for which the $s$th moment $\int \rho(x,x_0)^s \; \mathrm{d}\mu(x)$ is finite, for any (and therefore all) $x_0 \in \mathcal{X}$.

\begin{definition}[$s$-Wasserstein metric]
    \label{def: Wasserstein}
    Let $\rho$ be a metric on a Radon\footnote{All separable complete metric spaces are Radon.} space $\mathcal{X}$.
    For $\mu , \nu \in \mathcal{P}_s(\mathcal{X})$, the $s$-\emph{Wasserstein metric} is defined as
    \begin{align*}
        \mathrm{W}_\rho^s(\mu,\nu) \defeq \left( \inf_{\gamma \in \Gamma(\mu,\nu)} \int \rho(x,y)^s \; \mathrm{d}\gamma(x,y) \right)^{1/s} .
    \end{align*}
\end{definition}

\noindent It can be proven that \Cref{def: Wasserstein} defines a metric on the set $\mathcal{P}_s(\mathcal{X})$, and in particular a statistical divergence \citep{clement2008elementary}.
The $s$-Wasserstein metric is routinely used in applications of optimal transport since it is naturally adapted to the metric structure of $\mathcal{X}$; for example, $\mathrm{W}_\rho^s(\delta_x, \delta_y) = \rho(x,y)$ for all $x,y \in \mathcal{X}$.
Convergence of a sequence $(\mu_n)_{n \in \mathbb{N}}$ to a limit $\nu$ in the sense of $\mathrm{W}_\rho^s(\mu_n,\nu) \rightarrow 0$ is equivalent to the statement that both $\mu_n \stackrel{d}{\rightarrow} \nu$ and $\int \rho(x,x_0)^s \; \mathrm{d}\mu_n(x) \rightarrow \int \rho(x,x_0)^s \; \mathrm{d}\nu$ for some $x_0 \in \mathcal{X}$.
Thus in general the $s$-Wasserstein metric induces a stronger topology compared to the Dudley metric from \Cref{def: dudley metric}.
This fact is especially clear in the case $s = 1$, where the 1-Wasserstein metric has the dual representation
\begin{align}
\mathrm{W}_\rho^1(\mu,\nu) = \sup_{|f|_{\text{Lip},\rho} \leq 1} \mu(f) - \nu(f) , \label{eq: 1-Wass}
\end{align}
where $|\cdot|_{\text{Lip},\rho}$ is the Lipschitz semi-norm from \Cref{ex: bl norm}.
Thus, compared to the Dudley metric, the 1-Wasserstein metric involves a supremum over a larger set of test functions $f$ by relaxing the boundedness requirement.
Of course, since continuous functions are automatically bounded on compact subsets of $\mathbb{R}^d$, the topology induced by the 1-Wasserstein metric coincides with that of the Dudley metric when $\mathcal{X}$ is compact.
In settings where the metric $\rho$ is unambiguous, we will simply write $\mathrm{W}^s$ for $\mathrm{W}_\rho^s$.

In addition to the standard 1-Wasserstein metric, we also introduce a \emph{tilted} counterpart which, given a \emph{tilting function} $t : \mathbb{R}^d \rightarrow (0 , \infty)$ sets\footnote{To be clear, here $t f$ denotes the pointwise product $(tf)(x) \defined t(x) f(x)$.}
\begin{align*}
\mathrm{W}_\rho^{1,t}(\mu,\nu) \defined \sup_{|f|_{\mathrm{Lip},\rho} \leq 1} \mu(t f) - \nu(t f) . 
\end{align*}
There is no general dominance relation between tilted Wasserstein distances; the topologies they induce are different and depend on the tilting function $t$.
Note that the standard 1-Wasserstein distance \eqref{eq: 1-Wass} is recovered when $t(x) \equiv 1$. 
As with the standard Wasserstein distances, the tilted Wasserstein distances induces much weaker topologies than, for example, divergences such as Kullback–-Leibler or Hellinger, since they do not require absolute continuity of measures.

Though the Wasserstein metrics are elegant and natural, they are not well-suited to computation in our motivating context.
Indeed, computing $\mathrm{W}_\rho^s(\mu,\nu)$ requires integration under $\mu$ and $\nu$, which is typically intractable for at least one of the distributions involved.
Numerical approximations schemes can sometimes be used, but these encounter prohibitive \emph{sample complexity}:
Suppose that $\mu$ and $\nu$ are defined on $\mathbb{R}^d$, and that we have access to each distribution only via a (possibly large) number $n$ of independent samples $X_1,\dots,X_n \sim \mu$ and $Y_1,\dots ,Y_n \sim \nu$.
A natural approach to approximate $\mathrm{W}_\rho^s(\mu,\nu)$ is to use the plug-in estimator $\mathrm{W}_\rho^s(\mu_n,\nu_n)$ where $\mu_n = \frac{1}{n} \sum_{i=1}^n \delta_{X_i}$ and $\nu_n = \frac{1}{n} \sum_{i=1}^n \delta_{Y_i}$.
However, this approximation incurs an error of size $O_P(n^{1/d})$.
That is, the difficulty of computing Wasserstein metrics increases exponentially in the dimension $d$ of the domain on which the distributions are defined; see \citet{sriperumbudur2012empirical}.
Furthermore, since one of our main motivations was to use a discrepancy to assess the quality of approximate sampling algorithms, requiring an exact sample to carry out this assessment leads to a chicken-and-egg problem.

\subsection{Maximum Mean Discrepancy}
\label{subsec: mmd}

The maximum mean discrepancy, introduced in this section and based on the kernel mean embedding introduced in \Cref{subsec: kernel mean embed}, offers an alternative to the 1-Wasserstein metric for which the sample complexity is essentially dimension-independent.
Recall that the kernel mean element corresponding to a distribution $\mu$ is denoted $\phi_\mu$; see \Cref{def: kernel mean element}.

\begin{definition}[Maximum mean discrepancy]
Let $k : \mathcal{X} \times \mathcal{X} \rightarrow \mathbb{R}$ be a positive definite kernel and let $\mu, \nu \in \mathcal{P}_{\mathcal{H}_k}$.
The \emph{maximum mean discrepancy} between two distributions $\mu, \nu \in \mathcal{P}_{\mathcal{H}_k}$ is $\mathrm{MMD}_k(\mu,\nu) \defeq \|\phi_\mu - \phi_\nu \|_{\mathcal{H}_k}$.
\end{definition}

To understand why the \ac{mmd} enjoys favorable sample complexity compared to the Wasserstein metrics, we first derive an equivalent form that is more algebraically explicit.
Indeed, 
\begin{align}
\mathrm{MMD}_k^2(\mu,\nu) = \|\phi_\mu - \phi_\nu \|_{\mathcal{H}_k}^2 & = \langle \phi_\mu - \phi_\nu , \phi_\mu - \phi_\nu \rangle_{\mathcal{H}_k} \label{eq: mmd explicit} \\
& = \langle \phi_\mu , \phi_\mu \rangle_{\mathcal{H}_k} - 2 \langle \phi_\mu , \phi_\nu \rangle_{\mathcal{H}_k} + \langle \phi_\nu , \phi_\nu \rangle_{\mathcal{H}_k} \nonumber
\end{align}
where, considering for example the term $\langle \phi_\mu , \phi_\nu \rangle_{\mathcal{H}_k}$, we have from the fact that $\phi_\nu$ is the Riesz representer of $\mathrm{I}_\nu$ and \eqref{eq: kernel mean element} that 
\begin{align}
\langle \phi_\nu , \phi_\mu \rangle_{\mathcal{H}_k} = \mathrm{I}_\nu(\phi_\mu) & = \int \phi_\mu(y) \; \mathrm{d}\nu(y) \nonumber \\
& = \iint k(x,y) \; \mathrm{d}\mu(x) \mathrm{d}\nu(y). \label{eq: kernel mean inner prod}
\end{align}
Proceeding similarly with all three terms in \eqref{eq: mmd explicit} results in the explicit expression
\begin{align} 
\mathrm{MMD}_k^2(\mu,\nu) & = \iint k(x,y) \mathrm{d}\mu(x) \mathrm{d}\mu(y) - 2 \iint k(x,y) \mathrm{d}\mu(x) \mathrm{d}\nu(y) \nonumber \\
& \hspace{30pt} + \iint k(x,y) \mathrm{d}\nu(x) \mathrm{d}\nu(y) . \label{eq: computable mmd}
\end{align}
In settings where the integrals in \eqref{eq: computable mmd} cannot be exactly computed, but where it is nevertheless possible to sample from $\mu$ and $\mu$, the \ac{mmd} could be approximated by replacing the intractable kernel mean elements $\phi_\mu$ and $\phi_\nu$ with their respective Monte Carlo estimators $\phi_\mu^n$ and $\phi_\nu^n$, respectively based on independent samples $X_1,\dots,X_n \sim \mu$ and $Y_1,\dots,Y_n \sim \nu$, as defined in \eqref{eq: approx kernel mean}.
From our observation following \eqref{eq: approx kernel mean} that $\phi_\mu^n$ can be interpreted as the \emph{exact} kernel mean element corresponding to the approximating measure $\mu_n = \frac{1}{n} \sum_{i=1}^n \delta_{X_i}$, it follows that the plug-in estimator $\|\phi_\mu^n - \phi_\nu^n \|_{\mathcal{H}_k}$ carries the dual interpretation of computing the \emph{exact} $\mathrm{MMD}_k(\mu_n,\nu_n)$ between the two approximating measures $\mu_n = \frac{1}{n} \sum_{i=1}^n \delta_{X_i}$ and $\nu_n = \frac{1}{n} \sum_{j=1}^n \delta_{Y_j}$.
From this perspective, it is trivial to obtain conditions under which the approximation to \ac{mmd} obtained in this manner is consistent.
Indeed, from \eqref{eq: computable mmd} we have that
\begin{align}
    \mathrm{MMD}_k^2(\mu_n,\nu_n) & = \frac{1}{n^2} \sum_{i=1}^n \sum_{i'=1}^n k(X_i,X_{i'}) - 2 \frac{1}{n^2} \sum_{i=1}^n \sum_{j=1}^n k(X_i,Y_j) \nonumber \\
    & \qquad + \frac{1}{n^2} \sum_{j=1}^n \sum_{j'=1}^n k(Y_j,Y_{j'}) \label{eq: MC MMD}
\end{align}
where each of these three terms represents a Monte Carlo approximation of the corresponding integral in \eqref{eq: computable mmd} whose error converges at $O_P(n^{-1/2})$.
As such, the sample complexity of \ac{mmd} is essentially dimension-independent.
Furthermore, the apparent $O(n^2)$ computational complexity associated with the Monte Carlo estimator \eqref{eq: MC MMD} can be mitigated by noting that computation is "embarrassingly parallel".

Though the \ac{mmd} can have computational advantages compared with the Wasserstein metrics, they are similarly poorly-suited to computation in our motivating context.
Indeed, approximating $\mathrm{MMD}_k(\mu,\nu)$ using exact samples from $\mu$ and $\nu$ is problematic since one of our main motivations was to use a discrepancy to assess the quality of approximate sampling algorithms, leading again to a chicken-and-egg problem.
Nevertheless, it is instructive to understand the appealing theoretical properties of \ac{mmd}, as we will later seek to construct Stein discrepancies that enjoy similar appeal.

The \ac{mmd} is also known as the \emph{worst case cubature error} due to its dual interpretation as
\begin{align}
    \mathrm{MMD}_k(\mu,\nu) = \sup_{\|f\|_{\mathcal{H}_k} \leq 1} \mu(f) - \nu(f) .  \label{eq: MMD as WCE}
\end{align}
From this perspective, it is clear that the kernel $k$ determines the topology induced by \ac{mmd}, through determining the regularity of functions contained in the unit ball of $\mathcal{H}_k$.
In particular, if $k(x,y) = \varphi(x-y)$ for a Lipschitz function $\varphi$ with Lipschitz constant $L_\varphi$, then $\{f : \|f\|_{\mathcal{H}_k} \leq 1 \} \subset \{f : |f|_{\text{Lip}} \leq L_\varphi \}$, in which case the topology of \ac{mmd} is weaker than or equal to that of the 1-Wasserstein distance, since $\mathrm{MMD}_k(\mu,\nu) \leq L_\varphi \mathrm{W}^1(\mu,\nu)$. 
A kernel that reproduces a sufficiently large Hilbert space will, intuitively, contain enough test functions to be capable of distinguishing between different probability measures:

\begin{definition}[Characteristic kernel]
\label{def: char kernel}
A kernel $k$ is said to be \emph{characteristic} if $\mathrm{MMD}_k(\mu,\nu) = 0$ implies $\mu = \nu$ for all $\mu,\nu \in \mathcal{P}_{\mathcal{H}_k}$.
\end{definition}

\begin{example}[Polynomial kernel is not characteristic] \label{ex: poly not char}
From \Cref{ex: poly space kernel} we have the kernel $k(x,y) = 1 + (xy) + \dots + (xy)^p$, which reproduces a Hilbert space whose elements are the polynomials of degree at most $p$ on the domain $\mathcal{X} = \mathbb{R}$.
Thus $\mathrm{MMD}_k(\mu,\nu) = 0$ if and only if the moments $\int x^i \; \mathrm{d}\mu(x)$ and $\int x^i \; \mathrm{d}\nu(x)$ are identical for $i = 1,\dots,p$.
In particular, $k$ is \emph{not} a characteristic kernel.
\end{example}

\begin{example}
The Gaussian kernel $k(x,y) = \exp(-\|x-y\|^2 / (2\ell^2))$ is a characteristic kernel on $\mathcal{X} = \mathbb{R}^d$, for all $\ell > 0$.
\end{example}

The characteristic property says nothing about "small" values of $\mathrm{MMD}_k(\mu,\nu)$, only about the case when $\mathrm{MMD}_k(\mu,\nu)$ is exactly 0.
Thus characteristicness on its own does not provide strong justification for using a kernel $k$ to measure the discrepancy between $\mu$ and $\nu$.
For this reason we now introduce a stronger property, called (ironically, in this context) \textit{weak convergence control}.

\begin{definition}[Weak convergence control]
A kernel $k$ is said to have \emph{weak convergence control} if $\mathrm{MMD}_k(\mu_n,\mu) \rightarrow 0$ implies $\mu_n \stackrel{d}{\rightarrow} \mu$.
\end{definition}

\noindent Convergence control justifies the use of \ac{mmd} as an optimization criterion for the purposes of quantization and more general distributional approximation.
See \citet{muandet2016kernel} for a general discussion of applications in which \ac{mmd} is often used.

\begin{remark}
Perhaps surprisingly, for a compact Hausdorff space $\mathcal{X}$, a bounded and measurable characteristic kernel $k$ is guaranteed to have weak convergence control.
This equivalence no longer holds when the domain $\mathcal{X}$ is non-compact, and a bounded and characteristic kernel can fail to have weak convergence control; see \citet{simon2020metrizing}.
Clearly a kernel that is not characteristic fails to have weak convergence control.
\end{remark}

\begin{example}
The Gaussian kernel $k(x,y) = \exp(-\|x-y\|^2 / (2\ell^2))$ controls weak convergence of probability distributions on $\mathcal{X} = [0,1]^d$.
It can also be shown that the Gaussian kernel controls weak convergence on $\mathcal{X} = \mathbb{R}^d$; this can be deduced from e.g. Theorem 7 of \citet{simon2020metrizing} and the general results in \citet{sriperumbudur2011universality}.
\end{example}

\chapter{Stein Operators}
\label{chap: Stein operators}

This chapter introduces the concept of a Stein operator.
Informally, a Stein operator is a mapping that generates functions that integrate to zero under a target probability measure $P$.
Stein operators are the key ingredient that we will use to construct Stein discrepancies in \Cref{chap: Stein discrepancies}.
This chapter will make extensive use of the shorthand notation $P(f) = \int f \; \mathrm{d}P$, introduced in \Cref{eq: int short}.

\begin{definition}[Stein operator and Stein set]
    \label{def: stein op}
    Let $P$ be a probability distribution on a measurable space $\xset$.
    We call a set $\gset$ a \emph{Stein set} and a linear map $\TP : \gset \rightarrow \Lp[1](P)$ a \emph{Stein operator} for $P$ if 
    \begin{align}
    P(\TP g) = 0
    \qtext{for all}
    g \in \gset.
    \end{align}
\end{definition}

\noindent The integral $P(\TP g)$ appearing in \Cref{def: stein op} can be viewed as the expectation of a random variable $(\TP g)(X)$ when $X \sim P$, and in this sense $\TP$ is a Stein operator when $(\TP g)(X)$ has mean zero for all elements $g$ in the stein set $\gset$.

There are numerous constructions for Stein operators, but, from both a practical and theoretical standpoint, 
not all Stein operators are equal.
For example, at one extreme we have the always-zero Stein operator 
\begin{align}
(\TP g)(x) = 0,
\end{align}
which is simple to compute but provides absolutely no information about the target distribution $P$. 
At the other extreme, we have the mean-recentering Stein operator
\begin{align}
    (\TP g)(x) = g(x) - P(g) \label{eq: triv SO}
\end{align}
with Stein set $\gset = \Lp[1](P)$.
This operator is ideal from the standpoint of theoretical analysis but is only computable when integration under $P$ is tractable. %
Sadly, for most of our applications of interest, exact integration under $P$ is decidedly intractable.
In fact, for many applications, our ultimate goal is to approximate intractable integrals under $P$ or to assess approximations thereof.

Remarkably, there are computable Stein operators that yield useful discrepancies without requiring explicit integration under $P$.  %
For example, \citet{stein1986approximate} showed that if $P$ is a univariate probability distribution on $\R$ with a positive and differentiable probability density function $p$ (see \Cref{def: cts disn}) then, for all differentiable functions $g$ in an appropriate Stein set $\gset$, 
\begin{align}
(\TP g)(x) = g’(x) + g(x) \frac{\mathrm{d}}{\mathrm{d}x} \log p(x)     \label{eq: density method}
\end{align}
is a Stein operator for $P$.
Indeed, integrating by parts
\begin{align}
    P(\TP g) = \int \TP g \; \mathrm{d}P & = \int \left[ g’(x) + g(x) \frac{\mathrm{d}}{\mathrm{d}x} \log p(x)  \right] \; p(x) \dx \nonumber \\
    & = \int \left[ g’(x) + \frac{g(x)p'(x)}{p(x)}  \right] \; p(x) \dx \label{eq: uni int by parts} \\
    & = \int g'(x) p(x) + g(x) p'(x) \dx 
    = \int (gp)'(x) \dx \nonumber
\end{align}
which from the fundamental theorem of calculus will be equal to zero provided that $g(x) p(x) \rightarrow 0$ as $|x| \rightarrow \infty$.

The operator \cref{eq: density method} is called Stein's \emph{density method} operator, as it exchanges explicit integration under $P$ \cref{eq: triv SO} for evaluation of the log density derivative $\frac{\mathrm{d}}{\mathrm{d}x} \log p(x)$. 
Importantly, this log density derivative is often computable even when exact integration under $p$ is intractable. 
For example, if $P$ is a posterior distribution arising from a Bayesian statistical analysis (c.f. \Cref{sec: Stein as a tool}), then Bayes' theorem implies that $p(x) \propto \pi(x) \mathcal{L}(x)$ where $\pi(x)$ is the density of the prior distribution and $\mathcal{L}$ is the likelihood.
These two quantities are usually explicit and are sufficient for use of the density method since
\begin{align*}
    \frac{\mathrm{d}}{\mathrm{d}x} \log p(x) = \frac{\mathrm{d}}{\mathrm{d}x} \log \pi(x) + \frac{\mathrm{d}}{\mathrm{d}x} \log \mathcal{L}(x) ,
\end{align*}
which can in principle be computed provided that both the prior and the likelihood can each be differentiated.
Indeed, the fact that gradients of the log posterior density can be computed without access to the intractable marginal likelihood underpins a wide range of approximate sampling algorithms used in Bayesian statistical contexts \citep{brooks2011handbook}.

In this Chapter, we will explore similarly practical Stein operators for a wide variety of target distributions $P$.
Our starting point is a canonical Stein operator for distributions on $\reals^d$, called the Langevin Stein operator, which we present in \Cref{sec: Lang Stein}.
The Langevin Stein operator is a special case of a more general construction called a diffusion Stein operator, which we present in \Cref{sec: diffusion operators}.
Often we encounter distributions whose domain is a subset $\xset \subset \reals^d$, for which special considerations are required when designing a Stein operator; for these situations we present the mirrored Stein operator in \Cref{sec: mirrored operators}.
On the other hand, we might seek to avoid computation of gradients of the density of the target distribution $P$, and gradient-free Stein operators have been developed for this situation, as discussed in \Cref{sec: grad free operators}.
Finally we discuss the case of a discrete state space $\xset$, for which several different Stein operators have been developed, as described in \Cref{sec: discrete operators}.

\section{Langevin Operator}
\label{sec: Lang Stein}

Stein's density method operator \cref{eq: density method} can be applied to univariate distributions $P$, but we would like to develop analogous operators for the multivariate distributions commonly encountered in Bayesian statistics and probabilistic inference, where $P$ is supported on $\xset = \reals^d$.
To achieve this, we will first consider a beautiful idea due to \citet{barbour1988stein,barbour1990stein,gotze1991rate}, known today as the \emph{generator method}.
Barbour and G\"{o}tze noticed that if one can identify a (time-homogeneous) Markov process for which $P$ is invariant (\Cref{def: invariant}) then, under mild conditions, the infinitesimal generator (\Cref{def: generator}) of that Markov process is a Stein operator for $P$. %
This is intuitively sensible; if we initialize a Markov process at a state sampled from $P$, then the distribution of later states will also be $P$ provided that the process is $P$-invariant.
The generator of a Markov process tells us the rate of change in the expected value of a test function, which is zero for a process initialized at an invariant distribution.
\citet{gorham2015measuring} instantiated this idea for a specific Markov process -- the overdamped Langevin diffusion introduced in \Cref{ex: overdamped Langevin}.

\begin{assumption}[Regularity of $P$]
\label{ass: reg langevin}
    Let $P$ be a probability distribution with a positive and differentiable density $p$ on $\mathcal{X} = \reals^d$.
\end{assumption}

Under \Cref{ass: reg langevin}, the infinitesimal generator of the overdamped Langevin diffusion is the differential operator $u \mapsto (\Delta u) + (\nabla u) \cdot (\nabla \log p)$; see \Cref{ex: overdamped II}.
Replacing $\nabla u$ with a more general vector-valued function $g$, we arrive at what is now called the Langevin Stein operator:

\begin{definition}[Langevin Stein operator; \citealp{gorham2015measuring}]
\label{def: Lang SO}
In the setting of \Cref{ass: reg langevin}, the \emph{Langevin Stein operator} for $P$ satisfies
\begin{align}
(\TP g)(x) = (\nabla \cdot g)(x) + g(x) \cdot (\nabla \log p)(x) \label{eq: Lang SO}
\end{align}
for suitably regular vector-valued functions $g: \reals^d \to \reals^d$ and all $x \in \reals^d$.
\end{definition}

\noindent The Langevin Stein operator exactly recovers the density method operator \cref{eq: density method} in the case $d = 1$.
Let us now be more precise about the regularity that we require of the vector-valued function $g$ appearing in \Cref{def: Lang SO}.
Recall from \Cref{subsec: lebesgue spaces} that, for a vector-valued function $g$, we use the shorthand $g \in \Lp[1](P)$ to denote that all components of $g$ are in $\Lp[1](P)$.
For what follows we will require a version of the divergence theorem that applies to potentially unbounded domains.
To this end, we first present several useful versions of the divergence theorem and their proofs in full.
We do this to emphasize that continuity of the divergence -- which is typically assumed in most textbook treatments of the divergence theorem -- is \emph{not} required for the conclusion of the divergence theorem to hold:

\begin{theorem}[{Divergence theorem on $[-1,1]^d$}]\label{thm:divergence standard}
Suppose that the divergence $\Div{v}$ exists everywhere on $\xset \defeq [-1,1]^d$ for a vector-valued function $v : \xset \mapsto \reals^d$. 
If $\Div{v}\in\Lp[1](\xset)$, then
\begin{align*}
\int_{\xset} (\Div{v})(x) \dx 
    = 
\oint_{\partial \xset} v(x) \cdot \normal(x) \dx
\end{align*}
where $\normal(x)$ is the outward pointing unit normal at $x \in \partial \xset$.
\end{theorem}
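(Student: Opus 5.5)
The plan is to avoid Fubini and the fundamental theorem of calculus in each coordinate separately. Those would need $\partial_i v_i$ to be individually integrable, or $\Div{v}$ to be continuous, and neither is assumed. Instead I would run a Riemann-sum (gauge, or Henstock--Kurzweil) argument over dyadic subcubes of $\xset$, in the style of Goursat's proof of Cauchy's theorem.

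\textbf{Additional hypothesis.} I will read ``the divergence exists everywhere'' as: $v$ is (Fréchet) differentiable at every point of $\xset$, one-sidedly on $\partial\xset$. Then $v$ is continuous, so the boundary integral makes sense. This is stronger than the literal statement, which only asks for the partials $\partial_i v_i$; the flux estimate in the third step uses control of $v$ in directions transverse to the faces, so it needs full differentiability.

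\textbf{Flux functional.} For a closed subcube $B \subseteq \xset$, define $F(B) \defeq \oint_{\partial B} v \cdot \normal$. Internal faces shared by adjacent cubes carry opposite normals and cancel, so $F$ is additive over any partition of $\xset$ into non-overlapping dyadic cubes. In particular $F(\xset) = \sum_j F(B_j)$ for every such partition $\{B_j\}$.

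\textbf{Local estimate.} Fix $\epsilon > 0$ and a point $x \in \xset$. Differentiability gives
\begin{align*}
v(y) = v(x) + (\nabla v)(x)(y-x) + r_x(y), \qquad \|r_x(y)\| \le \epsilon \|y-x\| \text{ for } \|y-x\| < \delta_1(x).
\end{align*}
For the affine part, the flux through any box $B$ equals $\mathrm{tr}((\nabla v)(x))\,\mathrm{vol}(B) = (\Div{v})(x)\,\mathrm{vol}(B)$. This holds exactly, by the elementary one-dimensional computation on each pair of opposite faces. For the remainder, take a cube $B \ni x$ with $\mathrm{diam}(B) < \delta_1(x)$; then
\begin{align*}
|F(B) - (\Div{v})(x)\,\mathrm{vol}(B)| \le \epsilon\, \mathrm{diam}(B)\, \mathrm{area}(\partial B) \le C_d\, \epsilon\, \mathrm{vol}(B).
\end{align*}
The last inequality uses that $B$ is a cube, which is why I restrict to dyadic cubes rather than arbitrary boxes.

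\textbf{Riemann-sum side.} Since $\Div{v} \in \Lp[1](\xset)$, it is Henstock--Kurzweil integrable with the same integral. So there is a gauge $\delta_2 > 0$ such that every $\delta_2$-fine tagged partition $\{(x_j, B_j)\}$ satisfies
\begin{align*}
\Big|\sum_j (\Div{v})(x_j)\,\mathrm{vol}(B_j) - \int_\xset \Div{v}\Big| < \epsilon.
\end{align*}
Set $\delta = \min(\delta_1, \delta_2)$. Cousin's lemma then gives a $\delta$-fine tagged partition of $\xset$ into dyadic cubes, each tag lying in its cube. The proof is by bisection: if no such partition existed, some subcube of each dyadic generation would also fail, and the nested cubes would shrink to a point $x$ where $\delta(x) > 0$ gives a contradiction. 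Combining additivity with the local estimate yields
\begin{align*}
\Big|F(\xset) - \int_\xset \Div{v}\Big| \le C_d\, \epsilon\, 2^d + \epsilon,
\end{align*}
and letting $\epsilon \to 0$ concludes.

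\textbf{Main obstacle.} The delicate step is the claim that a Lebesgue-integrable function is gauge integrable with dyadic-cube partitions and the same value. I would prove it with the Vitali--Carathéodory theorem: choose lower and upper semicontinuous $\ell \le \Div{v} \le u$ with $\int (u-\ell) < \epsilon$. Define the gauge from the semicontinuity of $\ell$ and $u$ at each tag, so that on small cubes $\ell \ge \ell(x_j) - \epsilon$ and $u \le u(x_j) + \epsilon$. This sandwiches the Riemann sum between $\int \ell - \epsilon\,2^d$ and $\int u + \epsilon\,2^d$, which controls its distance to $\int_\xset \Div{v}$.
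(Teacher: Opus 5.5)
Your proof takes a genuinely different route from the paper's, and you are right to be wary of the hypotheses.

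\textbf{How the two routes compare.} The paper argues coordinatewise. It asserts that Fubini's theorem turns $\nabla\cdot v\in\mathcal{L}^1(\xset)$ into integrability of $x_j\mapsto\partial_j v_j(x)$ on almost every line. It then applies the one-dimensional fundamental theorem of calculus for everywhere-differentiable functions with integrable derivative (Thm.~7.21 of Rudin) on those lines, and recombines with Fubini. This uses only the partials $\partial_j v_j$. However, both Fubini steps tacitly need each $\partial_j v_j\in\mathcal{L}^1(\xset)$, and that does not follow from integrability of their sum.

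Under the literal hypothesis the statement is in fact false. For $d=2$ let
\[
v(x)\defeq -\frac{x}{\|x\|_2^2}\ \text{ if } x_1x_2\neq 0, \qquad v(x)\defeq 0\ \text{ if } x_1x_2=0 .
\]
On every line parallel to $e_1$, $v_1$ is either $x_1\mapsto -x_1/(x_1^2+x_2^2)$ or identically zero, and likewise for $v_2$ on lines parallel to $e_2$. So $\partial_1 v_1$ and $\partial_2 v_2$ exist everywhere. Also $\nabla\cdot v=0$ off the axes, so $\int_\xset \nabla\cdot v=0$. Yet each face contributes $-\pi/2$, giving $\oint_{\partial\xset} v\cdot\normal=-2\pi$. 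Here $|\partial_1 v_1|=|\cos 2\theta|/r^2\notin\mathcal{L}^1(\xset)$, which is exactly where the paper's recombination breaks. So a strengthening like yours is necessary, not merely convenient.

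The trade-off is this. The paper's argument, repaired by assuming $\partial_j v_j\in\mathcal{L}^1(\xset)$ for each $j$, is more elementary and works with mere existence of the partials. Your Cousin-lemma argument needs full differentiability but only integrability of the divergence itself. That is what the later applications actually supply, e.g.\ $v=pg$ with $\nabla\cdot v=p\,\mathcal{T}_P g\in\mathcal{L}^1$ in the domain result for the Langevin operator. Your additivity of the flux over dyadic partitions is correct, as are the exact flux $\mathrm{tr}((\nabla v)(x))\,\mathrm{vol}(B)$ of the affine part, the remainder bound on cubes, and the bisection proof of Cousin's lemma.

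\textbf{The step that fails as written.} In the Vitali--Carath\'eodory sandwich the semicontinuity is oriented the wrong way.
\begin{itemize}
\item From $\ell\ge \ell(x_j)-\epsilon$ and $u\le u(x_j)+\epsilon$ on $B_j$ you only get $\ell(x_j)\,\mathrm{vol}(B_j)\le \int_{B_j}\ell+\epsilon\,\mathrm{vol}(B_j)$ and $u(x_j)\,\mathrm{vol}(B_j)\ge\int_{B_j}u-\epsilon\,\mathrm{vol}(B_j)$. These do not trap the Riemann sum.
\item Moreover, a lower semicontinuous minorant and an upper semicontinuous majorant with $\int(u-\ell)<\epsilon$ need not exist for small $\epsilon$. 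Take $f=\mathbb{I}_K$ with $K$ closed, nowhere dense and of positive measure. Any lower semicontinuous $\ell\le\mathbb{I}_K$ has $\{\ell>0\}$ open inside $K$, hence $\ell\le 0$.
\end{itemize}

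Vitali--Carath\'eodory in fact gives $\ell$ \emph{upper} semicontinuous and $u$ \emph{lower} semicontinuous, possibly extended-real-valued, and that is what you need. Write $f\defeq\nabla\cdot v$. Choose $\delta_2(x)$ so that $\ell<f(x)+\epsilon$ and $u>f(x)-\epsilon$ on $\{y\in\xset:\|y-x\|<\delta_2(x)\}$; this is possible because $\ell(x)\le f(x)\le u(x)$. Then
\[
\int_{B_j}\ell-\epsilon\,\mathrm{vol}(B_j)\le f(x_j)\,\mathrm{vol}(B_j)\le\int_{B_j}u+\epsilon\,\mathrm{vol}(B_j).
\]
Summing gives $\int_\xset\ell-2^d\epsilon\le\sum_j f(x_j)\,\mathrm{vol}(B_j)\le\int_\xset u+2^d\epsilon$. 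Combined with $\int\ell\le\int f\le\int u$ and $\int(u-\ell)<\epsilon$, this finishes the estimate. With this correction your proof is complete.
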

\begin{proof}
For each dimension $j\in[d]$ and $x\in \xset$, let $x_{-j}$ represent the subvector of $x$ with the $j$-th coordinate removed. 
Since $\Div{v}\in\Lp[1](\xset)$, Fubini's theorem \citep[Thm.~8.8]{rudin1987real} implies that $x_{j}\mapsto\grad_j v_j(x) \in\Lp[1]([-1,1])$ for each $j\in[d]$ and almost every $x_{-j}\in [-1,1]^{d-1}$.
Since, in addition, $\Div{v}$ exists everywhere on $\xset$, the fundamental theorem of calculus \citep[Thm.~7.21]{rudin1987real} implies that
\begin{align*}
\evalat{v_j(x)}{x_j=1} - \evalat{v_j(x)}{x_j=-1} 
    = 
\int_{-1}^1 \grad_j v_j(x) \dx_j
\end{align*}
for each $j\in [d]$ and almost every $x_{-j}\in [-1,1]^{d-1}$.
Hence we find that
\begin{align*}
\oint_{\partial \xset} v(x) \cdot \normal(x) \dx
    &= 
\sum_{j=1}^d \int_{[-1,1]^{d-1}} \evalat{v_j(x)}{x_j=1} - \evalat{v_j(x)}{x_j=-1} \dx_{-j} \\
    &=
\sum_{j=1}^d \int_{[-1,1]^{d-1}} \int_{-1}^1 \grad_j v_j(x) \dx_j \dx_{-j} \\
    &=
\int_{\xset} (\Div{v})(x) \dx,
\end{align*}
where the final step again uses Fubini's theorem.
\end{proof}
\begin{theorem}[Divergence theorem on $\Rd$]\label{thm: divergence}
Suppose that the divergence $\Div{v}$ exists everywhere on $\Rd$ for a vector-valued function $v : \Rd \rightarrow \reals^d$. 
If $\Div{v}\in\Lp[1](\Rd)$ then
\begin{align*}
\int (\Div{v})(x) \dx 
    = 
\lim_{r\to\infty}\oint_{\partial \xset_r} v(x) \cdot \normal_r(x) \dx
\end{align*}
where, for each $r\in\reals$,  $\xset_r\defeq [-r,r]^d$ and $\normal_r(x)$ is the outward pointing unit normal at $x \in \partial \xset_r$.
If, in addition, $v\in\Lp[1](\Rd)$, then
\begin{align*}
\int (\Div{v})(x) \dx 
    = 
0.
\end{align*}
\end{theorem}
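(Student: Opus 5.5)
The plan is to apply \Cref{thm:divergence standard} on the growing cubes $\xset_r = [-r,r]^d$ and then pass to the limit $r\to\infty$. To handle the $r$-dependence, I will not redo the Fubini argument. Instead, for each $r>0$ I will define the rescaled field $v^{(r)}(y) \defeq v(ry)$ on $[-1,1]^d$. Its divergence is $(\Div{v^{(r)}})(y) = r(\Div{v})(ry)$, which exists everywhere. It is also integrable on $[-1,1]^d$ by a change of variables, since $\Div{v}\in\Lp[1](\Rd)$ gives integrability on $\xset_r$. Applying \Cref{thm:divergence standard} to $v^{(r)}$ and changing variables back gives
\begin{align*}
\int_{\xset_r} (\Div{v})(x)\dx = \oint_{\partial\xset_r} v(x)\cdot\normal_r(x)\dx
\end{align*}
for every $r>0$. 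Alternatively, one can simply note that the proof of \Cref{thm:divergence standard} goes through verbatim with $1$ replaced by $r$.

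Next, since $\Div{v}\in\Lp[1](\Rd)$, the dominated convergence theorem with dominating function $|\Div{v}|$ gives $\int_{\xset_r}\Div{v}\dx \to \int\Div{v}\dx$ as $r\to\infty$. The left side therefore converges, so the boundary integrals converge to the same limit. This proves the first claim.

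For the second claim, assume in addition that $v\in\Lp[1](\Rd)$. The limit is already known to exist, so it suffices to show it is zero along some sequence $r_n\to\infty$. This is the only delicate step, because the boundary integrals need not tend to zero for every $r$, only along a subsequence. Write the cube-shell decomposition (a coarea-type identity for the $\infty$-norm):
\begin{align*}
\int \|v(x)\|_1 \dx = \int_0^\infty \oint_{\partial\xset_r} \|v(x)\|_1 \dx \dr ,
\end{align*}
which holds because each face of $\partial\xset_r$ is a slice $\{x_j=\pm r\}$, and Fubini applies coordinatewise. Set $\phi(r)\defeq\oint_{\partial\xset_r}\|v\|_1\dx$. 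Then $\phi$ is integrable on $(0,\infty)$, so $\liminf_{r\to\infty}\phi(r)=0$. Indeed, if $\phi(r)\ge c>0$ for all large $r$, then $\int_0^\infty\phi(r)\dr$ would be infinite. Choosing $r_n\to\infty$ with $\phi(r_n)\to0$, we get
\begin{align*}
\Bigl|\oint_{\partial\xset_{r_n}} v\cdot\normal_{r_n}\dx\Bigr| \le \phi(r_n)\to0 .
\end{align*}
Hence the limit equals $0$, and so $\int\Div{v}\dx=0$.

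The main point to be careful about is the justification of the shell decomposition, including the measurability of $\phi$. I would argue it directly: split $\Rd$ into the $2d$ pyramidal regions in which a given coordinate $\pm x_j$ attains $\|x\|_\infty$, and apply Fubini in $x_j$ versus $x_{-j}$ on each region. The boundary integrals of $v$ on faces are understood, as in \Cref{thm:divergence standard}, for almost every $r$.
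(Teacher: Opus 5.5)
Your proof is correct, and its skeleton matches the paper's. For the first claim you use the cube divergence theorem plus dominated convergence. Your rescaling $v^{(r)}(y)=v(ry)$ justifies a step the paper takes silently when it applies \Cref{thm:divergence standard} on $[-r,r]^d$. For the second claim you use the fact that a nonnegative integrable function on $(0,\infty)$ cannot stay bounded away from zero as $r\to\infty$.

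The real difference is the slicing. You use the exact shell identity $\int\|v\|_1\dx=\int_0^\infty\oint_{\partial\xset_r}\|v\|_1\dx\dr$, and this is what forces the pyramidal decomposition you flag as delicate. The paper avoids any cube geometry. It dominates each face integral by the integral over the whole hyperplane, $f_j(\pm r)\defeq\int_{\reals^{d-1}}\evalat{|v_j(x)|}{x_j=\pm r}\dx_{-j}$. Then $\int_0^\infty f_j(r)+f_j(-r)\dr=\int|v_j(x)|\dx$ is plain Fubini in the single coordinate $x_j$.

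What your version buys in exchange is that one function $\phi$ controls all $2d$ faces at once, so a single sequence $r_n$ suffices. The paper instead proves $\liminf_{r\to\infty}(f_j(r)+f_j(-r))=0$ for each $j$ separately and then passes to the sum. Since $\liminf$ is only superadditive, that step really requires running the contradiction argument on $\sum_j(f_j(r)+f_j(-r))$, which is itself integrable over $(0,\infty)$. The shortest fully rigorous version combines both ideas: use hyperplane slices, summed over $j$ into one integrable function of $r$.
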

\begin{proof}
Since $\Div{v} \in \Lp[1](\Rd)$ exists everywhere on $\Rd$, the first result follows from 
Lebesgue's dominated convergence theorem \citep[Thm.~1.34]{rudin1987real} and the divergence theorem (\cref{thm:divergence standard}):
\begin{align*}
\int (\Div{v})(x) \dx 
    &= 
\int \lim_{r\to\infty}\indic{x\in\xset_r} (\Div{v})(x) \dx \\
    &=
\lim_{r\to\infty} \int_{\xset_r} (\Div{v})(x) \dx 
    = 
\lim_{r\to\infty}\oint_{\partial \xset_r} v(x) \cdot \normal_r(x) \dx.
\end{align*}

Now suppose $v\in\Lp[1](\Rd)$ as well and, for each $j\in[d]$ and $r\in\R$, define the functions 
\begin{align*}
f_j(r) 
    \defeq 
\int \evalat{|v_j(x)|}{x_j=r} \dx_{-j}
    \geq
\int_{\infnorm{x_{-j}}\leq |r|} \evalat{|v_j(x)|}{x_j=r} \dx_{-j}
\end{align*} 
where $x_{-j}$ is the subvector of $x$ obtained by dropping the $j$-th coordinate.
This definition provides the bound
\begin{align*}
\left|\int (\Div{v})(x) \dx\right|
    &= 
\lim_{r\to\infty}\left|\oint_{\partial \xset_r} v(x) \cdot \normal_r(x) \dx\right| \\
    &=
\liminf_{r\to\infty}\left|\oint_{\partial \xset_r} v(x) \cdot \normal_r(x) \dx\right|\\
    &=
\liminf_{r\to\infty}\left|\sum_{j=1}^d \int_{\infnorm{x_{-j}}\leq r} \evalat{v_j(x)}{x_j=r} -\evalat{v_j(x)}{x_j=-r} \dx_{-j}\right|\\
    &\leq
\liminf_{r\to\infty} \sum_{j=1}^d f_j(r) + f_j(-r).
\end{align*}
To conclude, we will show that the right-hand side of this inequality is zero using a proof by contradiction.

To this end, fix any $j\in[d]$ and $\eps>0$, and suppose that $\liminf_{r\to\infty} f_j(r) + f_j(-r) \geq \eps$.
Then, there exists an $r' > 0$ for which $\inf_{r\geq r'} f_j(r) + f_j(r) \geq \eps/2$ so that
$\int_0^\infty f_j(r) + f_j(-r) \dr \geq \int_{r'}^\infty  f_j(r) + f_j(-r) \dr \geq  \int_{r'}^\infty \eps/2 \dr = \infty$.
This is a contradiction as 
Fubini's theorem \citep[Thm.~8.8]{rudin1987real} and the assumed integrability of $v$ imply that \begin{align*}
\int_0^\infty f_j(r) + f_j(-r) \dr = \int |v_j(x)| \dx < \infty.
\end{align*}
Therefore, we must have $\sum_{j=1}^d\liminf_{r\to\infty} f_j(r) + f_j(-r) = 0$ and hence $\left|\int (\Div{v})(x) \dx\right|=0$, as advertised.
\end{proof}
\begin{theorem}[Divergence theorem on a convex set]\label{thm:convex_divergence}
Suppose that a vector-valued function $v : \xset \rightarrow \reals^d$ is differentiable on a convex set $\xset\subseteq\Rd$. 
If $\Div{v}\in\Lp[1](\xset)$ then
\begin{align*}
\int_{\xset} (\Div{v})(x) \dx 
    = 
\lim_{r\to\infty}\oint_{\partial \xset_r} v(x) \cdot \normal_r(x) \dx
\end{align*}
where, for each $r\in\reals$,  $\xset_r\defeq \xset \cap[-r,r]^d$ and $\normal_r(x)$ is the outward pointing unit normal at $x \in \partial \xset_r$.
\end{theorem}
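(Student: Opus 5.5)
The plan mirrors the proof of \cref{thm: divergence}: first prove the divergence theorem on each bounded truncation $\xset_r = \xset \cap [-r,r]^d$, then let $r\to\infty$ using dominated convergence. Since $\Div{v}\in\Lp[1](\xset)$ and $\indic{x\in\xset_r}(\Div{v})(x)\to \indic{x\in\xset}(\Div{v})(x)$ pointwise, with domination by $|\Div{v}|$, we get
\begin{align*}
\int_{\xset}(\Div{v})(x)\dx = \lim_{r\to\infty}\int_{\xset_r}(\Div{v})(x)\dx .
\end{align*}
So everything reduces to establishing, for each fixed $r$, the identity $\int_{\xset_r}\Div{v}\dx = \oint_{\partial\xset_r} v\cdot\normal_r\dx$ on the bounded convex set $\xset_r$, without assuming continuity of $\Div{v}$.

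For the bounded step I would follow \cref{thm:divergence standard} and integrate coordinate by coordinate. Fix $j\in[d]$ and $x_{-j}$. By convexity of $\xset_r$, the slice $\{x_j : x\in\xset_r\}$ is an interval $[a_j(x_{-j}),b_j(x_{-j})]$, possibly empty or degenerate. Fubini's theorem and $\Div{v}\in\Lp[1]$ give $\grad_j v_j\in\Lp[1]$ along almost every such slice. Differentiability of $v$ makes $x_j\mapsto v_j(x)$ differentiable everywhere on the slice, so the fundamental theorem of calculus (Rudin Thm.~7.21) yields
\begin{align*}
\int_{a_j}^{b_j}\grad_j v_j\dx_j = v_j(b_j,x_{-j})-v_j(a_j,x_{-j}).
\end{align*}
Integrating over $x_{-j}$ and summing over $j$ recovers $\int_{\xset_r}\Div{v}\dx$ on the left. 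On the right we need $\sum_j\int [v_j]_{a_j}^{b_j}\dx_{-j}$, which should equal $\oint_{\partial\xset_r} v\cdot\normal_r\dx$.

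The main obstacle is that last identification. For a general convex body, the boundary is not a union of axis-aligned faces, so the box computation does not apply verbatim. I would use the standard fact that the boundary of a convex body is Lipschitz and has an outward normal almost everywhere with respect to surface measure. The upper boundary $x_j=b_j(x_{-j})$ is the graph of a concave Lipschitz function over the projection of $\xset_r$, and the lower boundary $a_j$ is the graph of a convex one. On such a graph, the area formula gives $\normal_{r,j}\,\dee\sigma = \pm \dx_{-j}$, with a plus sign on the upper graph and a minus sign on the lower. Vertical boundary pieces, where the slice is degenerate, project to a null set in $x_{-j}$ and carry $\normal_{r,j}=0$, so they contribute nothing.

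Hence $\int [v_j]_{a_j}^{b_j}\dx_{-j} = \oint_{\partial\xset_r} v_j\,\normal_{r,j}\dx$, and summing over $j$ gives the result. One technical point remains: when $\xset$ is not closed, $v$ is only defined on $\xset$. I would handle this by applying the argument to boundary points of $\xset_r$ that lie in $\xset$, and interpreting the remaining boundary values through one-sided limits along slices. These limits exist because $\grad_j v_j$ is integrable along the slice, and the convention would match how the boundary integral is understood in the statement.
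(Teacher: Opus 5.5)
The gap is the step ``Fubini's theorem and $\nabla\cdot v\in\mathcal{L}^1$ give $\nabla_j v_j\in\mathcal{L}^1$ along almost every such slice.'' Fubini applied to $|\nabla\cdot v|$ only gives integrability of the \emph{full} divergence along almost every line. It says nothing about the individual summands $\partial_j v_j$, which may cancel.

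Here is a concrete counterexample. Take $d=2$, $\mathcal{X}=\mathbb{R}^2$, $g(t)=t^2\sin(t^{-2})$ with $g(0)=0$, and $v(x)=\bigl(g(x_1+x_2),\,-g(x_1+x_2)\bigr)$. Since $g$ is differentiable everywhere, $v$ is differentiable on $\mathbb{R}^2$ and $\nabla\cdot v\equiv 0$. However, $\partial_1 v_1(x)=g'(x_1+x_2)$ with $g'(t)=2t\sin(t^{-2})-2t^{-1}\cos(t^{-2})$, which is not Lebesgue integrable on any interval containing $0$. So for every $x_2\in(-r,r)$ the slice map $x_1\mapsto\partial_1 v_1(x)$ is not integrable on $[-r,r]$, and $\partial_1 v_1\notin\mathcal{L}^1(\mathcal{X}_r)$.

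For such $v$ your argument breaks in three places:
\begin{itemize}
\item You cannot apply Rudin's Theorem~7.21, which requires $f'\in L^1$.
\item The splitting $\int_{\mathcal{X}_r}\nabla\cdot v\dx=\sum_j\int_{\mathcal{X}_r}\partial_j v_j\dx$ is undefined.
\item Switching to a Henstock--Kurzweil FTC on slices does not help, since reassembling the slice integrals would require interchanging integrals of non-absolutely-integrable functions, where Fubini is unavailable.
\end{itemize}
The same unjustified step occurs in the proof of \cref{thm:divergence standard}, so mirroring that proof inherits it. Your treatment of non-closed $\mathcal{X}$ via one-sided slice limits also rests on it.

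As written, your argument proves the statement only under the extra hypothesis $\partial_j v_j\in\mathcal{L}^1(\mathcal{X}_r)$ for every $j$ and $r$ (e.g.\ $v$ of class $C^1$ up to the boundary). Under that hypothesis the dominated-convergence reduction and the identification $\normal_{r,j}\,\mathrm{d}\sigma=\pm\,\mathrm{d}x_{-j}$ are fine. One caveat: the concave/convex boundary graphs are only locally Lipschitz inside the projection, not up to its rim, so apply the area formula on compact subsets and then pass to the limit.

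The paper avoids the componentwise issue altogether. It observes that each bounded convex $\mathcal{X}_r$ has finite perimeter and invokes Pfeffer's Gauss--Green theorem for sets of finite perimeter. That theorem is a divergence theorem for differentiable fields requiring integrability of the full divergence only. Its proof relies on generalized Riemann (gauge) integration rather than slice-wise one-dimensional FTC arguments. To close your gap you must either add the componentwise integrability assumption or import a result of that kind.
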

\begin{proof}
Since $\xset$ is convex, each  $\xset_r$ is bounded and convex and therefore has finite perimeter \citep[Lem.~2.4]{buttazzo1995minimum}. 
Since, in addition, $v$ is differentiable on $\xset$, the result follows from Lebesgue's dominated convergence theorem \citep[Thm.~1.34]{rudin1987real} and the Gauss-Green theorem of \citet[Thm.~5.19 and Prop.~5.8]{pfeffer2012divergence}:
\begin{align*}
\int_{\xset} (\Div{v})(x) \dx 
    &= 
\int \lim_{r\to\infty}\indic{x\in\xset_r} (\Div{v})(x) \dx \\
    &=
\lim_{r\to\infty} \int_{\xset_r} (\Div{v})(x) \dx 
    = 
\lim_{r\to\infty}\oint_{\partial \xset_r} v(x) \cdot \normal_r(x) \dx.
\end{align*}

\end{proof}

\begin{proposition}[Domain of Langevin operator] \label{lem: Stein op int by parts}
In the setting of \Cref{ass: reg langevin}, suppose that $g : \reals^d \rightarrow \reals^d$ is differentiable.
The following claims hold true for $\TP$ as defined in \cref{eq: Lang SO}.
\begin{enumerate}
\item If $g\cdot \grad \log p, \grad \cdot g \in \mathcal{L}^1(P)$, then $\TP g \in \mathcal{L}^1(P)$.
\item If $g, \TP g \in \mathcal{L}^1(P)$, then $P(\TP g) = 0$. 
\end{enumerate}
\end{proposition}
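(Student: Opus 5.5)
Claim 1 follows directly from linearity. By \cref{eq: Lang SO}, $\TP g = \grad\cdot g + g\cdot\grad\log p$ is the sum of two functions that are assumed to lie in $\mathcal{L}^1(P)$. Since $\mathcal{L}^1(P)$ is a vector space, the sum lies in $\mathcal{L}^1(P)$ as well.

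For claim 2, the plan is to rewrite $P(\TP g)$ as a Lebesgue integral of a divergence and then apply the second conclusion of \cref{thm: divergence}. Set $v \defeq p\,g : \Rd\to\Rd$. Under \Cref{ass: reg langevin}, $p$ is positive and differentiable and $g$ is differentiable, so $v$ is differentiable everywhere and the product rule gives
\begin{align*}
(\Div{v})(x) = p(x)(\grad\cdot g)(x) + g(x)\cdot(\grad p)(x) = p(x)\bigl[(\grad\cdot g)(x) + g(x)\cdot(\grad\log p)(x)\bigr] = p(x)(\TP g)(x),
\end{align*}
where positivity of $p$ justifies writing $\grad p = p\,\grad\log p$. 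In particular, $\Div{v}$ exists everywhere on $\Rd$, as \cref{thm: divergence} requires.

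Next I check the integrability hypotheses in terms of Lebesgue measure. We have $\int|\Div{v}|\dx = \int|\TP g|\,p\dx = P(|\TP g|)<\infty$ because $\TP g\in\mathcal{L}^1(P)$, so $\Div{v}\in\Lp[1](\Rd)$. Similarly, for each component, $\int|v_j|\dx = \int|g_j|\,p\dx = P(|g_j|)<\infty$ because $g\in\mathcal{L}^1(P)$, so $v\in\Lp[1](\Rd)$. Applying the second part of \cref{thm: divergence} then gives $P(\TP g) = \int (\TP g)\,p\dx = \int(\Div{v})(x)\dx = 0$.

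I expect no step to be a real obstacle. The only delicate point is that we assume neither continuity of $\Div{v}$ nor any decay of $pg$ at infinity. This is exactly why I use the version of the divergence theorem proved above: it needs only pointwise existence of the divergence together with $\Lp[1]$ integrability of $v$ and $\Div{v}$. The only other thing to confirm is that the identity $\Div(pg) = p\,\TP g$ holds pointwise everywhere, not just almost everywhere, and this follows from the product rule for differentiable functions.
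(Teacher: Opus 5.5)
Your proof is correct and is essentially the paper's argument. The paper also writes $P(\mathcal{T}_P g)=\int (\nabla\cdot(pg))(x)\,\mathrm{d}x$, notes that $v=pg$ and $\nabla\cdot v$ lie in $\mathcal{L}^1(\mathbb{R}^d)$, invokes the divergence theorem on $\mathbb{R}^d$ (\Cref{thm: divergence}), and reads off claim 1 directly from \cref{eq: Lang SO}; you simply spell out the pointwise identity $\nabla\cdot(pg)=p\,\mathcal{T}_P g$ and the two integrability checks that the paper leaves implicit.
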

\begin{proof}
The first claim follows immediately from \cref{eq: Lang SO}.
For the second claim, 
\begin{align*}
\int (\TP g)(x) \; \mathrm{d}P(x) 
 = \int \frac{1}{p(x)} (\nabla \cdot (p g))(x) \; \mathrm{d}P(x)
 = \int (\nabla \cdot (p g))(x) \dx .
\end{align*}
Under our assumptions, the vector field $v = pg$ satisfies $v \in \mathcal{L}^1(\mathbb{R}^d)$ and $\nabla \cdot v \in \mathcal{L}^1(\mathbb{R}^d)$, so that we can evoke the divergence theorem on $\mathbb{R}^d$ (\Cref{thm: divergence}) to establish the result.
\end{proof}

Analogous Stein operators have been developed for distributions supported on smooth manifolds \citep{le2020diffusion,barp2022riemann}, where appropriate generalizations of the Langevin diffusion can be constructed.
This requires a manifold generalization of the divergence theorem; for a technical discussion of the mathematics involved, see \citet{pigola2014global}.

For our later discussion of Stein variational gradient in \Cref{sec: stein variational gradient} it will be useful to additionally introduce an operator that acts on scalar-valued functions $h$ to produce vector-valued functions $\AP h$, in such a manner that each component of $\AP h$ has expectation equal to zero under $P$.

\begin{definition}[Vector-valued Langevin Stein operator; \citealp{liu2016kernelized}]
In the setting of \Cref{ass: reg langevin}, the \emph{vector-valued Langevin Stein operator} for $P$ satisfies
\begin{align}\label{eq: Vec Lang SO}
(\AP h)(x) = (\nabla h)(x) + h(x) \cdot (\nabla \log p)(x)
\end{align}
for suitably regular scalar-valued functions $h: \reals^d \to \reals$ and all $x \in \reals^d$.
\end{definition}

We can derive a suitable domain for the vector-valued Langevin operator in much the same way we did for its scalar counterpart.

\begin{proposition}[Domain of vector-valued Langevin operator] \label{lem: vector Stein op int by parts}
In the setting of \Cref{ass: reg langevin}, suppose that $h : \reals^d \rightarrow \reals$ is differentiable.
The following claims hold true for $\AP$ as defined in \cref{eq: Vec Lang SO}.
\begin{enumerate}
\item If $h \cdot \grad \log p, \grad h \in \Lp[1](P)$, then $\AP h \in \Lp[1](P)$.
\item If $h, \AP h \in \Lp[1](P)$, then $P(\AP h) = 0$. 
\end{enumerate}
\end{proposition}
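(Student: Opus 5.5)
The plan is to mirror the proof of \Cref{lem: Stein op int by parts}, applied one coordinate at a time. The first claim is immediate from the definition \cref{eq: Vec Lang SO}. Each component $(\AP h)_j = \partial_j h + h\,\partial_j \log p$ is a sum of two functions in $\Lp[1](P)$ by hypothesis, and $\Lp[1](P)$ is a vector space. Recall that the convention $f \in \Lp[1](P)$ for vector-valued $f$ means componentwise membership.

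For the second claim, fix $j \in [d]$. Under \Cref{ass: reg langevin}, $p$ is positive and differentiable, so I rewrite
\begin{align*}
(\AP h)_j(x)\, p(x) = \partial_j h(x)\, p(x) + h(x)\, \partial_j p(x) = \partial_j (p h)(x).
\end{align*}
The next step is to express this as a divergence. Define the vector field $v \defeq p h\, e_j$, whose only nonzero component is the $j$th. Then $\Div{v} = \partial_j(ph)$, which exists everywhere on $\Rd$ because $p$ and $h$ are differentiable. The two hypotheses translate into Lebesgue integrability:
\begin{itemize}
\item $h \in \Lp[1](P)$ gives $v \in \Lp[1](\Rd)$, since $\int |v| \dx = \int |h|\, p \dx < \infty$;
\item $(\AP h)_j \in \Lp[1](P)$ gives $\Div{v} \in \Lp[1](\Rd)$, since $\int |\partial_j(ph)| \dx = \int |(\AP h)_j|\, \mathrm{d}P$.
\end{itemize}
The second part of the divergence theorem on $\Rd$ (\Cref{thm: divergence}) then yields
\begin{align*}
P\bigl((\AP h)_j\bigr) = \int \partial_j (ph)(x) \dx = \int (\Div{v})(x) \dx = 0.
\end{align*}
Since $j$ was arbitrary, every component of $P(\AP h)$ vanishes.

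I expect no real obstacle. The only care needed is to check that \Cref{thm: divergence} requires merely that the divergence exist everywhere and be integrable, not that it be continuous. This is exactly why that theorem was stated in that generality: our hypotheses give only differentiability of $ph$, with no continuity of its derivatives.
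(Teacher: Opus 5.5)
Your proof is correct and is essentially the paper's argument. The paper fixes $j$, sets $g = e_j h$, observes $(\AP h)_j = \TP g$, and invokes \Cref{lem: Stein op int by parts}; that result's proof in turn applies \Cref{thm: divergence} to exactly your vector field $v = p h\, e_j$. You have simply inlined that intermediate step rather than citing it.
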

\begin{proof}
The first claim follows immediately from the definition \cref{eq: Vec Lang SO}. 
We will deduce the second result by reducing to the case of the scalar Langevin operator $\TP$ of \cref{def: Lang SO}.
Fix any coordinate $j\in \{1 , \dots , d\}$, and define the vector-valued function $g(x) = e_j h(x)$ where $e_j$ is the $j$-th standard basis vector in $\reals^d$.  
Then $(\AP h)_j = \TP g$, and $g$ satisfies the preconditions of \Cref{lem: Stein op int by parts}, hence $P((\AP h)_j) = P(\TP g) = 0$.
\end{proof}

\section{Diffusion Operators}
\label{sec: diffusion operators}

In the previous section we saw that the generator of a particular Markov process can be used to construct a Stein operator for a target distribution $P$ provided that the Markov process is $P$-invariant.
However, there are an infinite number of Markov processes which meet this requirement!
Depending on the context for which we require a Stein discrepancy, there is no reason to suppose that the choice of the Langevin Stein operator is optimal.
Fortunately, \citet{Gorham2019} demonstrated that it is possible to extend the construction in \Cref{sec: Lang Stein} to a broad class of Markov processes known as \emph{\Ito diffusions}. 

\begin{definition}[\Ito diffusion]
\label{def:diffusion}
A (time-homogeneous) \emph{\Ito diffusion} 
with starting point $x\in\reals^d$ and 
locally Lipschitz, linear-growth\footnote{The locally Lipschitz and linear-growth assumptions on $b$ and $\sigma$ ensure that the \ac{sde} \cref{eqn:diffusion} has a unique \citep[][Chap.~IV, Thm.~3.1]{ikeda2014stochastic} and non-explosive \citep[][Thm.~5.2.1]{oksendal2013stochastic}  solution.} \emph{drift} $b : \reals^d \to \reals^d$ and 
\emph{diffusion} $\sigma : \reals^d \to \reals^{d\times m}$ \emph{coefficients }
is a stochastic process $(\process{t}{x})_{t\geq0}$ solving the \Ito \ac{sde}
\begin{align}\label{eqn:diffusion}
\mathrm{d}Z_{t,x} = b(Z_{t,x}) \; \mathrm{d}t + \sigma(Z_{t,x}) \; \mathrm{d}W_t \qtext{with} Z_{0,x} = x, 
\end{align}
where $(W_t)_{t \geq 0}$ denotes a standard $d$-dimensional Wiener process on $\reals^d$.
\end{definition}

To exploit \Ito diffusions in a similar manner to how we exploited the Langevin diffusion, we require constraints on the drift coefficient $b$ and the diffusion coefficient $\sigma$ to ensure that the process is $P$-invariant.
The following result, stated in \citet[][Theorem 2]{Gorham2019} but distilled from \citet[][Theorem 2]{ma2015complete} and \citet[][Section 4.6]{pavliotis2016stochastic}, completely characterizes the set of \Ito diffusions that leave $P$ invariant.
Recall that, for a matrix-valued function $M(x)$, the notation $\nabla \cdot M$ refers to the vector with $j$th component $\nabla \cdot (M^\top e_j)$ where $e_j$ is the $j$th basis vector of $\mathbb{R}^d$.

\begin{theorem}[Complete recipe for $P$-invariant diffusions]
\label{thm:invariance}
In the setting of \Cref{ass: reg langevin}, consider the \Ito diffusion \cref{eqn:diffusion} with drift coefficient $b \in C^1(\reals^d,\reals^d)$ and diffusion coefficient $\sigma \in C^1(\reals^d,\reals^{d \times m})$, and define the \emph{covariance coefficient} $a(x) \defeq \sigma(x)\sigma(x)^\top$.
The diffusion is $P$-invariant if and only if
\begin{align}\label{eqn:complete_drift_f}
  b(x) = \frac{1}{2} \frac{1}{p(x)} (\nabla \cdot (pa))(x) + f(x) 
\end{align}
for a \emph{non-reversible component} $f \in C^1(\reals^d,\reals^d)$ satisfying $\nabla \cdot (pf) = 0$.
In addition, if $b$ satisfies \cref{eqn:complete_drift_f} with $f\in\Lp[1](P)$, then\footnote{To avoid ambiguity, let us emphasize that in what follows $p(a+c)$ denotes the function $x \mapsto p(x)(a(x) + c(x))$ and not ``the application of $p$ to $a + c$''.} 
\begin{align}
\label{eqn:drift}
  b(x) = \frac{1}{2} \frac{1}{p(x)} (\nabla \cdot (p(a+c)))(x)
\end{align}
for a differentiable $P$-integrable skew-symmetric $d\times d$ matrix-valued function $c$ termed the \emph{stream coefficient} \citep{conca2007periodic,landim1998convection}.
In this case, the infinitesimal generator of the diffusion takes the form
\begin{align*}
u \mapsto \frac{1}{2p} \nabla \cdot ( p (a+c) (\nabla u) ) 
\end{align*}
for all suitably regular $u \in C^2(\reals^d, \reals)$.
\end{theorem}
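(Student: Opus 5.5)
The plan is to work at the level of the Fokker--Planck (forward Kolmogorov) equation. For an \Ito diffusion \cref{eqn:diffusion} with $C^1$ coefficients, the generator acting on $u \in C^2$ is
\begin{align*}
(Au)(x) = b(x)\cdot(\nabla u)(x) + \tfrac12 a(x) : (\nabla^2 u)(x).
\end{align*}
The first step is to reduce $P$-invariance to the weak stationarity condition $P(Au) = 0$ for all $u \in C_c^\infty(\reals^d)$. The direction ``invariant $\Rightarrow$ $P(Au)=0$'' follows by differentiating $t \mapsto \int \E[u(Z_{t,x})]\,\mathrm{d}P(x)$ at $t=0$ using Dynkin's formula. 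The converse direction, that weak stationarity implies invariance, is the step I expect to be the main obstacle. It requires well-posedness of the martingale problem for $A$, which follows from the locally Lipschitz and linear-growth hypotheses via strong uniqueness and non-explosion. Given well-posedness, an Echeverr\'{\i}a-type theorem gives the implication. I would cite this (e.g.\ \citet{ethier2009markov}-style results, as distilled in \citet{ma2015complete} and \citet{pavliotis2016stochastic}) rather than reprove it.

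Second, I would integrate by parts in $P(Au) = \int (b\cdot\nabla u + \tfrac12 a:\nabla^2 u)\,p \dx$ against compactly supported $u$. Boundary terms vanish, so the divergence theorem on $\reals^d$ (\Cref{thm: divergence}) applies trivially. This shows that $P(Au)=0$ for all $u$ if and only if, in the distributional sense,
\begin{align*}
\nabla\cdot\Big(p b - \tfrac12 \nabla\cdot(pa)\Big) = 0 .
\end{align*}
Here the indices must be tracked carefully with the convention $[\nabla\cdot M]_j = \nabla\cdot(M^\top e_j)$ and the symmetry of $a$. Defining $f \defeq b - \frac{1}{2p}\nabla\cdot(pa)$, which is $C^1$ because $b, \sigma \in C^1$ and $p$ is positive, this condition is exactly $\nabla\cdot(pf)=0$. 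That yields \cref{eqn:complete_drift_f} in both directions.

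Third, for the stream representation, suppose $pf$ is divergence-free and integrable (which is what $f \in \Lp[1](P)$ gives). I would seek a skew-symmetric $c$ with $\frac12\nabla\cdot(pc) = pf$, which is a Helmholtz/Poincar\'e-lemma type statement. Concretely, let $\varphi$ be the componentwise Newtonian potential solving $\Delta\varphi = 2pf$, and set $(pc)_{ji} \defeq \partial_i\varphi_j - \partial_j\varphi_i$. Then
\begin{align*}
[\nabla\cdot(pc)]_j = \Delta\varphi_j - \partial_j(\nabla\cdot\varphi).
\end{align*}
Moreover, $\nabla\cdot\varphi$ is the potential of $\nabla\cdot(2pf) = 0$, so it is harmonic and decays, hence vanishes. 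Justifying the required regularity, decay and $P$-integrability of $c$ is delicate, and here I would defer to \citet{conca2007periodic,landim1998convection}. Substituting $pf = \frac12\nabla\cdot(pc)$ into \cref{eqn:complete_drift_f} gives \cref{eqn:drift}.

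Finally, the generator formula is a direct computation. Expanding,
\begin{align*}
\frac{1}{2p}\nabla\cdot\big(p(a+c)\nabla u\big) = \frac{1}{2p}\big(\nabla\cdot(p(a+c))\big)\cdot\nabla u + \tfrac12 (a+c):\nabla^2 u .
\end{align*}
The first term equals $b\cdot\nabla u$ by \cref{eqn:drift}. In the second term, $c:\nabla^2 u = 0$ because $c$ is skew-symmetric and $\nabla^2 u$ is symmetric for $u\in C^2$. What remains is exactly $Au$.
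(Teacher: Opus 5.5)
\textbf{Where the proposal breaks.} The paper gives no proof of this theorem. It only attributes it to \citet[Thm.~2]{Gorham2019}, \citet{ma2015complete} and \citet{pavliotis2016stochastic}. Your route is the standard one: reduce invariance to $P(Au)=0$ via Echeverr\'{\i}a's theorem, integrate by parts to the stationary Fokker--Planck equation, then write the divergence-free field $pf$ as the divergence of a skew-symmetric matrix. Steps 1 and 2 are sound in outline.

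The genuine gap is the step you defer as ``delicate''. The $P$-integrability of $c$ cannot be proved, because it is false in general. Your Newtonian-potential construction gives $pc$ only in weak-$L^{d/(d-1)}$ (as $\nabla\Gamma$ is), not in $\mathcal{L}^1(\reals^d)$, and $\int\|c\|\,\mathrm{d}P=\int\|pc\|\,\mathrm{d}x$. In $d=2$ no other choice of $c$ helps. Every skew $c$ has $pc=kJ$ with $J=\left(\begin{smallmatrix}0&-1\\1&0\end{smallmatrix}\right)$ and scalar $k$, and $\nabla\cdot(kJ)=\pm(-\partial_2 k,\partial_1 k)$, so $k$ is fixed by $f$ up to an additive constant.

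Here is a counterexample to the integrability claim:
\begin{itemize}
\item Take $a=I$, $p\propto(1+\|x\|_2^2)^{-3/2}$, $h(x)=(1+\|x\|_2^2)^{-3/4}$ and $f=(-\partial_2 h,\partial_1 h)/(2p)$.
\item Then $\nabla\cdot(pf)=0$ and $f\in C^\infty$ with $\|f(x)\|=O(\|x\|^{1/2})$.
\item So $b=\tfrac12\nabla\log p+f$ is a legitimate locally Lipschitz, linear-growth drift, and $\int\|f\|\,\mathrm{d}P=\tfrac12\int\|\nabla h\|\,\mathrm{d}x<\infty$.
\item But every $c$ satisfying \cref{eqn:drift} has $k=\pm h+\kappa$ for a constant $\kappa$. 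Hence $\int|c_{21}|\,\mathrm{d}P=\int|k|\,\mathrm{d}x=\infty$, since $h\to0$ at infinity while $h\notin\mathcal{L}^1(\reals^2)$.
\end{itemize}
What your construction does deliver is a skew $c$ with $pc\in\mathcal{L}^1_{\mathrm{loc}}$ satisfying \cref{eqn:drift}, differentiable once $pf$ is (say) $C^1$. Integrability of $c$ needs additional decay assumptions on $pf$.

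\textbf{Two further slips.}

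First, the regularity of $f$. You claim $f$ is $C^1$ ``because $b,\sigma\in C^1$ and $p>0$''. But $f=b-\tfrac{1}{2p}\nabla\cdot(pa)=b-\tfrac12\nabla\cdot a-\tfrac12 a\nabla\log p$ contains $\nabla\cdot a$ and $\nabla\log p$, which under your hypotheses are at best continuous. Your integration by parts also only gives $\nabla\cdot(pf)=0$ in the sense of distributions. To conclude $f\in C^1$ and the pointwise identity in the ``only if'' direction you need extra regularity, e.g.\ $p,\sigma\in C^2$. The statement tacitly needs this as well.

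Second, Steps 3 and 4 use opposite divergence conventions.
\begin{itemize}
\item Step 3 computes $[\nabla\cdot(pc)]_j=\sum_i\partial_i(pc)_{ji}$, i.e.\ row divergences.
\item Your Step 4 expansion is $\nabla\cdot(M\nabla u)=\sum_j\big(\sum_i\partial_i M_{ij}\big)\partial_j u+M:\nabla^2u$, which uses column divergences.
\item For skew $c$ these differ by a sign. So with the $c$ you constructed, the first-order term is $(b-2f)\cdot\nabla u$ rather than $b\cdot\nabla u$.
\end{itemize}
To fix this, either use column divergences throughout and set $(pc)_{ij}=\partial_i\varphi_j-\partial_j\varphi_i$, or write the generator as $\tfrac{1}{2p}\nabla\cdot\big(p(a+c)^\top\nabla u\big)$.
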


These observations motivated \citet{Gorham2019} to propose the following generalization of the Langevin Stein operator:

\begin{definition}[Diffusion Stein operator; {\citealp[(8)]{Gorham2019}}]
\label{def: diffusion operator}
In the setting of \Cref{ass: reg langevin}, suppose $a(x) \in \reals^{d\times d}$ is symmetric positive-semidefinite, and $c(x)$ is skew-symmetric, with $x \mapsto p(x)(a(x)+c(x))$ differentiable over $x \in \mathbb{R}^d$.
Then the \emph{$(a,c)$-diffusion Stein operator} for $P$ satisfies
\begin{align}
(\TP g)(x) = \frac{1}{p(x)} (\nabla \cdot (p(a+c)g))(x)  \label{eqn:diffusion-operator}
\end{align}
for all suitably regular vector-valued functions $g: \reals^d \to \reals^d$ and all $x \in \reals^d$.
\end{definition}

The choice of covariance coefficient $a$ and stream coefficient $c$ appearing in \cref{eqn:diffusion-operator} can be application-dependent, conferring a range of different properties to Stein discrepancies based on the diffusion Stein operator.
Several important special cases are generated by 
\begin{enumerate}
\item the \emph{preconditioned Langevin diffusion}, for which $c \equiv 0$ and $a=\sigma\sigma^\top$ for a constant diffusion coefficient $\sigma\in\reals^{d\times m}$  \citep{stuart2004conditional}; 
\item the \emph{Riemannian Langevin diffusion}, for which $c \equiv 0$ and $a(x) = G^{-1}(x)$, where $G$ is a positive-definite metric tensor \citep{patterson2013stochastic,xifara2014langevin,ma2015complete};
\item the \emph{non-reversible preconditioned Langevin diffusion}, for which $c \not\equiv 0$ and $a=\sigma\sigma^\top$ for a constant  $\sigma\in\reals^{d\times m}$ \citep{ma2015complete,duncan2016variance,rey2014irreversible}; and
\item the \emph{underdamped Langevin diffusion} \citep{horowitz1987second}, which targets the augmented distribution $P \times \Gsn(0,I)$ on $\reals^{2d}$ with 
\begin{align*}
c\equiv 2
\begin{pmatrix}
0 & -I \\
I & 0 
\end{pmatrix}
\qtext{and}
a\equiv 2
\begin{pmatrix}
0 & 0 \\
0 & I
\end{pmatrix}.
\end{align*}
\end{enumerate}
Further discussion is deferred to \Cref{subsec: sep for lang ksd}.

The following result, a generalization of \citet[][Proposition 3]{Gorham2019}, clarifies the minimum regularity we will require for the vector-valued functions $g : \reals^d \rightarrow \reals^d$.
Recall that, for $g : \mathbb{R}^d \rightarrow \mathbb{R}^d$, the notation $\nabla g : \mathbb{R}^d \rightarrow \mathbb{R}^{d \times d}$ is understood as $[\nabla g]_{i,j} = \partial_j g_i$, while for matrices $A$ and $B$, the double dot product is $A:B = \mathrm{tr}(AB^\top)$.
In particular, for a matrix-valued function $M(x)$, the product rule writes as $\nabla \cdot (Mg) = (\nabla \cdot M) \cdot g + M : \nabla g$.

\begin{proposition}[Domain of diffusion operator] \label{classical_mean_zero}
In the setting of \Cref{ass: reg langevin}, let $\TP$ be the $(a,c)$-diffusion Stein operator from \cref{eqn:diffusion-operator}, let $b$ be defined as in \cref{eqn:drift}, 
and let $g : \reals^d \rightarrow \reals^d$ be differentiable.
The following claims hold true
\begin{enumerate}
\item If $b \cdot g, (a+c) : \grad g \in \Lp[1](P)$, 
then $\TP g \in \Lp[1](P)$.
\item If $(a+c) g , \TP g \in \Lp[1](P)$, then $P(\TP g) = 0$.
\end{enumerate}
\end{proposition}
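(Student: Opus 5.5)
The argument mirrors \Cref{lem: Stein op int by parts}: first rewrite $\TP g$ via the product rule, then reduce the mean-zero claim to the divergence theorem on $\Rd$ (\Cref{thm: divergence}).

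For the first claim, expand the operator in \cref{eqn:diffusion-operator} using the product rule $\nabla \cdot (Mg) = (\nabla \cdot M)\cdot g + M : \nabla g$ with $M = p(a+c)$. This gives
\begin{align*}
(\TP g)(x) = \frac{1}{p(x)} (\nabla \cdot (p(a+c)))(x) \cdot g(x) + (a(x)+c(x)) : (\nabla g)(x) = 2\, b(x)\cdot g(x) + ((a+c):\nabla g)(x),
\end{align*}
where the last step uses the definition of $b$ in \cref{eqn:drift}. The product rule applies pointwise because $p(a+c)$ is differentiable by the hypotheses of \Cref{def: diffusion operator} and $g$ is differentiable. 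Both summands lie in $\Lp[1](P)$ by assumption, and $\Lp[1](P)$ is a vector space, so $\TP g \in \Lp[1](P)$.

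For the second claim, set $v \defeq p(a+c)g$, which is a vector field on $\Rd$. Then
\begin{align*}
P(\TP g) = \int \frac{1}{p(x)}(\nabla \cdot v)(x)\, p(x) \dx = \int (\nabla\cdot v)(x)\dx .
\end{align*}
To invoke the second part of \Cref{thm: divergence} we must check three conditions.
\begin{enumerate}
\item The divergence $\nabla\cdot v$ exists everywhere. This holds because $v$ is a product of the differentiable matrix field $p(a+c)$ and the differentiable vector field $g$.
\item $\nabla \cdot v \in \Lp[1](\Rd)$. This holds because $\int |\nabla\cdot v|\dx = \int |\TP g|\, p \dx < \infty$, since $\TP g \in \Lp[1](P)$.
\item $v \in \Lp[1](\Rd)$. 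This holds because each component satisfies $\int |v_j| \dx = \int |((a+c)g)_j|\, p\dx < \infty$, since $(a+c)g \in \Lp[1](P)$.
\end{enumerate}
\Cref{thm: divergence} then yields $\int \nabla\cdot v \dx = 0$, so $P(\TP g)=0$.

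The main point needing care is the first condition. \Cref{thm: divergence} only needs the divergence to exist pointwise, not to be continuous, so pointwise differentiability of $p(a+c)$ and $g$ suffices. A further subtlety is that the vector divergence convention ($[\nabla\cdot M]_j = \nabla\cdot(M^\top e_j)$) must match the product rule stated just before the proposition. For the $\Lp[1]$ conditions I use the paper's convention that a vector or matrix function lies in $\Lp[1](P)$ when all its components do.
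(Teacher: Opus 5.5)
Your proposal is correct and follows the paper's proof essentially verbatim. For the first claim the paper likewise expands $\TP g = 2b\cdot g + (a+c):\nabla g$, and for the second it sets $v = p(a+c)g$, notes $P(\TP g) = \int \nabla\cdot v \dx$, and invokes \Cref{thm: divergence}; your explicit checks of the three hypotheses (pointwise existence of $\nabla\cdot v$ from differentiability of $p(a+c)$ and $g$, and $v, \nabla\cdot v\in\Lp[1](\Rd)$ via $\int |h|\,p\dx = P(|h|)$) spell out what the paper compresses into ``under our assumptions.''
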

\begin{proof}
The proof is analogous to the proof of \Cref{lem: vector Stein op int by parts}.
For the first claim, since $g$ is differentiable with $ b \cdot g, (a+c) : \grad g \in \Lp[1](P)$, it follows immediately that 
\begin{align*}
\TP g
	=
2 b \cdot g 
	+
(a+c) : \grad g
	\in
\Lp[1](P).
\end{align*}
For the second claim, 
\begin{align*}
\int (\TP g)(x) \; \mathrm{d}P(x) 
 = \int (\nabla \cdot (p (a + c) g))(x) \dx .
\end{align*}
Under our assumptions, the vector field $v = p(a+c)g$ satisfies $v \in \mathcal{L}^1(\mathbb{R}^d)$ and $\nabla \cdot v \in \mathcal{L}^1(\mathbb{R}^d)$, so that we can again evoke \Cref{thm: divergence} to establish the result.

\end{proof}

Finally, we remark that since only the composition $\mathcal{T}_P g$ of a Stein operator and a function $g$ appears in Stein's method, there is some flexibility to simultaneously modify the Stein operator $\mathcal{T}_P$ and the functions $g$ while leaving the composition $\mathcal{T}_P g$ unchanged.
In this way one can consider the diffusion Stein operator as equivalent to applying the Langevin Stein operator (\Cref{def: Lang SO}) to functions of the form $(a+c)g$; this perspective enables the results that we present for Stein discrepancy based on the Langevin Stein operator in \Cref{chap: Stein discrepancies} to be transferred to Stein discrepancy based on the diffusion Stein operator.

\section{Constrained Operators}
\label{sec: mirrored operators}

So far, we have only introduced the Langevin and diffusion Stein operators only for distributions supported on $\reals^d$, but oftentimes we wish to consider distributions supported on subsets $\xset \subset \reals^d$.
Unfortunately, taking a Stein operator $\mathcal{T}_P$ designed for distributions on $\mathbb{R}^d$ and restricting it to $\mathcal{X}$ does not work; the condition $P(\mathcal{T}_P g) = 0$ in \cref{def: stein op} will be violated as the boundary term 
\begin{align*}
    \oint_{\partial \mathcal{X}} p(x) g(x) \cdot \mathrm{n}(x) \; \mathrm{d}x
\end{align*}
appearing in the divergence theorem will not vanish in general.
Recall that $\partial\mathcal{X}$ denotes the boundary of the set $\mathcal{X}$, and that by definition $\partial \mathcal{X}$ is empty when $\mathcal{X} = \mathbb{R}^d$.
There is then a choice for how to proceed.

One option is to restrict attention to functions $g$ chosen such that the boundary term is zero by construction; this can be achieved by requiring the vanishing on the boundary condition \citep{gorham2015measuring}
\begin{align}\label{eq:vanishing-boundary-condition}
g(x) \cdot \mathrm{n}(x) = 0
\qtext{for all}
x \in \partial\mathcal{X}.
\end{align}
This construction is compatible with the classical Stein discrepancies defined in \Cref{sec: classical sd}, with the graph Stein discrepancies defined in \cref{sec: graph sd}, and with the kernel Stein discrepancies defined in \cref{sec: ksds}.

A second option is to employ a Stein operator adapted to $\mathcal{X}$.
The \emph{mirror Langevin diffusion} \citep{zhang2020wasserstein,chewi2020exponential}, 
\begin{align}
Z_{t,x} = \grad \psi^*(\eta_{t,x}),
\quad
\mathrm{d}\eta_{t,x} &= \grad \log p(Z_{t,x}) \mathrm{d}t  + \sqrt{2} \Hess\psi(Z_{t,x})^{1/2} \mathrm{d}B_t \notag\\
\label{eqn:mirror_diffusion}
\qtext{for}
\eta_{0,x} &= \grad \psi(x), 
\end{align}
provides an elegant example of a $P$-invariant Markov process that can be used to construct a Stein operator in situations where the domain $\xset$ is convex and closed. 
Here a \emph{mirror map} $\grad \psi$ is used to transform an input variable into a \emph{mirror variable} $\eta_{0,x}$, 
the mirror variables $\eta_{t,x}$ evolve according to the \ac{sde} \cref{eqn:mirror_diffusion}, 
and finally $\eta_{t,x}$ is mapped backed into the original constrained sample space $\xset$ via the inverse mirror map $\grad \psi^*$.

To be clear, we are not restricting attention to domains $\xset$ that are bounded.
A simple example of a convex and closed set $\xset$ that is not bounded is $\{x \in \reals^d : x_1,\dots,x_d \geq 0\}$.
For a closed convex set whose boundary $\partial \xset$ can be locally represented as $F(x) = 0$, its outward unit normal vector satisfies
$$
n(x) \in \left\{\pm \frac{\nabla F(x)}{\twonorm{\nabla F(x)}} \right\}.
$$

\citet{zhang2020wasserstein} highlighted that the mirror diffusion is an instance of the Riemannian Langevin diffusion with metric tensor $G = \nabla^2 \psi$, for a twice differentiable function $\psi : \xset \rightarrow \reals \cup \{\infty\}$, which we require to have 
\begin{enumerate}
    \item $(\nabla^2 \psi)^{-1}$ Lipschitz and differentiable on $\xset$, 
    \item $\twonorm{(\nabla \psi)(x_n)} \rightarrow \infty$ whenever $x_n \rightarrow x \in \partial \xset$, and
    \item for some $c > 0$, $\Hess \psi(x) \psdge c I \qtext{for all} x \in \xset.$ %
\end{enumerate}

\begin{assumption}[Regularity of $P$]
\label{ass: reg mirror}
    Let $P$ be a probability distribution on a convex and closed set $\xset \subset \reals^d$ with finite mean and positive differentiable density $p$ on $\xset$. %
\end{assumption}

Consideration of the infinitesimal generator of mirror Langevin diffusion led \citet{shi2022sampling} to propose the following Stein operator:

\begin{definition}[Mirrored Stein operator; \citealp{shi2022sampling}]
\label{def: mirrored stein operator}
    Consider a convex and closed set $\xset \subset \reals^d$ and let $\psi$ be a function with the properties just described.
    In the setting of \Cref{ass: reg mirror}, the \emph{mirrored Stein operator} for $P$ satisfies
    \begin{align} \label{eq:mirror-stein-op}
	(\mathcal{T}_{P,\psi} g)(x) & = g(x) \cdot (\nabla^2\psi)(x)^{-1} (\nabla \log p)(x) \\
    & \hspace{50pt} + \nabla\cdot ((\nabla^2\psi)(x)^{-1}g(x)), \nonumber
    \end{align}    
    for suitably regular vector-valued functions $g: \xset \to \reals^d$ and all $x \in \xset$.
\end{definition}

Since the mirrored Stein operator is a special case of a diffusion Stein operator, the domain of the mirrored Stein operator can be deduced following a similar argument to \Cref{classical_mean_zero}  \citep[see also][Prop.~1]{shi2022sampling}.
The only real point of distinction here is that care must be taken to ensure that $(\nabla^2\psi)^{-1}$ cancels any growth of $p$ on the boundary of $\xset$.

Recall that, for a matrix-valued function $M(x)$, the notation $\nabla \cdot M$ refers to the vector with $j$th component $\nabla \cdot (M^\top e_j)$ where $e_j$ is the $j$th basis vector of $\mathbb{R}^d$.

\begin{proposition}[Domain of mirrored operator] \label{prop:mirror-stein-identity}
    In the setting of \Cref{ass: reg mirror}, suppose that  $g : \mathcal{X} \to \reals^d$ is differentiable and $\psi : \xset \rightarrow \reals \cup \{\infty\}$ is sufficiently regular for the following quantities to exist.
    Then the following claims hold:
\begin{enumerate}
\item If $g \cdot (\nabla^2\psi)^{-1} (\nabla \log p)$, $\nabla\cdot ((\nabla^2\psi)^{-1}g)  \in \Lp[1](P)$ then $\TPpsi g \in \Lp[1](P)$.
\item %
If $\TPpsi g \in \Lp[1](P)$, $v\defeq p(\nabla^2\psi)^{-1}g$ is differentiable, and
\begin{align}
\lim_{n \rightarrow \infty} \oint_{\partial \mathcal{X}_n} v(x) \cdot \normal(x) \dx = 0  \label{eq: vanishing intergal mirror}
\end{align}
where $\mathcal{X}_n \defeq  \{x \in \mathcal{X} : \|x\|_\infty \leq n\}$, then $P(\TP g) = 0$.
\end{enumerate}
\end{proposition}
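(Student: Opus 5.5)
The first claim follows from the definition \cref{eq:mirror-stein-op}. The operator $\TPpsi g$ is the sum of the two functions $g \cdot (\nabla^2\psi)^{-1}(\nabla \log p)$ and $\nabla\cdot((\nabla^2\psi)^{-1}g)$, both assumed to lie in $\Lp[1](P)$. Since $\Lp[1](P)$ is a vector space, their sum lies in it too, exactly as in the first part of \Cref{lem: Stein op int by parts}.

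For the second claim I will follow the template of \Cref{lem: Stein op int by parts} and \Cref{classical_mean_zero}. The first step is to rewrite the integrand as a divergence. Write $M \defeq (\nabla^2\psi)^{-1}$, which is symmetric because $\nabla^2\psi$ is. With $v = pMg$, the product rule gives
\begin{align*}
\nabla\cdot v = p\,\nabla\cdot(Mg) + (\nabla p)\cdot (Mg) = p\left[\nabla\cdot(Mg) + g\cdot M(\nabla\log p)\right] = p\,\TPpsi g,
\end{align*}
where the middle identity uses the symmetry of $M$ to write $(\nabla\log p)\cdot Mg = g\cdot M\nabla\log p$. Therefore
\begin{align*}
P(\TPpsi g) = \int_{\xset} (\TPpsi g)(x)\, p(x)\dx = \int_{\xset} (\nabla\cdot v)(x)\dx,
\end{align*}
and $\nabla\cdot v = p\,\TPpsi g \in \Lp[1](\xset)$ (Lebesgue measure) because $\TPpsi g\in\Lp[1](P)$.

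The second step is to apply the divergence theorem on a convex set, \Cref{thm:convex_divergence}. Since $\xset$ is convex and $v$ is differentiable on $\xset$ with $\nabla\cdot v\in\Lp[1](\xset)$, that theorem gives
\begin{align*}
\int_{\xset}(\nabla\cdot v)(x)\dx = \lim_{n\to\infty}\oint_{\partial\xset_n} v(x)\cdot\normal(x)\dx .
\end{align*}
Here $\xset_n = \xset\cap[-n,n]^d = \{x\in\xset:\|x\|_\infty\le n\}$, which coincides with the sets in the statement. Hypothesis \eqref{eq: vanishing intergal mirror} says this limit is zero, so $P(\TPpsi g)=0$; this is the quantity written $P(\TP g)$ in the statement.

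The main obstacle is confirming that \Cref{thm:convex_divergence} applies as stated. That theorem requires differentiability of $v$ on all of $\xset$, including on $\partial\xset$, where $\nabla\psi$ blows up. My plan is to read "differentiable" in the hypothesis as holding on $\xset$ in the sense \Cref{thm:convex_divergence} needs, namely on the finite-perimeter sets $\xset_n$ in the sense of Pfeffer's Gauss–Green theorem. The boundary behaviour, that is, whether $M$ cancels any growth of $p$ near $\partial\xset$, is then entirely absorbed by assumption \eqref{eq: vanishing intergal mirror}, so no further estimate is needed.
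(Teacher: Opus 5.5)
Your proposal is correct and follows the paper's proof. The first claim is immediate from \eqref{eq:mirror-stein-op}, and the second applies \Cref{thm:convex_divergence} to $v = p(\nabla^2\psi)^{-1}g$, using $\nabla\cdot v = p\,\TPpsi g \in \Lp[1](\xset)$ together with \eqref{eq: vanishing intergal mirror}. Your product-rule computation (via the symmetry of $(\nabla^2\psi)^{-1}$) only makes explicit an identity the paper leaves implicit, and your boundary concern is already settled by the stated hypothesis that $v$ is differentiable on $\xset$, which is what \Cref{thm:convex_divergence} requires.
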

\begin{proof}
    The first claim follows immediately from \cref{eq:mirror-stein-op}.
    The second follows from the divergence theorem for convex sets (\cref{thm:convex_divergence}) and \cref{eq: vanishing intergal mirror} since  $\nabla \cdot v \in \mathcal{L}(\mathcal{X})$. 
\end{proof}

As an illustration, consider the probability simplex $\xset = \{x \in [0,1]^d : \sum_{i=1}^d x_i \leq 1\}$, where for example we may be interested in a Dirichlet target $P$.
Taking $\psi$ to be the negative entropy $\psi(x) = \sum_{i=1}^{d+1} x_i \log x_i$ with $x_{d+1} \defeq  1 - \sum_{i=1}^d x_i$, the mirror Langevin diffusion coincides with the \emph{Wright--Fisher diffusion} \citep{ethier1976class}.

\section{Gradient-Free Operators}
\label{sec: grad free operators}

The previous examples of Stein operators were based on the idea that the gradient $\nabla \log p$, and certain linear transformations of this gradient, integrate to zero under $P$.
However, in some applications the calculation of gradients is associated with a prohibitive computational cost, or gradients may even fail to exist.
In such scenarios we can adapt ideas from importance sampling to leverage instead gradients of a tractable approximating distribution $\Pi$.

\begin{assumption}[Regularity of $P$]
\label{ass: reg grad free}
    Let $P$ and $\Pi$ be probability distributions on $\reals^d$.
    Let $P$ admit a positive density $p$ and let $\Pi$ admit a positive and differentiable density $\pi$ on $\reals^d$.
\end{assumption}

The following Stein operator is termed \emph{gradient-free} because gradients with respect to the density function of the target distribution $P$ are not involved:

\begin{definition}[Gradient-free Stein operator; \citealp{han2018stein}]
\label{def: gf stein op}
    In the setting of \Cref{ass: reg grad free}, the \emph{gradient-free Langevin Stein operator} for $P$, based on $\Pi$, satisfies
    \begin{align}
        (\mathcal{T}_{P,\Pi} \, g)(x) = \frac{\pi(x)}{p(x)} \left[ (\nabla \cdot g)(x) + g(x) \cdot (\nabla \log \pi)(x) \right] \label{eq: gf op}
    \end{align}
    for suitably regular vector-valued functions $g : \reals^d \rightarrow \reals^d$ and all $x \in \reals^d$.
\end{definition}

\noindent If $P$ and $\Pi$ are equal then the gradient-free Stein operator coincides with the Langevin Stein operator $\TP$ in \cref{eqn:diffusion-operator}.
If $P$ and $\Pi$ are not equal then it is straightforward to see that $P(\mathcal{T}_{P,\Pi} \, g) = 0$ if and only if $\Pi(\mathcal{T}_{\Pi} g) = 0$.
Thus an analogous argument leads to the following result:

\begin{proposition}[Domain of gradient-free operator] \label{cor: domain GF}
In the setting of \Cref{ass: reg grad free}, suppose that $g : \reals^d \rightarrow \reals^d$ is differentiable.
The following claims hold true for $\mathcal{T}_{P,\Pi}$ as defined in \cref{eq: gf op}.
\begin{enumerate}
\item If $g\cdot \grad \log \pi, \grad \cdot g \in \mathcal{L}^1(\Pi)$, then $\mathcal{T}_{P,\Pi} g \in \mathcal{L}^1(P)$.
\item If $g, \mathcal{T}_{P,\Pi} g \in \mathcal{L}^1(P)$, then $P(\mathcal{T}_{P,\Pi} g) = 0$. 
\end{enumerate}
\end{proposition}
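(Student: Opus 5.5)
The plan is to reduce both claims to \Cref{lem: Stein op int by parts}, applied with $\Pi$ in place of $P$. Under \Cref{ass: reg grad free}, $\Pi$ satisfies \Cref{ass: reg langevin}, so the Langevin Stein operator $\mathcal{T}_\Pi g = \nabla\cdot g + g\cdot\nabla\log\pi$ is available. The basic pointwise identity is
\begin{align*}
(\mathcal{T}_{P,\Pi}\, g)(x)\, p(x) = (\mathcal{T}_\Pi g)(x)\, \pi(x), \qquad x \in \reals^d,
\end{align*}
which follows directly from \cref{eq: gf op} because $p>0$. This identity is how integrals against $P$ involving $\mathcal{T}_{P,\Pi} g$ get converted into integrals against $\Pi$ involving $\mathcal{T}_\Pi g$.

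\textbf{Claim 1.} The identity gives
\begin{align*}
\int |\mathcal{T}_{P,\Pi}\, g| \,\mathrm{d}P = \int |\nabla\cdot g + g\cdot\nabla\log\pi|\, \pi(x)\dx \le |\nabla\cdot g|_{\mathcal{L}^1(\Pi)} + |g\cdot\nabla\log\pi|_{\mathcal{L}^1(\Pi)}.
\end{align*}
The right-hand side is finite by hypothesis, so $\mathcal{T}_{P,\Pi}\, g \in \mathcal{L}^1(P)$. This is the same computation as the first part of \Cref{lem: Stein op int by parts}.

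\textbf{Claim 2.} By the same identity, $\mathcal{T}_{P,\Pi}\, g \in \mathcal{L}^1(P)$ is equivalent to $\mathcal{T}_\Pi g \in \mathcal{L}^1(\Pi)$, and then
\begin{align*}
P(\mathcal{T}_{P,\Pi}\, g) = \Pi(\mathcal{T}_\Pi g) = \int (\nabla\cdot(\pi g))(x)\dx .
\end{align*}
I would finish with the second part of \Cref{lem: Stein op int by parts} for $\Pi$, or equivalently \Cref{thm: divergence} with $v = \pi g$. That requires $\nabla\cdot v \in \mathcal{L}^1(\reals^d)$, which we already have, and $v = \pi g \in \mathcal{L}^1(\reals^d)$, i.e. $g \in \mathcal{L}^1(\Pi)$.

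The main obstacle is this last integrability condition. The stated hypothesis $g \in \mathcal{L}^1(P)$ only gives $pg \in \mathcal{L}^1(\reals^d)$, and that does not in general imply $\pi g \in \mathcal{L}^1(\reals^d)$. I would therefore read the hypothesis as $g \in \mathcal{L}^1(\Pi)$, which is what the divergence theorem genuinely needs. Alternatively, I would add the mild condition that $\pi/p$ is bounded: then $\int |g|\,\pi \dx \le \sup_x(\pi/p)\int |g|\,p\dx < \infty$, so $g \in \mathcal{L}^1(P)$ implies $g \in \mathcal{L}^1(\Pi)$. Under either reading the argument above closes.
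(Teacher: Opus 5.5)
Your argument follows the paper's proof. The paper also rewrites $P(\mathcal{T}_{P,\Pi} g)$ as $\int (\nabla\cdot(\pi g))(x)\dx$ and invokes \Cref{thm: divergence} with $v=\pi g$, and it treats claim 1 as the same one-line computation you give.

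The integrability gap you flag is real, and it is present in the paper's proof too, which simply asserts $\pi g\in\mathcal{L}^1(\reals^d)$ ``under our assumptions''. In fact claim 2 is false as written. Take $d=1$, $\pi(x)\propto e^{-x^2/2}$, $p(x)\propto e^{-x^4}$, a smooth nondecreasing $h$ with $h=0$ on $(-\infty,0]$ and $h=1$ on $[1,\infty)$, and set $g=h/\pi$.
\begin{itemize}
\item Since $\mathcal{T}_{P,\Pi} g=(\pi g)'/p=h'/p$, we get $\int|\mathcal{T}_{P,\Pi} g|\,\mathrm{d}P=\int h'(x)\dx=1<\infty$.
\item $\int|g|\,\mathrm{d}P\le\int_0^\infty p(x)/\pi(x)\dx<\infty$, so $g\in\mathcal{L}^1(P)$.
\item Yet $P(\mathcal{T}_{P,\Pi} g)=h(1)-h(0)=1\neq 0$.
\end{itemize}
Here $\pi g=h\notin\mathcal{L}^1(\reals)$, which is exactly the missing condition.

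So the hypothesis genuinely has to be strengthened as you propose: either $g\in\mathcal{L}^1(\Pi)$, or $g\in\mathcal{L}^1(P)$ together with $\sup_x \pi(x)/p(x)<\infty$. With that change your proof is complete.
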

\begin{proof}
The first claim follows immediately from \cref{eq: gf op}.
For the second claim, 
\begin{align*}
\int (\mathcal{T}_{P,\Pi} g)(x) \; \mathrm{d}P(x) 
 = \int (\nabla \cdot (\pi g))(x) \dx .
\end{align*}
Under our assumptions, the vector field $v = \pi g$ satisfies $v \in \mathcal{L}^1(\mathbb{R}^d)$ and $\nabla \cdot v \in \mathcal{L}^1(\mathbb{R}^d)$, so that we can evoke \Cref{thm: divergence} to establish the result.
\end{proof}

The gradient-free Stein operator was first introduced in the context of variational sampling algorithms \citep{han2018stein}, which will be discussed in \Cref{subsec: batch}.
The operator can also be viewed as a diffusion Stein operator in the special case where $a + c = \pi / p$ in \cref{eqn:diffusion-operator}.

\section{Discrete Operators and Beyond}
\label{sec: discrete operators}

The final part of this Chapter is mainly devoted to the case where the target $P$ has \emph{discrete} support $\xset$, meaning that $\xset$ is a countable set, such as encountered in the analysis of count data, text data, network data, and so forth. 
However, we will also indicate how these ideas can be extended to more general state spaces, such as manifolds or infinite-dimensional state spaces, in \Cref{subsec: operators from Markov}.

In the discrete setting, $P$ can be represented by the values $p(x)$ of its probability mass function (\Cref{def: pmf}).

\begin{assumption}[Regularity of $P$]
\label{ass: discrete}
    Let $P$ be a probability distribution with positive probability mass function $p$ on a countable set $\xset$.
\end{assumption}

As observed in \citet{henderson1997variance,shi2022gradient}, the generator method of \citet{barbour1988stein,barbour1990stein,gotze1991rate} can also be applied to discrete state spaces. 
The generator $A$ of a (time-homogeneous) Markov process on a discrete state space $\xset$ (\Cref{def: generator}) can be represented as a transition rate matrix $Q$ with elements
\begin{align*}
    Q_{x,y} = (A f_y)(x), \qquad f_y(x) = \left\{ \begin{array}{ll} 1 & x = y \\ 0 & x \neq y \end{array} \right. ,
\end{align*}
describing the infinitesimal rate at which the process transitions from state $x$ to state $y$, with $x,y \in \xset$.
Several examples will now be presented.

\subsection{Zanella Operator}

The first construction we explore attempts to encode sparsity into the Stein operator, which can simplify its evaluation in the case of a large or uncountable set $\xset$.
To this end, endow $\xset$ with an undirected graph structure, using $\nbr \subset \xset$ to denote the neighboring vertices of $x \in \xset$ and requiring $x \in \nbr[y]$ if $y \in \nbr$.
The edge set of this graph will be denoted $\mathcal{E} = \{(x,y) : x \in \xset, y \in \nbr \} \subset \xset \times \xset$.
The \emph{Zanella process} \citep{zanella2019informed} has a transition matrix of the form
\begin{align} \label{eq:zanella-generator}
    Q_{x,y} = \left\{ \begin{array}{ll} \kappa\left(\frac{p(y)}{p(x)}\right) & y \in \nbr, y \neq x \\
    -\sum_{z \neq x} Q_{x,z} & y = x \end{array} \right. ,
\end{align}
with a \emph{balancing function} $\kappa$, meaning a continuous function $\kappa : (0,\infty) \rightarrow (0,\infty)$ with the \emph{balancing property} that $\kappa(t) = t\kappa(1/t)$ for all $t \in (0,\infty)$.
The role of the neighborhood structure is to impose sparsity into the process, while the balancing function ensures that \emph{detailed balance} is satisfied so that the process is $P$-invariant.
The associated Stein operator was termed the \emph{Zanella Stein operator} in \citet{hodgkinson2020reproducing,shi2022gradient}:

\begin{definition}[Zanella operator] \label{def: Zanella SO}
Fix a balancing function $\kappa : (0,\infty) \rightarrow (0,\infty)$.
In the setting of \Cref{ass: discrete}, the \emph{Zanella Stein operator} for $P$ satisfies
\begin{align*}
    (\TP g)(x) = \sum_{y \in \nbr, y\neq x} \kappa\left(\frac{p(y)}{p(x)}\right) (g(y) - g(x)) 
\end{align*}
for all sufficiently regular scalar-valued functions $g : \xset \rightarrow \reals$ and all $x \in \xset$.
\end{definition}

Compared to the Stein operators we have seen so far for continuous domains, the Zanella Stein operator depends on $p$ through ratios $p(x) / p(y)$ instead of through its gradient.
Since $p$ appears only as a ratio, the Zanella Stein operator remains compatible with the situation where $p$ is specified up to an intractable normalization constant, as discussed at the start of this Chapter. 
Special cases of the Zanella Stein operator include the \emph{minimum probability flow Stein operator} ($\kappa(t) = \sqrt{t}$; \citealp{barp2019minimum}) based on the minimum probability flow process of \citet{sohldickstein2011minimumprobabilityflowlearning} and the \emph{Barker Stein operator} ($\kappa(t) = \frac{t}{t+1}$; \citealp{hodgkinson2020reproducing}) based on the proposal of \citet{barker1965monte}.

\begin{proposition}[Domain of Zanella operator]
    In the setting of \Cref{ass: discrete}, let 
    $$
    x \mapsto \sum_{y \in \nbr} \kappa\left( \frac{p(y)}{p(x)} \right) g(y)
    $$ 
    be in $\Lp[1](P)$.
    Then $P(\TP g) = 0$.
\end{proposition}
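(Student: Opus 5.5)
The plan is to write $P(\TP g)$ as a double sum over the edge set $\mathcal{E}$ and cancel terms using the balancing property. Set $w(x,y) \defeq p(x)\,\kappa(p(y)/p(x))$ for $(x,y) \in \mathcal{E}$. Since $\kappa(t) = t\kappa(1/t)$ with $t = p(y)/p(x)$,
\begin{align*}
w(x,y) = p(x)\,\frac{p(y)}{p(x)}\,\kappa\!\left(\frac{p(x)}{p(y)}\right) = w(y,x).
\end{align*}
So $w$ is symmetric; this is detailed balance for the Zanella process. Because the graph is undirected, $(x,y)\in\mathcal{E}$ if and only if $(y,x)\in\mathcal{E}$. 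We then have formally
\begin{align*}
P(\TP g) = \sum_{x\in\xset} \sum_{y\in\nbr, y\neq x} w(x,y)\,g(y) \;-\; \sum_{x\in\xset} \sum_{y\in\nbr, y\neq x} w(x,y)\,g(x).
\end{align*}
Swapping the names of $x$ and $y$ in the first double sum and using $w(x,y)=w(y,x)$ turns it into the second double sum, so the difference is zero.

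The main obstacle is justifying these rearrangements, since $\xset$ may be infinite and the two sums are separately reindexed. Everything reduces to showing that the double series $\sum_{(x,y)\in\mathcal{E},\, x\ne y} w(x,y)\,|g(y)|$ converges. The hypothesis says $x\mapsto \sum_{y\in\nbr}\kappa(p(y)/p(x))\,g(y)$ is in $\Lp[1](P)$. I would read this hypothesis as absolute integrability of the summands, so that it controls $\sum_x\sum_{y\in\nbr} w(x,y)|g(y)|<\infty$; if needed I would state this interpretation explicitly. Given that, Tonelli/Fubini for counting measure on $\mathcal{E}$ allows reindexing the first sum. The symmetry of $w$ shows that the same convergent series also equals $\sum_x\sum_{y\in\nbr} w(x,y)|g(x)|$. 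Hence both pieces are absolutely convergent, splitting $P(\TP g)$ into them is legitimate, and in particular $\TP g\in\Lp[1](P)$.

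The diagonal terms $y=x$ need a brief word. If $x\in\nbr$ is allowed, the summand $g(x)-g(x)$ vanishes, so including or excluding the diagonal changes nothing. I would therefore work with sums over $y\in\nbr$ throughout, matching the hypothesis, and note that the diagonal contributions cancel identically.
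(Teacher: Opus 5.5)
Your proof is correct and is essentially the paper's argument: detailed balance $p(x)\kappa(p(y)/p(x)) = p(y)\kappa(p(x)/p(y))$ followed by an antisymmetric reindexing of the double sum over the symmetric edge set $\mathcal{E}$. You read the hypothesis as absolute summability, $\sum_{x}\sum_{y\in\nbr} p(x)\kappa(p(y)/p(x))\,|g(y)|<\infty$. This is exactly what the paper tacitly uses when it rearranges the series, and it is genuinely needed: integrability of the signed inner sum alone does not even ensure $\TP g\in\Lp[1](P)$.
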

\begin{proof}
The balancing property implies that $p(x) \kappa\left(\frac{p(y)}{p(x)}\right) = p(y) \kappa\left(\frac{p(x)}{p(y)}\right)$ for all $x,y \in \xset$, which are known as the \emph{detailed balance} equations.
Using these detailed balance equations and using the assumed integrability to rearrange the absolutely continuous series,
    \begin{align*}
        P(\TP g) & = \sum_{x \in \xset} p(x) (\TP g)(x) \\
        & = \sum_{x \in \xset} \sum_{y \in \nbr} p(x) \kappa\left(\frac{p(y)}{p(x)}\right) (g(y) - g(x)) \\
        & = \sum_{(x,y) \in \mathcal{E}} \left[ p(x) \kappa\left(\frac{p(y)}{p(x)}\right) g(y) - p(y) \kappa\left(\frac{p(x)}{p(y)}\right) g(x) \right] = 0 ,
    \end{align*}
    as required.
\end{proof}

\subsection{Birth-Death Stein Operators}

The intuition that we gained from the continuous case can also be brought to bear on the discrete case by endowing a discrete domain $\xset$ with an ordering.
For simplicity here we assume that 
\begin{align}
\xset \cong \{0,1,\dots,m_1-1\} \times \cdots \times \{0,1,\dots,m_d - 1\} , \label{eq: discrete X}
\end{align}
meaning that the set $\xset$ has cardinality $m_1 \cdots m_d$ and its elements can be mapped onto a Cartesian grid.
Then define the \emph{increment} and \emph{decrement} operators $\mathrm{inc}_i : \xset \rightarrow \xset$ and $\mathrm{dec}_i : \xset \rightarrow \xset$, which take an input $x \in \xset$ and replace its $i$th coordinate with, respectively, $x_i + 1$ modulo $m_i$ and $x_i - 1$ modulo $m_i$.
Define the \emph{birth rates} $b_{i,x} = p(\mathrm{inc}_i(x)) / p(x)$ and \emph{death rates} $d_{i,x} = 1$ noting that, as for the Zanella process, the dependence on the probability mass function $p$ occurs only through a ratio, mitigating the need to obtain an explicit normalizing constant.
The \emph{birth-death} process on $\xset$ \citep{karlin1957classification} is a continuous time Markov process defined by the transition rate matrix 
\begin{align*}
	Q_{x,y} = \frac{1}{d}\sum_{i=1}^d
	b_{i,x} \indic{y = \mathrm{inc}_i(x)} 
	+ d_{i,x} \indic{y = \mathrm{dec}_i(x)} 
	- (b_{i,x}+d_{i,x}) \indic{y = x},
\end{align*}
where increment events are termed "births" and decrement events are termed "deaths" in this context.
 The generator of the birth-death process gives rise the the \emph{birth-death Stein operator}, studied by authors including \citet{brown2001stein,holmes2004stein,eichelsbacher2008stein,hodgkinson2020reproducing,shi2022gradient}:

\begin{definition}[Birth-death operator]
In the setting of \Cref{ass: discrete} and \cref{eq: discrete X}, let the birth rates $b_{i,x}$ and death rates $d_{i,x}$ be as previously defined.
 The \emph{birth-death Stein operator} for $P$ satisfies
 \begin{align*}
     (\TP g)(x) = \frac{1}{d}\sum_{i=1}^d  b_{i,x}(g_i(\mathrm{inc}_i(x)) - g_i(x)) -  d_{i,x}(g_i(x) - g_i(\mathrm{dec}_i(x)))
 \end{align*}
 for all vector-valued functions $g : \xset \rightarrow \reals^d$ and all $x \in \xset$.
\end{definition}

All functions on $\xset$ belong to the domain of the birth-death Stein operator due to the finiteness of the state space, as explained in the following result:

\begin{proposition}[Domain of birth-death operator]
    In the setting of \Cref{ass: discrete} and \cref{eq: discrete X}, let $g : \xset \rightarrow \reals^d$.
    Then $P(\TP g) = 0$.
\end{proposition}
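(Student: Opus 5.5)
Since $\xset$ is finite (cardinality $m_1\cdots m_d$), every function $g:\xset\to\reals^d$ is bounded and every sum below is finite. Integrability is therefore automatic, and the proof reduces to an algebraic identity. I will expand $P(\TP g)=\sum_{x\in\xset}p(x)(\TP g)(x)$ and treat each coordinate $i$ separately. It suffices to show that, for each $i\in[d]$,
\begin{align*}
\sum_{x\in\xset} p(x)\Big[ b_{i,x}\big(g_i(\mathrm{inc}_i(x))-g_i(x)\big) - d_{i,x}\big(g_i(x)-g_i(\mathrm{dec}_i(x))\big)\Big] = 0 .
\end{align*}

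Next I substitute the rates. Since $p(x)b_{i,x}=p(\mathrm{inc}_i(x))$ and $d_{i,x}=1$, the summand becomes
\begin{align*}
p(\mathrm{inc}_i(x))\,g_i(\mathrm{inc}_i(x)) - p(\mathrm{inc}_i(x))\,g_i(x) - p(x)\,g_i(x) + p(x)\,g_i(\mathrm{dec}_i(x)).
\end{align*}
The key observation is that $\mathrm{inc}_i$ is a bijection of $\xset$ with inverse $\mathrm{dec}_i$, because increments and decrements are taken modulo $m_i$. Reindexing the first term by $y=\mathrm{inc}_i(x)$ gives $\sum_x p(\mathrm{inc}_i(x))g_i(\mathrm{inc}_i(x))=\sum_y p(y)g_i(y)$, which cancels the third term. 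In the fourth term I reindex by $y=\mathrm{dec}_i(x)$, so that $x=\mathrm{inc}_i(y)$. This turns $\sum_x p(x)g_i(\mathrm{dec}_i(x))$ into $\sum_y p(\mathrm{inc}_i(y))g_i(y)$, which cancels the second term exactly. Summing over $i$ and dividing by $d$ then gives $P(\TP g)=0$.

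The only step that needs care is the bijectivity of $\mathrm{inc}_i$. This is where the modular (cyclic) convention in \cref{eq: discrete X} matters: without wrap-around, boundary terms would remain. The finiteness of $\xset$ justifies the reindexing and the splitting of sums without any convergence concerns.
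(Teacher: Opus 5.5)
Your proof is correct and is essentially the paper's argument. The paper likewise substitutes $p(x)b_{i,x}=p(\mathrm{inc}_i(x))$ and reindexes by $x\mapsto\mathrm{dec}_i(x)$, a bijection by the cyclic convention, to show that for each $i$ the birth sum equals the death sum; your four-term split is the same cancellation written out in more detail.
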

\begin{proof}
From the change of variable $x \mapsto \mathrm{dec}_i(x)$, we obtain for each $i \in \{1 , \dots , d\}$ that
\begin{align*}
    \sum_{x \in \xset} p(x) b_{i,x} (g_i(\mathrm{inc}_i(x)) - g_i(x)) & = \sum_{x \in \xset} p(\mathrm{inc}_i(x)) (g_i(\mathrm{inc}_i(x)) - g_i(x)) \\
    & = \sum_{x \in \xset} p(x) (g_i(x) - g_i(\mathrm{dec}_i(x)))
\end{align*}    
from which the result is established.
\end{proof}

The birth-death Stein operator is closely related to the \emph{difference Stein operator}, which for $h : \mathcal{X} \rightarrow \mathbb{R}$ is defined as
\begin{align*}
    [(\gset_P h)(x)]_i = h(\mathrm{dec}_i(x)) - b_{i,x}  h(x) ,
\end{align*}
studied in \citet{yang2018goodness}.
Indeed, if we let $h_i(x) = g_i(x) - g_i(\mathrm{inc}_i(x))$ then we have the relation $(\mathcal{T}_P g)(x) = \frac{1}{d} \sum_{i=1}^d [(\gset_P h_i)(x)]_i$, so that the birth-death Stein operator can be thought of as a coordinate-wise application of the difference Stein operator.
The extension to the case where $\xset$ is countably infinite is discussed in \citet{hodgkinson2020reproducing,matsubara2023generalized}.

\subsection{Operators from Discrete Time Markov Chains}
\label{subsec: operators from Markov}

The generator method for constructing Stein operators is convenient because, informally speaking, computing quantities in the infinitesimal $t \rightarrow 0$ limit does not require integrals to be computed.
Indeed, we recall from \Cref{def: invariant} that any $P$-invariant Markov transition kernel $K_t$ gives rise to a Stein operator
\begin{align}
    (\TP g)(x) = \int g(y) K_t(x, \mathrm{d}y) - g(x) , \label{eq: MS operator}
\end{align}
but this construction is not practical as the integral over the state $y$ of the Markov process at $t > 0$ units of time after initialization at $x$ cannot typically be computed.
However, in the case of a discrete time Markov process and discrete domains exhibiting an appropriate sparsity structure, such integrals may amount to finite sums that can be computed.
Indeed, the one step transition probabilities of a discrete time Markov process on a discrete space $\xset$ can be characterized by a \emph{transition matrix} $K$, whose entries $K_{x,y}$ represent the probability of moving to state $y$ if the process is initialized in state $x$, for $x,y \in \xset$.
Then \cref{eq: MS operator} becomes what we term in this book a \emph{Markov chain Stein operator}:

\begin{definition}[Markov chain Stein operator]
In the setting of \Cref{ass: discrete}, let $K$ be the transition matrix of a discrete time Markov process for which $P$ is invariant.
The associated \emph{Markov chain Stein operator} for $P$ satisfies
    \begin{align*}
    (\TP g)(x) = \left( \sum_{y \in \xset} g(y) K_{x,y} \right) - g(x) ,
\end{align*}    
for all sufficiently regular scalar-valued functions $g : \xset \rightarrow \reals$ and all $x \in \xset$.
\end{definition}

The Markov chain Stein operator can be efficiently computed when the number of non-zero entries in each row of the transition matrix is small.
A simple characterization of the domain of the Markov chain operator can be obtained under the assumption that the discrete time Markov process is reversible, meaning that the detailed balance equations are satisfied:

\begin{proposition}[Domain of Markov chain operator]
In the setting of \Cref{ass: discrete}, let the discrete time Markov process be $P$-invariant, meaning that $\sum_{x \in \mathcal{X}} p(x) K_{x,y} = p(y)$ for all $y \in \xset$, and let $g \in \Lp[1](P)$.
    Then $P(\TP g) = 0$.
\end{proposition}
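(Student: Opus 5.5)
We compute $P(\TP g)$ directly: expand the definition, exchange the order of the two sums, and then use the invariance identity $\sum_x p(x) K_{x,y} = p(y)$. Reversibility is not needed for this; invariance is enough.

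The first step is to check that $\TP g \in \Lp[1](P)$, so that $P(\TP g)$ is well defined. For this we bound the absolute sum
\begin{align*}
\sum_{x \in \xset} p(x) \sum_{y \in \xset} |g(y)| K_{x,y} .
\end{align*}
Every term is nonnegative, so Tonelli's theorem for series lets us swap the sums. Invariance then gives
\begin{align*}
\sum_{y \in \xset} |g(y)| \sum_{x \in \xset} p(x) K_{x,y} = \sum_{y \in \xset} |g(y)| p(y) = P(|g|) < \infty .
\end{align*}
Two consequences follow:
\begin{itemize}
\item the inner sum $\sum_y g(y) K_{x,y}$ converges absolutely for every $x$, because $p(x) > 0$ by \Cref{ass: discrete};
\item the map $x \mapsto \sum_y g(y) K_{x,y}$ lies in $\Lp[1](P)$.
\end{itemize}
Since $g \in \Lp[1](P)$ as well, we conclude $\TP g \in \Lp[1](P)$.

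The second step computes the mean. Write
\begin{align*}
P(\TP g) = \sum_{x} p(x) \sum_{y} g(y) K_{x,y} - \sum_x p(x) g(x) .
\end{align*}
The double series converges absolutely by the first step, so Fubini's theorem for series justifies swapping the sums in the first term. Invariance then turns that term into $\sum_y g(y) p(y) = P(g)$, and the two terms cancel to give $P(\TP g) = 0$.

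The only delicate point is justifying the interchange of summation on a countably infinite $\xset$. The Tonelli bound in the first step supplies exactly the absolute convergence this requires, so I expect no real obstacle.
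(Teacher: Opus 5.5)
Your proof is correct and takes the same route as the paper: expand $P(\TP g)$, interchange the two sums, and use $P$-invariance so that the two terms cancel. Your Tonelli bound $\sum_x p(x)\sum_y |g(y)| K_{x,y} = P(|g|) < \infty$ spells out the absolute convergence that the paper cites in one line to justify the rearrangement. You are also right that the argument uses only invariance, not reversibility.
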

\begin{proof}
    From $P$-invariance,
    \begin{align*}
        P(\TP g) & = \sum_{x \in \xset} p(x) \sum_{y \in \xset} g(y) K_{x,y} - \sum_{x \in \xset} p(x) g(x) \\
        & = \sum_{y \in \xset} g(y) p(y) - \sum_{x \in \xset} p(x) g(x) = 0 ,
    \end{align*}
    where rearrangement of the sum is justified by its absolute convergence under the integrability that we assumed.
\end{proof}

Consider for instance the \emph{random scan Gibbs sampler}, which is a discrete time Markov chain on a $d$-dimensional product space $\xset$ defined by the transition matrix
\begin{align*}
	K_{x,y} = \frac{1}{d}\sum_{i=1}^d p(y_i|x_{-i})\indic{y_{-i} = x_{-i}}, 
\end{align*}
where $p(y_i | x_{-i})$ denotes the probability that $X_i = y_i$ given that $X_j = x_j$ for all $j \neq i$ under the law $X \sim P$.
These dynamics induce the \emph{Gibbs Stein operator} 
\begin{align}
    (\TP g)(x) = \frac{1}{d} \sum_{i=1}^d \left( \sum_{y \in \xset : y_{-i}=x_{-i} } q(y_i|x_{-i}) g(y) \right) - g(x) \label{eq: Gibbs stein operator}
\end{align}    
as studied in \citet{bresler2019stein,reinert2019approximating,shi2022gradient}.

\subsection{General Stein Operators}
\label{subsec: operators in general}

The focus of our exposition has been for probability distributions $P$ supported on subsets $\mathcal{X} \subseteq \mathbb{R}^d$, but for certain applications a generalization beyond finite-dimensional Euclidean space is required.
Though we do not attempt to discuss these generalizations of Stein operators in detail, we do wish to briefly mention two directions in which the Langevin Stein operator can be generalized.

The case where $P$ is supported on a manifold has been considered by several authors; here one can proceed using generalizations of the $P$-invariant Langevin diffusion adapted to the manifold \citep{le2020diffusion,xu2021interpretable,barp2022riemann,qu2025theory}.
Such situations arise for example in spherical data analysis, or inference for a matrix-valued parameter subject to a nonlinear constraint (e.g. positive definiteness).

The case where random variables $X \sim P$ are infinite-dimensional objects is commonly encountered in functional data analysis, where $P$ is supported on an appropriate space of functions.
Simply taking $d \rightarrow \infty$ in any of the constructions that we have discussed does not work, as one cannot interchange limits in general.
Instead, \citet{wynne2025fourier} developed appropriate mathematical structure for an analogue of the Langevin Stein operator to be defined in the infinite-dimensional context.
The key observation in that work is that $P$ is absolutely continuous with respect to a Gaussian measure; the structure of the Gaussian reference measure is used to ensure integrability in the infinite-dimensional context.

\bigskip

This completes our discussion of Stein operators, and sets the scene for the next Chapter where we will construct Stein discrepancies by combining a Stein operator with a Stein set.
The decision of which Stein operator to use for a particular application depends in part on the computational requirements of each Stein operator in a given applied context, but also on the properties that each Stein operator confers to the associated Stein discrepancy, and we therefore postpone further discussion of the choice of Stein operator to \Cref{chap: Stein discrepancies}.

\chapter{Stein Discrepancies}
\label{chap: Stein discrepancies}

A \emph{Stein discrepancy} is a special type of statistical divergence $D : \mathcal{P} \times \mathcal{P} \rightarrow [0,\infty]$ (\Cref{def: divergence}), which takes the form of an \emph{integral probability pseudo-metric} \citep{muller1997integral}
\begin{align}
    D(P,Q) = \sup_{f \in \mathcal{F}} \; | P(f) - Q(f) | \label{eq: ipm}
\end{align}
where the test set $\mathcal{F}$ is designed to avoid explicit computation of integrals $P(f)$ with respect to the target distribution $P$.
This is achieved using a Stein operator $\mathcal{T}_P$ (\Cref{def: stein op}) to generate test functions $f = \mathcal{T}_P g$ that have zero mean under $P$, so that $P(f) = 0$ can be trivially computed.
Despite the relatively recent introduction of the concept of Stein discrepancy in \citet{gorham2015measuring}, there are now myriad applications where Stein discrepancies are used; we defer all discussion to Chapter \ref{sec:ksd_application}.
The aims of the present Chapter are to formally define the concept of a Stein discrepancy, to introduce specific examples of Stein discrepancies and to explain how they can be computed, and to discuss the properties of Stein discrepancies, focusing in particular on concepts called \emph{convergence detection} and \emph{convergence control}. 

First a rigorous definition of Stein discrepancy will be presented.
For a measurable space $\mathcal{X}$, let $\mathcal{P}_{\mathcal{X}}$ denote the set of probability measures on $\mathcal{X}$, and,
for an operator $\mathcal{T}$ on a set $\mathcal{G}$, define $\mathcal{T} \mathcal{G} \defeq  \{\mathcal{T} g : g \in \mathcal{G}\}$.
The following definition is based on \citet[Sec.~3]{gorham2015measuring} and \citet[Def.~2]{barp2022targeted} 
and uses the shorthand $\mathcal{L}^1_+(Q)$ for the set of measurable functions $f$ with $Q$-integrable positive part $(f)_+ \defeq  \max(f, 0)$. %

\begin{definition}[Stein discrepancy] \label{def:stein_discrepancy}
Let $P$ be a probability distribution on a measurable space $\mathcal{X}$.
The \emph{Stein discrepancy} with Stein set $\mathcal{G}$ and Stein operator $\mathcal{T}_P : \mathcal{G} \rightarrow \mathcal{L}^1(P)$  (\cref{def: stein op}) is the map 
$\mathcal{S}(\cdot,\mathcal{T}_P,\mathcal{G}): \mathcal{P}_{\mathcal{X}}  \to  [0,\infty]$ with, for each distribution $Q\in \mathcal{P}_{\mathcal{X}}$, 
\begin{align}
\mathcal{S}(Q,\mathcal{T}_P,\mathcal{G})  \defeq 
\sup_{f\,\in\, (\mathcal{T}_P\mathcal{G})\,\cap\,\mathcal{L}^1_+(Q)} | Q(f) |.
\label{eq: SD def}
\end{align}
\end{definition}

\noindent Comparing \eqref{eq: SD def} to \eqref{eq: ipm}, we see that a Stein discrepancy has a test set $\mathcal{F} = (\mathcal{T}_P \mathcal{G}) \cap \mathcal{L}^1_+(Q)$ that explicitly depends on both the target distribution $P$ and the candidate distribution $Q$ being assessed.
By design, the Stein operator and Stein set ensure that every test function $f\in\mathcal{T}_P \mathcal{G}$ is $P$-integrable with $P(f) = 0$. 
However, not every $f\in\mathcal{T}_P \mathcal{G}$ need be integrable under $Q$, so the additional requirement $f \in \mathcal{L}^1_+(Q)$ ensures that the integral $Q(f)$ is well defined.

The specification of the Stein operator $\mathcal{T}_P$ and Stein set $\gset$ appearing in \Cref{def:stein_discrepancy} impacts both the mathematical properties of the Stein discrepancy and the ease with which the Stein discrepancy can be computed.
The remainder of this Chapter is devoted to a discussion of this issue, and several specific examples of Stein discrepancies will be presented and discussed in detail.

\section{Fisher Divergence}
\label{sec: Fisher divergence}

Here we open our discussion with the Fisher divergence, a well-known statistical divergence which we will see is in fact a special instance of the Stein discrepancy formalism that we just introduced.

\begin{definition}[Fisher divergence] \label{def: fisher div}
    Let $P$ and $Q$ be (Borel) distributions on $\mathbb{R}^d$ admitting densities $p$ and $q$ such that $\nabla \log p , \nabla \log q \in \mathcal{L}^2(Q)$.
    Then the \emph{Fisher divergence} is defined as
    \begin{align*}
        \mathrm{FD}(Q||P) \defeq \int \| \nabla \log p - \nabla \log q \|_2^2 \; \mathrm{d}Q .
    \end{align*}
\end{definition}

\noindent This divergence can be computed without knowledge of the normalizing constant of $p$ and, furthermore, expectations with respect to $P$ are not required.
This makes it a popular choice for estimating parameters of intractable statistical models, where the methodology is called \emph{score matching}. 
Indeed, suppose that $P$ involves parameters $\theta$, denoted $P_\theta$, with corresponding density $p_\theta$. 
Score matching refers to selecting parameter values $\theta$ that minimize $\mathrm{FD}(Q||P_\theta)$.
Assuming that $\Delta \log p_\theta \in \mathcal{L}^1(Q)$, \citet{hyvarinen2005estimation} showed using integration by parts that
\begin{align}
    \mathrm{FD}(Q||P_\theta) = [\text{constant in $\theta$}] + Q( 2 \Delta \log p_\theta + \|\nabla \log p_\theta\|^2 ) . \label{eq: DF2} 
\end{align}
It is straightforward to obtain consistent Monte Carlo approximations of the integral appearing in \eqref{eq: DF2} if samples from $Q$ can be obtained.
This makes score matching a widely-applicable technique for statistical estimation that benefits from many of the desirable characteristics that we introduced as motivation in \Cref{chap: intro}.
Further, several generalizations of Fisher divergence are possible, for instance to discrete domains \citep{matsubara2023generalized}.

However, there are two main limitations of Fisher divergence which motivate the subsequent discussion of other Stein discrepancies in this book.
First, a practical drawback of the Fisher divergence relative to most of the Stein discrepancies considered in this book, is that second order derivatives of $p$ (or $p_\theta$) are required to use \eqref{eq: DF2}.
For complex data-generating distributions $P$. such as those based on solving systems of physical governing equations, it might be possible to obtain first order derivatives at additional computational expense, but obtaining second order derivatives is almost always prohibitively difficult.
A second potential drawback, depending on the applied context, is that Fisher divergence is stronger than most other statistical divergences, including Kullback--Leibler (\Cref{def: KL divergence}) and the kernel Stein discrepancies that we will meet in \Cref{sec: ksds}; for further discussion of this point see respectively \citet{ley2013stein} and \citet[][Thm~5.1]{Liu2016}.
The most obvious drawback with using a strong notion of divergence is that it may be infinite when comparing two distributions of genuine interest, limiting its usefulness only to a small subset of probability distributions whose regularity is sufficient to be quantitatively compared.
Nevertheless, we open our discussion of Stein discrepancies by demonstrating that the Fisher divergence can be recovered as an instance of the general Stein discrepancy framework, albeit with a $Q$-dependent Stein set:

\begin{proposition}[Fisher divergence as Stein discrepancy] \label{prop: fisher as stein}
    Let $P$ and $Q$ be (Borel) distributions on $\mathbb{R}^d$ admitting densities $p$ and $q$ such that $\nabla \log p , \nabla \log q \in \mathcal{L}^2(Q)$ and $\Delta \log p, \Delta \log q \in \mathcal{L}^1(Q)$.
    Then $\mathrm{FD}(Q||P) = \mathcal{S}(Q,\mathcal{T}_P,\mathcal{G}_Q)$ where the Stein discrepancy is based on the Langevin Stein operator $(\mathcal{T}_P g)(x) = (\nabla \cdot g)(x) + g(x) \cdot (\nabla \log p)(x)$ from \Cref{eq: Lang SO}, and the Stein set $\mathcal{G}_Q = \{g : \sum_{j=1}^d |g_j|_{\mathcal{L}^2(Q)}^2 \leq 1 , \; \mathcal{T}_Pg, \mathcal{T}_Q g \in \mathcal{L}^1(Q)\}$.
\end{proposition}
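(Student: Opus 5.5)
The plan is to rewrite the Langevin Stein operator applied to $g$, integrated against $Q$, as an inner product in $\mathcal{L}^2(Q)$ between $g$ and the score difference $\delta \defeq \nabla \log p - \nabla \log q$. The supremum over the unit ball can then be read off by Cauchy--Schwarz. The key observation is the pointwise identity
\begin{align*}
(\mathcal{T}_P g)(x) - (\mathcal{T}_Q g)(x) = g(x) \cdot \big( (\nabla \log p)(x) - (\nabla \log q)(x) \big) = g(x)\cdot \delta(x),
\end{align*}
where $\mathcal{T}_Q$ is the Langevin Stein operator for $Q$. The divergence terms cancel, so no derivatives of $g$ survive.

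First, for any $g \in \mathcal{G}_Q$, I will show $Q(\mathcal{T}_Q g) = 0$ by invoking \Cref{lem: Stein op int by parts} with $Q$ in place of $P$. Implicitly $q$ is positive and differentiable, so \Cref{ass: reg langevin} holds for $Q$. The hypotheses of that result are $g \in \mathcal{L}^1(Q)$ and $\mathcal{T}_Q g \in \mathcal{L}^1(Q)$. The second is built into $\mathcal{G}_Q$. The first follows because $\sum_j |g_j|^2_{\mathcal{L}^2(Q)} \le 1$ and $\mathcal{L}^2(Q) \subset \mathcal{L}^1(Q)$ for a probability measure $Q$. Since also $\mathcal{T}_P g \in \mathcal{L}^1(Q)$, subtracting gives
\begin{align*}
Q(\mathcal{T}_P g) = Q(\mathcal{T}_P g) - Q(\mathcal{T}_Q g) = \int g \cdot \delta \, \mathrm{d}Q .
\end{align*}
The integral is finite because $\delta \in \mathcal{L}^2(Q)$ by assumption. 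The side condition $\mathcal{L}^1_+(Q)$ in \eqref{eq: SD def} is automatic, since $\mathcal{T}_P g \in \mathcal{L}^1(Q)$. Cauchy--Schwarz in $\mathcal{L}^2(Q;\mathbb{R}^d)$ then gives $|Q(\mathcal{T}_P g)| \le \|\delta\|_{\mathcal{L}^2(Q)}$ for every $g \in \mathcal{G}_Q$.

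For the matching lower bound, I will take the maximiser $g^\star = \delta / \|\delta\|_{\mathcal{L}^2(Q)}$. If $\delta = 0$ $Q$-a.s., both sides are zero and there is nothing to prove. I must check that $g^\star \in \mathcal{G}_Q$, which is exactly where the Laplacian hypotheses are used. We have $\nabla \cdot g^\star \propto \Delta \log p - \Delta \log q \in \mathcal{L}^1(Q)$. Also $g^\star \cdot \nabla \log p$ and $g^\star \cdot \nabla \log q$ are in $\mathcal{L}^1(Q)$ by Cauchy--Schwarz, since all score terms lie in $\mathcal{L}^2(Q)$. Hence $\mathcal{T}_P g^\star, \mathcal{T}_Q g^\star \in \mathcal{L}^1(Q)$, and $Q(\mathcal{T}_P g^\star) = \|\delta\|_{\mathcal{L}^2(Q)}$. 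Differentiability of $g^\star$, needed to apply \Cref{lem: Stein op int by parts}, presumes the Laplacians exist pointwise, i.e.\ $\log p$ and $\log q$ are twice differentiable. I would state this as implicit in the hypothesis $\Delta \log p, \Delta\log q \in \mathcal{L}^1(Q)$.

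The main subtlety I expect is bookkeeping of the normalisation rather than analysis. The argument yields $\mathcal{S}(Q,\mathcal{T}_P,\mathcal{G}_Q) = \|\nabla \log p - \nabla \log q\|_{\mathcal{L}^2(Q)} = \mathrm{FD}(Q\|P)^{1/2}$, so the identity holds with the Stein discrepancy squared. I would present the result in that form, $\mathrm{FD}(Q\|P) = \mathcal{S}(Q,\mathcal{T}_P,\mathcal{G}_Q)^2$, remarking that this is the sense in which the statement should be read. The only genuinely delicate step is verifying membership of $g^\star$ in the $Q$-dependent Stein set. This is why the integrability of both Laplacians, and not just of the scores, is assumed.
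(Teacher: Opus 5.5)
Your argument is correct and is essentially the paper's proof: the pointwise identity $\mathcal{T}_P g - \mathcal{T}_Q g = g\cdot(\nabla\log p - \nabla\log q)$, then $Q(\mathcal{T}_Q g)=0$ via the domain result for the Langevin operator, then $\mathcal{L}^2(Q)$ duality with maximiser $g^\star\propto\nabla\log p-\nabla\log q$, whose membership in $\mathcal{G}_Q$ is checked using the Laplacian hypotheses (you are somewhat more careful than the paper in verifying $g\in\mathcal{L}^1(Q)$ and the twice-differentiability implicitly needed for $g^\star$). Your normalisation remark is also right: the supremum equals $\|\nabla\log p-\nabla\log q\|_{\mathcal{L}^2(Q)}=\mathrm{FD}(Q\|P)^{1/2}$, so the paper's proof, which equates $\mathrm{FD}(Q\|P)$ directly with this supremum, silently drops a square, and the correct identity is $\mathrm{FD}(Q\|P)=\mathcal{S}(Q,\mathcal{T}_P,\mathcal{G}_Q)^2$.
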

\begin{proof}
For the Langevin Stein operator and $g \in \mathcal{G}$,
\begin{align*}
    Q(\mathcal{T}_P g) 
    & = Q(\mathcal{T}_P g) - Q(\mathcal{T}_Q g) \\
    & = Q(\mathcal{T}_P g - \mathcal{T}_Q g)
    = Q( g \cdot (\nabla \log p - \nabla \log q) ) ,
\end{align*}
where the first equality follows since $g$ is in the domain of $\mathcal{T}_Q$ from \Cref{lem: Stein op int by parts}.
Then, from the duality structure of $\mathcal{L}^2(P)$,
\begin{align*}
    \mathrm{FD}(Q||P) & = \sup \left\{ \sum_{i=1}^d \int g_i (\nabla \log p - \nabla \log q)_i \; \mathrm{d}Q \; : \; \sum_{i=1}^d |g_i|_{\mathcal{L}^2(Q)}^2 \leq 1 \right\} 
\end{align*}
where the supremum is attained at $g^\star \propto \nabla \log p - \nabla \log q$, since our assumptions ensure that $g^\star \in \mathcal{L}^2(Q)$.
Our assumptions further imply that $g^\star \in \mathcal{G}$ and thus
\begin{align*}    
    \mathrm{FD}(Q||P) & = \sup \left\{ \sum_{i=1}^d \int g_i (\nabla \log p - \nabla \log q)_i \; \mathrm{d}Q \; : \; g \in \mathcal{G} \right\} \\
    & = \sup_{f \in \mathcal{T}_P(\mathcal{G})} Q(f) 
\end{align*}
which is identical to the Stein discrepancy \eqref{eq: SD def} since by construction $\mathcal{T}_P (\mathcal{G}) \subset \mathcal{L}^1(Q)$, and the modulus in \eqref{eq: SD def} can be dropped since $\mathcal{T}_P (\mathcal{G})$ is a symmetric set.
\end{proof}

The asymmetric nature of the Stein discrepancy, where $P$ is fixed and $Q$ is varied, suggests using a slightly finer notion than that of a statistical divergence, called \emph{separation}.
It is stated below for any \emph{discrepancy} with respect to $P$, meaning a map $D(\cdot || P) : \mathcal{P} \rightarrow [0,\infty]$ for which $D(Q || P)$ measures the dissimilarity of $Q$ as an approximation to $P$.

\begin{definition}[Separation]
\label{def: separating}
    Let $P \in \mathcal{P}$ and $\mathcal{Q} \subset \mathcal{P}$.
    A discrepancy $D(\cdot || P)$ is said to \emph{separate $P$ from $\mathcal{Q}$} if $D(Q||P) = 0$ implies that $P = Q$ for all $Q \in \mathcal{Q}$.
\end{definition}

\noindent Note that any valid statistical divergence (c.f. \Cref{def: divergence}) separates $P$ from $\mathcal{P}$, for all choices of $P \in \mathcal{P}$.
The Fisher divergence, despite the name, is not really a statistical divergence since it is not well-defined when the density $p$ does not exist.

The following result illustrates separation in the context of the Fisher divergence; the statement is not as general as possible -- we will present more general results later in this Chapter -- but we choose to present it here because it admits a simple and instructive proof:

\begin{proposition}[A separation result for Fisher divergence]
    Let $P$ and be a (Borel) distribution on $\mathbb{R}^d$ admitting a density $p$ with $\nabla \log p \in \mathcal{L}^2(P)$.
    Then the Fisher divergence separates $P$ from the set $\mathcal{Q}$, consisting of (Borel) distributions $Q$ on $\mathbb{R}^d$ that admit a density $q$ for which $\nabla \log p, \nabla \log q \in \mathcal{L}^2(Q)$.
\end{proposition}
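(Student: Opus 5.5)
We need to show that $\mathrm{FD}(Q\|P) = 0$ together with $Q \in \mathcal{Q}$ forces $Q = P$. The plan is to use the vanishing of the integrand to get equality of scores almost everywhere, integrate this to get proportionality of densities, and then fix the constant by normalization.

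Let $Q \in \mathcal{Q}$ with $\mathrm{FD}(Q\|P) = 0$. By \Cref{def: fisher div},
\begin{align*}
\int \| \nabla \log p - \nabla \log q \|_2^2 \, q(x) \dx = 0 ,
\end{align*}
so $\nabla \log p = \nabla \log q$ holds $Q$-almost everywhere, that is, Lebesgue-almost everywhere on $\{q > 0\}$. Implicit in the requirement that $\nabla \log q$ be defined (and in $\nabla\log q\in\mathcal{L}^2(Q)$) is that $q$ is positive and differentiable, in the spirit of \Cref{ass: reg langevin}; we assume the same of $p$. Under this reading the equality holds Lebesgue-almost everywhere on $\mathbb{R}^d$.

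Next, set $h \defeq \log q - \log p$. This is a differentiable function on $\mathbb{R}^d$ whose gradient vanishes almost everywhere. To upgrade this to $h$ being constant, we use the fact that $\nabla \log p$ and $\nabla \log q$ are continuous, which we take as the intended regularity. Then $\nabla h$ is continuous and vanishes on a dense set, hence $\nabla h \equiv 0$. Because $\mathbb{R}^d$ is connected, the mean value theorem along line segments gives $h \equiv C$. Therefore $q = e^{C} p$ everywhere.

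Finally, integrating both sides and using $\int p = \int q = 1$ gives $e^{C} = 1$. Hence $q = p$, and so $Q = P$.

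The main obstacle is the passage from ``gradient zero almost everywhere'' to ``constant''. Mere differentiability does not suffice for this in general: Cantor-type functions have zero derivative almost everywhere without being constant. If continuity of the scores cannot be assumed, I would try a different route. I would show that $h$ is locally Lipschitz, or more generally absolutely continuous on lines, for instance by bounding $\nabla h$ through local integrability of the scores, which follows from the $\mathcal{L}^2(Q)$ assumptions together with positivity and continuity of $q$. I would then apply the fundamental theorem of calculus on almost every line parallel to each axis, as in the proof of \Cref{thm:divergence standard}, and conclude that $h$ is constant. Positivity of both densities on all of $\mathbb{R}^d$ is also needed, so that the $Q$-null set carries no hidden region where the densities could differ.
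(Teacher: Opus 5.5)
Your proposal is correct and follows the paper's argument. A vanishing Fisher divergence with $q>0$ gives $\nabla\log(p/q)=0$ almost everywhere, so $p/q$ is constant and normalization forces $p=q$; you are in fact more careful than the paper, which passes from ``gradient zero a.e.'' to ``constant'' without comment. You do not need continuity of the scores for that step: since $h=\log q-\log p$ is differentiable \emph{everywhere}, the fundamental theorem of calculus used in \Cref{thm:divergence standard} \citep[Thm.~7.21]{rudin1987real} makes $h$ constant on almost every axis-parallel line, continuity of $h$ extends this to all of $\mathbb{R}^d$, and the Cantor function is no obstruction because it is not differentiable everywhere.
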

\begin{proof}
    If $\mathrm{FD}(Q||P) = 0$ then, since $q > 0$ and
    \begin{align*}
        \mathrm{FD}(Q||P) = \int q(x) \left\| \nabla \log \frac{p}{q} \right\|_2^2 \; \mathrm{d}x ,
    \end{align*}
    it follows that $\nabla \log (p/q) = 0$ almost everywhere on $\mathbb{R}^d$.
    This implies $p/q$ is almost everywhere constant, and since both $p$ and $q$ must integrate to one, we deduce that $P$ and $Q$ are equal.
\end{proof}

\section{Classical Stein Discrepancies}
\label{sec: classical sd}

Stein's method is a theoretical device for which there is a natural incentive to consider Stein sets $\mathcal{G}$ that are small, and thus easier to handle theoretically, compared for example to the Stein set used to construct Fisher divergence in \Cref{sec: Fisher divergence}.
For a target distribution $P$ supported on $\R^d$,  
\emph{classical Stein discrepancies} \citep{gorham2015measuring,Gorham2019} 
employ a Stein set of bounded functions with bounded Lipschitz derivatives.
To make this definition explicit some notation is required.
First introduce the following notation that will be used throughout the remainder of this book.
For a norm $\|\cdot\|$ on $\mathbb{R}^d$, recall that $\|\cdot\|^*$ denotes the dual norm from \Cref{chap: intro}.

\begin{definition}[Classical Stein set; \citealp{gorham2015measuring}]
\label{def:classical_set}
Given a norm $\norm{\cdot}$ on $\R^d$, we define the \emph{classical Stein set} $\steinset$ as the set of functions $g : \mathbb{R}^d \rightarrow \mathbb{R}^d$ for which 
\begin{align}\notag
\sup_{x \neq y} \quad \max \left( \dualnorm{g(x)} , \dualnorm{\nabla g(x)}, \frac{\dualnorm{\nabla g(x) - \nabla g(y)}}{\norm{x-y}} \right) \leq 1 .
\end{align}
\end{definition}

\noindent The construction can be extended to domains $\mathcal{X} \subseteq \mathbb{R}^d$ as described in \Cref{sec: mirrored operators} and \citet{gorham2015measuring}, but our focus here is on the case $\mathcal{X} = \mathbb{R}^d$.
These classical Stein sets are commonly paired with the diffusion Stein operators of \Cref{sec: diffusion operators}.

\begin{definition}[Classical diffusion Stein discrepancy; \citealp{Gorham2019}]
\label{def:classical_diffusion_sd}
In the setting of \Cref{ass: reg langevin}, a \emph{classical diffusion Stein discrepancy} $\mathcal{S}(\cdot,\mathcal{T}_P,\steinset)$ combines a diffusion Stein operator $\mathcal{T}_P$ for $P$ (\Cref{def: diffusion operator}) and the classical Stein set $\steinset$ (\Cref{def:classical_set}).
\end{definition}

As anticipated at the start of this Chapter, the usefulness of a Stein discrepancy depends on three factors: \emph{convergence detection}, \emph{convergence control}, and \emph{computability}. %
The first two factors, convergence detection and convergence control, refer to the set of sequences $(Q_n)_{n \in \mathbb{N}}$ for which $\mathcal{S}(Q_n,\mathcal{T}_P,\mathcal{G}) \rightarrow 0$ in the large $n$ limit.
This set automatically includes the constant sequence with $Q_n = P$ whenever the Stein discrepancy is well-defined, but we would typically hope this property holds for sequences $(Q_n)_{n \in \mathbb{N}}$ that converge to $P$ in a sense that is reasonably standard. 
From a mathematical perspective, convergence detection and convergence control refer, respectively, to upper-bounding and lower-bounding the Stein discrepancy in terms of another statistical divergence, typically one whose mathematical properties are well-understood.
The first result of this kind that we present, due to \citet{Gorham2019}, shows that classical diffusion Stein discrepancies \emph{detect} convergence in $s$-Wasserstein metric, provided that the diffusion coefficients are Lipschitz. 
Recall that the $s$-Wasserstein metric $\mathrm{W}_{\|\cdot\|}^s$ was introduced in \Cref{def: Wasserstein}.
Given a function $g : \mathcal{X} \rightarrow \mathbb{R}^d$, let $M_0(g) \defeq  \sup_{x \in \mathcal{X}} \|g(x)\|_{\mathrm{op}}$ and 
$$
M_k(g) \defeq \sup_{x \neq y} \frac{ \|(\nabla^{k-1} g)(x) - (\nabla^{k-1} g)(y)\|_{\mathrm{op}}}{\|x-y\|_2} 
$$
for $k \in \{1,2\}$.
The upper bounds presented in \Cref{prop:discrepancy-upper-bound} and \Cref{thm:constant-lower-bound} feature constants that are expressed in terms of e.g. $M_1(\cdot)$; it is implicit that these quantities are assumed to be finite, otherwise the bound becomes trivial.

\begin{proposition}[Convergence detection; Prop.~8 of \citealp{Gorham2019}] \label{prop:discrepancy-upper-bound}
In the setting of \Cref{ass: reg langevin}, consider a classical diffusion Stein discrepancy $\mathcal{S}(\cdot,\mathcal{T}_P,\steinset)$ where $\|\cdot\| \geq \| \cdot \|_2$.
Let $a$ denote the covariance coefficient, $b$ denote the drift, and $c$ denote the stream coefficient.
If $a, b, c \in \mathcal{L}^1(P)$ and $p(a+c) \in C^1(\mathbb{R}^d,\mathbb{R}^{d \times d})$, then, with $m \defeq  a + c$,
\begin{align*}
	\mathcal{S}(Q,\mathcal{T}_P,\steinset)
		& \leq \mathrm{W}_{\|\cdot\|}^s (Q,P) (2M_1(b)+M_1(m)) \\
		& \quad + \mathrm{W}_{\|\cdot\|}^s(Q,P)^t\, 2^{1-t}\, \E_{Z \sim P} \left[ {(2\norm{b(\PVAR)} + \norm{m(\PVAR)})^{\frac{s}{s-t}}} \right]^{\frac{s-t}{s}}
\end{align*}
for any $s\geq 1$ and $t \in (0,1]$.
Moreover, for $\mu_0 \defeq \mathbb{E}_{Z \sim P} [ e^{2\norm{b(\PVAR)}+\norm{m(\PVAR)}} ]$,
\begin{align*}
\mathcal{S}(Q,\mathcal{T}_P,\steinset) &\le
\mathrm{W}_{\|\cdot\|}^1(Q,P) (2M_1(b)+M_1(m)) \\
& \qquad + \min( \mathrm{W}_{\|\cdot\|}^1(Q,P), 2) \log\left( \frac{e\mu_0}{\min( \mathrm{W}_{\|\cdot\|}^1(Q,P), 2)} \right).
\end{align*}
\end{proposition}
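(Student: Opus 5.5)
Since $P(\mathcal{T}_P g)=0$ for every $g\in\steinset$ (by \Cref{classical_mean_zero}, as elements of the classical Stein set are bounded with bounded derivatives and $a,b,c\in\mathcal{L}^1(P)$), we have $Q(\mathcal{T}_P g)=Q(\mathcal{T}_Pg)-P(\mathcal{T}_Pg)$. Fix any coupling $\gamma\in\Gamma(Q,P)$ with $(X,Z)\sim\gamma$. The plan is to bound $|\mathbb{E}[(\mathcal{T}_Pg)(X)-(\mathcal{T}_Pg)(Z)]|$ pointwise in $(X,Z)$, uniformly over $g\in\steinset$, then take expectations and infimize over couplings. Expanding the operator as in the proof of \Cref{classical_mean_zero} gives $\mathcal{T}_Pg=2b\cdot g+m:\nabla g$ with $m=a+c$. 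The difference then splits as
\begin{align*}
(\mathcal{T}_Pg)(X)-(\mathcal{T}_Pg)(Z) &= 2(b(X)-b(Z))\cdot g(X) + 2b(Z)\cdot(g(X)-g(Z)) \\
&\quad + (m(X)-m(Z)):\nabla g(X) + m(Z):(\nabla g(X)-\nabla g(Z)).
\end{align*}

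The first and third terms use the Lipschitz constants of $b$ and $m$ together with $\dualnorm{g}\le1$ and $\dualnorm{\nabla g}\le1$. Since $\|\cdot\|\ge\|\cdot\|_2$, their sum is at most $(2M_1(b)+M_1(m))\|X-Z\|$. For the second and fourth terms there are two bounds. Lipschitzness of $g$ and $\nabla g$ (constant $1$) gives at most $(2\|b(Z)\|+\|m(Z)\|)\|X-Z\|$. Boundedness ($\dualnorm{g},\dualnorm{\nabla g}\le1$) gives at most $2(2\|b(Z)\|+\|m(Z)\|)$. Hence these terms are bounded by $(2\|b(Z)\|+\|m(Z)\|)\min(\|X-Z\|,2)\le(2\|b(Z)\|+\|m(Z)\|)\,2^{1-t}\|X-Z\|^t$ for $t\in(0,1]$. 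Getting the dual-norm/operator-norm pairings to produce exactly these constants is routine but fiddly bookkeeping.

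Taking expectations, the Lipschitz part contributes $(2M_1(b)+M_1(m))\mathbb{E}\|X-Z\|\le(2M_1(b)+M_1(m))\,\mathbb{E}[\|X-Z\|^s]^{1/s}$ by Jensen. For the remaining part apply Hölder with exponents $s/t$ and $s/(s-t)$:
\[
\mathbb{E}\bigl[(2\|b(Z)\|+\|m(Z)\|)\|X-Z\|^t\bigr]\le \mathbb{E}[\|X-Z\|^s]^{t/s}\,\mathbb{E}_P\bigl[(2\|b\|+\|m\|)^{s/(s-t)}\bigr]^{(s-t)/s}.
\]
Infimizing over couplings (approaching the optimal one) yields the first display. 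The case $s=t=1$ is read as the essential-sup endpoint and is handled by the second bound.

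For the second bound, take $s=1$ and keep $\min(\|X-Z\|,2)$. Let $U=2\|b(Z)\|+\|m(Z)\|$ and $w=\min(\|X-Z\|,2)$. Use the Fenchel–Young inequality $uv\le \lambda e^{u}+v\log(v/\lambda)-v$, here with the entropy form $\mathbb{E}[Uw]\le \delta\log\mathbb{E}[e^U]+\mathbb{E}[w\log(w/\delta)]$ where $\delta=\mathbb{E}w$ (the Donsker–Varadhan duality for the measure with density $w/\delta$). Since $w\le2$ this is bounded by $\delta\log(e\mu_0/\delta)$ after a crude estimate of the entropy term. Finally, $\delta\le\min(\mathbb{E}\|X-Z\|,2)$ and $x\mapsto x\log(e\mu_0/x)$ is increasing on $(0,2]$ because $\mu_0\ge1$. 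So we may replace $\delta$ by $\min(W^1,2)$ and conclude.

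I expect this last entropy step to be the main obstacle. Tuning the duality so that it produces exactly $\log(e\mu_0/\min(W,2))$, rather than $\log(2e\mu_0/\dots)$, may require choosing the reference measure differently or using $w\le 2$ more carefully.
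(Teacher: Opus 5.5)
Your argument for the first display (mean zero under $P$, the four-term split, $\min(r,2)\le 2^{1-t}r^t$, H\"older with exponents $s/t$ and $s/(s-t)$, then a $\mathrm{W}^s$-optimal coupling) is the standard coupling proof of this result, and it works. The monograph itself gives no proof and only cites the original source. One minor correction: at $s=t=1$ the H\"older factor should be read as $\operatorname{ess\,sup}_P(2\|b\|+\|m\|)$. That case follows directly from H\"older with exponents $(1,\infty)$; it is not ``handled by the second bound''.

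Your worry about the entropy step is unfounded. Donsker--Varadhan with $w\le 2$ gives $\mathbb{E}[Uw]\le \delta\log\mu_0+\mathbb{E}[w\log(w/\delta)]\le\delta\log(2\mu_0/\delta)\le\delta\log(e\mu_0/\delta)$, so no factor $2e$ ever appears.

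The real gap is the final substitution. You claim that $h(x)\defeq x\log(e\mu_0/x)$ is increasing on $(0,2]$ because $\mu_0\ge1$. This is false: $h'(x)=\log(\mu_0/x)$, so $h$ increases only up to $\mu_0$.

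The hypotheses allow $1\le\mu_0<2$. For example, $a=\epsilon I$, $c=0$ with $\epsilon$ small makes $b$ and $m$ small. In that regime, whenever $\min(\mathrm{W}^1,2)>\mu_0$, the step $h(\delta)\le h(\min(\mathrm{W}^1,2))$ can fail; take $\delta=\mu_0$. This is not mere slack in the write-up. Take $\mu_0\downarrow 1$, $\delta=2/e$ and $\min(\mathrm{W}^1,2)=2$. Then even your sharper intermediate bound $\delta\log(2\mu_0/\delta)\to 2/e\approx 0.74$ exceeds $h(2)\to 2\log(e/2)\approx 0.61$. So monotonicity alone cannot close the chain.

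One more ingredient fixes it. Since $U\ge0$ and $w\le 2$, Jensen gives
$\mathbb{E}[Uw]\le 2\,\mathbb{E}[U]\le 2\log\mu_0<2\log(e\mu_0/2)=h(2)$.
Combine this with $\mathbb{E}[Uw]\le h(\delta)$ and the concavity of $h$ (since $h''(x)=-1/x$). You get $\mathbb{E}[Uw]\le\min(h(\delta),h(2))\le h(x)$ for every $x\in[\delta,2]$. In particular this holds for $x=\min(\mathrm{W}^1,2)$, which lies in $[\delta,2]$ for the $\mathrm{W}^1$-optimal coupling.

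Equivalently, split into two cases:
\begin{itemize}
\item $\min(\mathrm{W}^1,2)\le\mu_0$: your monotonicity argument is valid.
\item $\min(\mathrm{W}^1,2)>\mu_0$: then $h(\min(\mathrm{W}^1,2))\ge h(2)>2\log\mu_0\ge\mathbb{E}[Uw]$.
\end{itemize}
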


\noindent Thus if a sequence $(Q_n)_{n \in \mathbb{N}}$ converges to $P$ in the sense of $s$-Wasserstein, meaning that $\mathrm{W}_{\|\cdot\|}^s(Q_n,P) \rightarrow 0$ as $n \rightarrow \infty$, it follows that this convergence is \emph{detected} by the classical diffusion Stein discrepancy, in the sense that $\mathcal{S}(Q_n,\mathcal{T}_P,\steinset) \rightarrow 0$ in the same limit.

The second factor determining the usefulness of a Stein discrepancy is \emph{convergence control}, which refers to the construction of a Stein discrepancy lower bound.
The following theorem of \citet{Gorham2019} shows that classical diffusion Stein discrepancies based on fast mixing diffusions \emph{control} convergence in $1$-Wasserstein metric.
To state this result, we adopt the same notation introduced in \Cref{subsec: Markov processes} and consider a collection of Markov processes $(X_t^x)_{t \geq 0}$, each corresponding to the solution of the \Ito diffusion on which the diffusion Stein operator $\mathcal{T}_P$ is based (\Cref{def:diffusion}), but with initial distributions $P_0 = \delta_x$ for $x \in \mathbb{R}^d$.
Let $P_t^x$ denote the probability measure associated with $X_t^x$.
In addition, if $r : [0,\infty) \rightarrow \mathbb{R}$ is a non-increasing integrable function such that $\mathrm{W}_{\|\cdot\|}^1(P_t^x , P_t^y) \leq r(t) \mathrm{W}_{\|\cdot\|}^1(\delta_x,\delta_y)$ holds for all $x,y \in \mathbb{R}^d$, then we say that $r$ is a \emph{Wasserstein decay rate} associated with $\mathcal{T}_P$.
Further, for a matrix-valued function $g$ on $\mathbb{R}^d$, let
$$
F_k(g) \defeq \sup_{x \in \mathbb{R}^d, \|v_1\|_2 = 1 , \dots , \|v_k\|_2 = 1} \| (\nabla^k g)(x)[v_1 , \dots , v_k] \|_F
$$
where $\|\cdot\|_F$ is the Frobenius norm, and let
$$
M_1^*(g) \defeq \sup_{x \neq y} \frac{ \| g(x) - g(y)\|_{\mathrm{op}}^*}{\|x-y\|_2} .
$$

\begin{theorem}[Convergence control; Thms.~6 \& 7 of \citealp{Gorham2019}]
\label{thm:constant-lower-bound}
In the setting of \Cref{ass: reg langevin}, consider a classical diffusion Stein discrepancy $\mathcal{S}(\cdot,\mathcal{T}_P,\steinset)$ where $\|\cdot\| \geq \| \cdot \|_2$.
Let $a$ denote the covariance coefficient, $b$ denote the drift, $c$ denote the stream coefficient, and $\sigma$ denote the diffusion coefficient.
Further assume that the \Ito diffusion corresponding to $\mathcal{T}_P$ has Wasserstein decay rate $r$, and $\sr \defeq \int_0^\infty r(t)\dt$.
\begin{itemize}[leftmargin=*]
    \item If $m \defeq a + c$ is constant, then 
\begin{align}\label{eqn:stein-discrepancy-constant}
\mathrm{W}_{\|\cdot\|_2}^1(Q,P) & \leq\  3\sr \max\Big( \mathcal{S}(Q,\mathcal{T}_P,\steinset) , \\
& \hspace{40pt} \textstyle \sqrt[3]{\mathcal{S}(Q,\mathcal{T}_P,\steinset) \sqrt{2}\,\Earg{\twonorm{G}}^{2}(2M_1(b)+\frac{1}{\sr})^2} \Big), \notag
\end{align}
where 
$G \in \reals^d$ is a standard normal vector.

\item If $\sigma$ is Lipschitz and $b$ and $\sigma$ have locally Lipschitz second derivatives, then 
\begin{align*}%
\mathrm{W}_{\|\cdot\|_2}^1(Q,P) 
	\le \beta_1 \max\left(\mathcal{S}(Q,\mathcal{T}_P,\steinset), \sqrt{\mathcal{S}(Q,\mathcal{T}_P,\steinset) \Earg{\twonorm{G}} }\right), \notag
\end{align*}
for $\beta_1$ a constant depending on 
$M_0(\sigma^{-1}), M_1(\sigma),  F_2(\sigma), M_1(b), M_2(b), s_r,$
and $M_1^*(m)$.

\item If $\sigma$ is Lipschitz and $\grad^3b$ and $\grad^3\sigma$ are locally Lipschitz, then
\begin{align*}
\nonumber \mathrm{W}_{\|\cdot\|_2}^1(Q,P)\leq\, & \beta_2\, \mathcal{S}(Q,\mathcal{T}_P,\steinset)
\sqrt{d}\max\left( 1, \log\left(\frac{1}{\mathcal{S}(Q,\mathcal{T}_P,\steinset)} \right) \right) %
\end{align*}
for a constant $\beta_2 > 0$ depending on $\beta_1, M_{3}(\sigma), M_{3}(b),$ and $r$.
\end{itemize}
\end{theorem}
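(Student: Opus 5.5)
The plan is the standard Stein's-method route: solve the Stein equation with the diffusion semigroup, then bound the regularity of the solution using the Wasserstein decay rate. Fix a $1$-Lipschitz test function $h$ (with respect to $\|\cdot\|_2$). Let $A$ denote the generator of the \Ito diffusion and define
\[
u_h(x) \defeq -\int_0^\infty \big(\E[h(X_t^x)] - P(h)\big)\dt .
\]
By the decay assumption, $|\E h(X_t^x) - \E h(X_t^y)| \le r(t)\|x-y\|_2$. Integrating this against the invariant measure shows the integrand is $O(r(t))$ pointwise, up to first moments of $P$, so $u_h$ is well defined and $M_1(u_h) \le \sr$. Standard semigroup arguments (Dynkin's formula together with $P$-invariance from \Cref{thm:invariance}) give the Stein equation $A u_h = h - P(h)$. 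By \Cref{thm:invariance} the generator has the form $\frac{1}{2p}\nabla\cdot(pm\nabla u)$. Hence, with $g_h \defeq \frac12 m \nabla u_h$, we obtain
\[
h - P(h) = \mathcal{T}_P g_h .
\]
Therefore $Q(h) - P(h) = Q(\mathcal{T}_P g_h)$. If $g_h/\kappa \in \steinset$ for some constant $\kappa$, this yields $\mathrm{W}^1(Q,P) \le \kappa\, \mathcal{S}(Q,\mathcal{T}_P,\steinset)$.

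The difficulty is that $\steinset$ demands bounded $g$, bounded $\nabla g$, and Lipschitz $\nabla g$. These amount to control of $\nabla u_h$, $\nabla^2 u_h$ and $\nabla^3 u_h$. A Lipschitz $h$ only gives the first of these. The standard remedy is smoothing: replace $h$ by $h_\delta(x) = \E[h(x+\delta G)]$ with $G$ standard normal. Then $|h - h_\delta| \le \delta\,\E\|G\|_2$, while $M_2(h_\delta) \lesssim \E\|G\|_2/\delta$ and $M_3(h_\delta) \lesssim 1/\delta^2$. This yields
\[
\mathrm{W}^1(Q,P) \le 2\delta\,\E\|G\|_2 + \sup_h |Q(\mathcal{T}_P g_{h_\delta})| .
\]

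The three bullets of the theorem then correspond to three ways of controlling the derivatives of $u_{h_\delta}$.

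\emph{Constant $m$.} Differentiating under the semigroup is straightforward. Synchronous coupling gives $\nabla u$ of order $\sr$. For second derivatives, I would use the identity $\nabla^2 u$ expressed through $A u = h - P(h)$ together with Lipschitzness of $b$, or equivalently a Bismut-type estimate, to get a bound of the form $\sr\big(2M_1(b) + 1/\sr\big)M_2(h_\delta)$. Third derivatives are handled similarly, bringing in $M_3(h_\delta) \sim 1/\delta^2$. Then I optimize $\delta$ to balance $\delta$ against $\mathcal{S}/\delta^2$, giving $\delta \sim \mathcal{S}^{1/3}$. This produces the cube-root term, while the case where the first constraint dominates produces the linear term.

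\emph{Second bullet.} With a nonconstant $\sigma$ satisfying the stated regularity, I would use Bismut--Elworthy--Li integration by parts. This transfers one derivative from $h$ onto the noise at cost $M_0(\sigma^{-1})/\sqrt t$. The regularity of $\nabla^2 u$ then needs only $M_2(h_\delta) \sim 1/\delta$, and the remaining constants depend on $M_1(\sigma)$, $F_2(\sigma)$, $M_1(b)$, $M_2(b)$ and $M_1^*(m)$. Balancing $\delta$ against $\mathcal{S}/\delta$ gives the square root.

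\emph{Third bullet.} With third-derivative regularity, Bismut can be applied twice. The short-time singularity then integrates to a logarithm, $\int_{\delta^2}^1 t^{-1}\dt$. Taking $\delta \propto \mathcal{S}$ yields $\mathcal{S}\log(1/\mathcal{S})$, with the $\sqrt d$ factor coming from $\E\|G\|_2$.

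I expect the main obstacle to be the derivative bounds on $u_h$ for nonconstant $\sigma$. These require controlling derivative flows $\nabla X_t^x$ and $\nabla^2 X_t^x$ uniformly in $t$ using only the integrable Wasserstein decay rate $r$ rather than uniform contractivity. The approach I would try first is the Markov property: split $\E h(X_t^x) = \E\big[(P_{t-1}h)(X_1^x)\big]$. The first unit of time absorbs the derivative blow-up via Bismut estimates with constants in $M_1(b)$ and $M_1(\sigma)$. The remaining time supplies decay through $r$, which converts to a Lipschitz bound on $P_{t-1}h$ by Kantorovich duality.
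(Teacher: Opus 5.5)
\paragraph{Overall.} Your template is the right one: $M_1(u_h)\le\sr$ via Kantorovich duality, Gaussian smoothing at scale $\delta$, and optimizing over $\delta$. The bound comes from the assumed decay, not from synchronous coupling, which need not produce the rate $r$. Your account of the logarithm in the third bullet also captures the mechanism: two Bismut--Elworthy--Li steps on a short window, the integral $\int_{\delta^2}^1 t^{-1}\,dt$, and decay imported through the Markov property.

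A small slip: the diffusion Stein operator already contains $m$. So the Stein-equation solution is $g_h=\tfrac12\nabla u_h$, not $\tfrac12 m\nabla u_h$; the latter belongs to the Langevin operator.

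\paragraph{The gap: the first bullet.} There the constant depends only on $\sr$ and $M_1(b)$, and $m$ may be degenerate (e.g.\ underdamped Langevin). Your route is unavailable for four reasons.
\begin{itemize}
\item Bismut requires $\sigma^{-1}$, which need not exist.
\item Second and third derivatives of $P_th_\delta$ bring in $\nabla^2 b$ and $\nabla^3 b$ through higher derivative flows. These are neither assumed nor allowed in the constant.
\item Those flows inherit no decay from $r$: the decay rate controls only Lipschitz constants, and the first derivative flow itself can grow like $e^{M_1(b)t}$.
\item The Stein equation determines only $a:\nabla^2u$, not the full Hessian.
\end{itemize}
Without control of $\nabla^2 b$ there is no reason for $\nabla^2u_{h_\delta}$ even to be Lipschitz. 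So no bound $M_3(u_{h_\delta})\le C(\sr,M_1(b))\,M_3(h_\delta)$ can be expected.

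\paragraph{The missing idea: mollify the Stein solution instead of differentiating it.} Take $u_h$ for the original $1$-Lipschitz $h$; only $M_1(u_h)\le\sr$ is used, and the Stein equation can be read weakly. Set $\tilde u(x)\defeq\E[u_h(x+\delta G)]$. Then $M_1(\tilde u)\le\sr$, $M_2(\tilde u)\lesssim\sr/\delta$ and $M_3(\tilde u)\le\sqrt2\,\sr/\delta^2$, which is where the $\sqrt2$ comes from. Since $m$ is constant, the second-order part of the operator commutes with convolution, so
\begin{align*}
\TP\big(\tfrac12\nabla\tilde u\big)(x) = h_\delta(x)-P(h)+\E\big[\langle b(x)-b(x+\delta G),\nabla u_h(x+\delta G)\rangle\big].
\end{align*}
The last term is at most $M_1(b)\,\sr\,\delta\,\E\twonorm{G}$. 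Hence $|Q(h)-P(h)|$ is bounded by a term of order $\delta\,\E\twonorm{G}\,(1+\sr M_1(b))$, plus $\mathcal S(Q,\TP,\steinset)\cdot\sqrt2\,\sr/\delta^2$ for small $\delta$. Optimizing $\delta$ gives the cube-root bound with constants depending only on $\sr$ and $M_1(b)$.

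\paragraph{Consequence for the second bullet.} The same device is what matches the constant $\beta_1$, which your argument does not reproduce. With it, you only need $M_2(u_h)$ bounded via one Bismut step and second-order derivative flows, which uses only $M_0(\sigma^{-1})$, $M_1(\sigma)$, $F_2(\sigma)$, $M_1(b)$, $M_2(b)$ and $\sr$. Then $M_3(\tilde u)\lesssim M_2(u_h)/\delta$ gives the square root. The extra second-order commutator $\E[(m(x)-m(x+\delta G)):\nabla^2u_h(x+\delta G)]$ is where $M_1^*(m)$ naturally enters. Bounding $\mathrm{Lip}(\nabla^2u_{h_\delta})$ directly, as you propose, would instead require third-order information on $b$ and $\sigma$, and $M_1^*(m)$ plays no role in your argument.
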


\noindent Thus if a sequence $(Q_n)_{n \in \mathbb{N}}$ converges to $P$ according to the classical diffusion Stein discrepancy, in the sense that $\mathcal{S}(Q_n,\mathcal{T}_P,\steinset) \rightarrow 0$ as $n \rightarrow \infty$, it follows that the sequences converges in the $1$-Wasserstein metric, in the sense that $\mathrm{W}_{\|\cdot\|_2}^1 (Q_n,P) \rightarrow 0$ in the same limit. See also \citet[Thm.~2.1 and Lem.~2.2]{MackeyGo16} and \citet[Thm.~2]{gorham2015measuring} for analogous convergence control results under alternative preconditions.

The final factor determining the usefulness of a Stein discrepancy is the ease with which it can be computed.
If the candidate distribution $Q$ is supported on a finite set of size $n$ in dimension $d = 1$, then \citet[Thm.~9]{gorham2015measuring} and \citet{Gorham2019} showed that the classical diffusion Stein discrepancy can be computed exactly as the solution of a convex quadratically-constrained quadratic program with linear objective, $O(n)$ variables, and $O(n)$ constraints.  
However, an efficient procedure for computing the classical diffusion Stein discrepancy is unknown for larger $d$.
This limitation stems from the origin of the classical Stein set $\steinset$ as a theoretical rather than a computational tool.
Fortunately this computational challenge can be side-stepped if we consider alternatives for the Stein set, as we will discuss next.

\section{Graph Stein Discrepancies}
\label{sec: graph sd}
To enable generally computable Stein discrepancies, \citet{gorham2015measuring} introduced \emph{graph Stein sets} which impose boundedness and smoothness constraints only at pairs of points belonging to the support of the approximating distribution $Q$.

\begin{definition}[Graph Stein set; \citealp{gorham2015measuring}]\label{def:graph_set}
For a probability distribution $Q$ on $\mathbb{R}^d$ with discrete support and a graph $G = (V,E)$ with $V = \supp{Q}$, we define the \emph{graph Stein set},
\begin{align*}
\mathcal{G}_{\|\cdot\|,Q,G} \defeq \bigg\{ g : & 
	\max\left(\norm{g(v)}^*, \norm{\grad g(v)}^*,
   	\textfrac{\norm{g(x) - g(y)}^*}{\norm{x - y}},
    \textfrac{\norm{\grad g(x) - \grad g(y)}^*}{\norm{x - y}} \right) \le 1,  \\
    & \textstyle\frac{\norm{g(x) - g(y) - {\grad g(x)}{(x - y)}}^*}{\frac{1}{2}\norm{x - y}^2} \leq 1,
    \textstyle\frac{\norm{g(x) - g(y) -{\grad g(y)}{(x -
    y)}}^*}{\frac{1}{2}\norm{x - y}^2} \leq 1, \\
    & \; \forall (x,y)\in E, v \in V \bigg\}.
\end{align*}
\end{definition}

\noindent As such, the graph Stein set $\mathcal{G}_{\|\cdot\|,Q,G}$ is a superset of the classical Stein set $\steinset$ (\Cref{def:classical_set}).
Note that, unlike the classical Stein set, the graph Stein set is $Q$-dependent and also requires the edge set $E$ of the graph $G$ to be specified.
It turns out that this dependence can be theoretically justified provided that an appropriate graph $G$ is used to construct the graph Stein set, as we will see in \Cref{prop:spanner-equivalence} of the sequel.
\citet{gorham2015measuring,Gorham2019} paired these graph Stein sets with diffusion Stein operators (\Cref{def: diffusion operator}) to obtain what is called a \emph{graph diffusion Stein discrepancy}:

\begin{definition}[Graph diffusion Stein discrepancy; \citealp{Gorham2019}]\label{def:graph_diffusion_sd}
In the setting of \Cref{ass: reg langevin}, a \emph{graph diffusion Stein discrepancy} $Q \mapsto \mathcal{S}(Q,\mathcal{T}_P,\mathcal{G}_{\|\cdot\|,Q,G})$ combines a diffusion Stein operator $\mathcal{T}_P$ for $P$ (\Cref{def: diffusion operator}) and the graph Stein set $\mathcal{G}_{\|\cdot\|,Q,G}$ (\cref{def:graph_set}).
\end{definition}

Remarkably, when $Q$ has finite support $\{x_1, \cdots, x_n\}$ and the norm $\norm{\cdot}$ is $\norm{\cdot}_1$, so that the dual norm $\norm{\cdot}^*$ is $\norm{\cdot}_\infty$, a graph diffusion Stein discrepancy can be computed by solving $d$ independent linear programs in parallel using efficient off-the-shelf solvers, as captured in the following result:

\begin{proposition}[Computing graph diffusion Stein discrepancy; Sec.~4.2 of \citealp{Gorham2019}]
\label{prop: compute graph stein}
Suppose $Q = \sum_{i=1}^n q(x_i) \delta_{x_i}$ for $x_i\in\Rd$ and $q$ a probability mass function on $V = \supp{Q}$. 
For any edge set $E$, the graph diffusion Stein discrepancy $\diffstein{Q}{\gsteinset{1}{G}}$ with graph $G=(V,E)$ equals
\begin{align}
\sum_{j=1}^d 
\sup_{\psi_j\in \reals^{n}, \Psi_j\in\reals^{d\times n}}\; 
\bigg\{ &
\sum_{i=1}^{n} q(x_i) \bigg( 2b_j(x_i) \psi_{ji} + \sum_{k=1}^d m_{jk}(x_i)\Psi_{jki} \bigg) \label{eqn:l1-program} \\ 
& \; \text{s.t. }
\infnorm{\psi_j} \leq 1, \;
\infnorm{\Psi_j} \leq 1, \nonumber \\
& \; \text{and for all } i\neq l, (x_i,x_l) \in E, \nonumber \\
& \; \max \bigg(
  \textstyle\frac{|\psi_{ji} - \psi_{jl}|}{\onenorm{x_i - x_l}},
  \textstyle\frac{\infnorm{\Psi_j(e_i - e_k)}}{\onenorm{x_i - x_l}},
  \textstyle\frac{|\psi_{ji} - \psi_{jl} - \inner{\Psi_j e_i}{x_i - x_l}|}{\frac{1}{2}\onenorm{x_i - x_l}^2}, \nonumber \\
  & \qquad \qquad 
  \textstyle\frac{|\psi_{ji} - \psi_{jl} - \inner{\Psi_j e_i}{x_l - x_i}|}{\frac{1}{2}\onenorm{x_i - x_l}^2}
\bigg) \le 1 \bigg\} , \nonumber
\end{align}
where $\psi_{ji}$ and $\Psi_{jki}$ represent the values $g_j(x_i)$ and $\grad_k g_j(x_i)$ respectively from \Cref{def:graph_set}.
\end{proposition}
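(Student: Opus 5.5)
The plan is to expand $\mathcal{S}(Q,\mathcal{T}_P,\mathcal{G}_{\|\cdot\|_1,Q,G})$ directly. Two features make this work: $Q$ is finitely supported, and the constraints of \Cref{def:graph_set} only involve the values of $g$ and $\nabla g$ on $V=\supp{Q}$.

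First, I will write the operator in a form that exposes $g$ and $\nabla g$ linearly. By the product rule $\nabla\cdot(Mg)=(\nabla\cdot M)\cdot g + M:\nabla g$ (used in the proof of \Cref{classical_mean_zero}) together with \cref{eqn:drift}, we have $(\mathcal{T}_P g)(x) = 2b(x)\cdot g(x) + m(x):\nabla g(x)$ with $m=a+c$. Since $Q$ has finite support, every such function lies in $\mathcal{L}^1_+(Q)$. Hence
\begin{align*}
Q(\mathcal{T}_P g)=\sum_{j=1}^d\sum_{i=1}^n q(x_i)\Big(2b_j(x_i)g_j(x_i)+\sum_{k=1}^d m_{jk}(x_i)\,\partial_k g_j(x_i)\Big).
\end{align*}
This depends on $g$ only through $\psi_{ji}=g_j(x_i)$ and $\Psi_{jki}=\partial_k g_j(x_i)$. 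The Stein set is symmetric under $g\mapsto -g$, so the absolute value in \eqref{eq: SD def} can be dropped and the discrepancy becomes a supremum of a linear functional.

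Second, I will show that the constraints decouple across coordinates. With $\|\cdot\|=\|\cdot\|_1$ the dual norm is $\|\cdot\|_\infty$, so each constraint of the form $\|v\|_\infty\le1$ splits into $|v_j|\le 1$ for every $j$. This holds for $g(v)$, $g(x)-g(y)$, and the two Taylor remainders. For matrices, I will interpret $\|\nabla g\|^*$ as the $\infty$-type norm from the notation table, so that it bounds each row $\nabla g_j$ separately. Each constraint then becomes a family of constraints on $(\psi_j,\Psi_j)$ alone. The objective is also a sum over $j$, so the supremum of the sum equals the sum of the coordinatewise suprema. This yields the displayed program, with each row a separate linear program.

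Third, I will show the supremum over functions $g$ equals the supremum over finite value arrays $(\psi,\Psi)$. Restricting a feasible $g$ to $V$ gives a feasible array, so the function supremum is at most the array supremum. Conversely, given any feasible array, I need a differentiable $g$ that takes the prescribed values and gradients at $x_1,\dots,x_n$. I would construct it as a sum of local first-order Taylor polynomials multiplied by smooth bump functions with disjoint supports around each $x_i$. The constraints in \Cref{def:graph_set} are imposed only on $V$ and $E$, so no global bound on $g$ is needed and the extension is automatically feasible.

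The main obstacle is the matching of the constraints: checking the exact correspondence between the matrix dual norm $\|\nabla g\|^*$ and the entrywise bounds $\|\Psi_j\|_\infty\le1$, and the Lipschitz constraint on $\nabla g$ along edges, which is written $\Psi_j(e_i-e_k)$ in the statement. If the operator-dual norm does not decouple exactly, I would fall back on the convention that it is taken rowwise, which I believe is the convention in \citet{Gorham2019}.
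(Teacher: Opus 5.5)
Your argument is correct and is essentially the reasoning behind the cited result of \citet{Gorham2019} (the monograph gives no proof of its own), and the obstacle you flag closes exactly with no change of convention: for $\|\cdot\|=\|\cdot\|_1$ the matrix dual norm is $\|A\|^*=\sup_{\|y\|_1=1}\|Ay\|_\infty=\max_{j,k}|A_{jk}|$ (a convex function on the $\ell_1$ ball is maximized at some $\pm e_k$), so $\|\nabla g(x_i)\|^*\le 1$ and the edge Lipschitz bound on $\nabla g$ are precisely the entrywise bound $\|\Psi_j\|_\infty\le 1$ and $\|\Psi_j(e_i-e_l)\|_\infty\le\|x_i-x_l\|_1$ for each $j$. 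Note that your derivation produces $e_l$ where the display prints $e_k$, and produces the last Taylor term $|\psi_{ji}-\psi_{jl}-\langle\Psi_j e_l,x_i-x_l\rangle|$ rather than the printed $|\psi_{ji}-\psi_{jl}-\langle\Psi_j e_i,x_l-x_i\rangle|$; the printed term, combined with the third one, would force $|\langle\Psi_j e_i,x_i-x_l\rangle|\le\frac{1}{2}\|x_i-x_l\|_1^2$, which \Cref{def:graph_set} does not impose, so you should claim the corrected program rather than the display verbatim.
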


\noindent The number of constraints in each linear program is determined by the number of edges $|E|$ in the graph, and roughly speaking a larger number of constraints renders the numerical solution of the linear program more difficult.
On the other hand, if the edge set $E$ is too small then the graph Stein set may be too large to detect convergence. 
Consideration of the convergence detection and control properties of the graph diffusion Stein discrepancy can provide insight into how the edge set $E$ should be selected.

To analyze the effect of the choice of edge set $E$ on convergence detection and control, some additional terminology is required.
A graph $G = (V,E)$ with vertex set $V$ embedded in $\mathbb{R}^d$ is termed a \emph{$t$-spanner} \citep{Chew86,peleg1989graph} if, when each edge $(x,y)\in E$ is assigned a weight $\norm{x-y}$ equal to its length in $\mathbb{R}^d$, then for all $x, y \in V$ with $x \neq y$ there exists a path between $x$ and $y$ with total path weight no greater than $t\norm{x-y}$.
To gain intuition, in the case $d=1$ a $t$-spanner can be obtained by first sorting the vertices into ascending order, at a cost $O(n \log n)$, and then joining adjacent vertices, requiring $n-1$ edges in total.
Provided that the graph $G$ is a $t$-spanner, the graph diffusion Stein discrepancy inherits the convergence detection and control properties of the corresponding classical diffusion Stein discrepancy, and in fact the two are equivalent:
\begin{proposition}[Equivalence of classical and graph diffusion Stein discrepancies; Prop.~13 of \citealp{Gorham2019}] \label{prop:spanner-equivalence}
If $G = (\supp{Q}, E)$ is a $t$-spanner for $t \geq 1$, then
\begin{align*}
\mathcal{S}(Q,\mathcal{T}_P,\mathcal{G}_{\|\cdot\|}) \leq \mathcal{S}(Q,\mathcal{T}_P,\mathcal{G}_{\|\cdot\|,Q,G}) \leq \kappa_d t^2\, \mathcal{S}(Q,\mathcal{T}_P,\mathcal{G}_{\|\cdot\|})
\end{align*}
where $\kappa_d$ is independent of $(Q,P,\mathcal{T}_P,G)$ and depends only on $d$ and $\norm{\cdot}$.
\end{proposition}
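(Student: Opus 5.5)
The lower bound is a containment argument. Every $g$ in the classical Stein set $\mathcal{G}_{\|\cdot\|}$ satisfies, at each vertex and each edge, the constraints that define $\mathcal{G}_{\|\cdot\|,Q,G}$. The first-order difference constraints follow from the mean value theorem, since $\|\nabla g\|^* \le 1$. The second-order Taylor constraints follow from the Lipschitz bound on $\nabla g$, via
\begin{align*}
g(x)-g(y)-\nabla g(y)(x-y)=\int_0^1 \bigl(\nabla g(y+s(x-y))-\nabla g(y)\bigr)(x-y)\,\mathrm{d}s,
\end{align*}
whose dual norm is at most $\tfrac12\|x-y\|^2$. Hence $\mathcal{G}_{\|\cdot\|}\subseteq\mathcal{G}_{\|\cdot\|,Q,G}$. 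Since the Stein discrepancy is a supremum over $\mathcal{T}_P$ applied to the Stein set, the first inequality follows immediately.

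For the upper bound, observe that $Q$ is finitely supported. The objective $Q(\mathcal{T}_P g)=\sum_i q(x_i)\bigl(2b(x_i)\cdot g(x_i)+m(x_i):\nabla g(x_i)\bigr)$ therefore depends on $g$ only through the finitely many values $g(x_i)$ and $\nabla g(x_i)$. It thus suffices to show the following extension claim. Given any $g\in\mathcal{G}_{\|\cdot\|,Q,G}$, there exists $\tilde g$ with $\tilde g(x_i)=g(x_i)/(\kappa_d t^2)$, $\nabla\tilde g(x_i)=\nabla g(x_i)/(\kappa_d t^2)$, and $\tilde g\in\mathcal{G}_{\|\cdot\|}$. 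Linearity of $\mathcal{T}_P$ then yields $Q(\mathcal{T}_P g)=\kappa_d t^2\,Q(\mathcal{T}_P\tilde g)\le\kappa_d t^2\,\mathcal{S}(Q,\mathcal{T}_P,\mathcal{G}_{\|\cdot\|})$. The proof splits into two steps.

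The first step propagates the edge constraints to all pairs of vertices using the spanner property. Fix $x,y\in V$ and a path $x=z_0,\dots,z_k=y$ with $\sum_j\|z_j-z_{j+1}\|\le t\|x-y\|$. Telescoping gives $\|\nabla g(x)-\nabla g(y)\|^*\le t\|x-y\|$ and $\|g(x)-g(y)\|^*\le t\|x-y\|$. For the Taylor remainder, write
\begin{align*}
g(x)-g(y)-\nabla g(y)(x-y)=\sum_j\bigl[g(z_j)-g(z_{j+1})-\nabla g(z_{j+1})(z_j-z_{j+1})\bigr]+\sum_j\bigl(\nabla g(z_{j+1})-\nabla g(y)\bigr)(z_j-z_{j+1}).
\end{align*}
The first sum is at most $\tfrac12\sum_j\|z_j-z_{j+1}\|^2\le\tfrac12 t^2\|x-y\|^2$. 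In the second sum, each factor $\|\nabla g(z_{j+1})-\nabla g(y)\|^*$ is bounded by the path length from $z_{j+1}$ to $y$, which is at most $t\|x-y\|$. The second sum is therefore at most $t^2\|x-y\|^2$. So on all of $V\times V$ the Whitney-type jet data $(g(x_i),\nabla g(x_i))$ satisfy the $C^{1,1}$ compatibility conditions with constants $O(t^2)$, plus the pointwise bounds of $1$.

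The second step is the main obstacle: extending the jet data from the finite set $V$ to a function on all of $\mathbb{R}^d$ that is bounded, has bounded gradient, and has Lipschitz gradient, with constants depending only on $d$ and the norm. I would invoke a quantitative Whitney extension theorem for $C^{1,1}$ jets, for example the Whitney construction with a partition of unity subordinate to a Whitney cube decomposition of $\mathbb{R}^d\setminus V$, or the explicit $C^{1,1}$ extension of Le Gruyer / Wells. This produces $G$ with $M_2$-type constant $\le c_d t^2$ and matching jets on $V$.

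The remaining difficulty is that the extension must also be globally bounded and have globally bounded gradient, not merely be $C^{1,1}$. I would handle this with a cutoff. Multiply by a smooth bump that equals $1$ near each $x_i$ at scale comparable to nearest-neighbour distance, or else cap the linear Taylor pieces used in the Whitney construction, using that $\|g(x_i)\|^*$ and $\|\nabla g(x_i)\|^*$ are at most $1$. This should inflate the constants only by a dimension-dependent factor. Equivalence of norms on $\mathbb{R}^d$ converts Euclidean-based extension constants into $\|\cdot\|$ and $\|\cdot\|^*$ constants, giving $\kappa_d$. Dividing $G$ by $\kappa_d t^2$ yields the desired $\tilde g$.
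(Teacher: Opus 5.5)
Your proposal is correct and follows essentially the same route as the proof the monograph defers to (Prop.~13 of Gorham et al., 2019). That route is inclusion of the classical set in the graph set for the lower bound; then, for the upper bound, spanner paths that upgrade the edge constraints to Whitney compatibility conditions on all of $\mathrm{supp}(Q)$ with constants $O(t^2)$, followed by a $C^{1,1}$ Whitney--Glaeser extension of the jets with constants depending only on $d$ and $\|\cdot\|$. Do discard the nearest-neighbour-scale cutoff: its derivatives scale like $1/\delta$ and $1/\delta^2$ in the minimal separation $\delta$, which would make $\kappa_d$ depend on $Q$. Use a cutoff at a fixed unit scale around $\mathrm{supp}(Q)$ instead (there $F$ and $\nabla F$ are $O(t^2)$ by Taylor expansion from the nearest vertex), or cite the form of Whitney's theorem in Stein's \emph{Singular Integrals} (Ch.~VI), which already bounds $\sup\|F\|$ and $\sup\|\nabla F\|$.
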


\noindent Thus from the perspective of retaining the desirable convergence detection and control properties of the classical diffusion Stein discrepancy, we ought to pick $G$ to be a $t$-spanner for some $t \geq 1$.
What is the associated computational cost?
If we consider the norm $\norm{\cdot} = \norm{\cdot}_p$ on $\mathbb{R}^d$, one can construct a $2$-spanner with $O(\kappa_d' n)$ edges in $O(\kappa_d' n\log(n))$ expected time, where $\kappa_d'$ is a $d$- and $p$-dependent constant \citep{har2005fast}.
Once the $t$-spanner is constructed, the graph diffusion Stein discrepancy can be computed by solving a finite-dimensional convex optimization problem with a linear objective, $O(n)$ variables, and $O(\kappa_d' n)$ convex constraints, using \Cref{prop: compute graph stein}.
Finally, we note that graph Stein discrepancies can also be used to enforce boundary constraints on constrained domains $\xset\subset\reals^d$ as described in \citet[Sec.~4.4]{gorham2015measuring}.

\section{Kernel Stein Discrepancies}
\label{sec: ksds}

Although graph Stein discrepancies overcome the algorithmic complexity associated with classical Stein discrepancies, their recourse to linear programming renders their practical implementation non-trivial.
This section discusses an alternative class of Stein discrepancies, called \emph{kernel Stein discrepancies}, for which computation is more straightforward but analysis of convergence detection and control is more involved.
First we present a definition in which the Stein operator and the kernel can be general, before exploring the consequences of this definition for specific Stein operators and kernels in detail.

\begin{definition}[Kernel Stein discrepancy] \label{def:kernel_stein_discrepancy}
Let $P$ be a probability distribution on a measurable space $\mathcal{X}$ and let $\mathcal{T}_P : \mathcal{D} \rightarrow \mathcal{L}^1(P)$ be a Stein operator for $P$ whose domain $\mathcal{D}$ contains functions of the form $g : \mathcal{X} \rightarrow \mathbb{R}^d$.
Let $K : \mathcal{X} \times \mathcal{X} \rightarrow \mathbb{R}^{d \times d}$ be a (matrix-valued, if $d > 1$) reproducing kernel for which $\mathcal{H}_K \subset \mathcal{D}$.
A \emph{kernel Stein discrepancy} 
$$
\mathrm{KSD}_P(Q) \defeq \mathcal{S}(Q,\mathcal{T}_P,\mathcal{G}_K)
$$ 
combines a Stein operator $\mathcal{T}_P$ for $P$ and a Stein set $\mathcal{G}_K \defeq  \{g : \|g\|_{\mathcal{H}_K} \leq 1\}$, the unit ball in $\mathcal{H}_K$.
\end{definition}

Perhaps the most important property of \acp{ksd} is that they can be computed in closed form for distributions $Q = \sum_{i=1}^n w_i \delta_{x_i}$ with finite support.
Indeed, for such $Q$ we automatically have $\mathcal{T}_P g \in \mathcal{L}^1(Q)$ and thus from \Cref{def:stein_discrepancy} the \ac{ksd} can be expressed as
\begin{align}
    \mathrm{KSD}_P(Q) = \sup_{f \in \mathcal{T}_P (\mathcal{G}_K) } |Q(f)| & = \sup_{\|g\|_{\mathcal{H}_K} \leq 1 } Q(\mathcal{T}_P g)   = \sup_{\|g\|_{\mathcal{H}_K} \leq 1 } \sum_{i=1}^n w_i (\mathcal{T}_P g)(x_i) \label{eq: ksd derive}
\end{align}
where we have removed the modulus using the fact that $\mathcal{T}_P(\mathcal{G}_K)$ is a symmetric set (which follows from the fact that $\mathcal{G}_K$ is a symmetric set).
The subsequent steps require some regularity to hold; here we sketch the argument, before stating precise results in \Cref{subsec: lksd in detail}. 
Consider the linear functional $L : \mathcal{H}_K \to \reals$ with 
\begin{align*}
    L(g) \defeq \sum_{i=1}^n w_i (\mathcal{T}_P g)(x_i)
    \qtext{for each}
    g\in\mathcal{H}_K.
\end{align*}
Whenever $L$ is continuous, we can write $L(g) = \langle g , \ell \rangle_{\mathcal{H}_K}$ for some Riesz representer $\ell \in \mathcal{H}_K$ by \Cref{thm: Riesz}.
Using this Riesz representation, the supremum in \eqref{eq: ksd derive} is seen to be achieved when $g = \ell / \|\ell\|_{\mathcal{H}_K}$, and the value of the supremum itself is $\|\ell\|_{\mathcal{H}_K}$.
To explicitly determine $\ell$, we can fix $x \in \mathcal{X}$ and $u \in \mathbb{R}^d$ and use the reproducing property from $\mathcal{H}_K$ (\Cref{thm: existence vvrkhs}) together with the notation $K_x = K(\cdot,x)$, to see that
\begin{align*}
    \langle \ell(x) , u \rangle = \langle \ell , K_x u \rangle_{\mathcal{H}_K} = L(K_x u) = \sum_{i=1}^n w_i (\mathcal{T}_P K_x u)(x_i) .
\end{align*}
Since this holds for all $x \in \mathcal{X}$ and $u \in \mathbb{R}^d$, we have that
\begin{align*}
    \ell(\cdot) = \sum_{i=1}^n w_i (\mathcal{T}_P^{(1)} K_{\cdot})(x_i)
\end{align*}
where $\mathcal{T}_P^{(1)}$ indicates that the Stein operator acts on the rows on $K_{\cdot}$, so that $(\mathcal{T}_P^{(1)} K_{\cdot})(x_i) \in \mathbb{R}^d$.
Assuming that $\mathcal{T}_P$ is a first order differential operator, we can use the \emph{differential reproducing property} \citep[Lem.~4]{barp2022targeted} to obtain
\begin{align*}
    \|\ell\|_{\mathcal{H}_K}^2 & = \left\langle \sum_{i=1}^n w_i (\mathcal{T}_P K_{\cdot})(x_i) , \sum_{j=1}^n w_j (\mathcal{T}_P K_{\cdot})(x_j) \right\rangle_{\mathcal{H}_K} \\
    & = \sum_{i=1}^n \sum_{j=1}^n w_i w_j \underbrace{ \mathcal{T}_P^{(1)} \mathcal{T}_P^{(2)} K(x_i,x_j) }_{(\star)} ,
\end{align*}
where $\mathcal{T}_P^{(2)}$ acts on the columns of $K$.
The expression $(\star)$ itself defines a kernel 
\begin{align}\label{eq:general-stein-kernel}
    k_p(\cdot,\cdot) \defeq  \mathcal{T}_P^{(1)} \mathcal{T}_P^{(2)} K(\cdot,\cdot) : \mathbb{R}^d \times \mathbb{R}^d \rightarrow \mathbb{R},
\end{align}
and we call this the \emph{Stein kernel} associated to the Stein operator $\mathcal{T}_P$ and the \emph{base kernel} $K$. 
Putting this all together, we have shown that
\begin{align*}
    \mathrm{KSD}_P(Q) & = \sqrt{ \sum_{i=1}^n \sum_{j=1}^n w_i w_j k_p(x_i,x_j) } ,
\end{align*}
which is an \emph{explicit} formula, in contrast to the situation for graph Stein discrepancies where linear programming tools were required.
The availability of an explicit formula for the Stein discrepancy underpins many of the algorithms discussed in \Cref{sec:ksd_application}.
The next section makes the above argument rigorous.

\subsection{Langevin KSD in Detail}
\label{subsec: lksd in detail}

This section sets the scene for describing some precise results for the Langevin Stein operator $\mathcal{T}_P$ (\Cref{def: Lang SO}).
As such, we will be in the setting of \Cref{ass: reg langevin}, where $P$ admits a positive and differentiable density $p$ on $\mathcal{X} = \mathbb{R}^d$, for which this Stein operator is well-defined.
The first result we present makes a rigorous claim about when \acp{ksd} can be explicitly computed.

\begin{theorem}[Langevin KSD as MMD; Thm.~1 of \citealp{barp2022targeted}]\label{ksdDef}
    In the setting of \Cref{ass: reg langevin}, let $K : \mathcal{X} \times \mathcal{X} \rightarrow \mathbb{R}^{d \times d}$ be a matrix-valued kernel for which the elements of $\langevin(\H_K) \defeq  \{\langevin f : f \in \mathcal{H}_K\}$ are well-defined.
    Then $\langevin (\H_K)$ can be endowed with the structure of an \ac{rkhs}, whose reproducing kernel is
    \begin{align}\label{eq:stein-kernel}
        \ks(x,y) \defn
    \sum_{i=1}^d \sum_{j=1}^d \frac1{\p( x)\p(y)} \partial_{y_j} \partial_{x_i} \left( \p( x)  K_{i,j}( x, y) \p( y) \right).
    \end{align}
   Moreover, if $\mathcal{T}_P(\mathcal{H}_K) \subset \mathcal{L}^1(P)$ and $P(\mathcal{T}_P (\mathcal{H}_K)) = 0$, meaning that $\mathcal{T}_P g \in \mathcal{L}^1(P)$ and $P(\mathcal{T}_Pg) = \{0\}$ for all $g \in \mathcal{H}_K$, then the \ac{ksd} coincides with \ac{mmd} based on the \emph{Stein kernel} $k_p$; i.e. $\mathrm{KSD}_P(Q) = \mathrm{MMD}_{\ks}(P, Q)$ for all $Q \in \mathcal{P}_{\mathcal{X}}$.
\end{theorem}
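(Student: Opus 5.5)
The plan has two parts: first realize $\TP(\H_K)$ as the image of a feature map, which transports the RKHS structure; then show that the KSD coincides with MMD by viewing both as the same supremum.

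\textbf{RKHS structure.} Consider the linear map $\TP : \H_K \to \{\text{functions on } \R^d\}$. Let $\mathcal{N} \defeq \ker \TP$. I first show that $\mathcal{N}$ is closed. Fix $x$. Since $(\TP g)(x) = \frac{1}{p(x)}\sum_i \partial_i (p g_i)(x)$, the functional $g \mapsto (\TP g)(x)$ is a combination of point evaluations $g_i(x)$ and derivative evaluations $\partial_i g_i(x)$. By the differential reproducing property \citep[Lem.~4]{barp2022targeted}, these derivative evaluations are continuous on $\H_K$ whenever they are well-defined, which is the standing hypothesis that $\TP(\H_K)$ is well-defined. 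So each evaluation functional is continuous, with Riesz representer
\[
\xi_x \defeq \frac{1}{p(x)} \sum_i \partial_{x_i}\big(p(x) K(\cdot,x) e_i\big) \in \H_K,
\]
satisfying $(\TP g)(x) = \langle g, \xi_x\rangle_{\H_K}$. Hence $\mathcal{N} = \bigcap_x \xi_x^{\perp}$ is closed, and $\TP$ restricted to $\mathcal{N}^\perp$ is injective onto $\TP(\H_K)$.

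Next, transport the inner product by setting $\langle \TP g, \TP h\rangle \defeq \langle \Pi g, \Pi h\rangle_{\H_K}$, where $\Pi$ is orthogonal projection onto $\mathcal{N}^\perp$. This makes $\TP(\H_K)$ a Hilbert space isometric to $\mathcal{N}^\perp$, in which point evaluation $f=\TP g \mapsto \langle \Pi g, \xi_x\rangle$ is continuous. The reproducing kernel is therefore $\ks(x,y) = \langle \xi_x, \xi_y\rangle_{\H_K}$, noting that $\xi_x \in \mathcal{N}^\perp$ automatically. Expanding this inner product with the differential reproducing property, i.e.\ $\langle \partial_{x_i}K(\cdot,x)e_i, \partial_{y_j}K(\cdot,y)e_j\rangle = \partial_{x_i}\partial_{y_j}K_{i,j}(x,y)$, together with the product rule for the $p$ factors, gives exactly \eqref{eq:stein-kernel}. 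This is the feature-map characterization of RKHSs \citep[Thm.~4.21]{steinwart2008support}.

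\textbf{KSD as MMD.} Let $\mathcal{H}_{\ks} = \TP(\H_K)$, and note that the unit ball of $\H_K$ maps onto the unit ball of $\mathcal{H}_{\ks}$. This holds because $\|\TP g\|_{\ks} = \|\Pi g\| \le \|g\|$, with equality achieved by choosing $g \in \mathcal{N}^\perp$. Hence
\[
\mathrm{KSD}_P(Q) = \sup_{f \in \mathcal{H}_{\ks},\,\|f\|\le 1,\, f\in \mathcal{L}^1_+(Q)} |Q(f)|.
\]
By the hypotheses, every $f$ in $\mathcal{H}_{\ks}$ satisfies $f \in \mathcal{L}^1(P)$ and $P(f)=0$. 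By \Cref{cor: embeddability in general}, $P \in \mathcal{P}_{\mathcal{H}_{\ks}}$, so $\phi_P$ exists, and $\langle f,\phi_P\rangle = P(f) = 0$ for all $f$ forces $\phi_P = 0$. If $Q \in \mathcal{P}_{\mathcal{H}_{\ks}}$, then the dual form \eqref{eq: MMD as WCE} together with the symmetry of the ball gives $\mathrm{KSD}_P(Q) = \sup |Q(f) - P(f)| = \|\phi_Q - \phi_P\| = \mathrm{MMD}_{\ks}(P,Q)$.

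\textbf{Main obstacle.} The delicate step is a $Q$ that is not embeddable, i.e.\ one for which some $f \in \mathcal{H}_{\ks}$ lies outside $\mathcal{L}^1(Q)$. In this case MMD must be interpreted as $+\infty$, following the convention of \citet{barp2022targeted}, and I must show that the KSD is infinite too. My approach is to use the closed-graph argument of \Cref{prop: Dunford} in reverse. Suppose the supremum over the unit-ball elements lying in $\mathcal{L}^1_+(Q)$ were finite. By symmetry, if $f$ and $-f$ both lie in $\mathcal{L}^1_+(Q)$, then $f \in \mathcal{L}^1(Q)$. Taking an $f$ whose positive part is non-integrable, I would construct, via truncation and RKHS density of finite combinations of $\ks(\cdot,x)$, elements of the ball with $|Q(f)|$ arbitrarily large. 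I expect the bookkeeping with $\mathcal{L}^1_+(Q)$ here to be the main difficulty.
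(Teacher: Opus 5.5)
\textbf{The gap.} Your construction of the RKHS structure and your treatment of embeddable $Q$ are sound, and they follow the route of the sketch the paper gives before the statement. That construction consists of the Riesz representers $\xi_x$, the closedness of $\ker \TP$, and the kernel $\langle \xi_x,\xi_y\rangle_{\mathcal{H}_K}$ expanded via the differential reproducing property. If you want a self-contained reason that $g\mapsto\partial_i g_i(x)$ is continuous, apply Banach--Steinhaus to the difference quotients.

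The genuine gap is your plan for $Q\notin\mathcal{P}_{\mathcal{H}_{\ks}}$. You aim to show $\mathrm{KSD}_P(Q)=+\infty$ there, and that claim is false. Take $d=1$, $P=\mathcal{N}(0,1)$ and $K\equiv 1$. Then $\mathcal{H}_K$ consists of the constants, $\TP c=-cx$ and $\ks(x,y)=xy$, and both hypotheses of the second part hold. Now let $Q$ be Cauchy. Every nonzero $-cx$ has a positive part that is not $Q$-integrable, so $\TP\mathcal{G}_K\cap\mathcal{L}^1_+(Q)=\{0\}$ and $\mathrm{KSD}_P(Q)=0$, even though $x\notin\mathcal{L}^1(Q)$.

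No truncation or density argument can rescue this. Truncations do not stay in the RKHS, and here the RKHS is one-dimensional anyway. So if MMD is set to $+\infty$ off $\mathcal{P}_{\mathcal{H}_{\ks}}$, the asserted identity fails, and that cannot be the intended convention. The paper's own discussion points the same way: the bounded sub-RKHS machinery behind \Cref{Stein kernel control tight convergence via bounded P separation} would be pointless if non-embeddable $Q$ automatically had infinite discrepancy.

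\textbf{The fix.} The missing idea is definitional, not analytic. The paper defines MMD only on $\mathcal{P}_{\mathcal{H}_k}$, so ``for all $Q$'' only makes sense once MMD is extended. The extension under which the statement holds mirrors \eqref{eq: SD def}:
$\mathrm{MMD}_{\ks}(P,Q)\defeq\sup\{|Q(h)-P(h)| : \|h\|_{\mathcal{H}_{\ks}}\le 1,\ h\in\mathcal{L}^1_+(Q)\}$.

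With this definition no case split is needed. Your unit-ball correspondence, $\TP(\mathcal{G}_K)=\{h\in\mathcal{H}_{\ks}:\|h\|_{\mathcal{H}_{\ks}}\le 1\}$, together with $P(h)=0$ for every $h\in\mathcal{H}_{\ks}$, gives $\mathrm{KSD}_P(Q)=\mathrm{MMD}_{\ks}(P,Q)$ for every $Q$ at once. For embeddable $Q$, your Riesz argument then shows this agrees with $\|\phi_Q-\phi_P\|_{\mathcal{H}_{\ks}}$. If you prefer not to extend the definition, you must restrict the claim to $Q\in\mathcal{P}_{\mathcal{H}_{\ks}}$.
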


The computability of \ac{mmd} for distributions $Q = \sum_{i=1}^n w_i \delta_{x_i}$ with finite support was discussed in \Cref{subsec: mmd}, and following a similar argument we arrive at
\begin{align*}
    \mathrm{KSD}_P(Q) & = \sqrt{ \sum_{i=1}^n \sum_{j=1}^n w_i w_j k_p(x_i,x_j) } ,
\end{align*}
demonstrating a computational role of the Stein kernel.
This formalizes the sketch argument of \Cref{sec: ksds} in the case of the Langevin \ac{ksd}.

Our attention now turns to the conditions $\mathcal{T}_P(\mathcal{H}_K) \subset \mathcal{L}^1(P)$ and $P(\mathcal{T}_P (\mathcal{H}_K)) = 0$ appearing in \Cref{ksdDef}.
For convenience we let $\mathcal{P}_{K,0}$ denote the set of distributions $Q$ on $\mathcal{X}$ for which $\mathcal{T}_Q(\mathcal{H}_K) \subset \mathcal{L}^1(Q)$ and $Q(\mathcal{T}_Q (\mathcal{H}_K)) = 0$, so that the conditions of \Cref{ksdDef} can be compactly expressed as $P \in \mathcal{P}_{K,0}$.
From \Cref{cor: embeddability in general}, we have that $\mathcal{T}_P(\mathcal{H}_K) \subset \mathcal{L}^1(P)$ if and only if $\mathcal{H}_{k_p} \subset \mathcal{L}^1(P)$ if and only if $P \in \mathcal{P}_{\mathcal{H}_{k_p}}$, and from \Cref{prop: sufficient for embed} each of these is implied if 
\begin{align}
    \int \sqrt{k_p(x,x)} \mathrm{d}P(x) < \infty . \label{eq: sufficient embed stein}
\end{align}
Using the explicit for of the Stein kernel in \eqref{eq:stein-kernel}, one can deduce that \eqref{eq: sufficient embed stein} holds whenever the elements of $\mathcal{H}_K$ are each bounded with bounded partial derivatives, and $\int \|\nabla \log p\| \mathrm{d}P < \infty$.
Finally, \Cref{lem: Stein op int by parts} provides regularity conditions under which $P(\mathcal{T}_P(\mathcal{H}_K)) = \{0\}$.
These arguments are collected together in \Cref{thm:embeddability_conditions}.

\begin{proposition}[Stein embeddability conditions; Prop.~3 and Rem.~5 of \citealp{barp2022targeted}]\label{thm:embeddability_conditions}
In the setting of \Cref{ass: reg langevin}, let $K : \mathcal{X} \times \mathcal{X} \rightarrow \mathbb{R}^{d \times d}$ be a matrix-valued kernel for which $\langevin(\H_K)$ is well-defined.
Then:
\begin{enumerate}[label=\textup{(\alph*)}]
\item $\langevin (\H_{\Kb}) \subset \mathcal{L}^1(P) \iff P\in \mathcal{P}_{\mathcal{H}_{k_p}}$.
\item If $\int \sqrt{k_p(x,x)} \mathrm{d}P(x) < \infty$, then $P \in \mathcal{P}_{\mathcal{H}_{k_p}}$.
\item If $\int \|\nabla \log p\| \mathrm{d}P < \infty$ and all $v$ in $\H_K$ are bounded with bounded partial derivatives, then $\int \sqrt{k_p(x,x)} \mathrm{d}P(x) < \infty$.
\item If $P\in \mathcal{P}_{\mathcal{H}_{k_p}}$, then
$\iint \ks(x,y) \dd P(x)\dd P(y) = 0 \Leftrightarrow P( \H_{\ks})=\{0\} \Leftrightarrow P \in \mathcal{P}_{K,0}$.
\item If $P\in \mathcal{P}_{\mathcal{H}_{k_p}}$ and $\H_K \subset \mathcal{L}^1(P) \cap C^1(\mathbb{R}^d)$, then $P \in \mathcal{P}_{K,0}$.
\item If $x \mapsto \|\Kb(x,x)\|$ is bounded, then all $v$ in $\H_K$ are bounded.
\item If $(x,y) \mapsto \|\partial_{x^i}\partial_{y^i} K(x,y)\|$ exists and is bounded,
then  all $v$ in $\H_K$ have bounded $x^i$-partial derivatives.
\item If $K \in C^{(1,1)}_{b}(\R^d)$, then $\H_K \subset C^{1}_{b}(\R^d)$.
\end{enumerate}
\end{proposition}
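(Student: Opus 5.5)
The plan is to prove the eight items largely independently, each by reducing it either to a result already established in \Cref{chap: background} or to the reproducing structure of $\mathcal{H}_K$ and $\mathcal{H}_{k_p}$. The one fact used throughout is \Cref{ksdDef}: $\mathcal{T}_P$ is an isometry from $\mathcal{H}_K$ (modulo its kernel) onto $\mathcal{H}_{k_p}$, so that $\mathcal{T}_P(\mathcal{H}_K) = \mathcal{H}_{k_p}$ as sets of functions.

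\textbf{Items (a)--(c): integrability.}
\begin{itemize}
\item For (a), the equality $\mathcal{T}_P(\mathcal{H}_K) = \mathcal{H}_{k_p}$ gives $\mathcal{T}_P(\mathcal{H}_K)\subset\mathcal{L}^1(P)$ iff $\mathcal{H}_{k_p}\subset\mathcal{L}^1(P)$. By \Cref{cor: embeddability in general} the latter holds iff $P\in\mathcal{P}_{\mathcal{H}_{k_p}}$.
\item Item (b) is \Cref{prop: sufficient for embed} applied to $k_p$.
\item For (c), use $\sqrt{k_p(x,x)} = \|k_p(\cdot,x)\|_{\mathcal{H}_{k_p}} = \sup_{\|g\|_{\mathcal{H}_K}\le 1} |(\mathcal{T}_P g)(x)|$. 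Then bound
\[
|(\mathcal{T}_P g)(x)| \le |\nabla\cdot g(x)| + \|g(x)\|\,\|\nabla\log p(x)\|.
\]
We need $\sup_x \|g(x)\|$ and $\sup_x|\partial_i g_i(x)|$ to be at most constants times $\|g\|_{\mathcal{H}_K}$. The hypothesis only gives boundedness of each $v\in\mathcal{H}_K$, so uniformity has to be extracted from it. Apply the uniform boundedness principle to the family of continuous point evaluations $g\mapsto g(x)$ and $g\mapsto \partial_i g(x)$ on $\mathcal{H}_K$. Each family is pointwise bounded by hypothesis, hence bounded in operator norm. This yields $\sqrt{k_p(x,x)}\le C_1 + C_2\|\nabla\log p(x)\|$, which is $P$-integrable.
\end{itemize}
The main obstacle is making the derivative evaluations $g\mapsto\partial_i g(x)$ continuous on $\mathcal{H}_K$. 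If that is unavailable, I would instead route (c) through (f)--(h), assuming $K\in C^{(1,1)}_b$.

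\textbf{Item (d): the mean-zero condition.} Assume $P\in\mathcal{P}_{\mathcal{H}_{k_p}}$, so the kernel mean element $\phi_P\in\mathcal{H}_{k_p}$ exists. By \eqref{eq: kernel mean inner prod},
\[
\|\phi_P\|^2_{\mathcal{H}_{k_p}} = \iint k_p \,\mathrm{d}P\,\mathrm{d}P.
\]
So the double integral vanishes iff $\phi_P=0$, iff $P(h)=\langle h,\phi_P\rangle_{\mathcal{H}_{k_p}}=0$ for all $h\in\mathcal{H}_{k_p}$. Since $\mathcal{H}_{k_p}=\mathcal{T}_P(\mathcal{H}_K)$, this holds iff $P(\mathcal{T}_P(\mathcal{H}_K))=\{0\}$. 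Combined with (a), that is exactly the statement $P\in\mathcal{P}_{K,0}$.

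\textbf{Item (e).} Each $g\in\mathcal{H}_K$ is differentiable and lies in $\mathcal{L}^1(P)$ by hypothesis. Each $\mathcal{T}_P g$ lies in $\mathcal{L}^1(P)$ by (a). Hence the second claim of \Cref{lem: Stein op int by parts} gives $P(\mathcal{T}_P g)=0$.

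\textbf{Items (f)--(h): regularity of $\mathcal{H}_K$.}
\begin{itemize}
\item For (f), use the reproducing property:
\[
|\langle g(x),u\rangle| = |\langle g,K_x u\rangle_{\mathcal{H}_K}| \le \|g\|_{\mathcal{H}_K}\sqrt{u^\top K(x,x)u}.
\]
This is bounded uniformly in $x$ when $\|K(x,x)\|$ is bounded.
\item For (g), represent the difference quotients of $g$ as inner products with $(K_{x+te_i}-K_x)u/t$. The norms of these elements are controlled by second-order differences of $K$, which are bounded by $\sup\|\partial_{x^i}\partial_{y^i}K\|$ via the mean value theorem. A weak-compactness argument then produces the derivative together with the bound $|\partial_i\langle g(x),u\rangle|\le \|g\|_{\mathcal{H}_K}\sqrt{\partial_{x^i}\partial_{y^i}u^\top K(x,y)u\,|_{y=x}}$. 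This is the standard route of Steinwart--Christmann (Cor.~4.36), adapted to matrix-valued kernels.
\item Item (h) combines (f), (g) and continuity of the mixed derivatives, which gives continuity of $\partial_i g$.
\end{itemize}
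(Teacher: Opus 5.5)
Your proposal is correct. For (a), (b) and (e) it follows the route sketched in the paragraph before the statement: identify $\mathcal{T}_P(\mathcal{H}_K)$ with $\mathcal{H}_{k_p}$, apply the characterization of $\mathcal{P}_{\mathcal{H}_k}$ and the $\int\sqrt{k(x,x)}\,\mathrm{d}P<\infty$ criterion to $k_p$, and get the mean-zero property from the domain result for the Langevin operator. The paper gives no argument for (d) or (f)--(h) and defers to Barp et al.; your kernel-mean-embedding argument for (d) and reproducing-property arguments for (f)--(h) are the standard ones.

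The genuine difference is in (c). The paper proposes expanding the explicit Stein kernel \eqref{eq:stein-kernel} on the diagonal and bounding $k_p(x,x)$ by $C(1+\|\nabla\log p(x)\|)^2$. That tacitly needs uniform bounds on $K(x,x)$, on the first derivatives of $K$, and on $\partial_{x_i}\partial_{y_j}K$ along the diagonal. Those bounds must themselves be extracted from the hypothesis on the elements of $\mathcal{H}_K$ by a uniform boundedness argument. Your identity $\sqrt{k_p(x,x)}=\sup_{\|g\|_{\mathcal{H}_K}\le1}|(\mathcal{T}_Pg)(x)|$ applies Banach--Steinhaus directly to the functionals that matter and avoids the expansion, so it is the cleaner route.

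The obstacle you flag in (c) is not real, so drop the fallback through (f)--(h); it would change the hypotheses of (c). There are two ways to see that $g\mapsto(\nabla\cdot g)(x)$ is continuous on $\mathcal{H}_K$:
\begin{itemize}
\item Your own identity gives $|(\mathcal{T}_Pg)(x)|\le\sqrt{k_p(x,x)}\,\|g\|_{\mathcal{H}_K}$. Hence $(\nabla\cdot g)(x)=(\mathcal{T}_Pg)(x)-g(x)\cdot\nabla\log p(x)$ is a difference of continuous functionals of $g$.
\item Alternatively, each $g\mapsto\partial_ig_j(x)$ is the pointwise limit as $t\to0$ of the continuous functionals $g\mapsto(g_j(x+te_i)-g_j(x))/t$. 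It is therefore continuous by Banach--Steinhaus.
\end{itemize}
Since $\sup_x|(\nabla\cdot g)(x)|<\infty$ for every $g$, a second application of Banach--Steinhaus over $x$ gives the uniform constant you need.

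There is a minor precision point in (g). With only existence and boundedness of $\partial_{x^i}\partial_{y^i}K$, the weak-limit argument gives
\[
|\partial_i\langle g(x),u\rangle|\le\|g\|_{\mathcal{H}_K}\|u\|_2\,\bigl(\sup_{x,y}\|\partial_{x^i}\partial_{y^i}K(x,y)\|_{\mathrm{op}}\bigr)^{1/2},
\]
not the diagonal-evaluated bound you wrote, which needs continuity of the mixed derivative as in (h). Also, to pass from a subsequential weak limit to an actual derivative, note that the bounded difference quotients converge when paired with the dense span of $\{K_yv\}$. The supremum bound is all that (g) asks for.
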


\noindent In particular, the condition $P \in \mathcal{P}_{K,0}$ required for \Cref{ksdDef} is met when $\int \|\nabla \log p\| \mathrm{d}P < \infty$ and $K \in C_b^{(1,1)}(\mathbb{R}^d)$.

Finally, we end this section by remarking on a more explicit formula for the Stein kernel in a special case of interest:

\begin{remark}[Scalar kernel Langevin KSD]
    In the case where $K(x,y) = k(x,y) I$ for a scalar kernel $k : \mathcal{X} \times \mathcal{X} \rightarrow \mathbb{R}$, 
    \begin{align*} 
        k_p(x,y) & = \nabla_x \cdot \nabla_y k(x,y) + (\nabla \log p)(x) \cdot \nabla_y k(x,y) \\
        & \quad + (\nabla \log p)(y) \cdot \nabla_x k(x,y) + (\nabla \log p)(x) \cdot (\nabla \log p)(y) k(x,y) ,
    \end{align*}
    where e.g. $\nabla_x k(x,y)$ denotes the gradient with respect to the $x$ argument of $k(x,y)$.
\end{remark}

\subsection{Separating Distributions with Langevin KSD}
\label{subsec: sep for lang ksd}

The focus of this section is on sufficient conditions for the Langevin \ac{ksd} to separate $P$ from other distributions $\mathcal{Q}$; recall from \Cref{def: separating} that this means $\mathrm{KSD}_P(Q) = 0$ if and only if $Q$ and $P$ are equal, for all $Q \in \mathcal{Q}$.
Three main separation result will be presented.
The first result, which we call \emph{score-based separation} (\Cref{coroKSDwc}), requires a weak notion of characteristicness of the kernel $K$, but requires also some smoothness of $K$ and in limits the set of distributions $Q$ that can be separated from $P$.
The second result, which we call \emph{$\mathcal{L}^2$-based separation} (\Cref{thm: L2 sep}), does not require smoothness of $K$ and applies to certain distributions $Q$ that are excluded by the first result, but requires that $Q$ admits a density on $\mathbb{R}^d$.
The third result is perhaps most useful, requiring only a suitably smooth, translation-invariant kernel, and guaranteeing separation of $P$ from \emph{all} alternatives $Q \in \mathcal{P}$ (\Cref{Stein kernel control tight convergence via bounded P separation}).

\begin{definition}[$\DL{1}(\R^d)$, embedding, and a $\DL{1}(\R^d)$-characteristic kernel; Defs.~6 \& 8 of \citealp{barp2022targeted}]
    Let $\DL{1}(\R^d)$ denote the set of continuous linear functionals on $C_0(\R^d)$.
    We say that $\DL{1}(\R^d)$ \emph{embeds into} $\mathcal{H}_K$ if, for each $D \in \DL{1}(\R^d)$ there exists $\phi_D \in \mathcal{H}_K$ such that $D(h) = \langle \phi_D , h \rangle_{\mathcal{H}_K}$ for all $h \in \mathcal{H}_K$.
    Further, if the embedding $D \mapsto \phi_D$ is injective then we say that $K$ is a $\DL{1}(\R^d)$-characteristic kernel.
\end{definition}

\noindent This extends the notion of a characteristic kernel from \Cref{def: char kernel} to more general linear operators than just integration with respect to probability distributions.
\citet[][Thm. 12, Tab. 1, and Cor. 38]{simon2018kernel} showed that any $C_0^1(\mathbb{R}^d)$-universal scalar-valued kernel $k$ and any $C^{(1,1)}(\mathbb{R}^d)$ translation-invariant $k$ with fully supported spectral
measure is $\DL{1}(\R^d)$-characteristic in dimension $d = 1$. 
These results cover all of the translation-invariant
base kernels commonly used with \acp{ksd} including Gaussian, inverse multiquadric and Mat\'{e}rn.
A set of tools for constructing $\DL{1}(\R^d)$-characteristic matrix-valued kernels $K$ is provided next:

\begin{proposition} [$\DL{1}(\R^d)$-characteristic conditions; Prop.~4 of \citealp{barp2022targeted}]\label{thm:composition-kernel-properties}
Suppose a matrix-valued kernel $K$ with $\H_K \subset C_b^1(\R^d)$ is $\DL{1}(\R^d)$-characteristic.  Then the following claims hold true.
\begin{enumerate}[label=\textup{(\alph*)}]
\item  If $\tilt\in C_b^1$ is strictly positive, then $\tilt( x) K( x,  y)
    \tilt( y)$ is $\DL{1}(\R^d)$-characteristic. 
\item  If $b : \R^d\to\R^d$ is a Lipschitz $C^1(\R^d)$-diffeomorphism, then the composition kernel $K(b( x), b( y))$  
is $\DL{1}(\R^d)$-characteristic.
\item If $\k_j$ is $\DL{1}$-characteristic for $j\in \{1,\dots,d\}$, then  $\mathrm{diag}(\k_1,\ldots, \k_d)$
is $\DL{1}(\R^d)$-characteristic.
\end{enumerate}
\end{proposition}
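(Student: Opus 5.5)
The plan is to work throughout with the dual characterization. $K$ is $\DL{1}(\R^d)$-characteristic exactly when the only $D\in\DL{1}(\R^d)$ with $D(h)=0$ for all $h\in\H_K$ is $D=0$. This is because $\phi_D$ is the Riesz representer of $D|_{\H_K}$, so $\phi_D=0$ if and only if $D$ annihilates $\H_K$.

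For a new kernel $K'$ in each of (a)–(c), I would first identify $\H_{K'}$ explicitly. Then, given $D\in\DL{1}(\R^d)$ annihilating $\H_{K'}$, I would construct a transformed functional $D'\in\DL{1}(\R^d)$ annihilating $\H_K$. The characteristic property of $K$ then gives $D'=0$, and I would transfer this back to $D=0$. A preliminary step in each case is to check that $\H_{K'}\subset C_b^1(\R^d)$, so that $D(h)$ is meaningful and embedding makes sense.

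\textbf{(a)} The standard RKHS fact is that $\H_{aKa}=\{a h: h\in\H_K\}$ with $\|ah\|=\|h\|_{\H_K}$. Given $D$, define $D'(f)\defeq D(af)$. Multiplication by $a\in C_b^1$ preserves the relevant continuous-function class with bounds depending only on $\|a\|_{C_b^1}$, so $D'\in\DL{1}(\R^d)$. Hence $D'$ annihilates $\H_K$, so $D'=0$. Since $a>0$, multiplication by $1/a$ maps compactly supported smooth functions to themselves. Therefore $D(f)=D'(f/a)=0$ on a dense subclass, and by continuity $D=0$.

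\textbf{(b)} Here $\H_{K(b\cdot,b\cdot)}=\{h\circ b: h\in\H_K\}$. Define $D'(f)\defeq D(f\circ b)$. Because $b$ is a Lipschitz $C^1$ diffeomorphism, the chain rule gives $\|\nabla(f\circ b)\|\le M_1(b)\|\nabla f\|$. Moreover $b$ is proper, since a homeomorphism of $\R^d$ is proper, so vanishing at infinity is preserved. Thus $D'$ is continuous. Since $D'$ kills $\H_K$, we get $D'=0$. Every compactly supported $C^1$ function can be written as $f\circ b$ with $f=g\circ b^{-1}$ compactly supported and $C^1$. So $D$ vanishes on a dense class and $D=0$.

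\textbf{(c)} $\H_{\mathrm{diag}}=\bigoplus_j \H_{k_j}$ with the sum norm. A vector-valued functional decomposes as $D(h)=\sum_j D_j(h_j)$, each $D_j\in\DL{1}$. If $D$ kills the direct sum, test $h=h_j e_j$ to see that each $D_j$ kills $\H_{k_j}$, so $D_j=0$.

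The main obstacle is the topology. The stated definition of $\DL{1}$ (functionals on $C_0$) must be read, as in Barp et al., as first-order distributions, i.e. continuous on $C^1_0$-type spaces. The density-plus-continuity arguments in (a) and (b) need precisely that multiplication by $a,1/a$ locally and composition with $b,b^{-1}$ are bounded in that topology. Here $1/a$ is only locally bounded, so I would argue via compact supports and approximation rather than via global bounds.
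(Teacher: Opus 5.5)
Your proof is correct and follows essentially the same duality argument as Prop.~4 of \citet{barp2022targeted}, which the monograph cites without proof: you transfer an annihilating $D$ through $f\mapsto af$, $f\mapsto f\circ b$, or $h_j\mapsto h_je_j$, apply characteristicness of $K$ (or of each $k_j$), and conclude $D=0$ by density of $C_c^1(\R^d)$ in $C_0^1(\R^d)$, and you are right to read $\DL{1}(\R^d)$ as the dual of $C_0^1(\R^d)$ rather than of $C_0(\R^d)$. The only details to tidy are these: $f/a$ lies in $C_c^1$ rather than $C_c^\infty$, which is enough since $D'$ acts on $C_0^1$; and the identity $D'(h)=D(ah)$ (resp.\ $D(h\circ b)$) for $h\in\H_K\subset C_b^1$ uses the canonical extension of $D$ to $C_b^1$, for example via a representation $D=\mu_0+\sum_i\partial_i\mu_i$ with finite signed measures $\mu_i$.
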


Now we are ready to present the first main separation result; score-based separation.
From \Cref{cor: embeddability in general}, we have that $P \in \mathcal{P}_{\mathcal{H}_{k_p}}$ if and only if $\mathcal{H}_{k_p} \subset \mathcal{L}^1(P)$; here we naturally extend the notation so that, given $f : \mathcal{X} \rightarrow \mathbb{R}^d$, we let $\mathcal{P}_f$ be the set of measures $Q$ on $\mathcal{X}$ for which $x \mapsto \|f(x)\|$ is an element of $\mathcal{L}^1(Q)$.
Further, for a density $p$ we use $s_p \defeq  \nabla \log p$ as a shorthand for the Stein score.

\begin{theorem}[Score-Based Separation with Langevin KSD; Thm.~3 of \citealp{barp2022targeted}]
\label{coroKSDwc}
Suppose a matrix-valued kernel $K$ with $\H_K \subset C_b^1(\R^d)$ is $\DL{1}(\R^d)$-characteristic.
    If  $P \in \embedtozero$,
    then the Langevin \ac{ksd} separates $P$ from $\Pset_{s_p}$.
\end{theorem}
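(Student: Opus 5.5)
Suppose $Q \in \Pset_{s_p}$ satisfies $\mathrm{KSD}_P(Q) = 0$; the aim is to show $Q = P$. First I will check that the discrepancy is well defined and linear in $g$. For $g \in \H_K \subset C_b^1(\R^d)$, the bound $|(\TP g)(x)| \le \|\nabla\cdot g\|_\infty + \|g\|_\infty \|s_p(x)\|$ together with $Q \in \Pset_{s_p}$ gives $\TP g \in \Lp[1](Q)$. Hence $\mathrm{KSD}_P(Q) = \sup_{\|g\|_{\H_K}\le 1} |Q(\TP g)|$, and $\mathrm{KSD}_P(Q)=0$ means $Q(\TP g) = 0$ for every $g \in \H_K$.

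Next I will encode this as a functional of class $\DL{1}$. I will define $D$ on $C_0(\R^d,\R^d)$ by
\begin{align*}
D(h) \defeq \int h \cdot s_p \,\dee Q - \int h \cdot s_p \,\dee P .
\end{align*}
Both integrals are continuous in $\|h\|_\infty$, because $s_p \in \Lp[1](Q)$ by assumption and $s_p \in \Lp[1](P)$ follows from $P\in\embedtozero$ (or I will include it in the hypotheses for $P$). The intention is to compare $Q(\TP g)$ with $P(\TP g) = 0$, which holds because $P \in \embedtozero$:
\begin{align*}
0 = Q(\TP g) - P(\TP g) = \big(Q-P\big)(\nabla\cdot g) + D(g).
\end{align*}
The term $(Q-P)(\nabla\cdot g)$ is a first-order distributional functional of $g$, namely $-\nabla(Q-P)$ tested against $g$. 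It therefore lies in the class $\DL{1}$ described by \citet{simon2018kernel}, that is, derivatives of finite measures, which is how I interpret $\DL{1}$ here. The combined functional $\Lambda \defeq -\nabla(Q-P) + (s_p Q - s_p P)$ then belongs to $\DL{1}(\R^d)$ and vanishes on $\H_K$. Since it embeds into $\H_K$ with representer $\phi_\Lambda$ satisfying $\langle \phi_\Lambda, g\rangle_{\H_K} = 0$ for all $g$, we get $\phi_\Lambda = 0$, and injectivity ($\DL{1}$-characteristic) gives $\Lambda = 0$ as a distribution.

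Finally, $\Lambda = 0$ means $\nabla Q = s_p Q$ in the sense of distributions, and the same identity holds for $P$. Writing $Q$ in terms of $P$, I will show that $\mu \defeq Q$ satisfies the first-order ODE/PDE $\nabla \mu - s_p\mu = 0$. Multiplying by $1/p$, which is smooth and positive, gives $\nabla(\mu/p) = 0$ distributionally. So $\mu/p$ is a constant distribution, $Q = cP$, and $c=1$ by normalization.

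I expect the main obstacle to be this last step, in particular justifying the distributional calculus when $p$ is only differentiable (not $C^\infty$) and making sure the identification of the $(Q-P)(\nabla\cdot g)$ term with an element of $\DL{1}$ matches the paper's definition. If that proves delicate, I would instead mollify, or apply the identity to test functions $g = \phi/p \cdot e_j$ with $\phi$ compactly supported, approximating them in $C^1$ by elements of $\H_K$ (using characteristicness to get density). That route yields $\int \partial_j \phi\, \dee (Q/p\text{-measure}) = 0$ directly.
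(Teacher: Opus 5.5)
Your architecture is the right one and matches the cited proof (Thm.~3 of Barp et al.). $\mathrm{KSD}_P(Q)=0$ says that the order-one distribution $\Lambda_Q\defeq s_pQ-\nabla Q$, acting as $g\mapsto\int(\nabla\cdot g+s_p\cdot g)\,\dee Q$, is annihilated by $\H_K$. $\DL{1}$-characteristicness then forces $\Lambda_Q=0$, and $\nabla Q=s_pQ$ gives $Q=P$. Your reading of $\DL{1}$ as $\{\mu_0+\sum_i\partial_i\mu_i\}$, the dual of $C^1_0$, is the correct one; the text's ``functionals on $C_0$'' should be read that way.

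However, the detour through $P$ contains a false step: $s_p\in\Lp[1](P)$ does \emph{not} follow from $P\in\embedtozero$. Here is a counterexample:
\begin{itemize}
\item Take $d=1$ and $K(x,y)=e^{-x^2}k(x,y)e^{-y^2}$ with $k$ Gaussian. This kernel is still $\DL{1}$-characteristic by the tilting property (a) stated earlier.
\item Take $p\propto(2+\sin x^3)/(1+x^2)$. Every $g\in\H_K$ satisfies $|g(x)|\lesssim e^{-x^2}$, and $p'$ is bounded, so $P\in\embedtozero$.
\item Yet $\int|s_p|\,\dee P=\int|p'|\dx=\infty$.
\end{itemize}
For such $P$, $s_pP$ is not a finite measure, so neither your $D$ nor your $\Lambda$ is an element of $\DL{1}$. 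Adding $s_p\in\Lp[1](P)$ as a hypothesis would prove a strictly weaker theorem.

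The subtraction is also unnecessary. Work with $\Lambda_Q$ alone, which lies in $(\DL{1})^d$ using only $Q\in\Pset_{s_p}$. The hypothesis $P\in\embedtozero$ is what makes $\mathrm{KSD}_P$ a genuine Stein discrepancy and gives $\mathrm{KSD}_P(P)=0$. It is not used in the implication $\mathrm{KSD}_P(Q)=0\Rightarrow Q=P$.

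For the last step, $1/p$ is not smooth; what the argument needs is $p\in C^1$. Granting that, your fallback is the clean finish, and it needs no density argument. $\Lambda_Q=0$ in $(\DL{1})^d$ means $\Lambda_Q$ vanishes on every $C^1_0$ vector field. So you may plug in $g=(\phi/p)e_j$ with $\phi\in C_c^\infty$, for which $\TP g=\partial_j\phi/p$. This gives $\int\partial_j\phi\,\dee\nu=0$ for the Radon measure $\nu\defeq p^{-1}Q$. Hence $\nu$ is a constant multiple of Lebesgue measure, and $Q=cP$ with $c=1$.

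If $p$ is only assumed differentiable, as in the standing assumption of the Langevin section, $(\phi/p)e_j$ need not be $C^1$; its derivative need not even be locally integrable. In that case this step needs an additional argument, which the ``mollify'' remark does not yet supply.
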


\noindent That is, if $K$ satisfies the conditions of  \Cref{coroKSDwc} and $\int \|\nabla \log p\| \; \mathrm{d}Q < \infty$, then the Langevin \ac{ksd} satisfies $\mathrm{KSD}_P(Q) = 0$ if and only if $P$ and $Q$ are equal.

Next we turn attention to the second main separation result; $\mathcal{L}^2$-based separation.
For this the following, somewhat restrictive assumption on the kernel $K$ is required:

\begin{definition}[$\mathcal{L}^2(\R^d)$-Integrally Strictly Positive Definite; Def.~3.1 of \citealp{liu2016kernelized}]
We say that a matrix-valued kernel $K$ is \emph{$\mathcal{L}^2(\R^d)$-integrally strictly positive definite} (ISPD) if
$\H_\Kb \subset \mathcal{L}^2(\R^d)$ and
\begin{align*}
\iint g(x)^\top K(x,y) g(y) \; \dd x \dd y > 0
\end{align*}
for all $g \in \mathcal{L}^2(\mathbb{R}^d)$ with $|g|_{\mathcal{L}^2(\mathbb{R}^d)} = (\int g(x)^2 \, \mathrm{d}x)^{1/2} > 0$.
\end{definition}

For continuous translation-invariant kernels, we can use Bochner's theorem (\Cref{thm: bochner}) to deduce conditions under which $K$ is $\mathcal{L}^2(\mathbb{R}^d)$-ISPD:

\begin{theorem}[$\mathcal{L}^2(\mathbb{R}^d)$-ISPD conditions; Thm.~4 of \citealp{barp2022targeted}]
\label{thm:L2 characteristic kernels}
The following claims hold true for a matrix-valued kernel $\Kb : \mathbb{R}^d \times \mathbb{R}^d \rightarrow \mathbb{R}^{d \times d}$.
\begin{enumerate}[label=\textup{(\alph*)}]
\item Suppose $(\k_j)_{j=1}^d$ are continuous translation-invariant kernels on $\mathbb{R}^d$ with $\H_{\k_j} \subset \mathcal{L}^2(\mathbb{R})$.
If the  spectral measure of each $\k_j$ is fully supported, then $\Kb=\mathrm{diag}(\k_j)$ is $\mathcal{L}^2(\R^d)$-ISPD.
\item If $K$ is $\mathcal{L}^2(\R^d)$-ISPD and $A: \R^d \to \R^{d\times d}$ is bounded measurable with $A(x)$ invertible for each $x$, then the tilted kernel $A(x)K(x,y)A(y)^\top$ is also  $\mathcal{L}^2(\R^d)$-ISPD.
\item  If $\H_\Kb$ is separable,  $\sup_x | \Kb_x u |_{\mathcal{L}^1(\mathbb{R}^d)}<\infty$, and $\Kb_x u \in \mathcal{L}^2(\R^d)$ for each $x$ and $u\in \R^d$, then $\H_K \subset \mathcal{L}^2(\R^d)$.
\item Suppose $\Kb_x u\in\mathcal{L}^1(\R^d)$ for some $u \in \R^d$.
If $\Kb$ is translation-invariant or, more generally, if $\Kb_x u$ is bounded, then $\Kb_x u\in\mathcal{L}^2(\R^d)$. 
\end{enumerate}
\end{theorem}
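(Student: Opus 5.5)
The theorem has four independent parts, and I would prove each by a short direct argument, leaving the spectral claim (a) for last since it is the only one with real content.

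\textbf{Parts (b) and (d).} For (b), given $g \in \mathcal{L}^2(\R^d)$ with $|g|_{\mathcal{L}^2(\R^d)}>0$, set $h(x) \defeq A(x)^\top g(x)$. Then
\begin{align*}
\iint g(x)^\top A(x)K(x,y)A(y)^\top g(y)\,\dd x\,\dd y = \iint h(x)^\top K(x,y) h(y)\,\dd x\,\dd y .
\end{align*}
Boundedness and measurability of $A$ give $h\in\mathcal{L}^2(\R^d)$, and invertibility of each $A(x)$ gives $h\neq 0$ on a set of positive measure. The ISPD property of $K$ then gives strict positivity. The inclusion $\H_{AKA^\top}\subset\mathcal{L}^2$ follows from the standard identification $\H_{AKA^\top}=\{Af : f\in\H_K\}$ and boundedness of $A$.

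For (d), if a function $h=K_xu$ is bounded and in $\mathcal{L}^1$, then $\int|h|^2\le \sup|h|\int|h|<\infty$. In the translation-invariant case, boundedness comes from the Cauchy--Schwarz inequality in $\H_K$:
\begin{align*}
|v^\top K(y,x)u| \le \sqrt{v^\top K(y,y)v}\,\sqrt{u^\top K(x,x)u},
\end{align*}
where $K(y,y)=K(0,0)$ is constant.

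\textbf{Part (c).} I would use a $TT^*$ argument.
\begin{itemize}
\item Define the integral operator $(Sg)(x)\defeq\int K(x,y)g(y)\,\dd y$. The uniform bound $\sup_x|K_xu|_{\mathcal{L}^1}$, applied to each $u=e_i$ and combined with transpose-symmetry of $K$, bounds both row and column $\mathcal{L}^1$ norms. Schur's test then makes $S$ bounded on $\mathcal{L}^2(\R^d)$.
\item Separability of $\H_K$ makes $y\mapsto K_y g(y)$ Bochner-measurable, so $\iota^*g\defeq\int K_y g(y)\,\dd y$ is an element of $\H_K$.
\item It satisfies $\|\iota^* g\|_{\H_K}^2=\langle g,Sg\rangle_{\mathcal{L}^2}\le\|S\|_{\mathrm{op}}\,|g|_{\mathcal{L}^2}^2$.
\item For $f\in\H_K$, the reproducing property gives $\int f\cdot g=\langle f,\iota^*g\rangle_{\H_K}$.
\item Hence $g\mapsto\int f\cdot g$ is bounded on a dense subspace of $\mathcal{L}^2$, for instance bounded compactly supported $g$, where Fubini is justified. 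By duality, $f\in\mathcal{L}^2$.
\end{itemize}
The measurability and Fubini bookkeeping are routine but need care.

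\textbf{Part (a), the main obstacle.} The inclusion $\H_K\subset\mathcal{L}^2$ is immediate componentwise from $\H_{k_j}\subset\mathcal{L}^2$ (reading the hypothesis on $\R^d$). For positivity, the diagonal structure reduces the quadratic form to $\sum_j\iint g_j(x)k_j(x-y)g_j(y)\,\dd x\,\dd y$. Bochner's theorem (\Cref{thm: bochner}) with Fubini, first for $g_j\in\mathcal{L}^1\cap\mathcal{L}^2$ and then by density, should give
\begin{align*}
\sum_j (2\pi)^{-d}\int|\hat g_j(\omega)|^2\,\dd\mu_j(\omega).
\end{align*}
The difficulty is that $\hat g_j$ is only an $\mathcal{L}^2$ function, nonzero on a set of positive Lebesgue measure. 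Full support of $\mu_j$ alone does not force $\mu_j$ to charge that set.

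I would first show that $\H_{k_j}\subset\mathcal{L}^2$ forces $k_j(\cdot-x)\in\mathcal{L}^2$, hence $\mu_j$ has a density $\hat k_j\in\mathcal{L}^1\cap\mathcal{L}^2$. Then I would aim to show $\hat k_j>0$ almost everywhere; here full support would be used, possibly in the form that $\mathrm{carrier}(\mu_j)$ has full Lebesgue measure. If that fails, I would replace the Fourier argument with an operator argument. The map $g\mapsto\int k_j(\cdot-y)g(y)\,\dd y$ is $\iota\iota^*$ as in (c), so the quadratic form equals $\|\iota^*g\|^2_{\H_{k_j}}$. It then suffices to show $\iota^*$ is injective, i.e.\ that $\H_{k_j}$ is dense in $\mathcal{L}^2$. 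That density would follow from the spectral density being positive almost everywhere, so the two routes meet at the same point.
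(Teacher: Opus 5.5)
\textbf{Part (a) has a genuine gap, and it cannot be closed under the hypothesis as stated.} Your diagnosis is right. Both of your routes reduce the claim to $\lambda(\{\hat k_j = 0\}) = 0$. For the operator route, note that the $\mathcal{L}^2$-closure of $\mathcal{H}_{k_j}$ is exactly the set of $f$ with $\hat f = 0$ a.e.\ on $\{\hat k_j = 0\}$.

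The step you hope to add is that full support of $\mu_j$ forces $\hat k_j > 0$ a.e., and this is false. The carrier of $\mu_j$ is its topological support, so it equals $\mathbb{R}^d$ as soon as $\{\hat k_j > 0\}$ is merely dense. Here is a counterexample in $d = 1$.
\begin{itemize}
\item Take a symmetric fat Cantor set $C = -C$: closed, nowhere dense, with $\lambda(C) > 0$.
\item Take an even smooth $\phi \geq 0$ whose zero set is exactly $C$, with $\phi = 1$ off a compact set.
\item Set $k(x,y) = \psi(x-y)$ with spectral density $\hat\psi(\omega) = \phi(\omega) e^{-\omega^2}$.
\end{itemize}
Then $\psi$ is a real Schwartz function, and $k$ is positive definite by Bochner since $\hat\psi \geq 0$. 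Its spectral measure has support $\mathbb{R}$, because every nonempty open set meets the open dense set $\{\hat\psi > 0\}$ in positive measure. Also $\mathcal{H}_k \subset \mathcal{L}^2(\mathbb{R})$, for instance by your own part (c), since $\sup_x |k_x|_{\mathcal{L}^1} = |\psi|_{\mathcal{L}^1} < \infty$.

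Yet the function $g$ with $\hat g = \mathbb{I}_C$ is a nonzero real element of $\mathcal{L}^2(\mathbb{R})$. For it, $\iint g(x) k(x,y) g(y)\,\mathrm{d}x\,\mathrm{d}y$ converges absolutely (by Young's inequality) and equals a constant multiple of $\int_C \hat\psi(\omega)\,\mathrm{d}\omega = 0$.

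So (a) needs the stronger hypothesis that each spectral measure charges every set of positive Lebesgue measure, i.e.\ $\hat k_j > 0$ almost everywhere. This holds for the Gaussian and Mat\'ern kernels. Under that hypothesis your Fourier identity finishes (a) at once: $\sum_j \int |\hat g_j|^2 \hat k_j\,\mathrm{d}\omega = 0$ forces $\hat g_j = 0$ a.e., so $g = 0$. The passage from $\mathcal{L}^1 \cap \mathcal{L}^2$ to all of $\mathcal{L}^2$ uses $\hat k_j \in \mathcal{L}^\infty$, which follows from continuity of the inclusion $\mathcal{H}_{k_j} \hookrightarrow \mathcal{L}^2$. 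Topological full support is the right condition only for test objects whose Fourier transform is continuous, such as $\mathcal{L}^1$ functions or finite signed measures.

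Parts (b), (c) and (d) are fine, with one repair in (c). The hypotheses do not make $y \mapsto K(y,y)$ locally bounded. So $y \mapsto K_y g(y)$ need not be Bochner integrable for every bounded compactly supported $g$. Restrict to $g$ that are also supported in $\{y : \|K(y,y)\| \leq n\}$; these sets exhaust $\mathbb{R}^d$. Let $E_n = \{y : \|f(y)\| \leq n,\ \|y\| \leq n,\ \|K(y,y)\| \leq n\}$ and take $g = f\,\mathbb{I}_{E_n}$. This gives $\int_{E_n} \|f\|^2 \leq \|S\|_{\mathrm{op}} \|f\|_{\mathcal{H}_K}^2$, so $f \in \mathcal{L}^2(\mathbb{R}^d)$ by monotone convergence.
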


Now we are ready to present the second main separation result.
Let $\partial \mathcal{H}_K$ denote the set of functions $g : \mathbb{R}^d \rightarrow \mathbb{R}^d$ of the form $g_i(x) = \partial_{x_i} h_i(x)$ where $h \in \mathcal{H}_K$.

\begin{theorem}[$\mathcal{L}^2$-Based Separation with Langevin KSD; Thm.~5 of \citealp{barp2022targeted}]
\label{thm: L2 sep}
    Suppose $P \in \mathcal{P}_{K,0}$ for a matrix-valued kernel $K : \mathbb{R}^d \times \mathbb{R}^d \rightarrow \mathbb{R}^{d \times d}$.
    Then
    \begin{enumerate}
        \item If $K$ is $\mathcal{L}^2(\mathbb{R}^d)$-ISPD, then the Langevin \ac{ksd} separates $P$ from $\{Q \in \mathcal{P}_{\mathcal{H}_{k_p}} \cap \mathcal{P}_{K,0} : (s_p - s_q) q \in \mathcal{L}^2(\mathbb{R}^d) \}$.
        \item If $Q \in \mathcal{P}_{\mathcal{H}_{k_p}}$, $(s_p - s_q) q \in \mathcal{L}^2(\mathbb{R}^d)$ and $\mathcal{H}_K \subset \mathcal{L}^2(\mathbb{R}^d) \cap \mathcal{L}^\infty(\mathbb{R}^d)$, then $Q \in \mathcal{P}_{K,0}$.
        \item If $\mathcal{H}_K \subset \mathcal{L}^2(\mathbb{R}^d)$, $\partial \mathcal{H}_K \subset \mathcal{L}^\infty(\mathbb{R}^d)$, and $s_q q \in \mathcal{L}^2(\mathbb{R}^d)$, then $Q \in \mathcal{P}_{\mathcal{H}_{k_p}}$.
    \end{enumerate}
\end{theorem}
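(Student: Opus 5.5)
The plan is to reduce all three claims to the identity $Q(\TP g) = Q(g \cdot (s_p - s_q))$ for $g \in \mathcal{H}_K$. We then integrate against Lebesgue measure, so that the ISPD property of $K$ applies to the vector field $(s_p - s_q) q$.

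For claim 1, take $Q$ in the stated set with $\mathrm{KSD}_P(Q) = 0$. Since $Q \in \mathcal{P}_{\mathcal{H}_{k_p}}$, we have $\TP g \in \mathcal{L}^1(Q)$ for every $g \in \mathcal{H}_K$, by \Cref{thm:embeddability_conditions}(a). Since $Q \in \mathcal{P}_{K,0}$, we have $Q(\mathcal{T}_Q g) = 0$. On the support of $q$ one has $\TP g - \mathcal{T}_Q g = g \cdot (s_p - s_q)$, so
\begin{align*}
Q(\TP g) = \int g(x) \cdot (s_p - s_q)(x) q(x) \dx = \langle g , \delta \rangle_{\mathcal{L}^2(\mathbb{R}^d)}, \qquad \delta \defeq (s_p - s_q) q .
\end{align*}
This pairing is well defined because $\H_K \subset \mathcal{L}^2(\R^d)$ (part of the ISPD definition) and $\delta \in \mathcal{L}^2(\mathbb{R}^d)$. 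The map $g \mapsto \langle g,\delta\rangle_{\mathcal{L}^2}$ is continuous on $\mathcal{H}_K$; this follows from the closed-graph argument of \Cref{prop: Dunford} applied to the inclusion $\mathcal{H}_K \hookrightarrow \mathcal{L}^2$. Its Riesz representer is $x \mapsto \int K(x,y)\delta(y)\dy$, which one verifies by testing against $K_x u$. Hence $\mathrm{KSD}_P(Q)^2 = \iint \delta(x)^\top K(x,y)\delta(y) \dx \dy$. ISPD then forces $|\delta|_{\mathcal{L}^2} = 0$, so $s_p = s_q$ almost everywhere on $\{q>0\}$. It follows that $q/p$ is constant on each connected component of $\{q>0\}$, and we would like to conclude $Q=P$.

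The main obstacle is this last deduction, because $q$ may vanish on parts of the space. I would first argue that $q>0$ everywhere, as an implicit requirement of $s_q$ being defined for $Q \in \mathcal{P}_{K,0}$. If that is not available, the alternative is to show $\delta = 0$ implies $\nabla(q/p) = 0$ in the distributional sense: $p\,\nabla(q/p) = q(s_q - s_p) = 0$ almost everywhere, and a continuity argument for $q/p$ across boundaries then makes $q/p$ constant. Normalisation gives the constant $1$.

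For claim 2, we must show $\mathcal{T}_Q g \in \mathcal{L}^1(Q)$ and $Q(\mathcal{T}_Q g)=0$. Write $\mathcal{T}_Q g = \TP g - g\cdot(s_p-s_q)$. The first term lies in $\mathcal{L}^1(Q)$ because $Q \in \mathcal{P}_{\mathcal{H}_{k_p}}$. The second term, weighted by $q$, is $g \cdot \delta$, which is Lebesgue integrable by Cauchy--Schwarz since $g, \delta \in \mathcal{L}^2$. For the zero-mean property, $Q(\mathcal{T}_Q g) = \int \nabla\cdot(qg)\dx$. Apply \Cref{thm: divergence}: $qg \in \mathcal{L}^1$ because $g \in \mathcal{L}^\infty$ and $q$ is a density, and $\nabla\cdot(qg) = q\,\mathcal{T}_Q g \in \mathcal{L}^1$ as just shown. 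The integral is therefore zero.

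For claim 3, we need $\TP g \in \mathcal{L}^1(Q)$, i.e. $\int |\nabla\cdot g + g\cdot s_p|\, q\dx < \infty$. Split $s_p = (s_p - s_q) + s_q$. The term $g\cdot s_q q$ is integrable by Cauchy--Schwarz, using $g \in \mathcal{L}^2$ and $s_q q\in\mathcal{L}^2$. The term $(\nabla \cdot g) q$ is integrable since $\partial \mathcal{H}_K \subset \mathcal{L}^\infty$ and $q$ is a probability density. The remaining piece $g\cdot(s_p-s_q)q$ needs $\delta\in\mathcal{L}^2$; this is presumably assumed in the original statement, so I would invoke it together with \Cref{thm:embeddability_conditions}(a).
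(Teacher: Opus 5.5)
Your parts 1 and 2 follow the standard argument, which is that of Barp et al., to which the paper defers. You write $Q(\mathcal{T}_P g)=Q(\mathcal{T}_P g)-Q(\mathcal{T}_Q g)=\int g\cdot\delta\,\mathrm{d}x$ with $\delta=(s_p-s_q)q$. The Riesz representer of this continuous $\mathcal{L}^2$ pairing then gives $\mathrm{KSD}_P(Q)^2=\iint\delta(x)^\top K(x,y)\delta(y)\,\mathrm{d}x\,\mathrm{d}y$, and ISPD forces $\delta=0$. Part 2 is the decomposition $\mathcal{T}_Qg=\mathcal{T}_Pg-g\cdot(s_p-s_q)$ followed by the divergence theorem.

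The last step of part 1 needs tidying. Reading positivity of $q$ into the requirement that $\mathcal{T}_Q$ be a Langevin operator is consistent with the paper's conventions, but it is unnecessary. A ``continuity argument across boundaries'' is also not the right justification. Since $p>0$ is differentiable, $r=q/p$ is differentiable on all of $\mathbb{R}^d$ with $p\nabla r=\nabla q-q\,s_p$. This equals $-\delta$ where $q>0$ and vanishes where $q=0$, because zeros of the differentiable density $q\geq 0$ are minimisers, so $\nabla q=0$ there. Hence $\nabla r=0$ almost everywhere, and $|\nabla r|\leq|\delta|/p$ is locally integrable. Continuity plus an a.e.-vanishing gradient would not give constancy on its own (Cantor-type functions). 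What does suffice is everywhere differentiability with a locally integrable gradient: apply Rudin's Theorem 7.21 along almost every axis-parallel line, exactly as in the paper's proof of the divergence theorem on $[-1,1]^d$. Continuity of $r$ then extends constancy to every line, and normalisation gives $r\equiv 1$.

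For part 3 you were right not to force the argument: as printed, the claim is false, so the problem lies in the statement rather than in your proof. Take $d=1$, the Gaussian kernel with $\ell=2$, $Q=\mathcal{N}(0,1)$ and $p(x)\propto\exp(-e^{x^2}/2)$, so that $s_p(x)=-xe^{x^2}$. Every hypothesis holds:
\begin{itemize}
\item $P\in\mathcal{P}_{K,0}$, since elements of $\mathcal{H}_K$ are bounded with bounded derivatives and $|s_p|$ is $P$-integrable;
\item $\mathcal{H}_K\subset\mathcal{L}^2(\mathbb{R})$ and $\partial\mathcal{H}_K\subset\mathcal{L}^\infty(\mathbb{R})$;
\item $s_qq=-xq\in\mathcal{L}^2(\mathbb{R})$.
\end{itemize}
Yet for $g=k(\cdot,0)=e^{-x^2/8}$ one gets $|s_p g|\,q\propto|x|e^{3x^2/8}$, so $\mathcal{T}_Pg\notin\mathcal{L}^1(Q)$ and $Q\notin\mathcal{P}_{\mathcal{H}_{k_p}}$.

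The hypothesis that makes the claim true, and presumably the intended one, is $s_pq\in\mathcal{L}^2(\mathbb{R}^d)$. Under it, $\int|\mathcal{T}_Pg|\,\mathrm{d}Q\leq\sup_x|(\nabla\cdot g)(x)|+|g|_{\mathcal{L}^2(\mathbb{R}^d)}\,|s_pq|_{\mathcal{L}^2(\mathbb{R}^d)}$ in one line. Your added assumption $\delta\in\mathcal{L}^2(\mathbb{R}^d)$, together with $s_qq\in\mathcal{L}^2(\mathbb{R}^d)$, is just one sufficient condition for $s_pq\in\mathcal{L}^2(\mathbb{R}^d)$, so you do not need to split $s_p$.
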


Although the $\mathcal{L}^2$-ISPD requirement is somewhat restrictive (for instance, it precludes slowly-decaying inverse multi-quadric kernels; c.f. \Cref{ex: IMQ kernel}), it does enable separation for certain $Q$ that are not covered by score-based separation (for instance, \Cref{thm: L2 sep} applies to Cauchy $Q$ and Gaussian $P$, while \Cref{coroKSDwc} does not, since $Q \notin \mathcal{P}_{s_p}$).

One might hope that under appropriate assumptions the Langevin \ac{ksd} could separate $P$ from \emph{all} alternatives $Q \in \mathcal{P}$.
The main obstacle to establishing such general separation is the unboundedness of the Langevin Stein kernel $k_p$; for sufficiently heavy-tailed $Q$, the kernel mean embedding $\int k_p(\cdot,x) \, \mathrm{d}Q(x)$ will fail to exist, and $Q \notin \mathcal{H}_{k_p}$.
A na\"{i}ve solution is to modify the Stein kernel so that it is \emph{bounded}, meaning that $Q \in \mathcal{P}_{k_p}$ for all $Q \in \mathcal{P}$, but it is then not clear whether separation properties still hold.
A positive answer is provided by \Cref{Stein kernel control tight convergence via bounded P separation}, and is our third main separation result.

The idea behind \Cref{Stein kernel control tight convergence via bounded P separation} is to show that $\mathcal{H}_{k_p}$ contains a sub-\ac{rkhs} of \emph{bounded} functions that are rich enough to separate $P$ from alternatives $Q \in \mathcal{P}$.
As a shorthand, we say that a Stein discrepancy or Stein kernel is \emph{$P$-separating} if it separates $P$ from $\mathcal{P}$.

\begin{definition}[$P$-separating and Bounded $P$-separating; Def.~3 of \citealp{barp2022targeted}]
A set of functions $\F$  is \emph{bounded $P$-separating}  if $\mathcal{L}^\infty \cap \F$ is \emph{$P$-separating},
i.e., if $Q \in \Pset$ and $Q(h) = P(h)$ for all $h \in \mathcal{L}^\infty \cap \F$ then $Q =P$.
\end{definition}

The first step in the argument is to obtain conditions on the base kernel $k$ under which the Stein kernel $k_p$ in \eqref{eq:stein-kernel} will be bounded.

\begin{theorem}[Bounded Langevin Stein kernels; Thm.~7 of \citealp{barp2022targeted}]\label{thm: tilted controls tightness}
  Suppose a matrix-valued kernel $K$ with $\H_{K} \subset C_b^1(\R^d)$ is $\DL{1}(\R^d)$-characteristic. 
  If $\|s_p( x) \| \leq \growth(x)$  for $\growth\in C^1(\mathbb{R}^d)$ with  $ \frac{1}{\theta} \in C_b^1(\mathbb{R}^d)$, then the Stein kernel $k_p$ in \eqref{eq:stein-kernel} induced by the tilted base kernel 
  \begin{align}
  \frac{K(x,y)}{\growth(x)\growth(y)} \label{eq: tilted kernel}
  \end{align} 
  is bounded and $P$-separating.
\end{theorem}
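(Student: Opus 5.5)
The proof has two parts: boundedness of $k_p$, then $P$-separation. Write $\tilde K(x,y) \defeq K(x,y)/(\growth(x)\growth(y))$, and let $\tilt \defeq 1/\growth \in C_b^1(\mathbb{R}^d)$, so $\tilde K(x,y) = \tilt(x) K(x,y) \tilt(y)$.

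\textbf{Boundedness.} The functions in the tilted RKHS are exactly $\tilt h$ with $h \in \H_K$ and $\|\tilt h\|_{\H_{\tilde K}} = \|h\|_{\H_K}$; this is the standard isometry for tilted kernels, and it uses $\tilt > 0$, which holds because $\growth > 0$. By \Cref{ksdDef}, $\H_{k_p}$ is the image $\langevin(\H_{\tilde K})$, and
\[
k_p(x,x) = \sup_{\|g\|_{\H_{\tilde K}}\le 1} (\langevin g)(x)^2 .
\]
So it suffices to bound $|\langevin(\tilt h)(x)|$ uniformly over $x$ and over $\|h\|_{\H_K}\le 1$. Expanding,
\[
\langevin(\tilt h) = \tilt\, \nabla\cdot h + h\cdot\nabla \tilt + \tilt\, h\cdot s_p .
\]
The first two terms are bounded because $\tilt, \nabla\tilt$ are bounded and, by $\H_K\subset C^1_b$ together with a uniform version of \Cref{thm:embeddability_conditions}(f)--(h), $h$ and $\nabla h$ are bounded uniformly on the unit ball. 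For the last term, $|\tilt\, h\cdot s_p| \le \|h\|\,\|s_p\|/\growth \le \|h\|$, using $\|s_p\|\le\growth$. Hence $\sup_x k_p(x,x)<\infty$, and by Cauchy--Schwarz $k_p$ is bounded.

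\textbf{$P$-separation.} Because $k_p$ is bounded, every $Q\in\Pset$ lies in $\mathcal{P}_{\H_{k_p}}$ by \Cref{prop: sufficient for embed}. Also $\H_{\tilde K}\subset C^1_b$, since $\tilt\in C^1_b$. The target is: if $Q(f) = P(f)$ for all $f\in\H_{k_p}$, then $Q = P$.

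1. Show $P\in\mathcal{P}_{\tilde K,0}$. We have $P(\langevin g)=0$ for $g\in\H_{\tilde K}$ via \Cref{lem: Stein op int by parts}, once $pg\in\mathcal{L}^1$ (true, $g$ is bounded) and $\langevin g\in\mathcal{L}^1(P)$ (true, it is bounded). So $P(f)=0$ on $\H_{k_p}$, and hence $Q(\langevin g)=0$ for all $g \in \H_{\tilde K}$.

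2. Reinterpret this as a distribution. Define the linear functional on $C_0^1$-type test fields
\[
D(g) \defeq \int \big(\nabla\cdot g + g\cdot s_p\big)\,\mathrm{d}Q .
\]
On $\H_{\tilde K}$ it is continuous, since $\|s_p\|\,\tilt\le 1$ makes the weighted score integrable against $Q$ after tilting. Because $\tilde K$ is $\DL{1}$-characteristic by \Cref{thm:composition-kernel-properties}(a), vanishing of $D$ on $\H_{\tilde K}$ should force $D=0$ as a first-order distribution in the appropriate (tilted) sense.

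3. Conclude. Then $\nabla\cdot(\text{``}q\text{''}) = q\,s_p$ distributionally, so $Q$ solves the same first-order equation as $P$. Writing $Q = \rho P$ leads to $\nabla\rho = 0$, hence $Q=P$.

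The main obstacle is step 2: $s_p$ may be unbounded and non-integrable under $Q$, so $D$ is not obviously an element of $\DL{1}$. I would handle this by absorbing the tilt. Set $\tilde g = g/\tilt$ and consider the measure-valued objects $\tilt\,Q$ and $\tilt\, s_p\, Q$, both finite since $\tilt$ and $\tilt\,\|s_p\|$ are bounded. The functional $h\mapsto Q(\langevin(\tilt h))$ on $\H_K$ is then exactly the pairing of a $\DL{1}$ element, built from the finite measures $\tilt Q$, $\nabla\tilt\, Q$ and $\tilt s_p Q$, acting on $(h,\nabla h)$. $\DL{1}$-characteristicness of $K$ then makes this element zero, which gives the distributional identity $\nabla\cdot(\tilt Q\,\cdot)=\ldots$ on $C_0^1$. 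Finally, I would run the argument of \Cref{coroKSDwc} with the tilted measures to deduce $Q=P$, using positivity of $p$ and a uniqueness result for the Stein equation.
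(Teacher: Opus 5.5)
Your boundedness argument is correct. One small fix: the uniform bound $\sup_x(\|h(x)\|+\|\nabla h(x)\|)\le C\|h\|_{\mathcal{H}_K}$ on the unit ball comes from the closed graph theorem applied to the inclusion $\mathcal{H}_K\hookrightarrow C^1_b$. It does not come from items (f)--(h) of \Cref{thm:embeddability_conditions}, which go from $K$ to $\mathcal{H}_K$.

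For separation, the route in your last paragraph is the right one, and it works. With $a=1/\theta$ and the \emph{untilted} $K$,
\[
D(h) = Q(\mathcal{T}_P(ah)) = \int \bigl( a\,\nabla\cdot h + h\cdot(\nabla a + a\,s_p) \bigr)\,\mathrm{d}Q
\]
is a continuous linear functional on $C^1_0$, i.e.\ an element of $\mathcal{D}^1_{L^1}$, because $a$, $\nabla a$ and $a s_p$ are bounded. So $\mathcal{D}^1_{L^1}$-characteristicness of $K$ forces $D=0$ on $C^1_0$. The text gives no proof of its own (it cites Barp et al.). Your mechanism is, however, exactly the viewpoint it records after the statement: the tilted Langevin KSD is a diffusion KSD with $a=\theta^{-1}I$ and base kernel $K$, whose effective drift $\tfrac12(\nabla a + a s_p)$ is bounded. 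So no integrability of $s_p$ under $Q$ is needed.

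Two pieces of the write-up must be removed or repaired.

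First, the original Step 2 is invalid. If $s_p\notin\mathcal{L}^1(Q)$, the functional $g\mapsto\int(\nabla\cdot g+g\cdot s_p)\,\mathrm{d}Q$ is not in $\mathcal{D}^1_{L^1}$, so characteristicness of $\tilde K$ says nothing about it. With your fix, \Cref{thm:composition-kernel-properties}(a) is not needed at all.

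Second, ``write $Q=\rho P$'' in Step 3 presupposes $Q\ll P$, which is part of what must be proved. A clean finish using only stated results is to reduce to \Cref{coroKSDwc}. Put $Z_Q=\int a\,\mathrm{d}Q$, $Z_P=\int a\,\mathrm{d}P$, $\tilde Q=aQ/Z_Q$ and $\tilde P=aP/Z_P$. Then $s_{\tilde p}=s_p+\nabla a/a$, and for every $h\in\mathcal{H}_K$,
\[
Q(\mathcal{T}_P(ah)) = Z_Q\,\tilde Q(\mathcal{T}_{\tilde P}h), \qquad \tilde p\,\mathcal{T}_{\tilde P}h = Z_P^{-1}\,p\,\mathcal{T}_P(ah).
\]
The second identity together with your Step 1 gives $\tilde P\in\mathcal{P}_{K,0}$. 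Moreover $\int\|s_{\tilde p}\|\,\mathrm{d}\tilde Q\le Z_Q^{-1}\int(a\|s_p\|+\|\nabla a\|)\,\mathrm{d}Q<\infty$. Since $Q(\mathcal{T}_P(ah))=0$ for all $h$, the first identity shows that the KSD of $\tilde Q$ relative to $\tilde P$ with base kernel $K$ vanishes. \Cref{coroKSDwc} then yields $\tilde Q=\tilde P$, hence $Q=P$ because $a>0$.

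Equivalently, you can de-tilt directly: substituting $h=\theta\phi$ with $\phi\in C^1_c$ into $D=0$ gives $\int(\nabla\cdot\phi+s_p\cdot\phi)\,\mathrm{d}Q=0$. From there you must show that the locally finite measure $p^{-1}Q$ has zero distributional gradient, rather than assume a density.
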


\noindent One can equivalently view the Stein discrepancy associated to the tilted base kernel in \eqref{eq: tilted kernel} as a diffusion Stein operator (c.f. \Cref{sec: diffusion operators}) with covariance coefficient $a(x) = \theta(x)^{-1} I_{d \times d}$ applied to the untilted base kernel $K$.

Our next step is to show that \ac{rkhs}es based on standard translation-invariant kernels contain sub-\ac{rkhs}es with bounded kernels of the form \eqref{eq: tilted kernel}:

\begin{proposition}[Translation-invariant kernels have rapidly decreasing sub-\ac{rkhs}es; Thm.~8 of \citealp{barp2022targeted}]
\label{thm:Schwarz-tilted sub-RKHS}
Suppose a kernel $\k$ with $\H_{\k} \subset C^1(\mathbb{R}^d)$ is translation-invariant with a spectral density bounded away from zero on compact sets.
Then  there exist a translation-invariant,  $\DL{1}$-characteristic kernel $\k_s\in C^{(1,1)}(\mathbb{R}^d)$  and, for each $ c>0$, a positive-definite
function $f$ with 
$\frac{1}{f} \in C^1(\mathbb{R}^d)$ and
\begin{align*}
\max(|f(x)|, \norm{\partial f(x)}) = O(e^{-c\sum_{i=1}^d \sqrt{|x_i|}})
\end{align*}
such that
$\H_{\k_f}\subset \H_{\k}$ for $\k_f( x, y) \defn f( x)\k_s( x, y)f( y)$.
\end{proposition}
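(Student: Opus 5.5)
The plan is to work on the Fourier side, using Bochner's theorem (\Cref{thm: bochner}) and the standard description of a translation-invariant RKHS: if $k(x,y)=\varphi(x-y)$ has spectral density $\hat\varphi$, then $\mathcal{H}_k$ consists of functions $h$ with $\int |\FTop(h)(\omega)|^2/\hat\varphi(\omega)\,\mathrm{d}\omega<\infty$. Inclusion of RKHSs, $\H_{\k_f}\subset\H_{\k}$, holds if and only if $\k - C\k_f$ is positive semi-definite for some $C>0$. I will therefore build $\k_f$ and show that this kernel difference is positive semi-definite. Equivalently, the multiplication map $h\mapsto f h$ should send $\H_{\k_s}$ boundedly into $\H_{\k}$, because $\H_{\k_f}=\{f h: h\in\H_{\k_s}\}$ with $\|fh\|_{\H_{\k_f}}=\|h\|_{\H_{\k_s}}$ whenever $f$ does not vanish.

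First I choose $\k_s$ to be a very smooth translation-invariant kernel whose spectral density $\hat\psi_s$ is compactly supported in frequency; it must still have fully supported spectral measure if I want to invoke the $\DL{1}$-characteristic criterion of \citet{simon2018kernel}. A cleaner choice is to take $\hat\psi_s$ positive everywhere but decaying extremely fast, for instance like $e^{-\|\omega\|^2}$ (a Gaussian). This gives $\k_s\in C^{(1,1)}$ and $\DL{1}$-characteristic by the results cited before \Cref{thm:composition-kernel-properties}.

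Second, I choose $f$ so that its Fourier transform is nonnegative and supported in, or concentrated on, a compact set; positivity of $\FTop(f)$ makes $f$ positive-definite. A product $f(x)=\prod_i f_1(x_i)$ of one-dimensional functions whose Fourier transforms are compactly supported smooth bumps convolved with themselves gives decay. By Paley--Wiener/Ingham-type results, a function with compactly supported Fourier transform can decay like $e^{-c\sqrt{|x|}}$ (indeed like $e^{-|x|/\log^2|x|}$) but not exponentially. This is exactly why the rate $e^{-c\sum_i\sqrt{|x_i|}}$ appears, and for each $c$ one can rescale or choose a Gevrey-class bump to obtain that rate for $f$ and $\partial f$. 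The requirement $1/f\in C^1$ needs $f>0$ everywhere. I would achieve it by instead taking $f=\check{\mu}$ for a Gevrey-regular positive bump $\mu$; then $f$ is real and even but may have zeros. To fix this I would add a strictly positive, rapidly decaying term with nonnegative Fourier transform, e.g. a second factor, or take $f = g * \tilde g$ style constructions whose real parts stay positive. \textbf{This positivity-plus-decay-plus-compact-spectrum step is the main obstacle}, and I would first try $f(x)=\prod_i \big(u(x_i)^2 + \epsilon\, v(x_i)\big)$ with $u,v$ band-limited, $v>0$ and Gevrey decaying.

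Third, I verify the inclusion. The Fourier transform of $fh$ is $\FTop(f)*\FTop(h)$. Cauchy--Schwarz with weight gives
\[
\int \frac{|\FTop(f)*\FTop(h)|^2(\omega)}{\hat\varphi(\omega)}\,\mathrm{d}\omega \le \sup_{\omega}\int \frac{|\FTop(f)(\omega-\eta)|\,\hat\psi_s(\eta)}{\hat\varphi(\omega)}\,\mathrm{d}\eta \cdot \|\FTop(f)\|_{L^1}\,\|h\|^2_{\H_{\k_s}},
\]
so it suffices that $\int |\FTop(f)(\omega-\eta)|\hat\psi_s(\eta)\,\mathrm{d}\eta\lesssim\hat\varphi(\omega)$. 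Since $\FTop(f)$ is supported in a ball of radius $R$, the left side is bounded by $\sup_{\|\eta-\omega\|\le R}\hat\psi_s(\eta)$. Because $\hat\varphi$ is bounded below by a positive constant on compact sets, the bound holds on compacts. At infinity I need $\hat\psi_s$ to decay faster than $\hat\varphi$ on $R$-neighbourhoods. Here I would exploit the freedom to choose $\hat\psi_s$ depending on $\hat\varphi$: take $\hat\psi_s(\eta)\le \inf_{\|\omega-\eta\|\le R}\hat\varphi(\omega)$, smoothed, and still positive with all moments finite. Positivity holds because $\hat\varphi$ is bounded away from zero on compacts, and the moment condition yields $\k_s\in C^{(1,1)}$. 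Finally, $\H_{\k}\subset C^1$ and the derivative bounds on $f$ complete the statement.
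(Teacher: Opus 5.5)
Your Fourier-side reduction is sound. The Schur-type bound you display is correct, and it reduces $\mathcal{H}_{k_f}\subset\mathcal{H}_k$ to two conditions: $\widehat f\in L^1$ and $\sup_\omega(|\widehat f|*\hat\psi_s)(\omega)/\hat\psi(\omega)<\infty$. A band-limited $f$ together with a $\hat\psi_s$ dominated by local infima of $\hat\psi$ delivers both.

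\textbf{Gap 1: constructing $f$.} The genuine gap is the step you flag yourself. You never exhibit an $f$ that is at once strictly positive, positive-definite, band-limited, and decaying (with its gradient) at the required rate. Your proposed repair $f=\prod_i(u(x_i)^2+\epsilon v(x_i))$ is circular: it needs a strictly positive, positive-definite, band-limited, rapidly decaying $v$, which is exactly the object being sought. It would also require $\widehat{u^2}=\widehat u*\widehat u\ge0$, for example by taking $\widehat u\ge0$. The alternative $g*\tilde g$ gives $|\widehat g|^2\ge0$ but says nothing about positivity of the function itself.

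One way to close it:
\begin{itemize}
\item Let $\phi\ge0$ be an even bump on $[-1,1]$ of Gevrey order $s\in(1,2)$, e.g.\ $\phi(t)=\exp(-(1-t^2)^{-2})$ on $(-1,1)$, and set $g=\check\phi$.
\item Then $g$ is real and $g(0)=\int\phi>0$. Both $g$ and $g'$ are $O(e^{-a|t|^{1/s}})$, hence $O(e^{-c\sqrt{|t|}})$ for every $c$.
\item $g$ extends to a nonzero entire function, so its real zero set $Z$ is discrete and $0\notin Z$.
\item Choose $\lambda>0$ outside the countable set $\{z/z':z,z'\in Z\}$. Then $f_1(t)=g(t)^2+g(\lambda t)^2$ is strictly positive.
\item The Fourier transform of $f_1$ is a sum of rescaled copies of $\phi*\phi\ge0$, so $f_1$ is positive-definite with compact spectrum.
\item $f(x)=\prod_{i=1}^d f_1(x_i)$ then has every required property, with $1/f\in C^1$ by smoothness and positivity.
\end{itemize}

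\textbf{Gap 2: order of quantifiers.} The statement fixes $k_s$ before $c$. Your $\hat\psi_s$ is built from the spectral radius $R$ of $\widehat f$, and reaching larger $c$ by rescaling $f$ enlarges $R$, so your $k_s$ would depend on $c$. The Gevrey choice above already avoids this, since a single $f$ works for all $c$.

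Alternatively, make $\hat\psi_s$ uniform in $R$. Let $b(r)\defeq\inf_{\|\omega\|\le r}\hat\psi(\omega)>0$, which is non-increasing and hence measurable, and set $\hat\psi_s(\eta)=\min\{e^{-\|\eta\|^2},b(2\|\eta\|)\}$. Suppose $\|\eta-\omega\|\le R$.
\begin{itemize}
\item If $\|\eta\|\ge R$, then $\|\omega\|\le2\|\eta\|$, so $\hat\psi(\omega)\ge\hat\psi_s(\eta)$.
\item If $\|\eta\|<R$, then $\hat\psi(\omega)\ge b(2R)$.
\end{itemize}
Either way $\sup_{\|\eta-\omega\|\le R}\hat\psi_s(\eta)\le\max(1,1/b(2R))\,\hat\psi(\omega)$. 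This $\hat\psi_s$ is everywhere positive with finite second moment. Hence $k_s\in C^{(1,1)}$, and it is $\DL{1}$-characteristic by the result of \citet{simon2018kernel} quoted before \Cref{thm:composition-kernel-properties}. Do not smooth it, since smoothing can break the domination.

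\textbf{A technical caveat.} The description $\|h\|^2_{\mathcal{H}_k}=\int|\widehat h|^2/\hat\psi$ presumes $\psi\in L^1(\mathbb{R}^d)$. That fails for slowly decaying kernels such as the IMQ with $2\beta\le d$, which is the intended use case. Work instead with the spectral feature map $h(x)=\int e^{\mathrm{i}\langle x,\omega\rangle}G(\omega)\hat\psi(\omega)\,\mathrm{d}\omega$, with $\|h\|_{\mathcal{H}_k}=\|G\|_{L^2(\hat\psi)}$. Your Schur bound then applies verbatim with $G=(\widehat f*(G_s\hat\psi_s))/\hat\psi$.
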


Combining \Cref{thm: tilted controls tightness,thm:Schwarz-tilted sub-RKHS} leads to \Cref{Stein kernel control tight convergence via bounded P separation}, our third main separation result.
A function $f$ is said to have \emph{at most root exponential growth} if 
\begin{align}
f(x) = O \left( \exp \left( c\sum_{i=1}^d \sqrt{|x_i|} \right) \right) \label{eq: root exponential growth}
\end{align} 
for some $c \in \mathbb{R}$.
We also introduce the shorthand $\mathcal{P}_{k,0}$ for $\mathcal{P}_{K,0}$ when $K = k I$.

\begin{theorem}[General $P$-separation with Langevin KSDs; Thm.~9 of \citealp{barp2022targeted}]
\label{Stein kernel control tight convergence via bounded P separation}
Suppose a kernel $\k$ with $\H_{\k} \subset C^1(\mathbb{R}^d)$ is translation-invariant with a spectral density bounded away from zero on compact sets.
Define the tilted kernel $k_f( x, y) \defeq f(x)\kb(x,y)f(y)$ for each
strictly positive $f \in C^1(\mathbb{R}^d)$.
\begin{enumerate}[label=\textup{(\alph*)}]
 \item If $P\in\embedtozero[\kb]$  and
$\| s_p \|$ has at most root exponential growth, then the Stein kernel induced by the base kernel $k$ is bounded $P$-separating. 
    \item If $P\in \mathcal{P}_{k_f,0}$  and  $f$, $\partial f$, and $f \| s_p \|$ have at most root exponential growth, then the Stein kernel induced by $k_f$ is bounded $P$-separating.
\end{enumerate}
\end{theorem}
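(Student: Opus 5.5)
The plan is to combine \Cref{thm:Schwarz-tilted sub-RKHS} with \Cref{thm: tilted controls tightness}, using the fact that a sub-\ac{rkhs} of the base space induces a Stein sub-\ac{rkhs} of the Stein space. For part (a), apply \Cref{thm:Schwarz-tilted sub-RKHS} to $\k$. This gives a translation-invariant, $\DL{1}$-characteristic kernel $\k_s\in C^{(1,1)}(\R^d)$ and, for a constant $c$ to be fixed, a positive-definite $f$ with $1/f\in C^1$, $f,\partial f = O(e^{-c\sum_i\sqrt{|x_i|}})$ and $\H_{\k_f}\subset\H_\k$. Since $\|s_p\|$ has at most root exponential growth with some constant $c_0$, I choose $c>c_0$. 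I then set $\growth \defeq 1/f$, adjusted if necessary (e.g. replaced by $\max$-type smoothing with $1 + $ a smooth root-exponential majorant of $\|s_p\|$) so that $\|s_p\|\le\growth$ and $1/\growth = f\in C^1_b$. The decay of $f$ and $\partial f$ gives boundedness, and continuity of $f$ together with positivity of $1/f$ gives positivity.

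Next, \Cref{thm: tilted controls tightness} is applied with base kernel $K=\k_s I$, which is $\DL{1}$-characteristic by \Cref{thm:composition-kernel-properties}(c), with $\H_K\subset C^1_b$ because $\k_s\in C^{(1,1)}$ is translation invariant and so \Cref{thm:embeddability_conditions}(h) applies. It yields that the Stein kernel $k_p^{f}$ induced by $\k_f I = f\k_s f\, I$ is bounded and $P$-separating. The inclusion $\H_{\k_f}\subset\H_\k$ implies $\H_{\k_f I}\subset \H_{\k I}$, and hence $\langevin(\H_{\k_f I})\subset\langevin(\H_{\k I})$. Using \Cref{ksdDef}, these are \acp{rkhs} with kernels $k_p^f$ and $k_p$, so $\H_{k_p^f}\subset \H_{k_p}$, and every element of $\H_{k_p^f}$ is bounded because its kernel is bounded.

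Separation then follows. Suppose $Q(h)=P(h)$ for all bounded $h\in\H_{k_p}$. In particular this holds on $\H_{k_p^f}$, so $\mathrm{MMD}_{k_p^f}(Q,P)=0$. Since $P\in\embedtozero[\kb]$, the $P$-integrals vanish, and boundedness of $k_p^f$ makes the kernel mean embeddings exist for every $Q$. $P$-separation of $k_p^f$ then gives $Q=P$.

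Part (b) follows by the same argument with $\k$ replaced by $k_f$. The spectral condition is lost after tilting, so I instead apply \Cref{thm:Schwarz-tilted sub-RKHS} to $\k$ to obtain $g$ and $\k_s$, and then use $f\k_g f = (fg)\k_s(fg)\subset \H_{k_f}$, which holds by the standard tilting correspondence of \acp{rkhs}. The relevant tilt is $\growth = 1/(fg)$. The growth assumptions on $f$, $\partial f$ and $f\|s_p\|$, combined with $c$ chosen large, ensure that $fg\in C^1_b$ and that $\|s_p\|\le$ (a multiple of) $1/(fg)$ after adjusting constants.

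I expect the main obstacle to be constructing $\growth\in C^1$ with $\|s_p\|\le\growth$ and $1/\growth\in C^1_b$ exactly matching the supplied $f$. If direct domination fails, I will instead tilt by an extra factor and absorb it into $\k_s$ through \Cref{thm:composition-kernel-properties}(a), which keeps the $\DL{1}$-characteristic property.
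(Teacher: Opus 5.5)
Your outline follows the route the paper indicates. You obtain a rapidly decaying sub-RKHS from \Cref{thm:Schwarz-tilted sub-RKHS}, apply \Cref{thm: tilted controls tightness} to it, and transfer bounded separation through the inclusion $\mathcal{T}_P(\mathcal{H}_{K_\theta}) \subset \mathcal{T}_P(\mathcal{H}_{kI})$. Your part (b) is also right: it uses $\{fh : h \in \mathcal{H}_{k_g}\} \subset \{fh : h \in \mathcal{H}_k\} = \mathcal{H}_{k_f}$, with $c$ larger than the growth constants of $f$, $\partial f$ and $f\|s_p\|$.

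The one step you leave open is the choice of $\theta$ in (a). It closes trivially, but the fixes you propose would actually break the argument.

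\begin{itemize}
\item What \Cref{thm:Schwarz-tilted sub-RKHS} gives you is that the RKHS $\{h/\theta : h \in \mathcal{H}_{k_s}\}$ of $k_s(x,y)/(\theta(x)\theta(y))$ lies inside $\mathcal{H}_k$. This is guaranteed only when $1/\theta$ is a constant multiple of the supplied $f$.
\item Replacing $\theta$ by a smoothed majorant of $\|s_p\|$ moves $1/\theta$ away from $f$, so that inclusion is lost. Your stated requirement ``$\|s_p\|\le\theta$ and $1/\theta = f$'' after such a replacement is self-contradictory.
\item Absorbing an extra factor $a$ into $k_s$ via \Cref{thm:composition-kernel-properties}(a) either causes the same loss, or, if compensated in $\theta$, gains nothing beyond a constant, because $a$ is bounded.
\end{itemize}

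Instead, with $c > c_0$ the quantity $C \defeq \sup_x f(x)\|s_p(x)\|$ is finite, reading the root-exponential bound on $\|s_p\|$ as holding for all $x$. Then $\theta \defeq \max(C,1)/f$ satisfies $\|s_p\| \le \theta$, $\theta \in C^1$ and $1/\theta \in C^1_b$. Rescaling a tilt by a positive constant does not change the RKHS as a set, so the inclusion is preserved. This is exactly the constant adjustment you already make in (b).
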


Separation is the main requirement of Stein discrepancies as used in goodness-of-fit testing (c.f. \cref{sec: GoF}), but for other applications the stronger properties of convergence detection and control are required.
Our attention therefore turns next to convergence detection (\Cref{subsec: conv det lksd}) and control (\Cref{subsec: conv contrl lksd}).

\subsection{Detecting Convergence with Langevin KSD}
\label{subsec: conv det lksd}

In \Cref{sec: classical sd} we saw that classical Stein discrepancies detect convergence in the 1-Wasserstein metric (\Cref{prop:discrepancy-upper-bound}).
The same convergence detection holds also for Langevin \acp{ksd}, as stated in the following result.
In the special case of a kernel $K(x,y) = k(x,y) I$ we write $\mathcal{G}_k$ as a shorthand for $\mathcal{G}_K$.

\begin{proposition}[Langevin KSD detects convergence; Prop.~9 of \citealp{gorham2017measuring}]
\label{prop:ksd-upper-bound}
Assume that $\grad \log p$ is Lipschitz with $\nabla \log p \in \mathcal{L}^2(P)$.
Then, for $k \in C_b^{(2,2)}(\mathbb{R}^d)$, the Langevin \ac{ksd} satisfies
$\mathrm{KSD}_P(Q_n) \to 0$ whenever $\mathrm{W}_{\|\cdot\|_2}^1(Q_n,P)\to 0$.
\end{proposition}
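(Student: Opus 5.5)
The plan is to bound $\mathrm{KSD}_P(Q)$ above by a multiple of $\mathrm{W}^1_{\|\cdot\|_2}(Q,P)$, or by a function of it that vanishes as it does, by exhibiting $\mathcal{T}_P(\mathcal{G}_k)$ as a family of test functions that are uniformly Lipschitz up to a controllable remainder.

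\emph{Setup.} Since $k \in C_b^{(2,2)}(\mathbb{R}^d)$, \Cref{thm:embeddability_conditions}(h) gives that every $g \in \mathcal{H}_K$ (with $K = kI$) lies in $C^1_b$.

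\emph{Uniform $C^1_b$ bounds on the unit ball.} Using the differential reproducing property, $\partial^\alpha g_i(x) = \langle g, \partial^\alpha_x K_x e_i\rangle_{\mathcal{H}_K}$ for $|\alpha| \le 2$. This gives bounds of the form $\|g\|_\infty$, $\|\nabla g\|_\infty$, $\|\nabla^2 g\|_\infty \le C_k \|g\|_{\mathcal{H}_K}$, with $C_k$ depending on the sup of the mixed derivatives of $k$ of order at most $(2,2)$. Hence every $g \in \mathcal{G}_k$ is bounded by $C_k$ with $C_k$-Lipschitz $g$ and $C_k$-Lipschitz $\nabla g$.

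\emph{Integrability and mean zero.} Since $\nabla\log p \in \mathcal{L}^2(P) \subset \mathcal{L}^1(P)$, \Cref{lem: Stein op int by parts} gives $P(\mathcal{T}_P g) = 0$. Therefore $\mathrm{KSD}_P(Q) = \sup_{g\in\mathcal{G}_k} |Q(\mathcal{T}_P g) - P(\mathcal{T}_P g)|$. This step needs $\mathcal{T}_P g \in \mathcal{L}^1(Q)$, which holds whenever $\mathrm{W}^1(Q,P) < \infty$ because $\mathcal{T}_P g$ has linear growth.

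\emph{Lipschitz-type estimate for $f = \mathcal{T}_P g = \nabla\cdot g + g\cdot s_p$.} Write $L$ for the Lipschitz constant of $s_p$. The divergence term $\nabla\cdot g$ is $dC_k$-Lipschitz. For the product term,
$$|g(x)\cdot s_p(x) - g(y)\cdot s_p(y)| \le C_k L\|x-y\| + C_k\|x-y\|\,\|s_p(y)\|.$$
The factor $\|s_p(y)\|$ is unbounded, so $f$ is not globally Lipschitz. Instead, take an optimal coupling $(X,Y)$ of $(Q,P)$ and bound
$$|Q(f) - P(f)| \le C\,\mathbb{E}\bigl[\|X-Y\|\,(1 + \|s_p(Y)\|)\bigr].$$
I will also use the crude bound $|f(x) - f(y)| \le C(1 + \|s_p(x)\| + \|s_p(y)\|)$ coming from boundedness of $g$ and $\nabla g$, with $\|s_p(x)\| \le \|s_p(y)\| + L\|x-y\|$. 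Combining the two bounds gives
$$|f(X) - f(Y)| \le C\min(\|X-Y\|,1)\,(1 + \|s_p(Y)\|) + CL\|X-Y\|.$$

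\emph{Main obstacle: controlling the cross term.} It remains to show that $\mathbb{E}[\min(\|X-Y\|,1)\,\|s_p(Y)\|] \to 0$ as $\mathrm{W}^1 \to 0$. Split on the event $\{\|s_p(Y)\| \le R\}$. On that event the term is at most $R\,\mathrm{W}^1$. On the complement it is at most $\mathbb{E}_P[\|s_p\|\mathbb{I}_{\|s_p\|>R}]$, which vanishes as $R \to \infty$ since $s_p \in \mathcal{L}^2(P)$. Choosing $R = \mathrm{W}^{-1/2}$ gives a uniform-in-$g$ bound that tends to zero, which suffices. Alternatively, Cauchy--Schwarz gives $C\sqrt{\mathrm{W}^1}\,\|s_p\|_{\mathcal{L}^2(P)}$, since $\mathbb{E}[\min(\|X-Y\|,1)^2] \le \mathrm{W}^1$. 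Both routes give $\mathrm{KSD}_P(Q_n) \to 0$.
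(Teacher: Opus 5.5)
Your proposal is correct and follows essentially the route behind the cited result. For $k\in C_b^{(2,2)}$, the derivative reproducing property places the unit ball of $\mathcal{H}_K$ inside a scaled classical Stein set, and your coupling estimate (the $\min(\|x-y\|,1)$ trick plus Cauchy--Schwarz against $\nabla\log p\in\mathcal{L}^2(P)$) is the argument behind \Cref{prop:discrepancy-upper-bound} with $s=1$, $t=1/2$, giving $\mathrm{KSD}_P(Q)\lesssim \mathrm{W}^1(Q,P)+\sqrt{\mathrm{W}^1(Q,P)}$; your truncation variant additionally shows that $\nabla\log p\in\mathcal{L}^1(P)$ already suffices, at the cost of the explicit $O(\sqrt{\mathrm{W}^1})$ rate.
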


Compared to classical Stein discrepancies (\Cref{prop:discrepancy-upper-bound} for the Langevin Stein operator; i.e. $a = I$, $c = 0$, $b = \frac{1}{2} \nabla \log p$), no boundedness assumption on the gradient of $\nabla \log p$ is required in \Cref{prop:ksd-upper-bound}.
However, \Cref{prop:ksd-upper-bound} requires $\nabla \log p \in \mathcal{L}^2(P)$, while \Cref{prop:discrepancy-upper-bound} requires only $\nabla \log p \in \mathcal{L}^{1 + \epsilon}(P)$ for some $\epsilon > 0$.

\subsection{Controlling Convergence with Langevin KSD}
\label{subsec: conv contrl lksd}

Next we consider the converse question; whether Langevin \acp{ksd} offer control of more standard statistical divergences.
Our principal focus is on controlling the Dudley metric (which we recall metrizes weak convergence of distributions; c.f. \Cref{def: dudley metric}), but the possibility to control Wasserstein metrics (similarly to classical Stein discrepancies; c.f. \Cref{thm:constant-lower-bound}) will also be discussed. 

As a useful shorthand, we say that the Langevin \ac{ksd} \emph{controls weak $P$-convergence} if $\mathrm{KSD}_P(Q_n) \rightarrow 0$ implies $\mathrm{BL}_{\| \cdot\|}(P,Q_n) \rightarrow 0$.
The concept of \emph{tightness} is often a necessary criterion for proving the weak convergence of a sequence of probability measures:

\begin{definition}[Tightness] \label{def: tight}
    A sequence of probability distributions $(Q_n)_{n \in \mathbb{N}} \subset \mathcal{P}(\mathbb{R}^d)$ is called \emph{tight} if, for each $\epsilon > 0$, there exists a compact set $S \subset \mathbb{R}^d$ such that $Q_n(S^c) \leq \epsilon$ for all $n \in \mathbb{N}$.
\end{definition}

\noindent Intuitively, a tight sequence $(Q_n)_{n \in \mathbb{N}}$ cannot have probability mass `escaping to infinity'.

\begin{theorem}[Controlling tight convergence with Langevin KSD; Thms.~7 \& 9 of \citealp{barp2022targeted}] \label{thm: ctrl tight con}
    Under the conditions of \cref{thm: tilted controls tightness} or \ref{Stein kernel control tight convergence via bounded P separation}, if $(Q_n)_{n \in \mathbb{N}}$ is tight, then $\mathrm{KSD}_P(Q_n) \rightarrow 0$ implies $\mathrm{BL}_{\| \cdot\|}(P,Q_n) \rightarrow 0$.
\end{theorem}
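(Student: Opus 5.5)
The plan is to pass from the unbounded Stein kernel to a bounded, $P$-separating sub-kernel, and then run a standard tightness and subsequence argument. Throughout I use the Stein RKHS structure from \Cref{ksdDef}. Under the conditions of \Cref{thm: tilted controls tightness} or \Cref{Stein kernel control tight convergence via bounded P separation}, there is a kernel $k_b$ with $\H_{k_b} \subset \H_{\ks}$, all of whose elements are bounded, and which is bounded $P$-separating. In the first case $k_b$ is the Stein kernel of the tilted base kernel. In the second case it comes from the sub-RKHS of \Cref{thm:Schwarz-tilted sub-RKHS}, or from $\ks$ itself together with the bounded-separation conclusion.

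The first step is to show that $\mathrm{KSD}_P(Q_n)\to 0$ forces $Q_n(h)\to 0 = P(h)$ for every bounded $h$ in this separating class. Every $h = \TP g$ with $\|g\|_{\H_K}\le 1$ satisfies $|Q_n(h)| \le \mathrm{KSD}_P(Q_n)$ whenever $h\in\mathcal{L}^1_+(Q_n)$. Bounded elements are automatically $Q_n$-integrable, so this inequality is available. By linearity and scaling, $|Q_n(h)| \le \|h\|_{\H_{\ks}}\,\mathrm{KSD}_P(Q_n)\to 0$ for each bounded $h\in\H_{\ks}$. Here I use that the map $g\mapsto \TP g$ is a partial isometry onto $\H_{\ks}$, and that the RKHS norm of an element of the sub-RKHS dominates (a multiple of) its $\H_{\ks}$ norm. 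The inclusion of RKHSs gives a bounded embedding, by the closed graph theorem. We also have $P(h)=0$ since $P\in\mathcal{P}_{K,0}$.

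The second step is Prokhorov's theorem. Suppose, for contradiction, that $\mathrm{BL}(P,Q_n)\not\to 0$. Then some subsequence stays $\epsilon$-far from $P$. By tightness it has a further subsequence converging weakly to some probability measure $Q$.

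The third step, which is the main obstacle, is to pass the identity $Q_n(h)=0$ to the limit: we need $Q(h)=\lim Q_{n_j}(h)$ for bounded $h$ in the separating class. Weak convergence only gives this for bounded continuous $h$. Continuity holds because the base kernels are $C^1$ (indeed $\H_K\subset C^1_b$), so the elements of $\H_{\ks}$ are continuous. Hence $Q(h)=0=P(h)$ for all bounded $h\in\H_{k_b}$, and bounded $P$-separation gives $Q=P$. This contradicts the $\epsilon$-separation, since $\mathrm{BL}$ metrizes weak convergence.

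I expect the delicate point to be precisely that the separating class must consist of bounded continuous functions with controlled norm in $\H_{\ks}$. Verifying this is where the tilting and growth conditions of the cited theorems enter, and I would rely on them as stated.
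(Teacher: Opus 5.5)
The gap is in your third step. You claim that the elements of $\mathcal{H}_{k_p}$ are continuous ``because the base kernels are $C^1$''. An element of $\mathcal{H}_{k_p}$ has the form $h=\nabla\cdot g+s_p\cdot g$. The condition $\mathcal{H}_K\subset C^1_b$ makes $g$ and $\nabla\cdot g$ continuous, but $s_p\cdot g$ is continuous only if the score $s_p=\nabla p/p$ is. \Cref{ass: reg langevin} gives only a differentiable density, and the gradient of a differentiable function can be discontinuous, even on a set of positive Lebesgue measure. Neither \Cref{thm: tilted controls tightness} nor \Cref{Stein kernel control tight convergence via bounded P separation} assumes continuity of $s_p$. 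Contrast \Cref{tilted-tightness}(c), where $s_p\in C(\mathbb{R}^d)$ is imposed precisely to obtain $\mathcal{H}_{k_p}\subset C(\mathbb{R}^d)$. Without continuity, $Q_{n_j}\Rightarrow Q$ says nothing about $Q_{n_j}(s_p\cdot g)$. The rest of your skeleton (bounded separating class, Prokhorov, identifying every subsequential limit through bounded $P$-separation) is the standard route, so this is the step that needs repair.

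This is not a removable technicality; the conclusion itself can fail. In $d=1$ take $\log p(x)=-\sqrt{1+x^2}+V(x)$, where $V$ is Volterra's function. It is differentiable with bounded derivative, satisfies $V=V'=0$ on a fat Cantor set $C\subset[0,1]$, and has $V'=1$ at points arbitrarily close to every point of $C$. Use a Gaussian base kernel and the constant tilt $\theta\equiv\sup|s_p|$; then all hypotheses of \Cref{thm: tilted controls tightness} hold.

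Let $Q$ have density proportional to $p(x)\exp(\lambda(C\cap(-\infty,x]))$. Then $Q\neq P$, and $Q(g'+(s_p+\mathbb{I}_C)g)=0$ for every $g\in\mathcal{H}_k$. Partition $[0,1]$ into intervals $I$ of length $1/n$. On each $I$, replace $Q$ restricted to $I\cap C$ by an atom of the same mass at a point $y_I\in I$ with $V'(y_I)=1$. The unit ball of $\mathcal{H}_k$ is uniformly bounded with uniformly bounded first and second derivatives. Hence the resulting tight sequence satisfies $Q_n\Rightarrow Q$ and $\mathrm{KSD}_P(Q_n)=O(1/n)$, while $\mathrm{BL}_{\|\cdot\|}(P,Q_n)\not\to0$.

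You therefore need the extra hypothesis $s_p\in C(\mathbb{R}^d)$ (e.g.\ $p\in C^1$). With it, $\mathcal{H}_{k_p}\subset C(\mathbb{R}^d)$ and your argument goes through.

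Two minor points:
\begin{itemize}
\item Under \Cref{thm: tilted controls tightness}, $P(h)=0$ is not an assumption. It follows from \Cref{lem: Stein op int by parts}, since $g$ and $\mathcal{T}_Pg$ are bounded.
\item The closed-graph embedding is unnecessary. For each fixed $h$ you only need $\|h\|_{\mathcal{H}_{k_p}}<\infty$.
\end{itemize}
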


The question thus becomes one of understanding when convergence of the Langevin \ac{ksd} implies tightness of $(Q_n)_{n \in \mathbb{N}}$.
Recall from \eqref{ksdDef} that $\mathrm{KSD}_P(Q_n)$ can be expressed as $\mathrm{MMD}_{k_p}(P,\cdot)$ using the Stein kernel $k_p$ in \eqref{eq:stein-kernel}.
Further, from \eqref{eq: MMD as WCE} the \ac{mmd} has the representation
\begin{align*}
    \mathrm{MMD}_{k_p}(P,Q) = \sup_{\|f\|_{\mathcal{H}_{k_p}} \leq 1} P(f) - Q(f) 
\end{align*}
so that the \ac{rkhs} $\mathcal{H}_{k_p}$ determines the topology that is induced by the Langevin \ac{ksd}.
A richer set of test functions corresponds to a more stringent statistical divergence, and if we seek to deduce tightness of $(Q_n)_{n \in \mathbb{N}}$ from convergence of the Langevin \ac{ksd} then the following will be required of $\mathcal{H}_{k_p}$:

\begin{definition}[$P$-dominating indicators; Def.~4 of \citealp{barp2022targeted}]\label{def:indic-approx}
A set of  functions $\F \subset \mathcal{L}^1(P)$ is said to \emph{$P$-dominate indicators} if, for each $\epsilon > 0$, there exists a compact set $S \subset \mathbb{R}^d$ and a function $h \in \F$ that satisfy
\begin{talign}
h(x) - P(h) \geq \mathbb{I}_{S^c}(x) - \eps \label{eq:indic-approx}
\end{talign}
for all $x \in \mathbb{R}^d$.
\end{definition}

\noindent \Cref{def:indic-approx} ensures that a sequence $(Q_n)_{n \in \mathbb{N}}$ can only approximate $P$ well in the sense of integrating the elements of $\mathcal{F}$ if it places uniformly little mass outside of a compact set $S$.
Combining \Cref{thm: ctrl tight con} with \Cref{def:indic-approx} provides sufficient conditions for weak $P$-convergence control:

\begin{corollary}[Controlling weak $P$-convergence with KSDs; Cor.~3 of \citealp{barp2022targeted}]\label{ksd-tightness}
Under the conditions of \cref{coroKSDwc}, \cref{thm: tilted controls tightness} or \ref{Stein kernel control tight convergence via bounded P separation}, if $\mathcal{H}_{k_p}$ also $P$-dominates indicators, then the Langevin \ac{ksd} controls weak $P$-convergence.
\end{corollary}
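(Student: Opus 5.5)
The plan is a two-step argument: first show that KSD convergence forces tightness, then invoke \Cref{thm: ctrl tight con}. Suppose $\mathrm{KSD}_P(Q_n) \to 0$. By \Cref{ksdDef}, $\mathrm{KSD}_P(Q_n) = \mathrm{MMD}_{k_p}(P,Q_n)$. By the worst-case representation \eqref{eq: MMD as WCE}, every $h \in \mathcal{H}_{k_p}$ therefore satisfies
\[
|Q_n(h) - P(h)| \leq \|h\|_{\mathcal{H}_{k_p}} \, \mathrm{KSD}_P(Q_n).
\]
This needs $Q_n(h)$ to be well defined. Under the conditions of \Cref{thm: tilted controls tightness} or \Cref{Stein kernel control tight convergence via bounded P separation} this is automatic for the bounded elements. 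Under \Cref{coroKSDwc}, I would use the convention in \Cref{def:stein_discrepancy} that only $f \in \mathcal{L}^1_+(Q_n)$ enter the supremum. The bound then applies to the test functions I need, possibly after replacing $h$ by $-h$, since $\mathcal{H}_{k_p}$ is symmetric.

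\textbf{Tightness.} Fix $\epsilon > 0$. By the $P$-dominating indicators hypothesis (\Cref{def:indic-approx}), there exist a compact $S$ and $h \in \mathcal{H}_{k_p}$ with $h - P(h) \geq \mathbb{I}_{S^c} - \epsilon$ pointwise. Integrating against $Q_n$ gives
\[
Q_n(S^c) \leq \epsilon + Q_n(h) - P(h) \leq \epsilon + \|h\|_{\mathcal{H}_{k_p}} \mathrm{KSD}_P(Q_n).
\]
The integral of $h$ under $Q_n$ makes sense because $h - P(h)$ is bounded below by $-1-\epsilon$. Hence $(h)_-$ is integrable and $Q_n(h) \in (-\infty,\infty]$. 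The key point is that $Q_n(h)$ must in fact be finite for the MMD bound to apply; this is where the integrability convention matters. Consequently $\limsup_n Q_n(S^c) \leq \epsilon$. Choose $N$ so that $Q_n(S^c) \leq 2\epsilon$ for $n > N$. Each of the finitely many $Q_1,\dots,Q_N$ is a single Borel probability measure on $\mathbb{R}^d$, hence tight. Enlarging $S$ to a compact $S'$ that also captures all but $2\epsilon$ of the mass of each $Q_n$, $n \le N$, gives $\sup_n Q_n(S'^c) \leq 2\epsilon$. Since $\epsilon$ was arbitrary, $(Q_n)$ is tight (\Cref{def: tight}).

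\textbf{Conclusion.} With tightness established, \Cref{thm: ctrl tight con} gives $\mathrm{BL}_{\|\cdot\|}(P,Q_n) \to 0$ under the conditions of \Cref{thm: tilted controls tightness} or \Cref{Stein kernel control tight convergence via bounded P separation}. For the \Cref{coroKSDwc} branch, which \Cref{thm: ctrl tight con} does not cover directly, I would argue by subsequences. By Prokhorov, every subsequence has a further subsequence converging weakly to some $Q$. Lower semicontinuity of the MMD (a supremum of linear functionals continuous along the weakly convergent subsequence, restricted to bounded continuous elements of $\mathcal{H}_{k_p}$, which are plentiful since $\mathcal{H}_K \subset C^1_b$) gives $\mathrm{KSD}_P(Q)=0$. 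Separation then identifies $Q = P$, which requires $Q \in \mathcal{P}_{s_p}$. Since every subsequential limit is $P$, the whole sequence converges weakly to $P$.

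The main obstacle is this last branch. One must show the limit $Q$ lies in $\mathcal{P}_{s_p}$, or else rely only on bounded test functions. I would first try to pass directly to the bounded sub-RKHS argument underlying \Cref{thm: ctrl tight con}, because it avoids the integrability issues for unbounded elements of $\mathcal{H}_{k_p}$.
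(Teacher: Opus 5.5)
The parts you handle via \Cref{thm: ctrl tight con} are correct and follow the intended route, but the \Cref{coroKSDwc} branch has a genuine gap that your sketch does not close.

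Your tightness step is exactly what the text means when it says the corollary follows by combining \Cref{thm: ctrl tight con} with \Cref{def:indic-approx}. Applying the $\mathcal{L}^1_+(Q_n)$ convention to $-h$, whose positive part is bounded, forces $Q_n(h)<\infty$ whenever $\mathrm{KSD}_P(Q_n)<\infty$. Using $P(h)=0$, this gives $Q_n(S^c)\le \epsilon+\|h\|_{\mathcal{H}_{k_p}}\mathrm{KSD}_P(Q_n)$, and your finite-initial-segment patch then yields tightness. Under the hypotheses of \Cref{thm: tilted controls tightness} or \Cref{Stein kernel control tight convergence via bounded P separation}, invoking \Cref{thm: ctrl tight con} finishes the proof, so those two branches are fine.

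For the \Cref{coroKSDwc} branch, two steps of your sketch fail. First, $\mathcal{H}_K\subset C^1_b$ does not make the bounded elements of $\mathcal{H}_{k_p}$ plentiful. The function $\mathcal{T}_P g=\nabla\cdot g+s_p\cdot g$ is bounded only if $s_p\cdot g$ is, and for unbounded $s_p$ this bounded part need not be $P$-separating, or even nonzero. For instance, take $d=1$, $p\propto\exp(-e^{x^2})$ and a sufficiently wide Gaussian base kernel. Hardy's uncertainty principle (Cowling--Price form) then shows that no nonzero $g\in\mathcal{H}_K$ has $s_p g$ bounded. Supplying a rich bounded sub-RKHS is precisely the job of the tilting and growth hypotheses in the other two theorems.

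Second, lower semicontinuity over bounded continuous test functions only gives $Q(h)=0$ for those $h$; it does not give $\mathrm{KSD}_P(Q)=0$. To invoke \Cref{coroKSDwc} you need both $Q\in\mathcal{P}_{s_p}$ and $Q(\mathcal{T}_P g)=0$ for every $g\in\mathcal{H}_K$. That requires $Q_{n'}(s_p\cdot g)\to Q(s_p\cdot g)$ for unbounded integrands, and tightness plus weak convergence give neither condition.

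An extra ingredient would close the gap, for example:
\begin{itemize}
\item $s_p$ bounded and continuous; or
\item $s_p$ continuous with $\|s_p\|$ uniformly integrable along $(Q_n)$. Fatou then gives $Q\in\mathcal{P}_{s_p}$, and uniform integrability gives $Q_{n'}(\mathcal{T}_Pg)\to Q(\mathcal{T}_Pg)$.
\end{itemize}
$P$-domination of indicators delivers tightness but not this. Note also that \Cref{thm: ctrl tight con}, as stated in the text, covers only the other two settings. So for this branch the text's one-line justification implicitly relies on a tight-convergence-control guarantee that separation from $\mathcal{P}_{s_p}$ alone does not provide.
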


All that remains is to relate the property of $P$-dominating indicators to the choice of the kernel $K$ in the Langevin \ac{ksd}.
As a useful stepping stone, we recall the definition of a \emph{coercive} function:

\begin{definition}[Coercive function; \citealp{hodgkinson2020reproducing}]
A function $h :\mathbb{R}^d \to \R$ is \emph{coercive} if, for any $M >0$, there exists a compact set $S \subset \mathbb{R}^d$ such that
$\inf_{x \in S^c} h(x) > M$.
\end{definition}

\noindent Note that any continuous coercive function is also bounded below, as continuous functions are bounded on any compact set.

\begin{lemma}[Coercive functions dominate indicators; Lem.~1 of \citealp{barp2022targeted}] \label{coercive-tightness}
If there exists an element $h\in \mathcal{H}_{k_p}$ that is coercive and bounded below and $P \in \mathcal{P}_{\mathcal{H}_{k_p}}$, then $\mathcal{H}_{k_p}$ $P$-dominates indicators.
\end{lemma}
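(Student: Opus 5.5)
Given $\epsilon>0$, the plan is to build the dominating function by rescaling $h$, picking the compact set from coercivity, and using lower-boundedness to handle the region inside that compact set.

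Let $h\in\mathcal{H}_{k_p}$ be coercive with $h\geq -B$ for some $B\geq 0$. Since $P\in\mathcal{P}_{\mathcal{H}_{k_p}}$, \Cref{cor: embeddability in general} gives $\mathcal{H}_{k_p}\subset\mathcal{L}^1(P)$, so $P(h)$ is finite. Because $\mathcal{H}_{k_p}$ is a vector space, every positive multiple $\lambda h$ also lies in $\mathcal{H}_{k_p}\subset\mathcal{L}^1(P)$. We therefore look for a function of the form $\lambda h$ together with a compact set $S$ satisfying \eqref{eq:indic-approx}.

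\textbf{Choice of $\lambda$.} We choose $\lambda$ so that the inequality holds on $S$. Set $c\defeq P(h)+B\geq 0$, which is nonnegative because $h\geq -B$. Take $\lambda\defeq \epsilon/(c+1)>0$ (any $\lambda\leq\epsilon/c$ would also work when $c>0$). For every $x$,
\[
\lambda h(x)-\lambda P(h)\geq -\lambda(B+P(h)) = -\lambda c \geq -\epsilon .
\]
This already gives \eqref{eq:indic-approx} at all points where $\mathbb{I}_{S^c}(x)=0$, that is, on $S$.

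\textbf{Choice of $S$.} We choose $S$ so that the inequality holds off $S$. Apply coercivity with $M\defeq P(h)+1/\lambda$ to obtain a compact $S$ with $\inf_{x\in S^c}h(x)>M$. For $x\in S^c$ this gives
\[
\lambda h(x)-\lambda P(h) > \lambda M-\lambda P(h)=1\geq 1-\epsilon .
\]
If $M\leq 0$, coercivity for any positive $M$ is stronger, so applying it with $\max(M,1)$ covers this case. Combining the two regions, $\lambda h$ and $S$ satisfy \eqref{eq:indic-approx} for all $x$, and hence $\mathcal{H}_{k_p}$ $P$-dominates indicators.

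No step here is a serious obstacle. The only points that need care are the finiteness of $P(h)$, which is exactly where the hypothesis $P\in\mathcal{P}_{\mathcal{H}_{k_p}}$ enters, and the fact that the rescaling constant must be fixed before coercivity is invoked, so that $M$ can depend on $\lambda$.
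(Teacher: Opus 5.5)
Your proof is correct. It is essentially the standard rescaling argument; the monograph gives no proof and defers to Lem.~1 of \citet{barp2022targeted}. You scale $h$ by a small $\lambda>0$ so that the lower bound forces $\lambda(h-P(h))\ge-\epsilon$ everywhere, invoke coercivity with a $\lambda$-dependent threshold to get $\lambda(h-P(h))\ge 1$ off a compact set, and use $P\in\mathcal{P}_{\mathcal{H}_{k_p}}$ exactly to guarantee $\mathcal{H}_{k_p}\subset\mathcal{L}^1(P)$. Your choice $\lambda=\epsilon/(c+1)$ also handles the degenerate case $P(h)=-B$, where normalizing by $c$ would divide by zero.
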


Recall that the elements of $\mathcal{H}_{k_p}$ have the form $\mathcal{T}_P h = \nabla \cdot h + s_p \cdot h$ for $h \in \mathcal{H}_K$.
For most typical choices of $K$ the first term, $\nabla \cdot h$, will be bounded.
On the other hand, for $K$ sufficiently heavy-tailed and appropriately increasing $s_p$, the second term $s_p \cdot h$ can be coercive and lower-bounded.
It turns out that the inverse multi-quadric kernel (\Cref{ex: IMQ kernel}) fulfills this requirement:

\begin{theorem}[IMQ KSDs control weak $P$-convergence; Thm.~11 of \citealp{barp2022targeted}] \label{imq-tightness}
Consider a target measure $P\in\Pset$ with score $s_p \in C(\R^d) \cap \mathcal{L}^1(P)$ and suppose that, for some dissipativity rate $u > 1/2$ and $c, c_1,c_2 > 0$, $P$ satisfies the \emph{generalized dissipativity} condition 
\begin{align}
    \label{eq:generalized-dissipativity}
-\inner{s_p( x)}{ x} - c \onenorm{s_p( x)}\geq c_1\twonorm{ x}^{2u} - c_2 \qtext{for all}  x\in\R^d.
\end{align}
If $\kb$ is the inverse multi-quadric kernel (\Cref{ex: IMQ kernel}) with exponent $\beta \in (0, 2u-1)$, then $\Hks$ $P$-dominates indicators.
If, in addition, $\norm{s_p}$ has at most  root exponential growth in the sense of \eqref{eq: root exponential growth}, then the Langevin \ac{ksd} controls weak $P$-convergence.
\end{theorem}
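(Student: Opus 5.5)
The plan has two parts, matching the two claims of \Cref{imq-tightness}. First, exhibit a coercive, bounded-below element of $\Hks$ and apply \Cref{coercive-tightness}. Then combine this with the separation machinery through \Cref{ksd-tightness}.

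For the first claim I take the base kernel $K = kI$ with $k(x,y) = (\ell^2 + \|x-y\|_2^2)^{-\beta}$. The candidate element is $h = \mathcal{T}_P g$, where $g$ is built from the IMQ feature itself. I first try $g = -\mathcal{T}$-preimage of the form
\[
g(x) = -\int_{\mathbb{R}^d} \nabla_y k(x,y)\, \mathrm{d}\mu(y) \quad \text{or more simply} \quad g(x) = -\frac{x}{(\ell^2 + \|x\|_2^2)^{\beta'}} ,
\]
with exponent $\beta' \in (\beta, u - 1/2)$ chosen so that $g \in \mathcal{H}_K$. Membership in $\mathcal{H}_K$ is checked by comparing Fourier decay: the IMQ spectral density decays like $e^{-\ell\|\omega\|}\|\omega\|^{\beta - d/2}$ near infinity and is singular of order $\|\omega\|^{2\beta-d}$ near $0$, so a function with slower polynomial decay $\|x\|^{1-2\beta'}$ would have the wrong low-frequency behaviour. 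The concrete choice is therefore to take $g = -\nabla \phi$ for an explicit element $\phi$ of $\mathcal{H}_k$ whose Fourier transform is controlled, for instance $\phi = k(\cdot, 0)$ convolved with a Gaussian. This reduces to the case $\beta' = \beta$, which is fine because the hypothesis only requires $\beta < 2u-1$.

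With $g(x) \approx -x\,(\ell^2+\|x\|^2)^{-\beta}$ up to a bounded-derivative correction, I compute
\[
\mathcal{T}_P g = \nabla\cdot g + s_p \cdot g = O(1) + \frac{-\langle s_p(x), x\rangle}{(\ell^2+\|x\|^2)^{\beta}} .
\]
The error term from the correction is bounded by $c\|s_p\|_1$ times a decaying factor. This is exactly why generalized dissipativity \eqref{eq:generalized-dissipativity} contains the $-c\|s_p\|_1$ term: it absorbs that error, giving
\[
\mathcal{T}_P g \geq \frac{c_1\|x\|^{2u} - c_2}{(\ell^2+\|x\|^2)^{\beta}} - O(1) .
\]
Since $2u - 2\beta > 1 > 0$, this lower bound is coercive. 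Continuity of $s_p$ gives boundedness below on compacts. Finally, $s_p \in \mathcal{L}^1(P)$ together with $\mathcal{H}_k \subset C_b^1$ (by \Cref{thm:embeddability_conditions}(c),(h)) gives $P \in \mathcal{P}_{\mathcal{H}_{k_p}}$, and \Cref{coercive-tightness} then yields $P$-domination of indicators.

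For the second claim, the IMQ kernel is translation-invariant with spectral density positive and bounded away from zero on compacts, and $\mathcal{H}_k \subset C^1$. Moreover $P \in \mathcal{P}_{k,0}$ follows from \Cref{thm:embeddability_conditions}(e). Root exponential growth of $\|s_p\|$ then puts us under \Cref{Stein kernel control tight convergence via bounded P separation}(a), and \Cref{ksd-tightness} gives weak $P$-convergence control.

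I expect the main obstacle to be rigorously certifying that the chosen $g$ lies in $\mathcal{H}_K$ while keeping the asymptotics $g(x) \sim -x\|x\|^{-2\beta}$ and a bounded divergence. If the Fourier comparison proves awkward, I would fall back on the integral representation $k(x,y) = \int_0^\infty e^{-t(\ell^2+\|x-y\|^2)} t^{\beta-1}\,\mathrm{d}t/\Gamma(\beta)$. That representation would let me construct $g$ as a kernel mean $-\int \nabla_y k(\cdot,y)\,\mathrm{d}\nu(y)$ for a finite measure $\nu$, which is automatically in $\mathcal{H}_K$.
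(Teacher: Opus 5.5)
Your second half is fine. $P\in\mathcal{P}_{k,0}$ together with root-exponential growth of $\|s_p\|$ gives bounded $P$-separation, and the weak-convergence-control corollary then applies once $\mathcal{H}_{k_p}$ $P$-dominates indicators.

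The gap is in the first half: the exponent bookkeeping is wrong, and the $g$ you settle on does not give a coercive Stein function under $\beta<2u-1$.
\begin{itemize}
\item To leading order, the gradient of $k(\cdot,0)=(\ell^2+\|\cdot\|_2^2)^{-\beta}$, or of its Gaussian smoothing, is $-2\beta x(\ell^2+\|x\|_2^2)^{-\beta-1}$. So the relevant exponent is $\beta+1$, not $\beta$. Your $g=-\nabla\phi$ also points outward, so the sign must be flipped.
\item Even with the sign fixed, you only get $\langle s_p(x),g(x)\rangle\gtrsim\|x\|_2^{2u-2\beta-2}$. This is coercive only if $\beta<u-1$, and for ordinary dissipativity $u=1$ that rules out every $\beta>0$.
\item Your fallback fails for a structural reason. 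By translation invariance, $-\int\nabla_y k(\cdot,y)\,\mathrm{d}\nu(y)=\nabla\bigl(\int k(\cdot,y)\,\mathrm{d}\nu(y)\bigr)$ is the gradient of a potential bounded by $\ell^{-2\beta}|\nu|(\mathbb{R}^d)$. When $u\le 1$ no such gradient can work. Your argument needs $g(x)\approx-x\psi(x)$ with $\|x\|_2^{2u}\psi(x)\to\infty$, hence $r\psi(r\theta)\gg 1/r$ along each ray, and integrating $g$ radially then drives the potential to $-\infty$.
\item Two smaller slips. First, $2u-2\beta>1$ does not follow from $\beta<2u-1$ (take $u=1$, $\beta=0.9$). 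Second, your first window $\beta'\in(\beta,u-\tfrac12)$ allows $\beta'<\tfrac12$, for which $g$ grows and so cannot lie in $\mathcal{H}_k\subset C_0(\mathbb{R}^d)$.
\end{itemize}

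The missing idea is that the singularity $\hat k(\omega)\asymp\|\omega\|_2^{2\beta-d}$ at the origin (for $\beta<d/2$) helps rather than hurts. Membership is governed by $\int|\hat g_j|^2/\hat k<\infty$, so $\mathcal{H}_k$ contains functions decaying only like $\|x\|_2^{-\beta-\epsilon}$, far more slowly than kernel sections or their gradients.

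So keep the shape of your first candidate, $g(x)=-x(a^2+\|x\|_2^2)^{-\gamma}$ with $a>\ell/2$ (e.g.\ $a=\ell$), but take $\gamma\in\bigl((1+\beta)/2,\,u\bigr)$ with $\gamma\neq1$. This window is nonempty exactly when $\beta<2u-1$, which is precisely what the hypothesis encodes.

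To check membership, note each $g_j$ is a constant multiple of $\partial_j(a^2+\|x\|_2^2)^{1-\gamma}$. This is the derivative of a multiquadric with an explicit Bessel-$K$ generalized Fourier transform; the multiquadric is unbounded when $\gamma<1$, which is forced if $u\le 1$, so it is not itself in $\mathcal{H}_k$. The transform gives $|\hat g_j(\omega)|^2/\hat k(\omega)\asymp\|\omega\|_2^{2-4(1-\gamma)-2\beta-d}$ near the origin. This is integrable if and only if $\gamma>(1+\beta)/2$, and the case $\beta\ge d/2$ is no harder. At infinity the ratio decays like $e^{-(2a-\ell)\|\omega\|_2}$. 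The same mechanism underlies the IMQ analysis of \citet{gorham2017measuring}.

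With this $g$, $\nabla\cdot g$ is bounded and $(\mathcal{T}_P g)(x)\ge(c_1\|x\|_2^{2u}-c_2)(a^2+\|x\|_2^2)^{-\gamma}-C$. Because $\gamma<u$ this is coercive, and it is continuous, so the coercive-function lemma applies. No error term needs to be absorbed.
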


\Cref{imq-tightness} established weak convergence control using a Stein kernel $k_p$ that can be unbounded.
Oftentimes in the analysis and application of kernel methods it is helpful to work with kernels that are bounded, and it turns out that one can also arrange for a Stein kernel to be bounded.
Moreover, the bounded Stein kernels we construct in \Cref{tilted-tightness} exactly metrize weak convergence to $P$, meaning that $\mathrm{KSD}_P(Q_n) \rightarrow 0$ if and only if $\mathrm{BL}_{\|\cdot\|}(P,Q_n) \rightarrow 0$.
The price that we pay for this stronger result is that the base kernel $K$ is no longer translation-invariant, meaning some reasonable choice of an origin is required:

\begin{theorem}[Metrizing weak $P$-convergence with bounded Stein kernels; Thm.~12 of \citealp{barp2022targeted}] \label{tilted-tightness}
Consider a target measure $P\in\Pset$ with score $s_p$
that, for some dissipativity rate $u > 1/2$ and $c, c_1,c_2 > 0$, satisfies the generalized dissipativity condition \cref{eq:generalized-dissipativity}.  
Define the Stein kernel with diagonal base kernel $K$ such that $K_{i,i}( x, y) = a(\twonorm{ x}) ( x_i  y_i +\k( x,  y)) a(\twonorm{ y}) $, i.e.,
\[
\ks( x, y) =
	\sum_{1 \leq i \leq d} \frac{\dx \dy(p( x) a(\twonorm{ x}) ( x_i  y_i +\k( x,  y)) a(\twonorm{ y}) p( y) )}{p( x)p( y)},
\]
for $\k$ characteristic to $\DL{1}$ with $\HK \subset C_0^1(\mathbb{R}^d)$
and $a(\twonorm{ x}) \defn (\sigma^2 + \twonorm{ x}^2)^{-\gamma}$ a tilting function with $\sigma > 0$ and $\gamma \leq u$.
The following statements hold true:
\begin{enumerate}[label=\textup{(\alph*)}]
    \item If $P\in\embedtozero$, then $\H_{\ks}$ $P$-dominates indicators.
    \item If $P\in\embedtozero$, $\gamma \geq 0$, and $\twonorm{s_p(x)} \leq (\sigma^2 + \twonorm{ x}^2)^{\gamma}$, then $\ks$ is bounded $P$-separating and controls weak $P$-convergence.
    \item If $\twonorm{s_p( x)}\cdot \twonorm{x} \leq (\sigma^2 + \twonorm{ x}^2)^{\gamma}$  and $s_p\in C(\mathbb{R}^d)$, then $\H_{\ks} \subset C(\mathbb{R}^d)$  and $\ks$ metrizes weak $P$-convergence; i.e. $\mathrm{KSD}_P(Q_n) \rightarrow 0$ if and only if $\mathrm{BL}_{\|\cdot\|}(P,Q_n) \rightarrow 0$.
\end{enumerate}
\end{theorem}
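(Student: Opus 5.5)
The plan is to treat the three claims in order, each time reducing to earlier results on $P$-domination of indicators (\Cref{coercive-tightness}), bounded separation (\Cref{thm: tilted controls tightness}), and weak convergence control (\Cref{ksd-tightness}). The key object is the function $h = \mathcal{T}_P g \in \H_{\ks}$ built from the linear part of the base kernel. The feature $x \mapsto x_i$ lies in the RKHS of $(x,y)\mapsto x_iy_i$. Hence $g_i(x) \defeq a(\twonorm{x})x_i$ defines an element $g\in\H_K$, using the diagonal structure of $K$ and the tilting by $a$. Applying the Langevin operator gives
\begin{align*}
h(x) = \nabla\cdot g(x) + \langle s_p(x), g(x)\rangle = \nabla\cdot\bigl(a(\twonorm{x})x\bigr) + a(\twonorm{x})\langle s_p(x), x\rangle .
\end{align*}
Since $\H_{\ks} = \mathcal{T}_P(\H_K)$ is a vector space, $-h \in \H_{\ks}$.

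For (a), I will show that $-h$ is coercive and bounded below. The divergence term is $d\,a + a'(\twonorm{x})\twonorm{x}$, which is $O((\sigma^2+\twonorm{x}^2)^{-\gamma})$ and hence bounded whenever $\gamma\ge 0$; for $\gamma<0$ it grows like $\twonorm{x}^{-2\gamma}$. By generalized dissipativity, $-\langle s_p(x),x\rangle \ge c_1\twonorm{x}^{2u} - c_2 + c\onenorm{s_p(x)} \ge c_1\twonorm{x}^{2u}-c_2$. Therefore $-a\langle s_p,x\rangle \gtrsim \twonorm{x}^{2u-2\gamma}$, which dominates the divergence term because $u>1/2$ and $\gamma\le u$. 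Here $2u-2\gamma\ge 0$, and the borderline case $\gamma=u$ needs care: the leading constant $c_1$ still beats the bounded divergence term only after a finer look. If coercivity genuinely fails at $\gamma=u$, I would use the extra $c\onenorm{s_p}$ term to supply growth. \textbf{This is the step I expect to be the main obstacle.} Continuity then gives a lower bound, and since $P\in\embedtozero$ we have $P\in\mathcal{P}_{\H_{\ks}}$. \Cref{coercive-tightness} then yields that $\H_{\ks}$ $P$-dominates indicators.

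For (b), I apply \Cref{thm: tilted controls tightness} with $\theta(x) = (\sigma^2+\twonorm{x}^2)^{\gamma}$. Then $1/\theta = a \in C_b^1$ when $\gamma\ge0$, and $\twonorm{s_p}\le\theta$ by hypothesis. The untilted diagonal kernel $x_iy_i + k(x,y)$ is not in $C_b^1$, so I split $\ks$ into two pieces. The $k$-part is a tilted $\DL{1}$-characteristic kernel with $\H_k\subset C^1_0$, so \Cref{thm: tilted controls tightness} makes its Stein kernel bounded and $P$-separating. The linear part contributes $\mathcal{T}_P$ of $a(\twonorm{x})x_i$-type features. These are bounded because $\twonorm{x}a(\twonorm{x})\twonorm{s_p(x)}$ is controlled: when $\gamma\ge 1/2$ this follows from $\twonorm{s_p}\le\theta$, and otherwise I would check it directly. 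Since the RKHS of a sum of kernels contains each summand's RKHS, bounded $P$-separation of the $k$-part passes to $\ks$. Combining this with (a) through \Cref{ksd-tightness} gives weak convergence control.

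For (c), the hypothesis $\twonorm{s_p}\twonorm{x}\le\theta$ together with continuity of $s_p$ makes every feature $\mathcal{T}_P g$ continuous and bounded. This covers both the linear part and the $C^1_0$ part, so $\ks$ is bounded and continuous and $\H_{\ks}\subset C(\R^d)$. Convergence control comes from (b). For detection, I would use that $\ks$ is bounded and continuous, so $Q_n\to P$ weakly gives $\iint \ks\,dQ_n\,dQ_n \to \iint\ks\,dP\,dP = 0$ by weak convergence of the product measures. Using the \ac{mmd} representation of \Cref{ksdDef}, this shows $\mathrm{KSD}_P(Q_n)\to0$, which establishes metrization.
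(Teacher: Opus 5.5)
\textbf{Main gap: part (a) in the boundary case $\gamma=u$.} Your argument for (a) works only for $\gamma<u$. The case $\gamma=u$, which you flag, cannot be fixed with the $c\onenorm{s_p}$ term.

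At $\gamma=u$, the minorant $a(\twonorm{x})\,(c_1\twonorm{x}^{2u}-c_2-d)$ of $-h$ only tends to $c_1$. Generalized dissipativity gives no more than $\twonorm{s_p(x)}\gtrsim\twonorm{x}^{2u-1}$, so $a\onenorm{s_p}$ may decay like $\twonorm{x}^{-1}$. For example, with Gaussian $P$ and $u=\gamma=1$ one has $-h\to1$.

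This is exactly the regime of (c). There, $\twonorm{s_p}\twonorm{x}\ge-\langle s_p,x\rangle\ge c_1\twonorm{x}^{2u}-c_2$ combined with $\twonorm{s_p}\twonorm{x}\le(\sigma^2+\twonorm{x}^2)^\gamma$ forces $\gamma\ge u$, hence $\gamma=u$. Then $a\twonorm{x}\twonorm{s_p}\le1$ makes $\ks$ bounded, so $\H_{\ks}$ contains no coercive function and \Cref{coercive-tightness} cannot be used. Since (b) at $\gamma=u$ and all of (c) get tightness from (a), the metrization claim is not established.

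No single bounded $h$ can rescue this. If $h$ is negative somewhere, then $\lambda h\ge\mathbb{I}_{S^c}-\eps$ fails for small $\eps$ whatever $\lambda$ is. You need an $\eps$-dependent family of functions that are almost nonnegative everywhere.

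\textbf{The missing idea.} It uses exactly the two hypotheses your (a) never touches: $\DL{1}$-characteristicness and the $c\onenorm{s_p}$ slack.

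Damp the linear field near the origin. Set $G_\eps(x)=-a(\twonorm{x})\,\eta(\twonorm{x})\,x$, where $\eta$ has these properties:
\begin{itemize}
\item $\eta\equiv\eta_0$, with $\eta_0$ small, on the ball where $c_1\twonorm{x}^{2u}\le c_2+d$;
\item $\eta$ rises logarithmically to $1$, so that $\sup_r r\eta'(r)$ is small;
\item $\eta\equiv1$ outside a large ball.
\end{itemize}
Then $\mathcal{T}_P G_\eps=\eta\,\mathcal{T}_P(-ax)-a\,r\eta'(r)$. For $0\le\gamma\le u$ this is $\ge-\eps$ everywhere and $\ge c_1/2$ far out.

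Write $G_\eps=-ax+aF$ with $F=(1-\eta)x$ compactly supported. Approximate $F$ in $C^1$ by some $f\in\H_k^d$; density of $\H_k$ in $C^1_0$ is what $\DL{1}$-characteristicness gives, by duality. The error is $\mathcal{T}_P(a(f-F))$, and its two parts are handled as follows:
\begin{itemize}
\item the divergence part is uniformly small, because $a$ and $\nabla a$ are bounded for $\gamma\ge0$;
\item the score part satisfies $a\langle s_p,f-F\rangle\ge-a\max_i\|f_i-F_i\|_\infty\onenorm{s_p}$. This is absorbed by the term $\eta\,a\,c\onenorm{s_p}$ that dissipativity puts into $\eta\,\mathcal{T}_P(-ax)$, provided $\max_i\|f_i-F_i\|_\infty\le c\eta_0$.
\end{itemize}
Because $P\in\embedtozero$, we have $P(\mathcal{T}_P(-ax+af))=0$. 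Rescaling by $2/c_1$ then gives $P$-domination of indicators.

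\textbf{Smaller points.}
\begin{itemize}
\item In (b), your claim that the linear-part features are bounded is false. For $\gamma<u$ they grow like $\twonorm{x}^{2u-2\gamma}$; that is precisely your coercive function. The claim is also unnecessary: your sub-RKHS argument already gives bounded $P$-separation.
\item To invoke \Cref{ksd-tightness}, note that the unit ball of the $k$-part RKHS lies inside that of $\H_K$. Hence the $k$-part KSD is dominated by $\mathrm{KSD}_P$.
\item (c) does not assume $P\in\embedtozero$, so you must derive it before invoking (a), (b) and $\iint\ks\,dP\,dP=0$. It follows from boundedness of $\H_K$ at $\gamma=u>1/2$ together with \Cref{lem: Stein op int by parts}.
\end{itemize}
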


This discussion focused on controlling weak convergence with Langevin \acp{ksd}, but stronger notions of convergence are also available, and one might want to understand when these can be controlled using a \ac{ksd}.
The contribution of \citet{kanagawa2022controlling} was to establish sufficient conditions on $P$ and on $K$ under which the Langevin \ac{ksd} provides control over convergence in the sense of the Wasserstein metrics (c.f. \Cref{def: Wasserstein}); i.e. $\mathrm{KSD}_P(Q_n) \rightarrow 0$ implies $\mathrm{W}_{\|\cdot\|}^s(P,Q_n) \rightarrow 0$.
Convergence in $s$-Wasserstein is equivalent to weak convergence plus convergence in moments up to order $s$ when $\mathcal{X} = \mathbb{R}^d$, and it is thus clear that a stronger conclusion is obtained.

\subsection{Constrained Kernel Stein Discrepancies}\label{subsec: constrained ksdsl}

Consider now a constrained domain  $\xset\subset \reals^d$ with boundary  $\partial\xset$ and $n(x)$ the outward-facing unit normal to $\partial\xset$ at $x \in \partial\xset$. 
Selecting a kernel that vanishes on the boundary, i.e., $K(\cdot,x) n(x) = 0$ for all $x\in\partial\xset$, also ensures that each element $g$ of the kernel Stein set $\gset_K$ vanishes on the boundary in the sense of \cref{eq:vanishing-boundary-condition}.
This provides a particularly convenient way to ensure that 
\begin{align*}
    \oint_{\partial \mathcal{X}} p(x) g(x) \cdot \mathrm{n}(x) \; \mathrm{d}x
\end{align*}
and hence that $P(\mathcal{T}_P g) = 0$ via the divergence theorem \cref{thm: divergence}.

\subsection{Kernel Stein Discrepancies in General}\label{subsec: ksds in general}

Kernel Stein sets are computationally convenient, circumventing the need to numerically evaluate the supremum in \Cref{eq: SD def}. 
The previous sections focused on the combination of the Langevin Stein operator and a kernel Stein set, but several other choices of Stein operators are available (c.f. \Cref{chap: Stein operators}) and can also be combined with a kernel Stein set. 
For example, \emph{diffusion KSDs},  combining the diffusion Stein operators (\Cref{def: diffusion operator}) with kernel Stein sets, were introduced in \citet{barp2019minimum}, where sufficient conditions for separation were presented, and further analyzed in \citet{kanagawa2022controlling}, where conditions for Wasserstein convergence control were presented. Squared diffusion KSDs have also been applied and analyzed in the setting of causal model learning under the name \emph{kernel deviation from stationary} \citep{lorch2024causal,bleile2026efficient}.
The combination of mirrored Stein operators (\Cref{def: mirrored stein operator}) and kernel Stein sets was studied in \citet{shi2022sampling}, where sufficient conditions for weak convergence control were presented.
The gradient-free Stein operator (\Cref{def: gf stein op}) was paired with a kernel Stein set in \citet{fisher2023gradient}, with conditions for convergence detection and control established.
The combination of discrete Stein operators (\Cref{sec: discrete operators}) and kernel Stein sets was considered in \citet{yang2018goodness}, where sufficient conditions for separation were presented.
In addition, extension of \acp{ksd} to more complicated domains, such as high-dimensional Euclidean spaces \citep{gongsliced}, sequence spaces \citep{baum2023kernel}, Riemannian manifolds \citep{xu2020stein,barp2022riemann}, and infinite-dimensional spaces of functions \citep{wynne2022kernel} have been studied.

\section{Random Feature Stein Discrepancies}
\label{sec: rfsd}

One potential drawback of graph and kernel Stein discrepancies is that both have computational costs that grow at least quadratically with the sample size \citep{gorham2017measuring}.
To address this limitation, \citet{huggins2018random} developed a class of Stein discrepancies that can be cheaply and accurately approximated with random sampling.
The idea is to consider a so-called \emph{feature} Stein set:

\begin{definition}[Feature Stein set; \citealp{huggins2018random}]\label{def:feature_set}
Consider a \emph{feature function} $\feat : \reals^d \times \reals^d \to \complex$ 
which,  for some $r \in [1,\infty)$ and all $x, z\in\reals^d$, satisfies $\feat(x,\cdot) \in \mathcal{L}^r(\mathbb{R}^d)$  and $\feat(\cdot, z) \in C^1$.
We define the associated \emph{feature Stein set} as
\begin{align*}
\gset_{\feat,r} 
	\defined \Bigg\{ g : \reals^{d} \to \reals \Bigg| &  g_i(x) = \int \feat(x,z) \overline{f_i(z)} \dz 
		\qtext{  with  } \\
		& \qquad \sum_{i=1}^d  \norm{f_i}_{\mathcal{L}^s(\mathbb{R}^d)}^2 \leq 1
		\text{ for }
		s = \frac{r}{r-1}
		\Bigg\}.
\end{align*}
\end{definition}

\noindent Naturally a Stein discrepancy associated with a feature Stein set is called a \emph{feature} Stein discrepancy:

\begin{definition}[Feature Stein discrepancy; \citealp{huggins2018random}]
\label{def: feature SD}
Suppose $\xset$ is a convex subset of $\Rd$ and 
$P$ has density $p > 0$ on $\xset$.
If $\gset_{\feat,r}$ is a feature Stein set (\cref{def:feature_set})
and 
$\mathcal{T}_P$ is the Langevin Stein operator (\Cref{def: Lang SO}) for $P$, then 
we call 
$$
\FSD_{\feat,r}(\cdot,\targetdist) \defined \mathcal{S}(\cdot,\mathcal{T}_P,\gset_{\feat,r})
$$ 
a \emph{feature Stein discrepancy}.
\end{definition}

For the presentation in this section it is convenient to decompose the Langevin Stein operator as a sum
\begin{align}
    (\mathcal{T}_P g)(x) = \sum_{i=1}^d \partial_{x_i} g(x) + g_i(x) \partial_{x_i} (\log p)(x) 
    = \sum_{i=1}^d (\mathcal{T}_i g)(x) \label{eq: Lang op as sum}
\end{align}
where $\mathcal{T}_i$ is used to indicate the $i$th coordinate of $\mathcal{T}_P$, with the dependence on (fixed) $P$ left implicit.
For a bivariate function $\Phi(\cdot,\cdot)$ we let $\mathcal{T}_i \Phi$ denote the action of $\mathcal{T}_i$ on the first argument of $\Phi$.

\begin{proposition}[\FSD explicit form; \citealp{huggins2018random}]
In the setting of \Cref{def: feature SD},
\begin{align}
\FSD_{\feat,r}^2(Q,\targetdist) = \sum_{i=1}^d \staticnorm{Q(\opsub{i}{\feat})}_{\mathcal{L}^r(\mathbb{R}^d)}^2
\label{eqn:fsd}
\end{align}
for all $Q \in \mathcal{P}(\mathcal{X})$.
\end{proposition}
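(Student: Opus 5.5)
The plan is to reduce the supremum defining the Stein discrepancy to a dual-norm computation in $\mathcal{L}^r(\mathbb{R}^d)$, coordinate by coordinate, and then combine coordinates by Cauchy--Schwarz.

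\textbf{Step 1 (interchange).} Fix $g\in\gset_{\feat,r}$ with $g_i(x)=\int \feat(x,z)\overline{f_i(z)}\dz$. Since $\feat(\cdot,z)\in C^1$, differentiate under the integral to get
\begin{align*}
(\opsub{i}{g})(x)=\int (\opsub{i}{\feat})(x,z)\,\overline{f_i(z)}\dz .
\end{align*}
Integrating against $Q$ and applying Fubini gives
\begin{align*}
Q(\TP g)=\sum_{i=1}^d \int Q(\opsub{i}{\feat})(z)\,\overline{f_i(z)}\dz ,
\end{align*}
where $Q(\opsub{i}{\feat})(z)\defeq\int(\opsub{i}{\feat})(x,z)\,\mathrm{d}Q(x)$. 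Both interchanges need justification, which is the main obstacle. If $Q(\opsub{i}{\feat})\in\mathcal{L}^r(\mathbb{R}^d)$ and the relevant integrands are absolutely integrable, Hölder's inequality with exponents $(r,s)$ legitimizes Fubini. Otherwise the right side of \eqref{eqn:fsd} is $+\infty$, and we must show the left side is too. The natural route is to truncate $f_i$ to bounded, compactly supported functions, for which all integrals converge, and take limits. The restriction $f\in(\mathcal{T}_P\mathcal{G})\cap\mathcal{L}^1_+(Q)$ in \eqref{eq: SD def} should be handled in the same approximating manner.

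\textbf{Step 2 (duality).} For each $i$, since $s=r/(r-1)$ is the conjugate exponent and $r\in[1,\infty)$, $\mathcal{L}^r$--$\mathcal{L}^s$ duality gives
\begin{align*}
\sup_{\|f_i\|_{\mathcal{L}^s}\le 1}\Big|\int Q(\opsub{i}{\feat})\,\overline{f_i}\Big| = \staticnorm{Q(\opsub{i}{\feat})}_{\mathcal{L}^r} ,
\end{align*}
with the supremum attained by $f_i\propto \mathrm{sgn}\big(Q(\opsub{i}{\feat})\big)\,|Q(\opsub{i}{\feat})|^{r-1}$, using the complex sign. Complex values cause no trouble: the real-valuedness of $g$ forces the pairing to be real, or one takes real parts, which does not change the supremum.

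\textbf{Step 3 (combining coordinates).} Write $a_i=\staticnorm{Q(\opsub{i}{\feat})}_{\mathcal{L}^r}$. Scale $f_i=t_i\tilde f_i$, with $\tilde f_i$ the normalized maximizer from Step 2 and $\sum_i t_i^2\le1$. The total becomes $\sup_{\|t\|_2\le1}\sum_i t_i a_i=(\sum_i a_i^2)^{1/2}$ by Cauchy--Schwarz, with equality at $t\propto a$. The symmetry of $\gset_{\feat,r}$ lets us drop the modulus. Squaring yields \eqref{eqn:fsd}.
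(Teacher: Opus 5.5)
Your proposal is correct and follows the paper's proof step for step. Both arguments split $\mathcal{T}_P=\sum_i\mathcal{T}_i$, move $Q$ inside the $z$-integral, apply $\mathcal{L}^r$--$\mathcal{L}^s$ duality coordinatewise, and then optimize over $v_i=\|f_i\|_{\mathcal{L}^s(\mathbb{R}^d)}$ with $\|v\|_2\le 1$ by Cauchy--Schwarz; your Fubini and truncation remarks add justification that the paper omits.

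One caution concerns your claim that requiring real-valued $g$ costs nothing. That holds when $\feat$ is real, or when $\overline{\feat(x,z)}=\feat(x,-z)$ as for random Fourier features. For a general complex $\feat$, however, the maximizer $\mathrm{sgn}(h_i)|h_i|^{r-1}$ with $h_i=Q(\mathcal{T}_i\feat)$ need not give a real $g_i$, and the constrained supremum can then be strictly smaller. The paper avoids this only implicitly, by letting each $f_i$ range over the whole $\mathcal{L}^s$ ball, which amounts to allowing complex-valued $g$.
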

\begin{proof}
Using the notation in \eqref{eq: Lang op as sum}, and the definition of $\gset_{\feat,r}$ in \Cref{def:feature_set},
\begin{align*}
\FSD_{\feat,r}^2(Q,\targetdist) 
	&\defined \sup_{g\in\gset_{\feat,r}} |Q(\mathcal{T}_P g)|^2
	= \sup_{g\in\gset_{\feat,r}} \left|\sum_{i=1}^d Q(\opsub{i}{g_i})\right|^2  \\ 
	& \hspace{-30pt} = \sup_{f : v_i = \norm{f_i}_{\mathcal{L}^s(\mathbb{R}^d)}, \twonorm{v} \leq 1} \left|\sum_{i=1}^d \int Q(\opsub{i}{\Phi})(z) \overline{f_i(z)} \dz\right|^2 \\
	& \hspace{-50pt}  = \sup_{v : \twonorm{v} \leq 1} \left|\sum_{i=1}^d \norm{Q(\opsub{i}{\Phi})}_{\mathcal{L}^r(\mathbb{R}^d)} v_i\right|^2
	= \sum_{i=1}^d \staticnorm{Q(\opsub{i}{\feat})}_{\mathcal{L}^r(\mathbb{R}^d)}^2 , 
\end{align*}
as required.
\end{proof}

At this point no reduction in computational cost has been achieved; for instance when $r = 2$, the \FSD is simply the Langevin \ac{ksd} with base kernel
$K(x,y) = k(x,y) I_{d \times d}$ with $k(x,y) = \int \Phi(x,z)\overline{\Phi(y,z)} \dz $, and as such it is associated with a quadratic computational cost.
Indeed,
\begin{align*}
\mathrm{KSD}_P^2(Q) 
	& = \sum_{i=1}^d (Q \times Q)((\opsub{i} \otimes \opsub{i})\basekernel) \\
	& = \sum_{i=1}^d \staticnorm{Q(\opsub{i}{\feat})}_{\mathcal{L}^2(\mathbb{R}^d)}^2
	= \FSD_{\Phi,2}(Q,P)^2.
\end{align*}
To reduce the computational cost some form of approximation is needed.
Approximating the norm term $\staticnorm{Q(\opsub{i}{\feat})}_{\mathcal{L}^r(\mathbb{R}^d)}$ using a importance sampling leads to a randomized approximation of the feature Stein discrepancy:

\begin{definition}[Random feature Stein discrepancy; \citealp{huggins2018random}]\label{def:rfsd}
For a target feature Stein discrepancy $\FSD_{\feat,r}$ and an importance sampling distribution with (Lebesgue) density $\nu$ on $\mathbb{R}^d$, we define the \emph{random feature Stein discrepancy} 
\begin{align*}
\RFSD_{\feat,r,\isdist,M}^2(Q,\targetdist)
	&\defined \sum_{i=1}^d \left(\frac{1}{m}\sum_{j=1}^M \frac{|Q(\opsub{i}{\feat})(Z_j)|^r}{\isdist(Z_j)} \right)^{2/r} 
\end{align*}
for $Z_1, \dots, Z_M \distiid \isdist$ and $Q \in \mathcal{P}(\mathcal{X})$.
\end{definition}

\noindent For $Q = \sum_{i=1}^n w_i \delta_{x_i}$ with finite support, the \RFSD computation reduces to computing $mnd$ scaled \emph{random features}, $(\opsub{i}{\feat})(x_i, Z_j)/\isdist(Z_j)^{1/r}$, and this evaluation can be carried out in parallel.

\subsection{Special Cases of $\RFSD$}

One can straightforwardly use the $\RFSD$ framework of \cref{def:rfsd} to construct low-cost approximations to standard KSDs. 
For example when the \ac{ksd} base kernel $k(x,y) = \Psi(x-y)$ for some $\Psi \in \mathcal{L}^2(\mathbb{R}^d)$, one can choose $r=2$, $\nu\propto \FTop(\Psi)$, and $\feat(x,z) = e^{-i\inner{z}{x}}\FTop(\Psi)(z)^{1/2}$ to design a random Fourier feature~\citep[RFF]{rahimi2007random} approximation $\RFSD_{\feat,2,\isdist,M}$ to $\KSD_{\basekernel}$ \citep{huggins2018random}. 
While eminently practical, this RFF-KSD does suffer from a known limitation: for an uncountable number of distributions $Q\neq P$, this $\RFSD_{\feat,2,\isdist,M}(Q,P) = 0$ with high probability \citep[Prop. 1]{chwialkowski2015fast}.

To avoid this undesirable characteristic, \citet{jitkrittum2017linear} developed an alternative $\RFSD$ called the random finite set Stein discrepancy (FSSD-rand). 
In the notation of \cref{def:rfsd}, FSSD-rand is obtained by selecting $r=2$, any feature count $M$ and importance sampling distribution $\isdist$, and $\feat(x,z) = f(x,z) \isdist(z)^{1/2}$ where $f$ is a real analytic and $C_0$-universal \citep[Def.~4.1]{Carmeli2010} reproducing kernel. 
The real analyticity in particular ensures that for each $Q\neq P$, $\P(\RFSD_{\feat,2,\isdist,M}(Q,P) > 0) = 1$ \citep[Thm.~1]{jitkrittum2017linear}.
In \cref{sec:choosing-feat,sec:selecting-isdist}, we will see that $\RFSD$ features of a different form additionally give rise to strong convergence-determining properties and that selecting $r \neq 2$ can yield provably accurate approximations with a substantially smaller sampling budget $M$.

\subsection{Selecting a Feature Function $\feat$} \label{sec:choosing-feat}

The main concern when selecting a feature function $\feat$ is that we do not `lose' information that would be useful in comparing an approximation $Q_n$ to the target $P$.
The content of \Cref{prop:fsd-lower-bound} is to establish conditions under which $\FSD$ controls the Langevin KSD, whose properties are now well-understood (c.f. \Cref{subsec: lksd in detail}).
Let $\FTop(f)$ denote the generalized Fourier transform of $f : \mathbb{R}^d \rightarrow \mathbb{C}$.
The following follows from the generalized H\"{o}lder inequality and the Babenko--Beckner inequality:

\begin{proposition}[Controlling Langevin KSD with \FSD; Prop.~3.1 of \citealp{huggins2018random}]\label{prop:fsd-lower-bound}
If ${r \in [1,2]}$, $\rho\in \mathcal{L}^t(\mathbb{R}^d)$ for $t = {r}{/(2-r)}$, and $K(x,x') = k(x,x') I_{d \times d}$ with
\begin{align*}
    k(x, y) = \int \FTop(\feat(x,\cdot))(\omega) \overline{\FTop(\feat(y,\cdot))(\omega)}\rho(\omega)\domega ,
\end{align*}
then $\mathrm{KSD}_P^2(Q_n)  \leq  \textstyle\staticnorm{\rho}_{\mathcal{L}^t(\mathbb{R}^d)}\FSD^{2}_{\feat,r}(Q_n,P)$.
\end{proposition}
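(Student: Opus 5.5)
We write the KSD with the scalar base kernel $k$ as a sum of $d$ squared norms of functions of the frequency variable $\omega$, and then bound each term by the corresponding $\FSD$ term using the generalized H\"older inequality followed by the Babenko--Beckner (sharp Hausdorff--Young) inequality.

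\textbf{Step 1 (KSD in the frequency domain).} Since $K = kI_{d\times d}$, the identity displayed before \Cref{def:rfsd} extends to this setting:
\begin{align*}
\mathrm{KSD}_P^2(Q_n) = \sum_{i=1}^d (Q_n\times Q_n)\big((\opsub{i}\otimes\opsub{i})k\big).
\end{align*}
We substitute the integral representation of $k$. The operator $\opsub{i}$ acts only on $x$, and $\FTop$ acts only on the second argument of $\feat$, so $\opsub{i}$ may be passed through both $\FTop$ and the $\omega$-integral; this needs Fubini and differentiation under the integral sign, which we treat as part of the standing regularity. This gives
\begin{align*}
\mathrm{KSD}_P^2(Q_n) = \sum_{i=1}^d \int \big|\FTop\big(Q_n(\opsub{i}\feat)\big)(\omega)\big|^2 \rho(\omega)\domega ,
\end{align*}
where $Q_n(\opsub{i}\feat)$ denotes the function $z\mapsto Q_n\big((\opsub{i}\feat)(\cdot,z)\big)$ and we use linearity of $\FTop$ to move $Q_n$ inside.

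\textbf{Step 2 (H\"older).} Fix $i$ and set $F = \FTop(Q_n(\opsub{i}\feat))$. Let $r' = r/(r-1)\ge 2$ be the conjugate exponent of $r$. Then $t = r/(2-r)$ and $r'/2$ are conjugate, because $1/t + 2/r' = (2-r)/r + 2(r-1)/r = 1$. H\"older's inequality therefore gives
\begin{align*}
\int |F|^2\rho \le \|\rho\|_{\mathcal{L}^t}\, \big\||F|^2\big\|_{\mathcal{L}^{r'/2}} = \|\rho\|_{\mathcal{L}^t}\,\|F\|_{\mathcal{L}^{r'}}^2 .
\end{align*}
The endpoint $r=2$ corresponds to $t=\infty$ and $r'=2$, where this step is trivial. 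At $r=1$ we get $t=1$ and $r'=\infty$, and the inequality still holds.

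\textbf{Step 3 (Babenko--Beckner).} For $r\in[1,2]$ and the unitary normalization of $\FTop$, the Babenko--Beckner inequality gives $\|\FTop f\|_{\mathcal{L}^{r'}} \le \|f\|_{\mathcal{L}^r}$; the sharp constant $(r^{1/r}/r'^{1/r'})^{d/2}\le 1$ may simply be dropped. Applying this to $f = Q_n(\opsub{i}\feat)$ yields $\|F\|_{\mathcal{L}^{r'}}^2\le \|Q_n(\opsub{i}\feat)\|_{\mathcal{L}^r}^2$. Summing over $i$ and invoking the explicit form \eqref{eqn:fsd} then gives the claim.

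The main obstacle is Step 1. We must justify exchanging $Q_n$, $\opsub{i}$, $\FTop$ and the $\omega$-integral. This requires $Q_n(\opsub{i}\feat)\in\mathcal{L}^r$ so that its generalized Fourier transform is a genuine $\mathcal{L}^{r'}$ function. We can assume this without loss of generality: if it fails, the $\FSD$ is infinite and the bound holds trivially. With that assumption, Fubini applies. We must also keep the Fourier normalization consistent with the constant $1$ in the statement.
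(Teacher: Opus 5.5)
Your proposal is correct and takes the same route the paper indicates. The paper gives no proof beyond saying the result follows from the generalized H\"older and Babenko--Beckner inequalities. You first write $\mathrm{KSD}_P^2(Q_n)=\sum_{i=1}^d\int|\FTop(Q_n(\opsub{i}\feat))|^2\rho$, then apply H\"older with the conjugate pair $t$, $r'/2$, then Babenko--Beckner, and you correctly note that the constant $1$ relies on a unitary normalization of $\FTop$, which the paper leaves implicit.
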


That is, for a kernel $k$ of the above form, convergence of the \FSD implies convergence of the Langevin KSD.
Once we have selected a feature function, we can now set about approximating it using so-called \emph{random features} \citep{rahimi2007random}.
Before presenting results for convergence detection and control using random features, we first describe assumptions on the base kernel.

\begin{assumption} \label{asm:basekernel-form}
The base kernel has the form $k(x, y) = A_n(x) \Psi(x - y) A_n(y)$ 
for $\Psi \in C^2(\mathbb{R}^d)$, $A \in C^1(\mathbb{R}^d)$, and $A_n(x) \defined A(x - m_n)$,
where $A >0$, $\grad \log A$ is bounded and Lipschitz, and $m_n \defined \int x \; \mathrm{d}Q_n(x)$. 
\end{assumption}

\noindent Notice that \Cref{asm:basekernel-form} allows the base kernel $k$ to be dependent (via the sequence index $n$) on the distribution $Q_n$ whose approximation quality is being assessed. %

\begin{assumption} \label{asm:feature-form}
\cref{asm:basekernel-form} holds and $\Phi(x, z) = A_n(x) F(x - z)$, where
$F \in C^{1}(\mathbb{R}^d)$ is positive, and there exist a norm $\norm{\cdot}$ and constants $s, C_1 > 0$ such that 
\[
|\partial_{x_d} \log \statfeat(x)| \le C_1 (1 + \norm{x}^{s}),
~~ 
\lim_{\norm{x} \to \infty}(1 + \norm{x}^{s})\statfeat(x) = 0, 
\]
and
\[
\statfeat(x - z) \le C_1 \frac{\statfeat(z)}{\statfeat(x)}.
\]
In addition, there exist a constant $C_2 \in (0, 1]$ and continuous, non-increasing function $\statfeatscalar$ such 
that $C_2 \,\statfeatscalar(\norm{x}) \le \statfeat(x) \le \statfeatscalar(\norm{x})$. 
\end{assumption}

The following result establishes that convergence in a \emph{tilted} 1-Wasserstein distance (c.f. \Cref{subsec: wass metric}) is detected by an appropriate $\RFSD$:

\begin{theorem}[Detecting Convergence with $\RFSD$; Prop.~3.3 of \citealp{huggins2018random}] \label{prop:KSD-upper-bound-for-FSD-and-RFSD}
Suppose \cref{asm:feature-form} holds with $F\in \mathcal{L}^r(\mathbb{R}^d)$, 
$1/A$ bounded, $x \mapsto x/A(x)$ Lipschitz,  and $\Esub{P}[{A}(Z) \twonorm{Z}^2] <\infty$.
Then 
$$
\mathrm{W}_{\|\cdot\|_2}^{1,A_n}(Q_n,P) \rightarrow 0 \implies \left\{ \begin{array}{l} \FSD_{\feat,r}(Q_n, P) \to 0 \\ \RFSD_{\feat,r,\isdist_n,M_n}(Q_n , P) \convP 0 \end{array} \right. 
$$
for any choices of $r\in[1,2]$, $\nu_n$, and $M_n \geq 1$.
\end{theorem}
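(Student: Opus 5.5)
The plan is to prove first the deterministic claim $\FSD_{\feat,r}(Q_n,P)\to 0$, and then get the $\RFSD$ claim from it by a moment bound. In both steps the aim is an explicit upper bound in terms of $\mathrm{W}_{\|\cdot\|_2}^{1,A_n}(Q_n,P)$.

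\emph{Step 1: rewrite $\FSD$ as a supremum.} By \eqref{eqn:fsd},
\[
\FSD_{\feat,r}^2(Q_n,P)=\sum_{i}\|Q_n(\opsub{i}{\feat})\|_{\mathcal{L}^r}^2 .
\]
Since $P(\opsub{i}{\feat}(\cdot,z))=0$ for each $z$ by \Cref{lem: Stein op int by parts}, we may replace $Q_n(\opsub{i}{\feat})(z)$ with $(Q_n-P)(\opsub{i}{\feat}(\cdot,z))$. Here $\feat(x,z)=A_n(x)F(x-z)$. The $\mathcal{L}^r$ norm is a supremum over the unit ball of $\mathcal{L}^s$. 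So it is enough to bound $\sup_{\|f\|_{\mathcal{L}^s}\le 1}|(Q_n-P)(\mathcal{T}_P g_f)|$, where $g_f=A_n\cdot(F*\bar f)$.

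\emph{Step 2: show each such test function is a tilted Lipschitz function.} The aim is to write $\mathcal{T}_P g_f = A_n\,h_f$ with $|h_f|_{\mathrm{Lip}}\le C$ uniformly in $f$ and $n$, with $h_f$ suitably bounded. This gives
\[
|(Q_n-P)(\mathcal{T}_P g_f)|\le C\,\mathrm{W}^{1,A_n}(Q_n,P)+(\text{remainder}).
\]
To get there, expand
\[
\mathcal{T}_i(A_nF(\cdot-z)) = A_n\big[\partial_i F(\cdot-z)+F(\cdot-z)(\partial_i\log A_n+\partial_i\log p)\big].
\]
The ingredients are:
\begin{itemize}
\item $\nabla\log A$ is bounded and Lipschitz (\Cref{asm:basekernel-form});
\item the growth bound on $\partial\log F$ and the decay $(1+\|x\|^s)F\to0$ from \Cref{asm:feature-form};
\item Young/H\"older, which with $F\in\mathcal{L}^r$ gives $\|F*\bar f\|_\infty\le\|F\|_{\mathcal{L}^r}$;
\item the quasi-submultiplicativity $F(x-z)\le C_1F(z)/F(x)$.
\end{itemize}
The score term $s_p\cdot g_f$ is where the hypotheses $1/A$ bounded, $x\mapsto x/A(x)$ Lipschitz, and $\E_P[A(Z)\|Z\|^2]<\infty$ enter. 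I expect the Lipschitz constant of $s_p F(\cdot-z)$ to be controlled by $\|x\|$-type growth, and the tilting absorbs that growth. I would use $\mathrm{W}^{1,A_n}\to0$ together with the moment condition on $P$ to control the remainder coming from the unboundedness of $s_p$. Alternatively, a uniform-integrability argument would show $Q_n$ has uniformly bounded tilted second moments.

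\emph{Step 3: pass to $\RFSD$.} For $r\in[1,2]$, Jensen's inequality gives
\[
\E\,\RFSD^2\le\sum_i\Big(\E\frac{1}{M}\sum_j|Q_n(\opsub{i}{\feat})(Z_j)|^r/\nu_n(Z_j)\Big)^{2/r}=\FSD^2,
\]
since $2/r\ge1$ makes $u\mapsto u^{2/r}$ convex, so this step should be phrased carefully. The cleaner route is to bound each $\big(\tfrac1M\sum_j\cdot\big)^{2/r}$ in probability using Markov's inequality on the inner sum. Its expectation is $\|Q_n(\opsub{i}{\feat})\|_{\mathcal{L}^r}^r\to0$, so the inner sum tends to $0$ in probability, and hence so does its $2/r$ power. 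This holds for arbitrary $\nu_n$ and $M_n$, matching the statement. I expect this step to be routine.

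The main obstacle is Step 2: getting a uniform (in $f$ and $n$) tilted-Lipschitz bound on $\mathcal{T}_P g_f$. This requires handling the unbounded score $s_p$ against the decay of $F$ via the inequality $F(x-z)\le C_1F(z)/F(x)$. I would first try to show $x\mapsto F(x-z)/A_n(x)\cdot s_p(x)$ is Lipschitz with a constant integrable in $z$ against $|f|$. If that fails, I would truncate to a ball, using the $\mathrm{W}^{1,A_n}$ control of tails.
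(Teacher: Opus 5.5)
Your Step 2 carries the entire result, and as designed it cannot work. The target representation $\mathcal{T}_P g_f = A_n h_f$ with $|h_f|_{\mathrm{Lip}}\le C$ uniformly in $f$ fails as soon as $s_p$ is unbounded. Here $h_f = \nabla\cdot\tilde g_f + \tilde g_f\cdot(\nabla\log A_n + s_p)$ with $\tilde g_f = F*\bar f$, and the gradient of $\tilde g_f\cdot s_p$ contains $(\nabla\tilde g_f)^\top s_p$. Translating $f$ translates $\tilde g_f$, so $\sup_f\sup_x \|s_p(x)\|_2\,\|\nabla\tilde g_f(x)\|$ is infinite. Since $\mathrm{W}^{1,A_n}$ is a supremum over exactly $\{A_n h : |h|_{\mathrm{Lip}}\le 1\}$, the tilt is already spent and cannot ``absorb'' this growth. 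Your fallback does not help either: $\mathrm{W}^{1,A_n}(Q_n,P)\to 0$ controls first, not second, moments of the tilted $Q_n$ (e.g.\ $A\equiv 1$, $Q_n=(1-\tfrac1n)P+\tfrac1n\delta_{n^{3/4}}$). You also never say what you assume about $s_p$, and some growth control is indispensable. Take $d=1$, $p\propto e^{-x^4/4}$, $A\equiv1$, $F=\mathrm{sech}$ (which satisfies the feature assumption with $C_1=4$), and $Q_n=(1-\tfrac1n)P+\tfrac1n\delta_{\sqrt n}$. Then $\mathrm{W}^1(Q_n,P)\le n^{-1/2}+n^{-1}\mathbb{E}_P|Z|\to0$, yet $Q_n(\mathcal{T}_1\Phi)(z)=\tfrac1n F'(\sqrt n-z)-\sqrt n\,F(\sqrt n-z)$, so $\mathrm{\Phi SD}_{\Phi,r}(Q_n,P)\to\infty$. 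So you must add a Lipschitz score, as in the distant dissipativity assumption used later in this section. That is what makes $\mathbb{E}_P[A(Z)\|Z\|_2^2]<\infty$ the relevant hypothesis.

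The missing idea is to move the unbounded factor $\|s_p\|_2$ to the $P$ side with a coupling, so it is only ever integrated against tilted $P$.
\begin{itemize}
\item Constants are $1$-Lipschitz, so finiteness of $\mathrm{W}^{1,A_n}(Q_n,P)$ forces $Q_n(A_n)=P(A_n)$. Hence $\mathrm{W}^{1,A_n}(Q_n,P)=P(A_n)\,\mathrm{W}^1(\tilde Q_n,\tilde P_n)$ for the normalized laws $\tilde Q_n\propto A_nQ_n$ and $\tilde P_n\propto A_nP$.
\item Take an optimal coupling $(X,Y)$ of these laws; it does not depend on $f$. Then $Q_n(\mathcal{T}_Pg_f)=P(A_n)\,\mathbb{E}[h_f(X)-h_f(Y)]$.
\item Split the score part as $\langle s_p(X)-s_p(Y),\tilde g_f(X)\rangle+\langle s_p(Y),\tilde g_f(X)-\tilde g_f(Y)\rangle$. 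The first term is at most $L\|\tilde g_f\|_\infty\|X-Y\|_2$, where $L$ is the Lipschitz constant of $s_p$. The second is at most $\|s_p(Y)\|_2\min\{2\|\tilde g_f\|_\infty,|\tilde g_f|_{\mathrm{Lip}}\|X-Y\|_2\}$.
\item By Cauchy--Schwarz, $P(A_n)$ times the expectation of the second term is at most $(2\|\tilde g_f\|_\infty|\tilde g_f|_{\mathrm{Lip}})^{1/2}\,P(A_n\|s_p\|_2^2)^{1/2}\,\mathrm{W}^{1,A_n}(Q_n,P)^{1/2}$.
\item Uniformity in $n$ comes from four facts: linear growth of $s_p$; $\mathbb{E}_P[A(Z)\|Z\|_2^2]<\infty$; $A(x-m)\le e^{\|\nabla\log A\|_\infty\|m\|_2}A(x)$; and $\sup_n\|m_n\|_2<\infty$. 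The last holds because $x\mapsto e_i^\top(x-m_n)/A_n(x)$ is Lipschitz by the hypothesis on $x/A(x)$. This gives $|(m_P-m_n)_i|\lesssim \mathrm{W}^{1,A_n}(Q_n,P)$, and $P$ has a mean because $1/A$ is bounded.
\item The remaining terms of $h_f$ are controlled by $\|X-Y\|_2$ or by a uniform modulus of continuity of $\nabla\cdot\tilde g_f$. This gives $\mathrm{\Phi SD}_{\Phi,r}(Q_n,P)\to0$.
\end{itemize}
Your Step 3 then works once you drop the Jensen display, which goes the wrong way. Since $r/2\le1$, $\mathrm{R\Phi SD}^r\le\sum_i Y_i$ with $\mathbb{E}Y_i\le \mathrm{\Phi SD}_{\Phi,r}(Q_n,P)^r$, and Markov's inequality finishes the argument for any $\nu_n$ and $M_n$.
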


\subsection{Selecting an Importance Sampling Distribution $\isdist$}
\label{sec:selecting-isdist}

The choice of importance sampling distribution $\nu$ is motivated by having $\RFSD$ close to its reference $\FSD$ even when the importance sample size $M$ is small. 
The strategy pursued in \citet{huggins2018random} was to choose $\nu$ such that the second moment of each random feature $|Q(\mathcal{T}_i \Phi)(Z)|^r / \nu(Z)$ is bounded by a
power of its mean:

\begin{definition}[$(C,\gamma)$ second moments] \label{defn:second-moment-property}
Fix a target distribution $\targetdist$.
For $Z \dist \isdist$, $i \in \{1,\dots,d\}$, and $n \ge 1$, let $Y_{n,i} \defined w_i(Z, Q_n) \defined  |(Q_n \mathcal{T}_i \Phi)(Z)|^r/\nu(Z)$. %
If for some $C > 0$ and $\gamma \in [0,2]$ we have
$
\EE[Y_{n,i}^2] \le C \EE[Y_{n,i}]^{2-\gamma}
$
for all $i \in \{1 , \dots , d\}$ and $n \ge 1$,
then we say \emph{$(\feat, r, \isdist)$ yields $(C, \gamma)$ second moments for $P$ and $Q_n$}. 
\end{definition}
\begin{theorem}[Controlling Langevin KSD with $\RFSD$; Prop.~3.6 of \citealp{huggins2018random}] \label{prop:estimated-ksd-lb-guarantee}
Suppose $(\feat,r,\isdist)$ yields $(C,\gamma)$ second moments for $P$ and $Q_n$.
If $M \ge 2C\EE[Y_{n,i}]^{-\gamma}\log(d/\delta)/\eps^2$ for all $i \in \{1 , \dots , d\}$, then,
with probability at least $1-\delta$, 
\[
\RFSD_{\feat,r,\isdist,M}(Q_n, P)
	\ge (1-\eps)^{1/r}\FSD_{\feat,r}(Q_n , \targetdist).
\]
Under the further assumptions of \cref{prop:fsd-lower-bound}, if the reference $\mathrm{KSD}_P(Q_n) \geq c_1 n^{-1/2}$ for some $c_1$, then there exists $c_2 > 0$ such that a sample size 
$$
M \geq \frac{c_2 C n^{\gamma r/2} \norm{\rho}_{\mathcal{L}^t(\mathbb{R}^d)}^{\gamma r/2}\log(d/\delta)}{\eps^2}
$$ 
suffices to have, with probability at least $1-\delta$,  
\[
\staticnorm{\rho}_{\mathcal{L}^t(\mathbb{R}^d)}^{1/2}\RFSD_{\feat,r,\isdist,M}(Q_n , \targetdist)
	\ge (1-\eps)^{1/r} \mathrm{KSD}_P(Q_n).
\]
\end{theorem}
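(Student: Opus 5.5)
The plan has two stages: a concentration bound for the RFSD coordinates, then the KSD comparison from \cref{prop:fsd-lower-bound}.

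\textbf{Stage 1: concentration.} Fix $n$ and write $\mu_i \defeq \EE[Y_{n,i}]$. Because $\nu$ is a density, unbiasedness of importance sampling gives
\begin{align*}
\mu_i = \int |(Q_n \mathcal{T}_i\Phi)(z)|^r \dz = \staticnorm{Q_n(\mathcal{T}_i\Phi)}_{\mathcal{L}^r(\mathbb{R}^d)}^r .
\end{align*}
So the $i$th summand of $\RFSD^2$ is $(\hat\mu_i)^{2/r}$, where $\hat\mu_i \defeq \frac1M\sum_j Y^{(j)}_{n,i}$ is an average of i.i.d.\ nonnegative copies of $Y_{n,i}$. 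We only need a lower tail bound. For nonnegative variables, the one-sided Bernstein (or Maurer) inequality gives
\begin{align*}
\P\big(\hat\mu_i \le (1-\eps)\mu_i\big) \le \exp\!\big(-M\eps^2\mu_i^2/(2\EE[Y_{n,i}^2])\big).
\end{align*}
The second-moment property gives $\EE[Y_{n,i}^2]\le C\mu_i^{2-\gamma}$, so the exponent is at least $M\eps^2\mu_i^\gamma/(2C)$. Under the stated condition on $M$, this exponent is at least $\log(d/\delta)$, so each failure probability is at most $\delta/d$. A union bound over $i \in \{1,\dots,d\}$ then shows that, with probability at least $1-\delta$, every $\hat\mu_i\ge(1-\eps)\mu_i$. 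On that event,
\begin{align*}
\RFSD^2 = \sum_i \hat\mu_i^{2/r} \ge (1-\eps)^{2/r}\sum_i \mu_i^{2/r} = (1-\eps)^{2/r}\FSD^2 ,
\end{align*}
using the explicit form \eqref{eqn:fsd}. Taking square roots gives the first claim.

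\textbf{Stage 2: comparison with KSD.} Here the task is to turn the condition $M\gtrsim \mu_i^{-\gamma}$ into an explicit power of $n$. I plan to lower bound each $\mu_i$ using \cref{prop:fsd-lower-bound}. That result gives
\begin{align*}
\sum_i \mu_i^{2/r} = \FSD^2 \ge \mathrm{KSD}_P^2(Q_n)/\|\rho\|_{\mathcal{L}^t(\mathbb{R}^d)} \ge c_1^2 n^{-1}/\|\rho\|_{\mathcal{L}^t(\mathbb{R}^d)} .
\end{align*}
This yields a lower bound on $\max_i\mu_i \gtrsim (n\|\rho\|_{\mathcal{L}^t(\mathbb{R}^d)})^{-r/2}$, and hence $\mu_i^{-\gamma}\lesssim (n\|\rho\|_{\mathcal{L}^t(\mathbb{R}^d)})^{\gamma r/2}$. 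The stated $M$ then suffices, with $c_2$ absorbing $c_1$ and the numerical constants. Chaining Stage 1 with \cref{prop:fsd-lower-bound} gives
\begin{align*}
\|\rho\|_{\mathcal{L}^t(\mathbb{R}^d)}^{1/2}\RFSD\ge(1-\eps)^{1/r}\|\rho\|_{\mathcal{L}^t(\mathbb{R}^d)}^{1/2}\FSD\ge(1-\eps)^{1/r}\mathrm{KSD}_P(Q_n).
\end{align*}

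\textbf{Main obstacle.} Stage 2 gives a lower bound only for the largest $\mu_i$, but the condition on $M$ must hold for every $i$. My fix is to restrict the union bound to coordinates with $\mu_i^{2/r}\ge \FSD^2/(2d)$, and give up $\eps$ on the rest. The excluded coordinates contribute at most $\FSD^2/2$ to $\FSD^2$. The surviving coordinates satisfy $\mu_i^{-\gamma}\lesssim (dn\|\rho\|_{\mathcal{L}^t(\mathbb{R}^d)})^{\gamma r/2}$, which again costs only a constant (possibly $d$-dependent) absorbed into $c_2$. This weakens the final constant, either by using $1-2\eps$ in place of $1-\eps$ or by rescaling $\eps$, and that is acceptable since $c_2$ is unspecified.
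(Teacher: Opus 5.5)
\textbf{Verdict: genuine gap in Stage 2.} Your Stage 1 is correct: the lower-tail Bernstein bound for nonnegative summands, the $(C,\gamma)$ property and a union bound give the first claim. Your final chaining through \cref{prop:fsd-lower-bound} is also correct. The gap is the fix in your ``Main obstacle'' paragraph.

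Abbreviate $\|\rho\| \defeq \|\rho\|_{\mathcal{L}^t(\mathbb{R}^d)}$. You drop the coordinates with $\mu_i^{2/r}<\mathrm{\Phi SD}^2/(2d)$ and use only $\hat\mu_i\ge 0$ for them. Those coordinates can carry up to half of $\mathrm{\Phi SD}^2$; already for $d=2$, with $\mu_2^{2/r}$ just below $\mathrm{\Phi SD}^2/4$, nearly a quarter is discarded. So in general your argument only proves $\mathrm{R\Phi SD}^2\ge\tfrac12(1-\eps)^{2/r}\mathrm{\Phi SD}^2$.

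That is an $\eps$-independent loss in the \emph{conclusion}, not in $M$. Neither enlarging $c_2$ nor rescaling $\eps$ turns it into $1-2\eps$. Lowering the threshold to $\eta\,\mathrm{\Phi SD}^2/d$ with $\eta\asymp\eps$ does make the loss $O(\eps)$. But then the surviving coordinates only satisfy $\mu_i^{-\gamma}\lesssim(dn\|\rho\|/\eps)^{\gamma r/2}$, so $M$ must grow by an extra factor $\eps^{-\gamma r/2}$. An $\eps$-free $c_2$ cannot absorb that, and the stated $\eps^{-2}$ scaling is lost.

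\textbf{Repair: coordinate-dependent tolerances.} Keep every coordinate, but let its tolerance depend on its share of $\mathrm{\Phi SD}^2$.
\begin{itemize}
\item Put $a_i\defeq\mu_i^{2/r}/\mathrm{\Phi SD}^2$, so $\sum_i a_i=1$.
\item Since $\mathrm{\Phi SD}^2\ge c_1^2/(n\|\rho\|)$, you get $\mu_i^\gamma\ge\bigl(a_ic_1^2/(n\|\rho\|)\bigr)^{\gamma r/2}$.
\item Set $\eps_i\defeq\eps\,\bigl(2/(c_2c_1^{\gamma r})\bigr)^{1/2}a_i^{-\gamma r/4}$. For the stated $M$, your lower-tail bound then gives $\P(\hat\mu_i\le(1-\eps_i)\mu_i)\le\delta/d$. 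This is trivial if $a_i=0$ or $\eps_i\ge 1$.
\item On the intersection of these events, Bernoulli's inequality (trivially when $\eps_i\ge1$) gives $\hat\mu_i^{2/r}\ge(1-\tfrac2r\eps_i)\mu_i^{2/r}$ for every $i$. Hence
\[
\mathrm{R\Phi SD}^2\ \ge\ \mathrm{\Phi SD}^2\Bigl(1-\tfrac{2}{r}\,\eps\,\bigl(2/(c_2c_1^{\gamma r})\bigr)^{1/2}\textstyle\sum_{i=1}^d a_i^{1-\gamma r/4}\Bigr).
\]
\item Because $0\le 1-\gamma r/4\le 1$, you have $\sum_i a_i^{1-\gamma r/4}\le d^{\gamma r/4}$.
\item Taking $c_2\ge 8\,d^{\gamma r/2}c_1^{-\gamma r}$ yields $\mathrm{R\Phi SD}^2\ge(1-\eps/r)\,\mathrm{\Phi SD}^2\ge(1-\eps)^{2/r}\,\mathrm{\Phi SD}^2$.
\end{itemize}
Your chaining then finishes the second claim with exactly the stated $\eps^{-2}$ dependence. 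The constant $c_2$ depends on $d$ and $c_1$, which the existential $c_2$ in the statement allows.
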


\noindent There are several settings where $\mathrm{KSD}_P(Q_n) \gtrsim n^{-1/2}$ holds; most notably when the states $\{x_i\}_{i=1}^n$ are independent and identically distributed.
This makes the $\RFSD$ suitable for goodness-of-fit testing (c.f. \Cref{sec: GoF}), for example, since in this context the performance of the Stein discrepancy test statistic under the IID null is of interest.
A smaller $r$ leads to significant improvements in the sample complexity $M = \Omega(n^{\gamma r / 2})$. 
For example, if the weight function is bounded (so that $\gamma = 1$), it suffices to choose $r=1$ and $M = \Omega(n^{1/2})$. In what follows, we will demonstrate how to select $\nu$ with $\gamma$ 
arbitrarily close to 0.

To make use of \Cref{prop:estimated-ksd-lb-guarantee} it remains to establish when $(\feat,r,\isdist)$ yields $(C,\gamma)$ second moments for $P$ and $Q_n$.

\begin{assumption}[Distant Dissipativity; \citealp{Eberle2015,Gorham2019}] \label{ass: distant diss}
    The distribution $P$ has $\nabla \log p$ Lipschitz and there exist $\kappa > 0$ and $r \geq 0$ for which the $(\kappa,r)$-\emph{distant dissipativity} condition 
\begin{align*}
    \inner{\grad \log p(x)-\grad \log p(y)}{x-y} \leq - \kappa \twonorm{x-y}^2 + r, \qquad \forall x,y\in\reals^d
\end{align*}
is satisfied.
\end{assumption}

The following provides simple conditions and a choice for $\nu$ under which $(C,1)$ second moments are guaranteed:

\begin{proposition}[Prop.~3.7 of \citealp{huggins2018random}]  \label{prop:c-1-second-moment}
Assume that \cref{asm:basekernel-form,asm:feature-form} hold with $s=0$, that \Cref{ass: distant diss} holds, and there exists a constant $\mcC' > 0$ such that for all $n \ge 1$,
$Q_n([1 + \norm{\cdot}] A_{n}) \le \mcC'$.
If $\isdist(z) \propto Q_n([1 + \norm{\cdot}] \feat(\cdot, z))$,
then for any $r \ge 1$, $(\feat,r,\isdist)$ yields $(C,1)$ second moments for $P$ and $Q_n$. 
\end{proposition}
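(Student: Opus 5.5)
The plan is to reduce the $(C,1)$ second-moment property to a uniform bound on the importance weights. With $Z\sim\isdist$ and $Y_{n,i}=|Q_n(\opsub{i}{\feat})(Z)|^r/\isdist(Z)$ we have
\begin{align*}
\EE[Y_{n,i}] = \int |Q_n(\opsub{i}{\feat})(z)|^r \dz, \qquad \EE[Y_{n,i}^2] = \int \frac{|Q_n(\opsub{i}{\feat})(z)|^{r}}{\isdist(z)}\, |Q_n(\opsub{i}{\feat})(z)|^r \dz .
\end{align*}
It therefore suffices to find a constant $C$, independent of $n$, $i$ and $z$, with $|Q_n(\opsub{i}{\feat})(z)|^r/\isdist(z)\le C$. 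Then $\EE[Y_{n,i}^2]\le C\,\EE[Y_{n,i}]$, which is exactly $(C,\gamma)$ second moments with $\gamma=1$.

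First I will bound the Stein-operated feature pointwise. Using $\feat(x,z)=A_n(x)\statfeat(x-z)$, the coordinate operator factors as
\begin{align*}
(\opsub{i}{\feat})(x,z) = \feat(x,z)\big[\partial_{x_i}\log A_n(x) + (\partial_i \log \statfeat)(x-z) + \partial_{x_i}\log p(x)\big].
\end{align*}
Each bracketed term is controlled by the assumptions:
\begin{itemize}
\item $\grad\log A$ is bounded, so the first term is bounded.
\item With $s=0$, \cref{asm:feature-form} gives $|\partial\log\statfeat|\le 2C_1$, so the second term is bounded.
\item $\grad\log p$ is Lipschitz by \Cref{ass: distant diss}, so $|\partial_{x_i}\log p(x)|\le L\norm{x}+|\partial_i\log p(0)|$ (absorbing the norm-equivalence constant into $L$).
\end{itemize}
Together these give $|(\opsub{i}{\feat})(x,z)|\le c_0(1+\norm{x})\feat(x,z)$ for a constant $c_0$ independent of $n$, because the shift $m_n$ does not affect sup-bounds on $\grad\log A$. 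Integrating against $Q_n$ yields
\begin{align*}
|Q_n(\opsub{i}{\feat})(z)|\le c_0\,Q_n([1+\norm{\cdot}]\feat(\cdot,z)) .
\end{align*}
Up to its normalizer, the right-hand side is exactly $\isdist(z)$.

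Next I will handle the normalizer and the leftover power $r-1$. By Tonelli,
\begin{align*}
\int Q_n([1+\norm{\cdot}]\feat(\cdot,z))\dz = Q_n([1+\norm{\cdot}]A_n)\,\norm{\statfeat}_{\mathcal{L}^1}\le \mcC'\norm{\statfeat}_{\mathcal{L}^1}.
\end{align*}
So $\isdist(z)\ge Q_n([1+\norm{\cdot}]\feat(\cdot,z))/(\mcC'\norm{\statfeat}_{\mathcal{L}^1})$, and hence $|Q_n(\opsub{i}{\feat})(z)|/\isdist(z)\le c_0\mcC'\norm{\statfeat}_{\mathcal{L}^1}$. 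For the remaining factor $|Q_n(\opsub{i}{\feat})(z)|^{r-1}$, I use $\statfeat\le \statfeatscalar(0)<\infty$ (since $\statfeatscalar$ is non-increasing) to get $|Q_n(\opsub{i}{\feat})(z)|\le c_0\statfeatscalar(0)\,Q_n([1+\norm{\cdot}]A_n)\le c_0\statfeatscalar(0)\mcC'$. Multiplying the two bounds gives
\begin{align*}
\frac{|Q_n(\opsub{i}{\feat})(z)|^r}{\isdist(z)} \le C \defeq c_0\mcC'\norm{\statfeat}_{\mathcal{L}^1}\,\big(c_0\statfeatscalar(0)\mcC'\big)^{r-1},
\end{align*}
uniformly in $n$, $i$ and $z$, and the claim follows.

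The main obstacle I expect is the finiteness of $\norm{\statfeat}_{\mathcal{L}^1}$. This is needed for $\isdist$ to be a normalizable density at all, yet \cref{asm:feature-form} with $s=0$ only states $\statfeat\to0$. I would first try to derive integrability from the submultiplicativity-type bound $\statfeat(x-z)\le C_1\statfeat(z)/\statfeat(x)$ together with the radial envelope $C_2\statfeatscalar(\norm{x})\le\statfeat(x)\le\statfeatscalar(\norm{x})$. If that fails, I would state integrability of $\statfeat$ explicitly as the implicit requirement that makes $\isdist$ well defined. A secondary point is checking that no constant depends on $n$ through $m_n$; this holds because every bound used is either translation-invariant or is absorbed into the assumed uniform bound $Q_n([1+\norm{\cdot}]A_n)\le\mcC'$.
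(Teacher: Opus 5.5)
Your proposal is correct and follows essentially the same argument as Prop.~3.7 of Huggins and Mackey. The pointwise bound $|(\mathcal{T}_i\Phi)(x,z)|\le c_0(1+\|x\|)\Phi(x,z)$ dominates $|Q_n(\mathcal{T}_i\Phi)(z)|$ by the unnormalized $\nu(z)$, so $Y_{n,i}$ is bounded uniformly in $n$, $i$ and $z$, which gives $\mathbb{E}[Y_{n,i}^2]\le C\,\mathbb{E}[Y_{n,i}]$.

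On your flagged obstacle, take your fallback rather than the first attempt. $F\in\mathcal{L}^1(\mathbb{R}^d)$ does not follow from the listed conditions: $F(x)=(1+\|x\|_2^2)^{-\alpha}$ with $\alpha\le d/2$ satisfies all of them, with Peetre's inequality giving the submultiplicativity bound. However, by your Tonelli identity and $0<Q_n([1+\|\cdot\|]A_n)\le\mathcal{C}'$, integrability of $F$ is equivalent to the hypothesis that $\nu$ is a normalizable density, so nothing extra needs to be assumed.
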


In order to obtain $(C,\gamma)$ moments for $\gamma < 1$, we will choose $\nu$ such that $w_i(z,Q_n)$ decays sufficiently quickly as $\|z\| \rightarrow \infty$.
For this, two integrability conditions involving the Fourier transforms of $\Psi$ and $F$ are required:

\begin{assumption} \label{asm:FT-Psi-decay-plus-Lt}
\cref{asm:feature-form,asm:basekernel-form} hold, $\omega_{1}^{2} \FTop(\statkernel)^{1/2}(\omega) \in \mathcal{L}^1(\mathbb{R}^d)$, and
for $t = r/(2 - r)$, $\FTop(\statkernel)/\FTop(\statfeat)^{2} \in \mathcal{L}^t(\mathbb{R}^d)$. 
\end{assumption}

\noindent The $\mathcal{L}^1$ condition is weak, while the $\mathcal{L}^t$ condition ensures that \Cref{prop:fsd-lower-bound} applies to our chosen $\FSD$.
\Cref{thm:RFSD-c-alpha-second-moment} shows that one can improve the importance sample growth rate $\gamma$ of an $\RFSD$ by increasing the smoothness $\lambda$ of $F$ and decreasing the over-dispersion parameter $\xi$ of $\nu$.

\begin{theorem}[Thm.~3.8 of \citealp{huggins2018random}] \label{thm:RFSD-c-alpha-second-moment}
Let \cref{asm:feature-form,asm:basekernel-form,ass: distant diss,asm:FT-Psi-decay-plus-Lt} hold, and suppose there exists $\mcC > 0$ such that,
\[
Q_n\left([1 + \norm{\cdot} + \norm{\cdot - m_n}^{s}] A_{n}/\statfeat(\cdot - m_n)\right) 
\le \mcC \qtext{for all} n \ge 1. \label{eq:strong-Q-moment-bound}
\]
Then there is a constant $b \in [0, 1)$ such that the following holds.
For any $\xi \in (0, 1 - b)$, $c > 0$, and  $\alpha > 2(1 - \overline{\lambda})$, 
if $\isdist(z) \ge c\,\statkernel(z - m_n)^{\xi r}$, then
there exists a constant $C_{\alpha} > 0$ such that
$(\feat,r,\isdist)$ yields $(C_{\alpha}, \gamma_{\alpha})$ second moments 
for $P$ and $Q_n$, where $\gamma_{\alpha} \defined \alpha + (2 - \alpha)\xi/(2-b-\xi)$. 
\end{theorem}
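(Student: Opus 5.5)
The plan is to bound the two moments of $Y_{n,i} = |(Q_n\mathcal{T}_i\Phi)(Z)|^r/\nu(Z)$ separately. The first step is a pointwise upper bound for the feature mean $w(z) \defeq |(Q_n\mathcal{T}_i\Phi)(z)|$. Under \cref{asm:feature-form}, $\Phi(x,z)=A_n(x)F(x-z)$, so
\begin{align*}
(\mathcal{T}_i\Phi)(x,z) = A_n(x)F(x-z)\big[\partial_{x_i}\log A_n(x) + \partial_{x_i}\log F(x-z) + \partial_{x_i}\log p(x)\big].
\end{align*}
We control the bracket by $C(1+\norm{x}+\norm{x-z}^s)$, using that $\nabla\log A$ is bounded, the growth bound on $\partial\log F$, and Lipschitzness of $\nabla\log p$ from \cref{ass: distant diss}. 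The quasi-submultiplicativity $F(x-z)\le C_1 F(z-m_n)/F(x-m_n)$ (after recentring at $m_n$) then pulls out a factor $F(z-m_n)$. Combined with the moment bound \eqref{eq:strong-Q-moment-bound}, this gives $w(z)\lesssim F(z-m_n)(1+\norm{z-m_n}^s)$ uniformly in $n$.

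The second step is a complementary upper bound with Fourier decay, which is where the exponent $b$ and the smoothness $\overline{\lambda}$ come from. Because $P(\mathcal{T}_i\Phi(\cdot,z))=0$, we can write $w(z)$ as a difference $Q_n-P$ of a function whose regularity is governed by $\Psi$ and $F$. I would use \cref{asm:FT-Psi-decay-plus-Lt} to express $w$ through an inverse Fourier integral whose integrand is dominated by $\FTop(\Psi)$. Interpolating this bound with the first one should give $w(z)^r \lesssim \Psi(z-m_n)^{r(1-b)}\,\mathbb{E}[Y_{n,i}]^{\text{power}}$ for some $b\in[0,1)$ determined by the relative decay of $F$ versus $\Psi$.

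For the moments, write $\mathbb{E}[Y^2]=\int w^{2r}/\nu$ and $\mathbb{E}[Y]=\int w^r$, since $\nu$ cancels. The lower bound $\nu\ge c\Psi(\cdot-m_n)^{\xi r}$ gives $\mathbb{E}[Y^2]\le c^{-1}\int w^{2r}\Psi^{-\xi r}$. Split $w^{2r}\Psi^{-\xi r}=w^{r\theta}\cdot w^{r(2-\theta)}\Psi^{-\xi r}$, and bound the second factor pointwise using step two. That factor is finite provided $(2-\theta)(1-b)\ge\xi$, which is where $\xi<1-b$ enters. The remaining $\int w^{r\theta}$ is then compared to $(\int w^r)^{2-\gamma}$ by H\"older's inequality. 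This comparison is possible because the $\alpha$-slack ($\alpha>2(1-\overline{\lambda})$) lets us absorb the loss coming from the non-integrable tails of $F$. Optimizing $\theta$ should yield $\gamma_\alpha=\alpha+(2-\alpha)\xi/(2-b-\xi)$.

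The main obstacle is step two. We need a pointwise bound on $w$ that simultaneously decays like a power of $\Psi$ and carries a positive power of $\mathbb{E}[Y]$, with constants uniform in $n$, and it must fit the H\"older bookkeeping exactly. If the Fourier route is too lossy, my fallback is to treat $\int w^{2r}\Psi^{-\xi r}$ directly as a weighted $\mathcal{L}^{2r}$ norm and interpolate between $\mathcal{L}^r$ and a weighted $\mathcal{L}^\infty$ bound (Riesz--Thorin style). I am least sure that the precise exponent formula comes out this way rather than only an inequality of the same shape.
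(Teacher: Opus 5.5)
Your first step is sound, and it is exactly what the weighted moment hypothesis is for. Recentre $F(x-z)\le C_1F(z-m_n)/F(x-m_n)$ and split $\norm{x-z}^s\lesssim\norm{x-m_n}^s+\norm{z-m_n}^s$; this gives $w(z)\lesssim F(z-m_n)(1+\norm{z-m_n}^s)$ uniformly in $n$.

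The genuine gap is the step you flag as the main obstacle, and it is the heart of the theorem. Since $\mathbb{E}[Y_{n,i}^2]=\int w^{2r}/\nu\ge\int w^{2r}/\norm{\nu}_\infty$, the conclusion for any bounded admissible $\nu$ already contains an estimate $\int w^{2r}\lesssim\mathbb{E}[Y_{n,i}]^{2-\gamma_\alpha}$, with $\gamma_\alpha\to\alpha$ as $\xi\to0$. So when $\int w^r$ is small, $w$ must be small in a stronger norm at a definite rate, uniformly in $n$. No tail bound or H\"older manipulation gives this, because nothing in your argument stops $w^r$ from concentrating its mass on a small set. What stops it is uniform regularity of $z\mapsto(Q_n\mathcal{T}_i\Phi)(z)$, inherited from $F$ through the moment bound. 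That is where the smoothness $\overline{\lambda}$ of $F$ and the restriction $\alpha>2(1-\overline{\lambda})$ must enter. Your proposal never uses smoothness of $F$; instead you attribute the $\alpha$-slack to the tails of $F$, which are what $\xi$ and $b$ handle.

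The Fourier route as you describe it also fails. Up to sign and conjugation conventions, the $z$-transform of $(Q_n\mathcal{T}_i\Phi)(z)$ is $\FTop(F)(\omega)\,[\hat\mu_1(\omega)\pm\mathrm{i}\omega_i\hat\mu_2(\omega)]$, with $\mu_1=(\partial_iA_n+A_n\partial_i\log p)Q_n$ and $\mu_2=A_nQ_n$. So $\FTop(\Psi)$ neither appears nor dominates it. $\Psi$ enters only through $\rho=\FTop(\Psi)/\FTop(F)^2$ in \Cref{prop:fsd-lower-bound} and through the lower bound on $\nu$. Your interpolated bound $w^r\lesssim\Psi(\cdot-m_n)^{r(1-b)}\mathbb{E}[Y_{n,i}]^{\mathrm{power}}$ presupposes the missing estimate. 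Converting $F(\cdot-m_n)(1+\norm{\cdot-m_n}^s)$ into a power of $\Psi(\cdot-m_n)$, which is what produces $b$, is asserted rather than proved. The Riesz--Thorin fallback needs the same missing weighted sup bound.

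Your final bookkeeping also misses the fact that controls the far field. Because $\nu$ is a probability density, $\nu\ge c\Psi(\cdot-m_n)^{\xi r}$ forces $\int\Psi(z-m_n)^{\xi r}\,\mathrm{d}z\le 1/c$. The stated exponent is exactly what a level-set split produces. Write $u=w^r$, $\phi=\Psi(\cdot-m_n)^r$ and $A=\mathbb{E}[Y_{n,i}]$, and suppose $\int u^2\lesssim A^{2-\alpha}$ together with a tail envelope $u^2\le K\phi^{2-b}$. Then for any $\tau>0$,
\begin{align*}
\mathbb{E}[Y_{n,i}^2]=\int\frac{u^2}{\nu}\le\frac{\tau^{-\xi}}{c}\int u^2+\frac{K}{c}\int_{\{\phi<\tau\}}\phi^{\xi}\,\phi^{2-b-2\xi}\lesssim\tau^{-\xi}A^{2-\alpha}+\tau^{2-b-2\xi},
\end{align*}
using $2-b-2\xi\ge0$, which holds since $\xi<1-b$. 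Choosing $\tau^{2-b-\xi}=A^{2-\alpha}$ gives $A^{(2-\alpha)(2-b-2\xi)/(2-b-\xi)}=A^{2-\gamma_\alpha}$.

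By contrast, interpolating a sup bound $\norm{w}_\infty^r\lesssim A^{\kappa}$ pointwise against a tail bound $u\lesssim\phi^{\beta}$, as you propose, yields exponents affine in $\xi$, such as $1+\kappa(1-\xi/\beta)$. That cannot reproduce $\gamma_\alpha$, and it needs a sup bound strictly stronger than the $L^{2r}$ estimate above. Finally, your H\"older comparison of $\int w^{r\theta}$ with powers of $\int w^r$ fails for $\theta<1$ on $\mathbb{R}^d$ without extra tail integrability, and for $\theta\ge1$ it again requires the missing sup bound.
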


\section{Stochastic Stein Discrepancies}

The final class of Stein discrepancies that we consider were developed to deal with the scenario where the target distribution $P$ has a density that is the product
\begin{align*}
    p(x) = \prod_{l=1}^L p_l(x) 
\end{align*}
of a large number of factors $p_l(x)$, which need not individually integrate to 1. 
This structure arises in the \emph{tall data} context, where $x$ represents the parameter of a statistical model conditional on which data are treated as independent.
The principal challenge with using a Langevin Stein discrepancy in this setting is that computation of the gradient $\nabla \log p$ requires summing over $L$ separate terms, which for discrete $Q_n = \frac{1}{n} \sum_{i=1}^n \delta_{x_i}$ entails a $O(nL)$ cost, this can be a non-trivial computational requirement.
However, this is the same issue that is encountered in empirical risk minimization in machine learning, where powerful computational solutions have been proposed based on sub-sampling.
\emph{Stochastic Stein discrepancies} aim to also exploit sub-sampling to provide a meaningful discrepancy at reduced computational cost.
The main idea is to consider a decomposable Stein operator of the form $\mathcal{T}_P = \sum_{l=1}^L \mathcal{T}_l$ where $\mathcal{T}_l$ is a Stein operator targeting $p_l$, albeit $p_l$ need not be a normalized probability distribution in general.
Then, for a subset $\sset\subseteq [L]$, we denote the subset operator
$\opsub{\sset} \defeq \sum_{l \in \sset} \opsub{l}$ whose computational cost is linear in the size $|\sigma|$ of the subset.

\begin{definition}[Stochastic Stein discrepancy; \citealp{gorham2020stochastic}]
Consider a decomposable Stein operator of the form $\mathcal{T}_P = \sum_{l=1}^L \mathcal{T}_l$ and $Q_n = \frac{1}{n} \sum_{i=1}^n \delta_{x_i}$.
Fix a batch size $m$ and, for each $i \in [n]$, independently select a uniformly random subset $\sset_i$ of size $m$ from $[L]$. 
Then for any Stein set $\gset$ in the domain of each of the $\mathcal{T}_i$, we define the \emph{stochastic Stein discrepancy} as the random quantity
\begin{align}\label{ssd_subset}
    \mathcal{SS}(Q_n , \mathcal{T}_P , \mathcal{G}) 
        &\defeq \sup_{g\in\gset}\left |\frac{1}{n}\sum_{i=1}^n
    \frac{L}{m}\opsubarg{\sset_i}{g}{x_i}\right |.
\end{align}
\end{definition}

Although the \ac{ssd} coincides with the standard Stein discrepancy $\mathcal{S}(Q_n , \mathcal{T}_P , \mathcal{G})$ when the batch size $m$ is equal to $L$, we do not view \ac{ssd} as an attempt to approximate Stein discrepancy.
Rather, \ac{ssd} is a discrepancy \emph{in its own right} and is accompanied by (probabilistic) guarantees of convergence detection (\Cref{subsec: detect SSD}) and convergence control (\Cref{subsec: SSD control}).

\subsection{Detecting Convergence with SSDs}
\label{subsec: detect SSD}

The following allows for an evolving sequence of Stein sets $\mathcal{G}_n$ to accommodate the graph Stein sets of \citet{gorham2015measuring,Gorham2019}.
Let $\equisubs{L}{m} \defeq\{\sset\subseteq [L] : |\sset| = m\}$  represent all size $m$ subsets of $[L]$.

\begin{theorem}[SSDs detect convergence; Thm.~2 of \citealp{gorham2020stochastic}]
\label{ssds-detect-convergence}
Let $\mathcal{X}$ be a convex subset of $\mathbb{R}^d$.
Let $\mathcal{G} = \cup_{n \geq 1} \mathcal{G}_n$.
Consider a decomposable Stein operator of the form $\mathcal{T}_P = \sum_{l=1}^L \mathcal{T}_l$ where $P(\mathcal{T}_P \mathcal{G}) = \{0\}$.
Suppose that for some $a, c > 0$ and each $\sset\in\equisubs{L}{m}$, $\mathcal{T}_\sigma \mathcal{G} \subset C(\xset)$,
\begin{align*}
& \sup_{g \in \mathcal{G}} |\opsubarg{\sset}{g}{x}| \leq c(1+\twonorm{x}^a) \\
& \sup_{g \in \mathcal{G}} \sup_{x,y \in K} \frac{|\opsubarg{\sset}{g}{x} - \opsubarg{\sset}{g}{y}|}{\twonorm{x-y}} < \infty
\end{align*}
for each compact set $K$.
If $\mathrm{W}_{\|\cdot\|_2}^a(Q_n, P) \to 0$, then $\mathcal{SS}(Q_n , \mathcal{T}_P , \mathcal{G}_n) \toas 0$. 
\end{theorem}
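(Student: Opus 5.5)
The natural route is to split the stochastic discrepancy into a deterministic part plus a zero-mean fluctuation. For each $i$, the random subset $\sset_i$ is uniform on $\equisubs{L}{m}$, so $\E[\frac{L}{m}\opsub{\sset_i} g] = \mathcal{T}_P g$ pointwise. Write
\begin{align*}
\mathcal{SS}(Q_n,\mathcal{T}_P,\gset_n) \le \sup_{g\in\gset}\left|Q_n(\mathcal{T}_P g)\right| + \sup_{g\in\gset}\left|\frac1n\sum_{i=1}^n \xi_i(g)\right|,
\end{align*}
where $\xi_i(g) \defeq \frac{L}{m}\opsubarg{\sset_i}{g}{x_i} - (\mathcal{T}_P g)(x_i)$.

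For the first term, use $P(\mathcal{T}_P g)=0$ to write it as $\sup_g |Q_n(\mathcal{T}_Pg)-P(\mathcal{T}_Pg)|$. The class $\mathcal{F}=\mathcal{T}_P\gset$ is a finite sum of the $\opsub{\sset}\gset$, so it inherits two properties. It is uniformly dominated by $c'(1+\twonorm{x}^a)$, and it is uniformly Lipschitz on every compact set. Take a compact ball $B_R$. On $B_R$ the class is uniformly bounded and equicontinuous, so by Arzel\`a--Ascoli it is totally bounded in sup norm. Together with the convergence in distribution implied by Wasserstein convergence, this gives $\sup_g|Q_n(f\mathbb{I}_{B_R})-P(f\mathbb{I}_{B_R})|\to0$. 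To avoid the discontinuity of the indicator, I would use a continuous cutoff; the uniform Lipschitz bound on compacts makes truncated functions $f\chi_R$ Lipschitz with a uniform constant, so they are controlled by bounded-Lipschitz convergence. The tails are controlled uniformly by $\sup_n Q_n((1+\twonorm{x}^a)\mathbb{I}_{\twonorm{x}>R})$. This tends to zero as $R\to\infty$ because $\mathrm{W}^a$ convergence gives uniform integrability of the $a$th moments. Here convexity of $\xset$ ensures that the local Lipschitz property along segments is meaningful.

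For the fluctuation term, I would condition on the points $x_i$ and treat the $\sset_i$ as independent. The main obstacle is controlling the supremum over the infinite class $\gset$ almost surely rather than for a single $g$. The plan is to truncate again at radius $R$. Points with $\twonorm{x_i}>R$ contribute at most $\frac1n\sum_i 2c\frac{L}{m}(1+\twonorm{x_i}^a)\mathbb{I}_{\twonorm{x_i}>R}$, which is deterministic and small uniformly in $n$ by the moment uniform integrability above.

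For points inside $B_R$, I would write $\xi_i(g)$ as $\sum_{\sset}(\frac{L}{m}\mathbb{I}[\sset_i=\sset]-\binom{L}{m}^{-1}\frac{L}{m})\opsubarg{\sset}{g}{x_i}$, a sum over finitely many $\sset$. For each fixed $\sset$, the functions $\opsub{\sset}\gset$ restricted to $B_R$ admit a finite $\epsilon$-net in sup norm by Arzel\`a--Ascoli. This reduces the supremum to a maximum over finitely many functions, up to an additive error of $2\epsilon\frac{L}{m}$. Each term in that maximum is an average of independent, bounded, mean-zero variables. Hoeffding's inequality plus Borel--Cantelli then shows each term tends to zero almost surely. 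This works because the net is finite and fixed with respect to $n$, and the summand bounds are uniform on $B_R$.

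Finally, I would let $\epsilon\to0$ and $R\to\infty$ along countable sequences to conclude $\mathcal{SS}(Q_n,\mathcal{T}_P,\gset_n)\toas0$. Using $\gset_n\subseteq\gset$ gives the claim for the evolving Stein sets.
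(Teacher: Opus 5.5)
Your argument is correct. The monograph does not prove this statement itself; it only cites \citet[Thm.~2]{gorham2020stochastic}. So there is no in-text proof to compare your route against, and on its own terms the proof is sound. Its ingredients are:
\begin{itemize}
\item the split into the deterministic term $\sup_{g}|Q_n(\mathcal{T}_P g)|$ plus a mean-zero fluctuation;
\item truncation at radius $R$, controlled by uniform integrability of $\|x\|_2^a$ under $(Q_n)$;
\item a Lipschitz cutoff for the bounded piece of the deterministic term;
\item finite Arzel\`a--Ascoli nets combined with Hoeffding and Borel--Cantelli for the fluctuation.
\end{itemize}

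Two points deserve to be stated explicitly.

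First, the argument hinges on taking suprema over the $n$-independent class $\mathcal{G}=\bigcup_n\mathcal{G}_n$. That is what makes your nets fixed in $n$. Because the first Borel--Cantelli lemma needs only summable tail bounds, nothing is assumed about dependence across $n$. The proof therefore also covers triangular arrays $x_i^{(n)}$ with fresh subsets $\sigma_i^{(n)}$ for each $n$.

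Second, convexity is never used, so the remark about segments can be dropped. What you do use silently is that $\mathcal{X}\cap B_R$ is compact, i.e.\ that $\mathcal{X}$ is closed. This matters because the local Lipschitz hypothesis is only granted on compact subsets of $\mathcal{X}$. If $\mathcal{X}$ is not closed, adapt the argument as follows:
\begin{itemize}
\item replace $\mathcal{X}\cap B_R$ by a compact $K\subset\mathcal{X}$ with $\sup_n Q_n(\mathcal{X}\setminus K)$ and $P(\mathcal{X}\setminus K)$ small, using tightness of the convergent sequence $(Q_n)$ in $\mathcal{X}$;
\item bound the contribution of points in $B_R\setminus K$ by a multiple of $(1+R^a)$ times that mass;
\item handle the deterministic term with the finite net on $K$ plus weak convergence, instead of the Lipschitz cutoff.
\end{itemize}
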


\noindent \Cref{ssds-detect-convergence} shows that SSD detects Wasserstein convergence with probability 1 if the operators $\mathcal{T}_\sigma$ generate continuous functions that grow no more quickly than a polynomial and have locally bounded derivatives.

\subsection{Controlling Convergence with SSDs}
\label{subsec: SSD control}

First, to each Stein discrepancy $\mathcal{S}(Q_n,\mathcal{T}_P,\mathcal{G}_n)$ a \emph{bounded Stein discrepancy} based on the modified Stein set
\begin{align}
    \mathcal{G}_{b,n} \defeq \left\{g \in \mathcal{G}_n : \|\mathcal{T}_\sigma g\|_\infty \leq 1 \forall \sigma \in {\textstyle \equisubs{L}{m} } \right\} \label{eq: bounded SS}
\end{align}
in which each Stein function is constrained to be bounded under each subset operator $\mathcal{T}_\sigma$.

\begin{theorem}[Bounded Stein discrepancies control tight convergence; Thm.~3 of \citealp{gorham2020stochastic}]
\label{bounded-sds-detect-tight-non-convergence}
Let $\mathcal{X} = \mathbb{R}^d$.
Consider the Langevin Stein operator $\langevin{}$ (c.f. \Cref{def: Lang SO}) satisfying \Cref{ass: distant diss}.
Suppose $\sup_{x\in\xset}{\twonorm{\grad\log p_{\sset}(x)}}{/(1 + \twonorm{x})} < \infty$ for each $\sset\in\equisubs{L}{m}$, fix a sequence of probability measures $(Q_n)_{n=1}^\infty$, 
and 
consider the bounded Stein set 
$\gset_{b,n}$ in \eqref{eq: bounded SS}
for any of the following sets $\gset_n$:
\begin{enumerate}[label=(\textbf{A.\arabic*})]
    \item\label{classical-set} $\gset_n = \steinset$, the classical Stein set with arbitrary vector norm $\norm{\cdot}$ (\Cref{def:classical_set}).
    \item\label{graph-set}
    $\gset_n = \gsteinset{}{G}$, the graph Stein set with arbitrary vector norm $\norm{\cdot}$ and a finite graph $G = (V,E)$ with vertices $V \subset \xset$ (\Cref{def:graph_set}).  
    \item \label{kernel-set} $\gset_n = \mathcal{G}_K$, the kernel Stein set with $K(x,y) = \Phi(x-y) I_{d \times d}$ for $\Phi \in C^2(\mathbb{R}^d , \mathbb{R})$ with non-vanishing Fourier transform (\Cref{def:kernel_stein_discrepancy}).
\end{enumerate}
If $Q_n \not\Rightarrow P$, then either $\langstein{Q_n}{\gset_{b,n}}\not\to 0$ or $(Q_n)_{n=1}^\infty$ is not tight (c.f. \Cref{def: tight}).
\end{theorem}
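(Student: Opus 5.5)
We argue by contrapositive. Suppose $(Q_n)$ is tight and $\langstein{Q_n}{\gset_{b,n}} \to 0$; we aim to show $Q_n \Rightarrow P$. By tightness and Prokhorov's theorem, every subsequence of $(Q_n)$ has a further subsequence converging weakly to some probability measure $Q_\infty$. It therefore suffices to show that every such limit equals $P$, since then the whole sequence converges weakly to $P$. Fix such a subsequence, and relabel it so that $Q_n \Rightarrow Q_\infty$.

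The first step is a limit identity. We want to show that $Q_\infty(\langevin g) = 0$ for a sufficiently rich family of functions $g$ lying in every $\gset_{b,n}$ (or in suitable rescalings of them). For $g$ with $\langevin g$ bounded and continuous, weak convergence gives $Q_n(\langevin g) \to Q_\infty(\langevin g)$. The assumption $\langstein{Q_n}{\gset_{b,n}} \to 0$ then forces $Q_\infty(\langevin g) = 0$, provided $g/c \in \gset_{b,n}$ for some $c$ that does not depend on $n$.

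The natural family is the set of solutions to the Stein equation. For a bounded Lipschitz test function $h$, take $g_h = \nabla u_h$ with
\[
u_h = -\int_0^\infty \bigl(\mathbb{E}\,h(Z_{t,x}) - P(h)\bigr)\dt ,
\]
where $Z_{t,x}$ is the overdamped Langevin diffusion. Then $\langevin g_h = h - P(h)$ is bounded, so the subset-operator boundedness required by \eqref{eq: bounded SS} reduces to bounding $\langevin_\sigma g_h$. Here distant dissipativity (\Cref{ass: distant diss}) supplies the Wasserstein decay rate $r$ with finite $s_r$, as used in \Cref{thm:constant-lower-bound}. Following \citet{Gorham2019}, this yields $g_h \in C^1$ with $g_h$ bounded, Lipschitz and with Lipschitz gradient, with constants depending only on $\norm{h}_{\mathrm{BL}}$. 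The assumed linear growth of $\grad\log p_\sigma$ gives $|\langevin_\sigma g_h(x)| \lesssim 1 + \twonorm{x}$, but this is not uniformly bounded.

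To fix this, we truncate. Set $g = g_h\,\chi_R$ with a smooth cutoff $\chi_R$ equal to $1$ on the ball $B_R$. Then $g$ has compact support, $\langevin_\sigma g$ is bounded, and after scaling $g/c_R \in \gset_{b,n}$ in cases \ref{classical-set} and \ref{graph-set}, where the graph set is a superset of the classical one. The gap satisfies
\[
|Q_\infty(\langevin(g_h\chi_R)) - Q_\infty(h) + P(h)| \lesssim Q_\infty\bigl((1+\twonorm{x})\,\mathbb{I}_{B_R^c}\bigr).
\]
This is where tightness alone may be insufficient, and we expect it to be the main obstacle.

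We would first try to avoid it by using the truncated $g$ directly. Write $\langevin(g_h\chi_R) = \chi_R(h - P(h)) + g_h\cdot\nabla\chi_R$. If the cutoff has $|\nabla\chi_R| \lesssim 1/R$, the second term is $O(1/R)$ uniformly, and the first term is bounded by $2\norm{h}_\infty$. So with $\chi_R$ of gradient $O(1/R)$ (a cutoff over an annulus of width $R$), every term in $\langevin(g_h\chi_R)$ stays bounded except the score contribution, which has already been absorbed into $h$. Hence $Q_\infty(\chi_R(h - P(h))) = O(1/R)$, and letting $R \to \infty$ with dominated convergence gives $Q_\infty(h) = P(h)$ for all bounded Lipschitz $h$, so $Q_\infty = P$. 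The care needed is to check that each $\langevin_\sigma(g_h\chi_R)$ is bounded for each fixed $R$; this holds because the support is compact.

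Case \ref{kernel-set} requires a different route, since such $g$ need not lie in $\mathcal{H}_K$. There we would use that $\Phi$ has a non-vanishing Fourier transform to approximate $g_h\chi_R$ uniformly, together with its derivatives on compacts, by RKHS elements, as in \citet{gorham2017measuring}. Combined with tightness, this approximation transfers the identity to the kernel set.
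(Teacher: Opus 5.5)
Your argument for (A.1) and (A.2) is sound, and it is essentially the route of the cited proof. It uses Stein-equation solutions with the Stein factor bounds of \citet{Gorham2019}, a compactly supported cutoff, and a reduction of the graph case to the classical one because $\mathcal{G}_{\|\cdot\|}\subseteq\mathcal{G}_{\|\cdot\|,Q,G}$ (with the same boundedness constraint). Two small corrections apply there.
\begin{itemize}
\item The ``main obstacle'' you flag does not exist. Since $\mathcal{T}_P(\chi_R g_h)=\chi_R\,(h-P(h))+g_h\cdot\nabla\chi_R$, the score never enters. For any probability measure $Q$, the gap between $Q(\mathcal{T}_P(\chi_R g_h))$ and $Q(h)-P(h)$ is at most $C_h\,Q(B_R^c)$, with $C_h$ independent of $Q$; your next paragraph in effect shows this.
\item The Lipschitz bound on $\nabla g_h$ is a third-order Stein factor, and it is not controlled by $\|h\|_{\mathrm{BL}}$ alone. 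Run the argument for smooth $h$ with bounded derivatives (e.g.\ $h\in C_c^\infty(\mathbb{R}^d)$); this still forces $Q_\infty=P$.
\end{itemize}

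The genuine gap is case (A.3). Approximating $f=\chi_R g_h$ by some $\tilde f\in\mathcal{H}_K$ ``uniformly, together with its derivatives on compacts'' and then invoking tightness does not work, for two reasons.
\begin{itemize}
\item Membership of $\tilde f/c$ in $\mathcal{G}_{b,n}$ requires $\sup_{x\in\mathbb{R}^d}|(\mathcal{T}_\sigma\tilde f)(x)|<\infty$ for every $\sigma$. The term $\tilde f\cdot\nabla\log p_\sigma$ can grow like $(1+\|x\|_2)\|\tilde f(x)\|_2$. RKHS elements are generally not compactly supported; for Gaussian $\Phi$ they are real-analytic, so none but $0$ has compact support. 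Local closeness therefore gives no reason for $\mathcal{T}_\sigma\tilde f$ to be bounded, and $\tilde f$ may lie in no multiple of $\mathcal{G}_{b,n}$ at all.
\item Tightness only disposes of the error $\mathcal{T}_P(\tilde f-f)$ off a compact if that error is bounded there by a constant that does not blow up as the approximation is refined. Local approximation says nothing about this.
\end{itemize}

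What you need is a \emph{global, weighted} approximation: for each $\epsilon>0$, some $\tilde f\in\mathcal{H}_K$ with $\sup_x(1+\|x\|_2)\|\tilde f(x)-f(x)\|_2\le\epsilon$ and $\sup_x\|\nabla\tilde f(x)-\nabla f(x)\|\le\epsilon$. Given this, the linear-growth hypothesis on $\nabla\log p_\sigma$ yields $\max_\sigma\|\mathcal{T}_\sigma\tilde f\|_\infty<\infty$. This is where that hypothesis is genuinely needed; for (A.1)--(A.2), local boundedness of $\nabla\log p_\sigma$ suffices. Hence $\tilde f/c\in\mathcal{G}_{b,n}$ for some $c$ independent of $n$. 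Moreover $|Q(\mathcal{T}_P\tilde f)-Q(\mathcal{T}_P f)|\le C\epsilon$ for \emph{every} probability measure $Q$, which transfers your limit identity with no further appeal to tightness.

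Such an $\tilde f$ is what the RKHS approximation lemma of \citet{gorham2017measuring} supplies, and that is where the non-vanishing Fourier transform enters. For instance, when $\Phi\in\mathcal{L}^1(\mathbb{R}^d)$ and $\widehat{\Phi}$ is continuous and positive, take $\tilde f=f*\rho_\delta$ with $\rho$ Schwartz and $\widehat{\rho}$ compactly supported. Then $\widehat{\tilde f}$ has compact support, on which $\widehat{\Phi}$ is bounded below, so $\widehat{\tilde f}/\widehat{\Phi}^{1/2}\in\mathcal{L}^2(\mathbb{R}^d)$ and $\tilde f\in\mathcal{H}_K$. The rapid decay of $\rho$ and the compact support of $f$ give the weighted bound as $\delta\to0$.
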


\begin{theorem}[SSDs control bounded Stein discrepancy convergence; Thm.~4 of \citealp{gorham2020stochastic}]
\label{ssds-detect-bounded-sd-non-convergence}
If $\opstein{Q_n}{\gset_{b,n}} \not\to 0$, then, with probability $1$,
$\ssdn{Q_n} \not\to 0$. 
\end{theorem}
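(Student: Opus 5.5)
The plan is to prove the contrapositive almost surely. Write $\mathcal{G}_{b,n}$ for the bounded Stein set and set
$$
S_n \defeq \opstein{Q_n}{\gset_{b,n}} = \sup_{g\in\gset_{b,n}} \Big|\frac{1}{n}\sum_{i=1}^n (\mathcal{T}_P g)(x_i)\Big|.
$$
Since $\gset_{b,n}\subseteq\gset_n$, the SSD over $\gset_n$ dominates the SSD over $\gset_{b,n}$. It therefore suffices to show that $\mathcal{SS}(Q_n,\mathcal{T}_P,\gset_{b,n}) - S_n \to 0$ almost surely. Granting this, if $S_n\not\to 0$ then along a subsequence $S_n\geq \epsilon$, so the SSD is eventually at least $\epsilon/2$ along that subsequence. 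Hence $\ssdn{Q_n}\not\to 0$ with probability $1$.

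The key identity comes from uniform sampling of $\sigma_i$ from $\equisubs{L}{m}$: we have $\E[\frac{L}{m}\mathcal{T}_{\sigma_i} g(x_i)] = \sum_{l}\mathcal{T}_l g(x_i) = (\mathcal{T}_P g)(x_i)$. So for each fixed $g$, the random sum in \eqref{ssd_subset} is an average of $n$ independent bounded terms with the right mean. Boundedness comes from the definition of $\gset_{b,n}$: each summand satisfies $|\frac{L}{m}\mathcal{T}_{\sigma_i}g(x_i)|\leq L/m$ uniformly in $g$.

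To pass from a fixed $g$ to a uniform statement over $\gset_{b,n}$, I would not use a full empirical-process bound. Instead I would exploit the finite structure. Define the vectors
$$
v_g \defeq \big(\mathcal{T}_\sigma g(x_i)\big)_{i\in[n],\,\sigma\in\equisubs{L}{m}} \in [-1,1]^{n\binom{L}{m}}.
$$
The deviation for $g$ is $\frac{1}{n}\sum_i \xi_i(v_g)$, where $\xi_i(v) = \frac{L}{m}\big(v_{i,\sigma_i} - \binom{L}{m}^{-1}\sum_\sigma v_{i,\sigma}\big)$. This is linear in $v$ and has independent, mean-zero, bounded coordinates. Its supremum over $g$ is at most
$$
\frac{L}{m}\cdot \frac{1}{n}\sum_i \max_\sigma\Big|\mathbb{I}[\sigma_i=\sigma]-\tbinom{L}{m}^{-1}\Big|,
$$
but that bound does not vanish, and this is the main obstacle.

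A different route avoids it. The conclusion only needs a lower bound: it suffices to find, for each $n$, a deterministic near-maximizer $g_n$ of $S_n$ and apply concentration to that single function. The SSD is at least $|\frac1n\sum_i \frac{L}{m}\mathcal{T}_{\sigma_i}g_n(x_i)|$, and Hoeffding's inequality gives
$$
\P\Big(\Big|\frac1n\sum_i \tfrac{L}{m}\mathcal{T}_{\sigma_i}g_n(x_i) - \frac1n\sum_i \mathcal{T}_P g_n(x_i)\Big| > \epsilon/4\Big)\leq 2\exp\big(-c\,n\epsilon^2 m^2/L^2\big).
$$
These probabilities are summable in $n$, so Borel--Cantelli gives almost sure convergence of the deviation along all $n$. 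Then on the subsequence where $S_n\geq\epsilon$, we get $\mathcal{SS}\geq S_n-\epsilon/4-\epsilon/4\geq\epsilon/2$ eventually, almost surely. This is the conclusion of \Cref{ssds-detect-bounded-sd-non-convergence}.

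The remaining checks are routine. The choice of $g_n$ does not depend on the $\sigma_i$, since $S_n$ involves only $Q_n$. Independence of the $\sigma_i$ across $i$ is used within each $n$, and Borel--Cantelli requires no independence across $n$.
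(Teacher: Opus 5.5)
Your final argument is correct and is essentially the proof of Thm.~4 in \citet{gorham2020stochastic}. On the subsequence where $\mathcal{S}(Q_n,\mathcal{T}_P,\mathcal{G}_{b,n})\ge\epsilon$, you fix one near-maximizer $g_n\in\mathcal{G}_{b,n}$ chosen independently of the $\sigma_i$, apply Hoeffding's inequality to the independent summands $\frac{L}{m}(\mathcal{T}_{\sigma_i}g_n)(x_i)\in[-L/m,L/m]$ with means $(\mathcal{T}_P g_n)(x_i)$, use Borel--Cantelli on the summable tails, and lower bound the SSD via $\mathcal{G}_{b,n}\subseteq\mathcal{G}_n$. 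The uniform-deviation reduction in your opening paragraph is stronger than needed and never established, so delete it and present only this one-sided, single-test-function argument.
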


\begin{proposition}[Coercive SSDs enforce tightness; Prop.~5 of \citealp{gorham2020stochastic}]
\label{ssd-tightness}
If $(Q_n)_{n=1}^\infty$ is not tight and $\frac{L}{m}\opsub{\sset} g$ is coercive and bounded below for some $g\in
\bigcap_{n= 1}^\infty \gset_n$ and all $\sset\in\equisubs{L}{m}$, then surely $\ssdn{Q_n} \not\to 0$. %
\end{proposition}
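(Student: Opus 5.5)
The plan is to convert non-tightness into a lower bound on the expected value of the coercive function, and then to show that subsampling preserves that lower bound with certainty, not merely almost surely. Fix the element $g\in\bigcap_{n\ge1}\gset_n$ from the hypothesis, and write $h_\sset \defeq \frac{L}{m}\opsub{\sset} g$ for each $\sset\in\equisubs{L}{m}$. By assumption each $h_\sset$ is coercive and bounded below. Because $\equisubs{L}{m}$ is finite, the pointwise minimum $h \defeq \min_{\sset} h_\sset$ is also coercive and bounded below. Concretely, for each $M>0$ we intersect the finitely many compact sets supplied by coercivity of the individual $h_\sset$, and the lower bound is $-B$ with $B \defeq \max_\sset \sup_x (h_\sset(x))_-<\infty$.

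Since $g\in\gset_n$ for every $n$, the definition \eqref{ssd_subset} gives, for every realization of the random subsets $\sset_1,\dots,\sset_n$,
\begin{align*}
\ssdn{Q_n} \;\geq\; \frac{1}{n}\sum_{i=1}^n h_{\sset_i}(x_i) \;\geq\; \frac{1}{n}\sum_{i=1}^n h(x_i) = Q_n(h).
\end{align*}
This is why the conclusion holds surely: the bound uses no probabilistic averaging. (Here $Q_n = \frac1n\sum_i\delta_{x_i}$, as in the definition of the stochastic Stein discrepancy.) It therefore suffices to show that $Q_n(h)\not\to 0$, and in fact that $\limsup_n Q_n(h)>0$.

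Next we use non-tightness. There is an $\epsilon>0$ such that for every compact $S$ we have $Q_n(S^c)>\epsilon$ for infinitely many $n$. Choose $M \defeq 2(B+1)/\epsilon$, and let $S$ be the compact set from coercivity of $h$, so that $h>M$ on $S^c$. Then
\begin{align*}
Q_n(h) \;\geq\; M\,Q_n(S^c) - B\,Q_n(S) \;\geq\; M\epsilon - B \;\geq\; B+2 \;>\; 0
\end{align*}
for infinitely many $n$. This prevents $\ssdn{Q_n}\to0$ along those $n$, for every realization of the subsets.

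The main point to check is the uniformity over subsets. Because the $\sset_i$ are random and vary with $i$ and $n$, we need a single compact set that works for every $\sset$ simultaneously. Finiteness of $\equisubs{L}{m}$ is what makes this work, so the key step is to verify carefully that the minimum of finitely many coercive, lower-bounded functions is again coercive and lower-bounded. A further point is that $g\in\bigcap_n\gset_n$ is what allows the same test function to be used for every $n$.
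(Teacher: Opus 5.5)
Your strategy matches the argument behind Prop.~5 of Gorham, Raj, and Mackey. You pass to the pointwise minimum $h=\min_{\sigma}\frac{L}{m}\mathcal{T}_\sigma g$, use the bound $\mathcal{SS}(Q_n,\mathcal{T},\mathcal{G}_n)\ge Q_n(h)$, which holds for every realization of the subsets, and then run the Markov-type ``coercive functions enforce tightness'' argument. Two steps need repair.

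First, in the step you flag as key, the set must be the \emph{union} $S=\bigcup_\sigma S_\sigma$ of the compact sets, not their intersection. A point outside the intersection lies outside only \emph{some} $S_\sigma$, so only that one $h_\sigma$ is guaranteed to exceed $M$ there. For instance, on $\mathbb{R}$ take $h_1(x)=|x|$, $h_2(x)=|x-10|$ and $M=1$, with $S_1=[-2,2]$ and $S_2=[8,12]$. The intersection is empty, yet $\min(h_1,h_2)(0)=0$. Outside the union you get $\inf_{S^c}h=\min_\sigma\inf_{S^c}h_\sigma\ge\min_\sigma\inf_{S_\sigma^c}h_\sigma>M$. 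A finite union of compact sets is compact, so the claim survives.

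Second, negating tightness only gives, for each compact $S$, \emph{some} $n$ with $Q_n(S^c)>\epsilon$. Your ``infinitely many $n$'' needs a line of justification. Suppose only the $n$ in a finite set $F$ qualified. Then $S\cup\bigcup_{n\in F}\mathrm{supp}(Q_n)$ would be a compact set (each $Q_n$ is finitely supported) with $Q_n$-complement mass at most $\epsilon$ for every $n$, contradicting the choice of $\epsilon$.

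Alternatively, avoid this by letting $M\to\infty$. For each $M$ there is an $n_M$ with $Q_{n_M}(h)\ge M\epsilon-B$, so $\sup_n Q_n(h)=\infty$. Since each $Q_n(h)$ is finite, this forces $\limsup_n Q_n(h)=\infty$.

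With these fixes the proof is complete. It in fact shows $\limsup_n \mathcal{SS}(Q_n,\mathcal{T},\mathcal{G}_n)=\infty$ surely.
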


Taken together, these results imply that \acp{ssd} equipped with the Langevin operator and any of the Stein sets in \Cref{bounded-sds-detect-tight-non-convergence} control convergence with probability $1$ under standard dissipativity and growth conditions on the subsampled operator:

\begin{theorem}[SSDs control convergence; Thm.~6 of \citealp{gorham2020stochastic}]
\label{coercive-ssds-detect-non-convergence}
Under the notation of \cref{bounded-sds-detect-tight-non-convergence}, suppose 
$\grad \log p$ is Lipschitz, $\sup_{x\in \mathbb{R}^d } \twonorm{\grad\log p_{\sset}(x)} / (1 + \twonorm{x}) < \infty$ for all $\sset\in\equisubs{L}{m}$, and, for some $\kappa>0$ and $r\geq 0$, a $(\kappa,r)$-distant dissipativity condition (c.f. \Cref{ass: distant diss}) holds uniformly for all $p_\sigma$ with $\sset\in\equisubs{L}{m}$.
Consider the radial function $\Phi(x) \defeq (1 + \twonorm{\Gamma x}^2)^{\beta}$ for $\beta \in (-1, 0)$ and any positive definite matrix $\Gamma$. 
Let $\mathcal{G}_n$ be any of the Stein sets in \Cref{bounded-sds-detect-tight-non-convergence}.
If $Q_n \not\Rightarrow P$, then, with probability $1$,
$\langssdn{Q_n} \not\to 0$.
\end{theorem}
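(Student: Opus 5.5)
The plan is to combine the three preceding results into a dichotomy argument. Suppose $Q_n \not\Rightarrow P$. By \Cref{bounded-sds-detect-tight-non-convergence}, applied with the Langevin operator and the chosen Stein set $\gset_n$, either the bounded Stein discrepancy $\langstein{Q_n}{\gset_{b,n}} \not\to 0$ or $(Q_n)$ is not tight. Its hypotheses need checking. Distant dissipativity of $p$ itself follows by summing the uniform subset conditions, since $\grad\log p = \frac{1}{\binom{L}{m}}\frac{L}{m}\sum_{\sset}\grad\log p_\sset$ and an average of $(\kappa,r)$-distantly dissipative scores is again distantly dissipative after rescaling by $L/m$. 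Lipschitzness is assumed, and the linear growth of each $\grad\log p_\sset$ is also assumed. For the kernel case, the radial $\Phi$ (an IMQ-type function with $\beta\in(-1,0)$) has a non-vanishing Fourier transform, so \ref{kernel-set} applies.

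In the first branch, \Cref{ssds-detect-bounded-sd-non-convergence} immediately gives $\langssdn{Q_n}\not\to 0$ with probability $1$.

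In the second branch, I would invoke \Cref{ssd-tightness}. This requires exhibiting a single $g\in\bigcap_n \gset_n$ such that $\frac{L}{m}\opsub{\sset}g$ is coercive and bounded below for every $\sset$. I would take the candidate $g(x) = -c\,\grad\Phi(x)$, or a suitably normalized multiple of $-x(1+\twonorm{\Gamma x}^2)^{\beta}$-type fields.

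\begin{itemize}
\item For the classical and graph Stein sets, the smoothness and boundedness constraints are satisfied by the rescaled field. This is because $\beta<0$ makes $g$ bounded with bounded, Lipschitz derivatives. Since the graph set contains the classical set, one $g$ serves for all $n$.
\item For the kernel set, I would choose $g = K(\cdot,0)v$-type combinations, or argue that $x\mapsto -\grad\Phi$-like fields lie in $\H_K$ with finite norm, rescaling into the unit ball. Following the IMQ argument behind \Cref{imq-tightness}, a cleaner route is to use $g(x)=\sum_j \Phi(x-z_j)u_j$-style elements or the vector $-x\,\Phi(x)$ approximations.
\end{itemize}

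Then $\opsub{\sset}g = \frac{m}{L}\Div g + g\cdot\grad\log p_\sset$, with the divergence portion split appropriately. The divergence term is bounded. Distant dissipativity together with linear growth gives $-\langle \grad\log p_\sset(x), x\rangle \ge \kappa\twonorm{x}^2 - C(1+\twonorm{x})$. Multiplying by $(1+\twonorm{\Gamma x}^2)^{\beta}$ produces growth of order $\twonorm{x}^{2+2\beta}\to\infty$ because $\beta>-1$, which yields coercivity and a lower bound by continuity. \Cref{ssd-tightness} then gives $\langssdn{Q_n}\not\to0$ surely. Combining the two branches proves the claim.

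The main obstacle is membership of the coercive test field in the kernel Stein set $\mathcal{G}_K$ for a kernel that is only assumed $C^2$ with non-vanishing Fourier transform. The issue is that $\H_K$ may not contain $x\Phi(x)$-type functions. I would try first to work with $g = \Phi(\cdot)\,w$ expansions or $\grad\Phi$ only when $K$ is itself the radial $\Phi$. Failing that, I would note that the theorem's choice of $\Phi$ suggests that coercivity is only required via the specified radial $\Phi$, and check membership by Fourier-transform comparison.
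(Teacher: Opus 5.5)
Your dichotomy is exactly how the paper obtains this theorem: it is presented as the combination of \Cref{bounded-sds-detect-tight-non-convergence}, \Cref{ssds-detect-bounded-sd-non-convergence} and \Cref{ssd-tightness}. Your check of the hypotheses of \Cref{bounded-sds-detect-tight-non-convergence} (averaging the uniform distant dissipativity over $\sset$; the IMQ has non-vanishing Fourier transform) is fine.

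The gap is the one ingredient those results do not supply: a $g\in\gset_K$ with $\frac{L}{m}\opsub{\sset}g$ coercive. You leave this open, and none of your candidates works.
Since $\Phi\in C_0(\R^d)$ and convergence in $\H_K$ implies uniform convergence, every $g\in\H_K$ vanishes at infinity and has bounded divergence. Coercivity therefore forces $\twonorm{x}\twonorm{g(x)}\to\infty$ together with $g\cdot\grad\log p_\sset\to+\infty$. Your candidates each fail this:
finite combinations $\sum_j\Phi(\cdot-z_j)u_j$ (including $\Phi(\cdot)w$) point asymptotically in the fixed direction $\sum_j u_j$, so distant dissipativity makes $g\cdot\grad\log p_\sset$ non-positive in the limit along the ray through $\sum_j u_j$;
$\pm c\grad\Phi$ has norm of order $\twonorm{x}^{2\beta-1}$, so $g\cdot\grad\log p_\sset=O(\twonorm{x}^{2\beta})\to0$ (and $-c\grad\Phi$ in fact points outward);
and $-x\Phi(x)$, the field behind your $\twonorm{x}^{2+2\beta}$ computation, is not in $\H_K$ in general, since for $\beta\ge-1/2$ it does not even vanish at infinity.
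Under only the hypotheses of \ref{kernel-set} the conclusion is actually false: light-tailed kernels such as the Gaussian fail to enforce tightness for $d\ge3$ \citep{gorham2017measuring}. So the IMQ form of $\Phi$ and the bound $\beta>-1$ must enter the argument, and in your proposal they never do in a way that survives.

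The missing idea is a Fourier-domain membership argument for a field with a \emph{different} exponent and scale from $\Phi$. This is the kind of IMQ-specific lemma developed in \citet{gorham2017measuring} and reused in \citet{gorham2020stochastic}.
For example, take $g(x)=-c\,x\,(a^2+\twonorm{x}^2)^{\theta-1}$ with $\theta\in(0,\frac{1+\beta}{2})$ and $a$ large. Up to polynomial factors, $\widehat{g}$ decays like $e^{-a\|\omega\|}$ while the Bessel-type spectral density $\widehat{\Phi}$ decays like $e^{-\|\Gamma^{-1}\omega\|}$, so $|\widehat{g}|^2/\widehat{\Phi}$ is integrable at high frequencies once $a$ is large. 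Near $\omega=0$, $\widehat{g}$ is of order $\|\omega\|^{1-2\theta-d}$ while $\widehat{\Phi}(\omega)\gtrsim\|\omega\|^{-2\beta-d}$, so the ratio is integrable there too when $\theta<\frac{1+\beta}{2}$. The native-space characterization in \citet{wendland2004scattered} then places $g$ in $\H_K$, and after rescaling in $\gset_K$.
Meanwhile distant dissipativity with $y=0$ gives $g\cdot\grad\log p_\sset\gtrsim\twonorm{x}^{2\theta}\to\infty$ for every $\sset$, with $\grad\cdot g$ bounded, so \Cref{ssd-tightness} applies. The window $(0,\frac{1+\beta}{2})$ is nonempty exactly because $\beta>-1$, which is where that hypothesis is used.

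There is also a smaller slip in the classical and graph cases: $-x(1+\twonorm{\Gamma x}^2)^{\beta}$ is unbounded for $\beta>-1/2$, so it is not in $\steinset$. A small multiple of $-x(1+\twonorm{x}^2)^{-1/2}$ works instead: it is bounded with bounded Lipschitz Jacobian, gives $g\cdot\grad\log p_\sset\gtrsim\twonorm{x}$, and lies in every graph Stein set because these contain $\steinset$.
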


\chapter{Stein Dynamics} 
\label{chap: Stein dynamics}

The previous Chapter discussed Stein discrepancy as a statistical divergence with attractive computational properties. 
In this Chapter we consider inference and learning via minimization of Kullback--Leilber divergence, drawing an important connection to Stein's method via mass transport.

\section{Dynamic Mass Transport}

Assume we are interested in transforming one distribution $Q$ into another distribution $P$ on the same domain $\mathcal{X} \subseteq \mathbb{R}^d$.
There are two different approaches to achieve this transformation.

One approach involves simulating from $Q$ and assigning each realization a weight based on the density ratio
$r = \frac{\dd P}{\dd Q}$,
which is related to techniques such as importance sampling, thinning, and birth-death processes.
However, the performance of this reweighting method tends to deteriorate when the variance of the density ratio
$\frac{\dd P}{\dd Q}$ between $Q$ and $P$ is high.

Another approach is transport-based, which applies a transformation to convert a random variable distributed according to $Q$ into a random variable distributed according to $P$.
This method allows gradient information of the density function of $P$ to be leveraged, and thus has the potential to avoid degradation issues associated with importance sampling and related methods.

Specifically, the transport-based method seeks to design a \emph{transport map}
$T\colon \mathcal{X}\to \mathcal{X}$
such that $T(Z)$ has distribution $P$ whenever $Z\sim Q$, which we denote as $P = T_\sharp Q$.
Assume $T$ is a differentiable bijection so that, by the change-of-variables formula for densities,
\begin{align}\label{equ:qpdet}
p(x) = q(T^{-1}(x))\,\bigl|\det(\nabla T^{-1}(x))\bigr|,
\end{align}
where $q$ and $p$ are the density functions of $Q$ and $P$, respectively.
Unfortunately, \eqref{equ:qpdet} depends on $T$ in a complicated nonlinear fashion, and practical methods based on \eqref{equ:qpdet} can be computationally challenging in high dimensions due to the presence of the Jacobian determinant.

This difficulty can be sidestepped by adopting a dynamic viewpoint, treating $T$ as the flow map of a continuous-time dynamical system.
Specifically, we assume that a random variable $Z_0$ drawn from $Q$ evolves continuously according to an  ordinary differential equation
\begin{align}\label{equ:odedt}
\frac{\df}{\df t} Z_t = v_t(Z_t), \qquad t \ge 0,
\end{align}
where $v \colon [0,\infty)\times \RR^d \to \RR^d$ is a continuously differentiable velocity field to be specified.
Starting from $Z_0 \sim Q$, our goal is to design the time-dependent velocity field $v_t$ so that the distribution of $Z_t$, denoted by $Q_t$, converges to the target distribution $P$ as $t \to \infty$.

Assume that the ODE \eqref{equ:odedt} admits a unique solution for every initialization in $\RR^d$,
and that the density $q_t$ of $Q_t$ exists and is continuously differentiable in $(t,x)$, with
$v_t$ continuously differentiable in $(t,x)$ as well.
Under these classical regularity conditions, the evolution of $Q_t$ is governed by the
\emph{continuity equation}
\begin{align}\label{equ:continueq}
\partial_t q_t(x)
=
- (\nabla \cdot (q_t v_t))(x).
\end{align}
Note that the continuity equation can be written in terms of the Langevin Stein operator $\mathcal{T}_{Q_t}$ associated with $q_t$ (cf. \Cref{def: Lang SO}), as
$\partial_t \log q_t(x) = - (\mathcal{T}_{Q_t} v_t)(x)$.

An alternative form of \eqref{equ:continueq} is \emph{Liouville’s formula}, which gives the total time derivative of $q_t(Z_t)$ when $Z_t$ evolves according to \eqref{equ:odedt}:
\begin{align}
\label{equ:liouville}
\frac{\df}{\df t} \log q_t(Z_t) = - (\nabla \!\cdot v_t)(Z_t).
\end{align}
This follows from the chain rule,
$\frac{\df}{\df t} q_t(Z_t) = \partial_t q_t(Z_t) + v_t(Z_t) \cdot \nabla q_t(Z_t)$,  
together with the continuity equation \eqref{equ:continueq}.

\section{Gradient Flow of the Kullback--Leibler Divergence}

One approach to designing the velocity field $v_t$ is to follow the gradient of a suitable divergence between the current distribution $Q_t$ and the target $P$.
The choice of divergence is crucial, as it must lead to a system that is computationally feasible to evaluate or approximate.
It turns out that the Kullback--Leibler divergence $\KL{Q_t}{P}$ is a particularly suitable candidate, thanks to the following result, which characterizes its rate of decrease under \eqref{equ:continueq} in connection with the Langevin Stein operator.

\begin{theorem}\label{thm: deriv KL}
Assume that the differential equation $\dot Z_t = v_t(Z_t)$ admits a unique solution on $[0,T]$ for every initialization in $\RR^d$.
Let $Q_t$ denote the law of $Z_t$, with density $q_t$.
Assume that $v_t$ and $\log q_t$ are continuously differentiable on $[0,T] \times \RR^d$.
Let $P$ be a probability measure with a positive and continuously differentiable density $p$, such that $\KL{Q_0}{P} < \infty$, and 
$\int_0^T Q_t(|(\mathcal{T}_P v_t)(Z_t)|)\df t < \infty,$ 
where $\mathcal{T}_P$ is the Langevin Stein operator for $P$.
Then
\begin{align}
\frac{\df}{\df t}\,
\KL{Q_t}{P}
=
-%
\,Q_t(\mathcal T_P v_t). 
\end{align} 
\end{theorem}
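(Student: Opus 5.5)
We propose a Lagrangian proof built on Liouville's formula \eqref{equ:liouville}, which sidesteps differentiating under the integral in $x$. The starting point is to write the divergence as an expectation along trajectories:
\begin{align*}
\KL{Q_t}{P} = \E\bigl[\log q_t(Z_t) - \log p(Z_t)\bigr],
\end{align*}
with $Z_0 \sim Q$, so that $Z_t \sim Q_t$. For a fixed initial point, $t \mapsto Z_t$ is $C^1$, and $\log q_t$ and $\log p$ are $C^1$. The chain rule together with \eqref{equ:liouville} then gives, pathwise,
\begin{align*}
\frac{\df}{\df t}\bigl[\log q_t(Z_t) - \log p(Z_t)\bigr] = -(\nabla\cdot v_t)(Z_t) - v_t(Z_t)\cdot(\nabla \log p)(Z_t) = -(\mathcal{T}_P v_t)(Z_t).
\end{align*}
Here we use the Langevin Stein operator of \Cref{def: Lang SO}. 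Liouville's formula needs the continuity equation \eqref{equ:continueq}, which follows from the assumed uniqueness of the flow and the $C^1$ regularity; we take this as given, as in the preceding section.

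Next we integrate in time and take expectations. For each trajectory, the fundamental theorem of calculus gives
\begin{align*}
\log \frac{q_t}{p}(Z_t) = \log\frac{q_0}{p}(Z_0) - \int_0^t (\mathcal{T}_P v_s)(Z_s)\,\df s.
\end{align*}
The assumption $\int_0^T Q_t(|\mathcal{T}_P v_t|)\,\df t<\infty$ means, by Tonelli's theorem, that $\E\int_0^T |(\mathcal{T}_P v_s)(Z_s)|\,\df s < \infty$. The right-hand side is therefore integrable: the first term has finite mean because $\KL{Q_0}{P}<\infty$, which gives integrability of $\log(q_0/p)$ after accounting for its negative part. Fubini's theorem then yields
\begin{align*}
\KL{Q_t}{P} = \KL{Q_0}{P} - \int_0^t Q_s(\mathcal{T}_P v_s)\,\df s .
\end{align*}
It follows that $t\mapsto \KL{Q_t}{P}$ is absolutely continuous, with derivative $-Q_t(\mathcal{T}_P v_t)$ for almost every $t$. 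If $s\mapsto Q_s(\mathcal{T}_P v_s)$ is continuous, the derivative exists for every $t$.

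The main obstacle is justifying integrability of $\log(q_0/p)(Z_0)$, equivalently that $\KL{Q_0}{P}<\infty$ controls the negative part. This follows from $\int q_0 (\log(q_0/p))_- \le \int p\, \sup_{u\ge 0} u(\log u)_- \le e^{-1}$. A second, related obstacle is whether the derivative holds pointwise or only almost everywhere in $t$. We would present the identity in the integrated form above and read the stated derivative in that sense, upgrading it to a pointwise statement under continuity of $s\mapsto Q_s(\mathcal{T}_P v_s)$, which the $C^1$ hypotheses support locally.
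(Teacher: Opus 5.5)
Your proposal is correct and follows essentially the same route as the paper: the pathwise identity $\dot R_t = -(\mathcal{T}_P v_t)(Z_t)$ from Liouville's formula and the chain rule, integration in time, exchange of $\E$ and $\int_0^t$ via Fubini--Tonelli, and differentiation using continuity of $t\mapsto Q_t(\mathcal{T}_P v_t)$. Your two additions are refinements rather than a different argument, and both are details the paper leaves implicit: integrability of $\log(q_0/p)(Z_0)$, via $\KL{Q_0}{P}<\infty$ and $\int q_0(\log(q_0/p))_- \le e^{-1}$, and the observation that without that continuity, which the paper also simply invokes, the derivative holds only for almost every $t$.
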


\begin{proof}  
Since $Z_t$ follows law $Q_t$, we have  
\[
\KL{Q_t}{P} = \E\left[\log q_t(Z_t) - \log p(Z_t)\right]. 
\]
Define $R_t = \log q_t(Z_t) - \log p(Z_t)$. Its time derivative is
\begin{align*}  
\dot R_t 
&= \frac{\df}{\df t} \log q_t(Z_t) - \frac{\df}{\df t} \log p(Z_t)  \\
&= - \nabla \cdot v_t(Z_t) - v_t(Z_t) \cdot \nabla \log  p(Z_t) \\
&= -\mathcal{T}_P v_t(Z_t),
\end{align*}
where we used Liouville’s formula \eqref{equ:liouville} for the first term,
and the chain rule for the second term,
\[
\frac{\df}{\df t} \log p(Z_t)
=
\nabla \log p(Z_t)^\top \dot Z_t
=
\nabla \log p(Z_t)^\top v_t(Z_t).
\]
Therefore,
\begin{align*}
\KL{Q_t}{P} - \KL{Q_0}{P} 
&= \E\left[R_t - R_0\right] \\
&= \E\left[\int_0^t \dot R_\tau\, \df \tau \right] \\
&= \int_0^t \E\left[\dot R_\tau\right]\, \df \tau \\
&= -\int_0^t \E\left[(\mathcal{T}_P v_\tau)(Z_\tau)\right]\, \df \tau,
\end{align*}
where we used Fubini--Tonelli’s theorem to exchange the order of integration, assuming  
\(\int_0^t \E [ \lvert (\mathcal{T}_P v_\tau)(Z_\tau)\rvert ]\, \df \tau < \infty\).
Differentiating the resulting identity and using continuity of  
\( t \mapsto \E \left[\mathcal T_P v_t(Z_t)\right] \) yields
$ 
\frac{\mathrm{d}}{\mathrm{d}t}\KL{Q_t}{P}
 = -\,\E\left[(\mathcal{T}_P v_t)(Z_t)\right],$
concluding the proof.
\end{proof}

\begin{remark}
We can interpret the result in discrete time by Taylor approximation.  
For a measure $Q$, 
let $Q' \defeq (\Id + \epsilon v)_\sharp Q$ be the law of $Z' \defeq Z + \epsilon v(Z)$ given $Z\sim Q$. Then we have 
\bb
\frac{1}{\epsilon}(\KL{(\Id + \epsilon v)_\sharp Q~}{~P}  
- \KL{Q}{P}) & 
\approx \frac{\dd}{\dd \epsilon} \KL{(\Id + \epsilon v)_\sharp Q~}{~P}  \bigg|_{\epsilon = 0}  \\ 
& = -  Q(\mathcal T_P v).
\ee 
Hence, the expectation of Stein operator $ Q(\mathcal T_P v)$ quantifies the rate at which Kullback--Leibler divergence $\KL{Q}{P}$ decreases when we apply an small displacement following vector field $v$ to $Q$.
\end{remark}

\section{Gradient Flow and Stein Discrepancy}  

Now that we have understood the relationship between the Kullback--Leibler divergence and the Langevin Stein operator, we seek a velocity field $v^*$ that \emph{efficiently} minimizes the Kullback--Leibler divergence, since this would enable computationally efficient measure transport. 
This can be formulated as the optimization problem
$$
v_t^* \in \argmax_{v_t \in \mathcal{G}_t} \left\{ - \ddt \KL{Q_t}{P}  \right\} , ~~~~~\forall t\geq0 ,
$$
where $\mathcal{G}_t$ is a candidate set of velocity fields. 
Using the derivative formula for the Kullback--Leibler divergence from \Cref{thm: deriv KL}, 
\begin{align}
v_t^* \in \argmax_{v_t \in \mathcal{G}_t} \; Q_t(\mathcal{T}_P v_t)  , ~~~~~\forall t\geq0 .  \label{eq: max of SD}
\end{align}
Thus, the maximum decreasing rate of Kullback--Leibler divergence is $Q_t(\mathcal{T}_P v_t^*)$ which, under appropriate regularity assumptions, coincides with the definition of Stein discrepancy $\mathcal{S}(Q_t , \mathcal{T}_P , \mathcal{G}_t)$ (c.f. \Cref{def:stein_discrepancy}). 
Hence, Stein discrepancy can be viewed as the maximal rate of decrease for the Kullback--Leibler divergence under mass transport, and the optimal vector field $v_t^*$ for mass transport is also the optimal element of the Stein set that realizes the Stein discrepancy. 

As we show in sequel, the dynamics with $v_t^*$ can be interpreted as a form of gradient flow of the Kullback--Leibler divergence, $\KL{Q_t}{P}$, if the candidate space $\mathcal{G}_t$ is defined as the tangent space of a geometric structure on the set of probability measures.  
Different choices of the candidate velocity set $\mathcal{G}_t$ allow us to derive different dynamics for mass transport. 
This includes Langevin diffusions, which leverage a candidate space $\mathcal{G}_t \subset \mathcal{L}^2(Q_t)$ so that  
$\mathcal{S}(Q_t , \mathcal{T}_P , \mathcal{G}_t)$ coincides with Fisher divergence (c.f. \Cref{prop: fisher as stein}); 
and Stein variational gradient descent, in which $\mathcal{G}_t$ is taken to be a ball of a \acf{rkhs}, and $\mathcal{S}(Q_t , \mathcal{T}_P , \mathcal{G}_t)$ coincides with a \acf{ksd} (c.f. \Cref{def:kernel_stein_discrepancy}). 

\begin{remark}
Assume $Q_t$ has reached the equilibrium, having converged to $P$. 
The Stein identity \eqref{eq: stein identity} is now equivalent to 
$$
P(\mathcal T_P v_t)  =-\frac{\dno }{\dno t} \KLL(Q_t || P)  
= 0, $$ 
which reflects the fact that $P$ is a stationary point of the Kullback--Leibler divergence $Q \mapsto \KLL( Q  || P)$. 
\end{remark}

\subsection*{Geometric Interpretation}

One may cast the optimization problem above in a geometric framework as a gradient flow of the Kullback--Leibler divergence functional over the space of probability measures, equipped with a suitable transport metric.
This viewpoint parallels the theory of gradient flows in spaces of distributions, developed for example in \citet{ambrosio2005gradient}, although here we only work at a formal and heuristic level.

Assume that the admissible velocity fields form the unit ball of a normed space,
\[
\mathcal{G}_t \defeq \{ v : \|v\|_t \le 1 \},
\]
where $\|\cdot\|_t$ is a (possibly time-dependent) norm to be specified later.
For two probability distributions $Q$ and $Q'$, we define a generalized transport cost
\begin{align}
\mathcal{W}(Q,Q')
\defeq \inf_{(q_t,v_t)}
\Bigg\{
& \Big( \int_0^1 \|v_t\|_t^2 \, \dd t \Big)^{1/2}
\;\; \text{s.t.} \label{eq: curly W} \\
& \partial_t q_t = - \nabla \cdot (v_t q_t), \qquad
Q_0 = Q, \quad Q_1 = Q'
\Bigg\}. \nonumber
\end{align}
This is a Benamou--Brenier formulation, generalizing the $2$-Wasserstein metric by replacing the Euclidean kinetic energy with the norm $\|\cdot\|_t$.

We now consider the gradient flow of $\KL{Q}{P}$ under the metric $\mathcal{W}$.
Formally, the steepest descent direction at $Q$ can be characterized by: %
\begin{equation}
\argmin_{Q'}
\left\{
\KL{Q'}{P}
\;:\;
\mathcal{W}(Q',Q) \le \epsilon
\right\},
\label{eq: proximal_KL}
\end{equation}
or by its penalized version
\[
\argmin_{Q'}
\left\{
\KL{Q'}{P} + \frac{1}{2\epsilon^2} \mathcal{W}^2(Q',Q)
\right\},
\]
which is standard in the theory of metric gradient flows.
Plugging the definition of $\mathcal W$, the optimization reduces to 
\[
\min_{\{v_t\}}
\left\{
\KL{Q^{v}}{P} + \frac{1}{2\epsilon^2} \int_0^1 \norm{v_t}_t^2 \df t 
\right\},
\]

To obtain a tractable first-order characterization, consider an infinitesimal perturbation of the form
\[
Q' = (\Id + \epsilon v)_\sharp Q,
\]
where $v$ is a smooth vector field and $\epsilon > 0$ is small.
Such a perturbation corresponds to a path solving the continuity equation with constant velocity $v_t \equiv v$, and therefore satisfies
\[
\mathcal{W}(Q',Q) \le \epsilon \|v\|_t .
\]
Restricting \eqref{eq: proximal_KL} to these perturbations yields the approximation
\begin{equation}
\argmin_{v}
\left\{
\KL{(\Id + \epsilon v)_\sharp Q}{P} - \KL{Q}{P}
\;:\;
\|v\|_t \le 1
\right\}.
\label{eq: first_order_KL}
\end{equation}
Next, using a first-order Taylor expansion of the Kullback--Leibler divergence along the pushforward map, we obtain
\[
\KL{(\Id + \epsilon v)_\sharp Q}{P}
=
\KL{Q}{P}
-
\epsilon \, Q(\mathcal{T}_P v)
+ o(\epsilon),
\]
where $\mathcal{T}_P$ denotes the Langevin Stein operator associated with $P$.
Substituting this expansion into \eqref{eq: first_order_KL} and neglecting higher-order terms, we arrive at
\[
\argmax_{v \in \mathcal{G}_t} \; Q(\mathcal{T}_P v),
\]
which recovers \eqref{eq: max of SD} as the steepest descent direction of the KL divergence under the transport metric $\mathcal{W}$.

From this geometric perspective, the Stein variational update can be viewed as a metric-dependent gradient flow of the Kullback--Leibler divergence.
Different choices of the norm $\|\cdot\|_t$, and hence of the admissible set $\mathcal{G}_t$, induce different geometries on the space of probability measures and lead to different dynamics.

\section{Langevin Diffusion as a Gradient Flow}

Assuming that $Q_t$ admits a sufficiently regular density $q_t$, by Stein's identity
\begin{align*}
Q_t (\mathcal T_Pv_t) 
& = \int \nabla \cdot v_t + v_t \cdot \nabla \log p \; \dno  Q_t  
 =  \langle \nabla \log p - \nabla \log q_t, ~v_t  \rangle_{L^2(Q_t)}
\end{align*} 
where $L^2(Q_t)$ denote the Lebesgue space (\Cref{ex: Leb is Hilb}).
Let $\mathcal{G}_t = \{f \colon \norm{f}_{L^2(Q_t)}\leq 1\}$ be the unit ball of $L^2(Q_t)$ equipped with norm $\norm{f}_{L^2(Q_t)} = Q_t(\norm{f}^2_2)^{1/2}$. 
The optimal  $ v_t \in L^2(Q_t)$ should solve
$$
\argmax_{ v_t } \; \langle \nabla \log p - \nabla \log q_t, ~v_t  \rangle_{L^2(Q_t)} ~~~~\text{s.t.}~~~ \norm{ v_t }_{L^2(Q_t)} \leq 1, 
$$
whose solution is $Q_t$-a.e. equal to
$$
 v_t ^* = \frac{ \nabla \log p- \nabla  \log q_t }{ \| \nabla \log p- \nabla  \log q_t \|_{L^2(Q_t)} } .
$$
Plugging $v_t^*$ into the continuity equation \eqref{equ:continueq} yields 
\begin{align}
\dot q_t = - \nabla \cdot ((\nabla \log p -  \nabla \log q_t) q_t) 
= - \nabla \cdot ((\nabla \log p) q_t) 
+ \Delta q_t,  \label{eq: Fokker Planck}
\end{align}
where $\Delta q_t \defeq \nabla \cdot \nabla q_t$ it the Laplacian operator. 
\Cref{eq: Fokker Planck} coincides with the Fokker--Planck equation of over-damped Langevin diffusion (c.f. \Cref{ex: overdamped Langevin})
$$
\dno  X_t = \nabla \log p(X_t) \dno t + \sqrt{2}\dno W_t.
$$
This shows that overdamped Langevin dynamics is the gradient flow of Kullback--Leibler divergence under the metric \eqref{eq: curly W}. 
Related, the rate of decrease of the Kullback--Leibler divergence along this flow is 
\begin{align*}
\frac{\dd}{\dd t} \KL{Q_t}{P}  & = - Q_t (\mathcal T_Pv_t^*) \\ 
& = - \norm{\nabla \log p -  \nabla \log q_t}_{L^2(Q_t)}^2
= - \mathrm{FD}(Q_t || P) ,
\end{align*} 
with the final expression being the Fisher divergence between the current distribution $Q_t$ and the target $P$ (\Cref{def: fisher div}).
This indicates that the decrease in Kullback-Leibler is more rapid when $Q_t$ is far from $P$ in the sense of $\mathrm{FD}(Q_t || P)$.

\section{Stein Variational Gradient}   
\label{sec: stein variational gradient}

The problem with the velocity in Langevin diffusion is that it depends on the log density of the evolving density, i.e. $\log q_t$, which is not computationally available. 
Using Stein's method we can instead obtain a computationally tractable optimal velocity by taking the candidate space $\mathcal{G}_t$ to be a unit ball in an \ac{rkhs}. 

Let $\mathcal{G}_t = \{\vecf \in \H_K \colon ||\vecf ||_{\H_K} \leq 1\} $ be the unit ball in a vector-valued \ac{rkhs} with kernel of the form $K(x,y) = k(x,y) I$ for some scalar-valued kernel $k : \mathbb{R}^d \times \mathbb{R}^d \rightarrow \mathbb{R}$ (c.f. \Cref{subsec: vvrkhs}).  
For $v_t \in \mathcal{G}_t$, 
by the linearity of the Langevin Stein operator,
\bb 
Q_t (\mathcal T_P v_t)  
= \langle v_t, v_t^* \rangle_{\H_K}
\ee
where 
\begin{align}
\label{equ:phipq}
\vecf_t^*(\cdot) 
  & = \int  k(x, \cdot) \nabla_x \log p(x)  + \nabla_{x} k(x, \cdot) \; \dd Q_t(x).
\end{align}
Hence, the optimization problem in \eqref{eq: max of SD} reduces to 
\bb 
\max_{v_t \in \mathcal{G}_t} \; \langle v_t, v_t^* \rangle_{\H_K}
\ee 
whose optimal solution is  
$$
v_t = \frac{ \vecf_t^*} { \|\vecf_t^*\|_{\H_K} }.
$$
Thus we have shown that the corresponding flow $(Q_t)_{t \geq 0}$ from \eqref{equ:continueq} monotonically decreases the Kullback--Leibler divergence, and the rate of decrease equals the \acf{ksd} (c.f. \Cref{def:kernel_stein_discrepancy}): 
\begin{align}\label{equ:dkdtD}
\frac{\dno}{\dt}\KLL(Q_{t}  ~||~ P) = - \KSD_P(Q_{t}). 
\end{align} 
The velocity field $v_t$ characterizes the so-called \emph{Stein variational gradient}, which forms the basis for several powerful numerical algorithms as detailed in \Cref{subsec: batch}.

\begin{remark}
This result suggests a path integration formula of for the Kullback--Leibler divergence; $\KLL(Q_0 || P) = \int_{0}^\infty \KSD_P(Q_t)  \dt,$ 
which can be useful for estimating the Kullback--Leibler divergence or the normalization constant \citep{han2017stein}.   
\end{remark}

\chapter{Applications} 
\label{sec:ksd_application} 

This Chapter provides a succinct overview of recent and emerging applications of Stein discrepancies across a range of important practical tasks arising in probabilistic inference and learning.

\section{Overview} 

The unique feature of the Stein discrepancies  introduced in \Cref{chap: Stein discrepancies} is that they enable the discrepancy between a distribution $P$ and an empirical measure 
\begin{align}
Q_n = \sum_{i=1}^n w_i \delta_{x_i}, \label{eq: generic empirical measure}
\end{align}
to be explicitly computed without tractable access to a probability mass or density function for $P$, when the intractability arises from difficulty in computing the normalization constant.
Here $P$ and $Q_n$ are probability distributions on a common measurable space $\mathcal{X}$, and $Q_n$ is supported on a finite set of points $x_1 , \dots , x_n \in \mathcal{X}$ weighted by $w_1 , \dots , w_n \in \mathbb{R}$ with $\sum_i w_i = 1$ and $w_i\geq 0$. 
Recall that a Stein discrepancy is defined by a Stein operator $\mathcal{T}_P$ and a Stein set $\mathcal{G}$.
For all of the applications that we are about to discuss there is flexibility in the choice of both $\mathcal{T}_P$ and $\mathcal{G}$, which enables an opportunity to tailor the Stein discrepancy to the task at hand.
However, to simplify the presentation in this Chapter we will leave these dependencies implicit and adopt the shorthand
\begin{align*}
    \mathcal{S}_P(Q_n) \defeq \mathcal{S}(Q_n,\mathcal{T}_P,\mathcal{G}) .
\end{align*}
Stein discrepancies and their related Stein dynamics from \Cref{chap: Stein dynamics} provide versatile tools that lend themselves naturally to a range of important statistical applications, broadly falling into four categories that will now be described.

\paragraph{Measuring Approximation Quality} (\Cref{sec:measure})   
Given samples $\{x_i\}_{i=1}^n$ and a distribution $P$ with density $p(x) \propto \exp(-V(x))$ specified up to a normalization constant, the magnitude of $\mathcal{S}_P(Q_n)$ can be used to quantify the dissimilarity between the empirical distribution $Q_n$ of the (uniformly-weighted) samples $\{x_i\}_{i=1}^n$ and $P$.
As such, Stein discrepancies have emerged as a popular and flexible approach to goodness-of-fit testing, where one wishes to test the null hypothesis that $x_i \stackrel{\mathrm{iid}}{\sim} P$.
Under the null hypothesis, $\mathcal{S}_P(Q_n) \rightarrow 0$ as $n \rightarrow \infty$, so one rejects the null if $\mathcal{S}_P(Q_n) \geq \tau$, where $\tau$ is an appropriately chosen threshold.
This method greatly extends traditional goodness-of-fit tests, such as chi-square tests, which only work for simple and low-dimensional distributions.
In parallel, Stein discrepancies have also become a popular tool for measuring the quality of approximations produced by algorithms that aim to sample from a posterior distribution $P$, enabling both the tuning of sampling methods and the comparison of competing methodologies in the Bayesian statistical context.

\paragraph{Algorithms for Particle-Based Approximation} (\Cref{sec:improving})  
Given a probability distribution $P$ specified in terms of a probability mass or density function $p(x) \propto \exp(-V(x))$ up to an intractable normalization constant, one can cast the problem of numerically approximating $P$ as the optimization problem of finding states $x_1,\dots,x_n \in \mathcal{X}$ and corresponding weights $w_1,\dots,w_n \in \mathbb{R}$ such that the associated empirical distribution \eqref{eq: generic empirical measure} minimizes a Stein discrepancy $\mathcal{S}_P(Q_n)$.
Fixing uniform weights $w_i = \frac{1}{n}$,  optimization of the particles $\{x_i\}_{i=1}^n$ can be performed using gradient descent or any other suitable numerical optimization method, yielding new families of particle-based algorithms.
On the other hand, the Stein dynamics discussed in \Cref{chap: Stein dynamics} can be used to perform a version of gradient descent on the Kullback--Leibler divergence, giving rise to an algorithm known as \ac{svgd}.
Fixing the particles $\{x_i\}_{i=1}^n$ and instead optimizing the weights $\{w_i\}_{i=1}^n$ yields algorithms that are similar in spirit to importance sampling, and can correct for bias if the $\{x_i\}_{i=1}^n$ arose in such a manner other than being sampled from $P$.  
Moreover, imposing sparsity on the weights $\{w_i\}$ yields \emph{thinning} algorithms that select a small subset of $\{x_i\}_{i=1}^n$ to achieve accurate approximation of $P$. 
Finally, given a specific function $f : \mathcal{X} \rightarrow \mathbb{R}$ whose expectation with respect to $P$ is of interest, one can deduce optimal values for the weights $w_i$ appearing in a cubature approximation $\sum_{i=1}^n w_i f(x_i)$, yielding an effective variance reduction technique in situations where the $x_i$ are randomly sampled.

\paragraph{Training Generative Models} 
(\Cref{sec: param est})
Given a dataset $\{x_i\}_{i=1}^n$, one can seek a suitable generative model from a collection $\{P_\theta\}_{\theta \in \Theta}$ by searching for appropriate values for the parameter $\theta$ across an index set $\Theta$.
For tractable generative models, standard statistical techniques such as maximum likelihood estimation can be used.
However, there are many important examples of generative models that are not tractable; for example, $P_\theta$ may be specified via a density function $p_\theta(x) \propto \exp(-V_\theta(x))$ up to an intractable normalization constant.
Stein discrepancy and Stein dynamics provide a variety of useful alternative parameter estimation methods in this context.

\paragraph{Gradient Estimation}
(\Cref{sec: gradient est})
A common technical challenge encountered in machine learning is that gradients $\mathbb{E}_{X \sim P_\theta}[f(X)]$ with respect to the parameters $\theta$ of a distribution $P_\theta$ are unavailable analytically and must be approximated. 
This challenge arises in many learning problems, including training latent variable models for variational inference and reinforcement learning with policy gradients. 
Monte Carlo methods can often be used, but their associated errors can be substantial, negatively affecting performance in the downstream machine learning task. 
Stein's method has given rise to several variance reduction strategies that can be employed in this context.

\section{Measuring Approximation Quality}
\label{sec:measure} 

The first application that we consider in detail is the problem of measuring the quality of an empirical approximation to an unnormalized distributional target.
To be precise, suppose that $P$ is a distribution of interest, defined on a probability space $(\mathcal{X},\mathcal{S}_{\mathcal{X}},\lambda_{\mathcal{X}})$, for which a p.d.f. $p(x)$ with respect to $\lambda_{\mathcal{X}}$ is available up to an intractable normalization constant:  i.e.
\begin{align*}
    p(x) \propto \frac{\tilde{p}(x)}{Z}
\end{align*}
where $\tilde{p}(x)$ is explicitly available but the normalization constant $Z = \int \tilde{p}(x) \; \mathrm{d}\lambda_{\mathcal{X}}(x)$ is an intractable integral.
In what follows we consider two distinct scenarios (quantifying the performance of sampling methods in \Cref{subsec: app to Bayes}, and goodness-of-fit testing in \Cref{sec: GoF}), where in each case the task is to determine the quality of an empirical approximation $Q_n = \frac{1}{n} \sum_{i=1}^n \delta_{x_i}$ to $P$, where $x_1,\dots,x_n \in \mathcal{X}$ are the states on which $Q_n$ is supported.

\subsection{Quantifying the Performance of Sampling Methods}
\label{subsec: app to Bayes}

The problem of quantifying the performance of sampling methods commonly arises in Bayesian statistics, where $P$ is a posterior distribution whose density $p(x) \propto \pi(x) \mathcal{L}(x)$ is available in unnormalized form as the product of the prior $\pi(x)$ and the likelihood $\mathcal{L}(x)$ (c.f. \Cref{sec: Stein as a tool}).
Here $\tilde{p}(x)$ can be identified with the product $\pi(x) \mathcal{L}(x)$ and the normalization constant
$$
Z = \int \pi(x) \mathcal{L}(x) \; \mathrm{d}\lambda_{\mathcal{X}}(x)
$$
is recognized as the marginal likelihood.
The task of measuring sample quality is encountered in the form of monitoring the convergence of an extensible\footnote{A sampling algorithm is \emph{extensible} if it can in principle produce an infinite sequence $(x_n)_{n \in \mathbb{N}}$ of states, such that in practice one can truncate this sequence and terminate the algorithm to obtain an approximation $Q_n$ to $P$ once some appropriate stopping criterion is met. } sampling algorithm, when comparing the approximations produced by different (exact and/or approximate) sampling algorithms, and in tuning the hyper-parameters of a particular sampling algorithm.
In each case, Stein discrepancies have been employed as quantitative criteria through which performance can be explicitly measured.

\begin{example}[Tuning the unadjusted Langevin algorithm]
The \ac{ula} for approximate sampling from a target $P$ is an Euler--Maruyama discretization $(X_t)_{t \in \mathbb{N}}$ of a $P$-invariant overdamped Langevin diffusion (c.f. \Cref{ex: overdamped Langevin}):
$$
X_{t+1} = X_t + \frac{\epsilon}{2} \nabla \log p(X_t) + \sqrt{\epsilon} Z_t
$$
where $Z_t$ is a standard normal random variable generated independently from $X_1,\dots,X_t$.
\begin{figure}[t!]
\includegraphics[width=\textwidth]{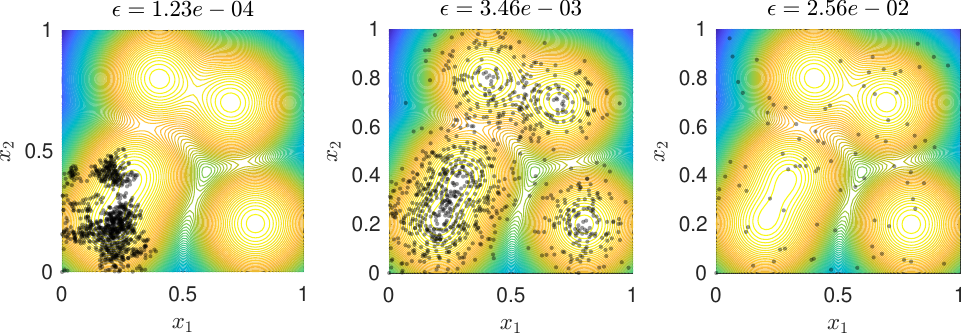}
\caption{Samples generated using the unadjusted Langevin algorithm (ULA).
Here the target $P$ is a Gaussian mixture model with density $p$ and $\epsilon$ denotes the step size parameter of ULA.
Contours of $\log p$ are depicted.}
\label{fig: Example_6_1_Fig_1}
\end{figure}
Here $\epsilon > 0$ is a \emph{step size} parameter that must be selected.
Suppose that we have a computational budget which permits computing $(X_t)_{t=1}^n$ with $n = 1,000$; we wish to select $\epsilon$ for which the empirical distribution $Q_n = \frac{1}{n} \sum_{t=1}^n \delta_{X_t}$ is likely to provide the best approximation to $P$.
To be precise, we suppose that we are interested in approximating the mean and variance of each component of the distribution $P$ in \Cref{fig: Example_6_1_Fig_1} using empirical averages from \ac{ula}.
Taking $\epsilon$ too small prevents the stochastic process from effectively exploring the high-probability regions of $P$ (\Cref{fig: Example_6_1_Fig_1}; left panel).
On the other hand, taking $\epsilon$ too big introduces bias into the distribution of the $X_t$ due to the first order discretization of the \ac{sde}, and can ultimately cause explosive behavior in \ac{ula} (\Cref{fig: Example_6_1_Fig_1}; right panel).
Unfortunately, an appropriate value of $\epsilon$ (\Cref{fig: Example_6_1_Fig_1}; middle panel) will be unknown in general due to the intractable nature of the target.
A solution is provided by Stein discrepancy, which enables the quality of the approximation $Q_n$ to be explicitly measured, i.e. $\mathcal{S}_P(Q_n)$.
Indeed, the mean square error associated with the empirical estimates of the means and variances of $P$ (which cannot be computed in general) is minimized around $\epsilon \approx 10^{-2}$ (\Cref{fig: Example_6_1_Fig_2}; left panel), while the \acf{ksd} (which can be computed) is also minimized around $\epsilon \approx 10^{-2}$ (\Cref{fig: Example_6_1_Fig_2}; right panel).
\begin{figure}[t!]
\includegraphics[width=\textwidth]{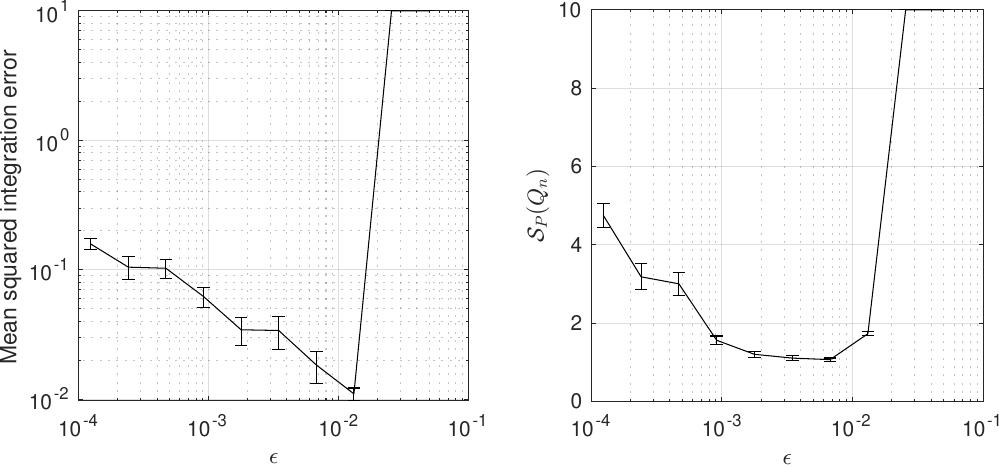}
\caption{Tuning the unadjusted Langevin algorithm (ULA).
The left panel presents the mean square error associated with the empirical estimates for the means and variances of $P$ (which cannot be computed in general), while the right panel presents the \acf{ksd} (which can be computed).
Here we used the inverse multi-quadric kernel (\Cref{ex: IMQ kernel}) with exponent $\beta = 1$ and bandwidth $\ell = 0.01$.
The experiment was repeated 10 times and means and standard errors are reported.}
\label{fig: Example_6_1_Fig_2}
\end{figure}
\end{example}

A related setting concerns so-called \emph{intractable} probabilistic models, popular in statistical physics, which are specified in terms of relative probabilities $p(x) / p(y)$ to circumvent a challenging normalization constant.
Since here exact simulation from $P$ is often infeasible, a range of sampling algorithms have been developed \citep[see e.g.][]{newman1999monte} and their assessment can proceed using Stein discrepancy as just described.

\subsection{Goodness-of-Fit Testing}
\label{sec: GoF}

The problem of measuring sample quality is also encountered in goodness-of-fit testing for statistical modes specified up to an intractable normalization constant.
That is, given a dataset $\{x_i\}_{i=1}^n$ and a probability distribution $P$ with p.d.f. of the form $p(x) \propto \exp(-V(x))$, where the potential $V(x)$ is explicitly provided, we seek to test the null hypothesis that $x_i \stackrel{\mathrm{iid}}{\sim} P$.
Any Stein discrepancy that separates distributions, meaning that $\mathcal{S}_P(Q) = 0$ if and only if $Q = P$, provides a computable statistic $\mathcal{S}_P(Q_n)$ on which to conduct such a test.
If the null hypothesis holds then we might expect $\mathcal{S}_P(Q_n) \rightarrow 0$ as $n \rightarrow \infty$ and we would therefore want to reject the null hypothesis if $\mathcal{S}_P(Q_n)$ is not small.
In practice, the sampling distribution of the Stein discrepancy $\mathcal{S}_P(Q_n)$ under the null hypothesis is usually unknown, but can be approximated using a bootstrap method.
The main considerations in selecting a Stein discrepancy here are that it can be rapidly computed (to facilitate the bootstrap resampling step) and that, in the case of the null hypothesis being incorrect, it has sufficient power to distinguish between $P$ and the true mechanism that gives rise to the dataset.

For ease of computability, we employ a \acf{ksd} (\Cref{{sec: ksds}}) with Stein kernel $k_p : \mathcal{X} \times \mathcal{X} \rightarrow \mathbb{R}$.
Assuming that $\int k_p(x,x) \mathrm{d}P(x) < \infty$, then under the null hypothesis
\begin{align*}
\mathbb{E}\left[ \mathcal{S}_P(Q_n)^2 \right] & = \mathbb{E}\left[ \frac{1}{n^2} \sum_{i=1}^n \sum_{j=1}^n k_p(x_i,x_j) \right]  \\
& = \mathbb{E}\left[ \frac{1}{n^2} \sum_{i=1}^n k_p(x_i,x_i) + \frac{1}{n^2} \sum_{i = 1}^n \sum_{j \neq i} k_p(x_i,x_j) \right] \\
& = \frac{1}{n} \int k_p(x,x) \mathrm{d}P(x) \rightarrow 0,
\end{align*}
where the last line follows since $x_i \sim P$ are independent under the null and $\int k_p(\cdot,x) \; \mathrm{d}P(x) = 0$ since $k_p$ is a Stein kernel.
Thus we may construct a goodness-of-fit test of size $\alpha$ using the statistic $\mathcal{S}_P(Q_n)$ and rejecting the null hypothesis when $\mathcal{S}_P(Q_n) > \tau$, where the threshold $\tau$ is selected such that $\mathbb{P}(\mathcal{S}_P(Q_n) \leq \tau) = 1 - \alpha$.
Since one typically does not have the ability to simulate from $P$ (due to the intractable normalizing constant), numerically calculating the $1 - \alpha$ quantile of the sampling distribution of $\mathcal{S}_P(Q_n)$ under the null hypothesis is not straightforward.
An approach that has become popular in the literature \citep{liu2016kernelized,chwialkowski2016kernel} is to use the \emph{wild bootstrap} \citep{shao2010dependent,fromont2012kernels,leucht2013dependent}, which introduces additional independent Rademacher random variables $\{\epsilon_i\}_{i=1}^n$ (i.e. each $\epsilon_i$ is uniform on $\{-1,1\}$) and notes that the distribution of $n \hat{D}_P(Q_n)^2$ approaches that of $n\mathcal{S}_P(Q_n)^2$ under the null, where
\begin{align}
    \hat{D}_P(Q_n)^2 = \frac{1}{n^2} \sum_{i=1}^n \sum_{j=1}^n \epsilon_i \epsilon_j k_p(x_i,x_j)  \label{eq: wild}
\end{align}
and we consider the $n \rightarrow \infty$ limit.
Generating multiple instantiations of the Rademacher random variables and re-computing \eqref{eq: wild}, an empirical approximation to the sampling distribution of $\mathcal{S}_P(Q)$ under the null is obtained, from which an appropriate threshold $\tau$ can be extracted.

\begin{example}
Consider testing the goodness-of-fit of a Gauss--Bernoulli restricted Boltzmann machine to a dataset $\{x_i\}_{i=1}^n \subset \mathbb{R}^d$.
This model has latent variables $h \in \{-1,1\}^{d'}$ and joint density of the form
\begin{align*}
    p(x,h) = \frac{1}{Z} \exp\left( \frac{1}{2} x^\top B h + b^\top x + c^\top h + \frac{1}{2} x^\top x \right)
\end{align*}
where $Z$ is the appropriate normalizing constant.
The marginal distribution $P$ for the observable $x$ has density
\begin{align*}
    p(x) = \sum_{h \in \{-1,1\}^{d'}} p(x,h) ,
\end{align*}
but this can be intractable due to the dependence on $Z$.
On the other hand, the score function admits a closed form
\begin{align*}
    \nabla \log p(x) = b - x + \frac{1}{2} B \tanh\left( \frac{1}{2} B^\top x + c \right) ,
\end{align*}
so we can exploit Stein discrepancy to construct a goodness-of-fit test.

For illustration we follow the setting of \citet{liu2016kernelized}, who took $d = 50$, $d'=10$, $n=100$, sampled entries of $b$ and $c$ from a standard Gaussian, and sampled the entries of $B$ uniformly from $\{-1,1\}$.
Our experiments used the inverse multi quadric kernel $k$ (\Cref{ex: IMQ kernel}) with length scale $\ell > 0$ to construct the Stein kernel $k_P$.
Data were generated either from the true model, or from a perturbation of the true model where Gaussian noise of variance $\sigma^2$ was added to the entries of $B$.
\Cref{fig: gof1} shows the null distribution of the \ac{ksd} test statistic as approximated using the wild bootstrap, together with the actual value of the \ac{ksd} test statistic, and the rejection threshold $\tau$ corresponding to $\alpha = 0.05$.
In the case $\sigma = 0$, where the model is correct, the actual \ac{ksd} value falls into the central region of the null and the test does not reject, while in the case $\sigma = 0.1$ the actual \ac{ksd} falls far into the tail of the null and the null hypothesis is rejected.

\begin{figure}[t!]
    \includegraphics[width=\textwidth]{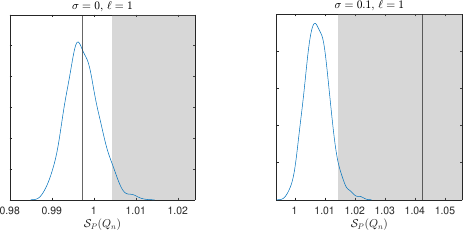}
    \caption{Goodness-of-fit testing with \ac{ksd}.
    Here we plot the null distribution of the \ac{ksd} test statistic as approximated using the wild bootstrap (blue), the actual value of the \ac{ksd} test statistic (black), and the values for which the null would be rejected (shaded).
    The amount of model misspecification is denoted with $\sigma$, so that when $\sigma = 0$ the model is correct.
    The inverse multi-quadric kernel with length scale $\ell$ was used.}
    \label{fig: gof1}
\end{figure}

A natural question is how to select a Stein kernel $k_p$ so that the power of the test (i.e. the probability of detecting departures from the null) is maximized.
For our simple setting, \Cref{fig: gof2} plots the test power for three different kernel length scale $\ell$, in each case as a function of the amount of model misspecification $\sigma$.
Interestingly, it seems that the power of this test is almost independent of $\ell$; all choices perform comparably well.

\begin{figure}[t!]
    \centering
    \includegraphics[width=0.5\textwidth]{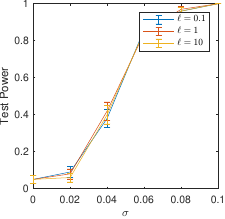}
    \caption{Goodness-of-fit testing with \ac{ksd}.
    Here we plot the test power (i.e. the probability of rejecting the null) as a function of both the amount of model misspecification $\sigma$ and the kernel length scale $\ell$.
    Means and standard errors over 100 experiments are reported.}
    \label{fig: gof2}
\end{figure}

\end{example}

In general, selecting an appropriate Stein kernel might involve techniques such as data-splitting and a tractable approximation to the power of the test, suitable for being optimized over a training split of the dataset.
As a starting point, the reader is referred to \citet{jitkrittum2017linear}, where an explicit approximation to the test power is presented.
Several authors have considered additional strategies to increase test power, such as aggregating several tests \citep{schrab2022ksd}, carefully selecting the Stein operator \citep{liu2023using}, and spectral regularisation \citep{hagrass2024minimax}.
At the same time, the computational cost of testing is being reduced \citep{jitkrittum2017linear,huggins2018random}, and additional functionalities such as robust testing \citep{liu2024robustness}, online testing \citep{martinez2024sequential}, and relative goodness-of-fit testing for  latent variable models  \citep{kanagawa2023kernel} are being developed.

\section{Algorithms for Particle-Based Approximation} \label{sec:improving} 

In the previous section we assumed the role of a passive observer, measuring the quality of a sample that has already been provided.
Now our attention turns to the question of how to actively \emph{construct} discrete approximations to a distributional target $P$.
Again, we are motivated by applications such as those arising in Bayesian statistics, where one has access to $P$ only through a density function that is unnormalized.
For demonstration purposes we will repeatedly consider the two-dimensional \emph{Rosenbrock} target
\begin{align}
    p(x,y) \propto \exp\left(-x^2 - 3(y-x^2)^2 \right) .  \label{eq: Rosenbrock}
\end{align}
First, in \Cref{subsec: sequential,subsec: batch} we present a selection of algorithms that aim to select states $\{x_i\}_{i=1}^n$ such that the associated (uniformly-weighted) empirical measure is an accurate approximation to the target.
In \Cref{subsec: stein importance sampling} we consider the task of assigning weights to a given sample, to produce a weighted empirical measure that may represent a more faithful approximation to the target.
This is complimented in \Cref{subsec: sparse approximation}, where we consider sparse approximation schemes that can facilitate both compression and improvement of sample quality at a reduced computational cost.
The situation is then specialized to that of approximating a single expected quantity of interest in \Cref{subsec: control variates}.

\subsection{Sequential Algorithms}
\label{subsec: sequential}

First we consider sequential, or \emph{extensible} algorithms, meaning that in principle an infinite sequence $(x_i)_{i \in \mathbb{N}}$ is produced.
In practice such algorithms are terminated after a finite number $n$ of iterations, and the first $n$ particles $\{x_i\}_{i=1}^n$ are taken to form an empirical approximation $Q_n = \frac{1}{n} \sum_{i=1}^n \delta_{x_i}$ to the distributional target $P$.
The main advantage of extensible algorithms is that they may be run for as long as required until a sufficiently accurate approximation to $P$ is obtained.

\paragraph{Stein Points}

A natural algorithm to consider is sequential greedy minimization of Stein discrepancy, which selects the $n$th particle $x_n$ in a manner that depends on the previously selected particles $x_1, \dots , x_{n-1}$ according to
\begin{align}
x_n \in \argmin_{x \in \mathcal{X}} \; \mathcal{S}_P\left( \frac{1}{n} \delta_x + \frac{1}{n} \sum_{i=1}^{n-1} \delta_{x_i} \right)   \label{eq: Stein point update}
\end{align}
where as usual $P$ is the distributional target, defined on $\mathcal{X}$.
The particles $(x_i)_{i \in \mathbb{N}}$ selected in this manner are termed \emph{Stein Points}.
The optimization problem in \eqref{eq: Stein point update} can in practice rarely be analytically solved, and numerical methods are required.
\citet{chen2018stein} analyzed the accuracy of numerical optimization that would be required at each iteration to ensure convergence $\mathcal{S}_P(Q_n) \rightarrow 0$ in the case of the kernel Stein discrepancy.
For Stein discrepancies that control convergence, we can then conclude that $Q_n$ converges to $P$.
Unfortunately the use of generic numerical optimization techniques introduces a curse of dimension in $d = \mathrm{dim}(\mathcal{X})$.
To alleviate the curse of dimension, \citet{chen2019stein} proposed \emph{Stein Point MCMC} which instead solves 
\begin{align}
x_n \in \argmin_{x \in \{y_1^{(n)} , \dots , y_{m_n}^{(n)}\}} \; \mathcal{S}_P\left( \frac{1}{n} \delta_x + \frac{1}{n} \sum_{i=1}^{n-1} \delta_{x_i} \right)   \label{eq: Stein point MCMC}
\end{align}
where $(y_i^{(n)})_{i = 1}^{m_n}$ is a $P$-invariant Markov chain of length $m_n$,  initialized at $y_1^{(n)} \in \mathcal{X}$.
See the left panel of \Cref{fig: stein points}.
Provided that the mixing of the Markov chain is sufficiently rapid and $m_n \rightarrow \infty$ as $n \rightarrow \infty$, the consistency of \eqref{eq: Stein point MCMC} was established under quite general conditions.
For example, it is possible to let the initial state $y_1^{(n)}$ depend on the history of selected points $x_1, \dots , x_{n-1}$, and a performant choice is the so-called \emph{most influential} point $x_i$ for which the Stein discrepancy with $x_i$ removed, i.e.
\begin{align*}
    \mathcal{S}_P\left( \frac{1}{n-2} \sum_{j \in \{1,\dots,n-1\} \setminus \{i\}} \delta_{x_j} \right) ,
\end{align*}
is maximized.
The intuition for this choice is that $x_i$ is critical to the approximation quality of $Q_{n-1}$ and therefore adding another state $x_n$ in a neighborhood of $x_i$ may be beneficial.

\begin{figure}[t!]
    \centering
    \includegraphics[width=\textwidth]{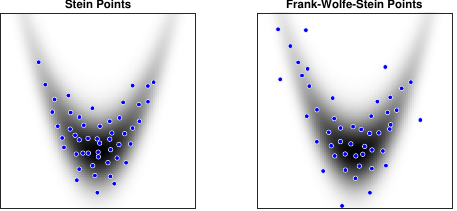}
    \caption{Stein points (left) and Frank--Wolfe--Stein points (right) were used to approximate the Rosenbrock target \eqref{eq: Rosenbrock} (black shaded). }
    \label{fig: stein points}
\end{figure}

\paragraph{Frank--Wolfe--Stein Points}

Consider now the Stein Points algorithm in the specific case of the \ac{ksd}, so that the update \eqref{eq: Stein point update} can be explicitly written in terms of the Stein reproducing kernel $k_p$ (e.g. as in \eqref{eq:stein-kernel} for the Langevin KSD) as
\begin{align}
    x_n \in \argmin_{x \in \mathcal{X}} \; \frac{k_p(x,x)}{2} + \sum_{i=1}^n k_p(x_i,x)  \label{eq: explicit stein points}
\end{align}
whose evaluation cost is seen to be $O(n)$.
The first term in \eqref{eq: explicit stein points} can be interpreted as a regulariser, and indeed one can also consider solving an unregularised version of this problem, to obtain a different but related sequence of points
\begin{align}
    \tilde{x}_n \in \argmin_{x \in \mathcal{X}} \; \sum_{i=1}^n k_p(\tilde{x}_i,x)  \label{eq: FW stein points} .
\end{align}
\citet{chen2018stein} noticed that sequence $(\tilde{x}_i)_{i=1}^n$ produced by solving the unregularised problem in \eqref{eq: FW stein points} is exactly the sequence one would obtain by applying the \emph{Frank--Wolfe} optimization method to Stein discrepancy minimization, as has been studied in the literature on kernel methods \citep[e.g.][]{chen2010super,bach2012equivalence} where the name \emph{herding} is also used.\footnote{Another related technique, \emph{MMD-FW}~\citep{futami2019bayesian}, uses Frank--Wolfe and the Langevin Stein operator to minimize an associated kernel MMD rather than the Stein discrepancy directly.}
See the right panel of \Cref{fig: stein points}.
This perspective suggests a variety of techniques from the Frank--Wolfe literature that could be used to improve approximation accuracy, such as additionally allowing the removal of sub-optimal states that were previously selected, or the use of weighted approximation (we will discuss weighted approximation in \Cref{subsec: stein importance sampling}); see \citet{bomze2024frank} for background. 

Perhaps the main consideration in choosing a Stein discrepancy in this context is that the convergence control properties of the Stein discrepancy should reflect the application in which the output of a sampling algorithm is to be used. 
For example, if samples are to be used to approximate posterior expectations, then a Stein discrepancy that controls the convergence of such expectations should be used (c.f. \Cref{subsec: conv contrl lksd}).]

\subsection{Batch Algorithms}
\label{subsec: batch}

An extensible algorithm cannot produce highly accurate approximations for all the values of $n$ at which it can be terminated \citep{owen2016constraint}.
This can be seen intuitively through \emph{symmetry-breaking}; suppose one wished to approximate a Gaussian $P = N(0,1)$ using $n$ particles.
For $n = 1$, it is natural to select $x_1 = 0$ at the central point, but then we are immediately forced to make a difficult choice $x_2$; either we also select $x_2 = 0$, in which case we are not accurately capturing the spread of $P$, or we are forced to break the symmetry of our approximation (even though $P$ is symmetric).
Batch algorithms seek to avoid this predicament by specifying the number $n$ of particles at the outset.
This allows for improved approximation for a comparable number $n$ of particles compared to extensible algorithms, but means that if further accuracy is desired then (at least in principle) the algorithm must be re-run from scratch after $n$ is increased.
Two illustrative batch algorithms will now be discussed:

\paragraph{Stein Discrepancy Descent}

One natural construction of a batch algorithm starts with the so-called \emph{Wasserstein gradient flow} of the Stein discrepancy $\mathcal{S}_P(Q)$, which is in effect a form of continuous-time gradient descent with respect to the $Q$ argument, producing a measure-valued trajectory $(Q^{(t)})_{t \geq 0}$ where each $Q^{(t)}$ is a probability distribution on $\mathcal{X}$.
Intuitively, at least, the gradient flow $Q^{(t)}$ should converge to $P$ under appropriate regularity conditions as $t \rightarrow \infty$.
For numerical purposes, the measure $Q^{(t)}$ is approximated by a (uniformly-weighted) empirical distribution $Q_n^{(t)} = \frac{1}{n} \sum_{i=1}^n \delta_{x_i^{(t)}}$ whose support points $x_i^{(t)}$ are now $t$-dependent.
This approach was considered in \citet{korba2021kernel}, who derived the following numerical scheme based on gradient descent of the (squared) \ac{ksd}:
\begin{align*}
    x_i^{(t+1)} & = x_i^{(t)} -  \frac{\epsilon}{n^2} \sum_{j=1}^n \nabla_2 k_p(x_j^{(t)} , x_i^{(t)})
\end{align*}
where $\epsilon > 0$ is a \emph{learning rate} to be specified.
It is also possible to consider the use of higher-order numerical optimization techniques, such as L-BFGS; we refer the reader to \citet{korba2021kernel}.
Stein discrepancy descent can produce compact representations of the target $P$.
However, this comes at the expense of requiring second-order derivatives of the log-density of $P$.
An alternative strategy, which requires only first-order derivatives, is considered next.

\paragraph{Stein Variational Gradient Descent}

The application of gradient descent to Stein discrepancy led to second-order derivatives of the log-density of $P$ because the Stein discrepancy itself involved first-order derivatives of $P$.
One solution is to consider gradient descent on a different objective; in particular it is natural to consider the Kullback--Leibler divergence (\Cref{def: KL divergence}).

Let $\mathcal{X} = \mathbb{R}^d$ and consider a map $T : \mathbb{R}^d \rightarrow \mathbb{R}^d$ of the form $T(x) = x + \epsilon g(x)$ where $\epsilon \ll 1$ and $g : \mathbb{R}^d \rightarrow \mathbb{R}^d$.
From \Cref{thm: deriv KL} in \Cref{chap: Stein dynamics} we know that
\begin{align}
    - \left. \frac{\mathrm{d}}{\mathrm{d}\epsilon} \KL{T_\# Q}{P} \right|_{\epsilon = 0} = Q(\mathcal T_P g ) \label{equ:klstein00}
\end{align}
where $\mathcal{T}_P$ is the Langevin Stein operator (\Cref{def: Lang SO}), and we therefore assume that $g$ is regular enough to belong to the domain of this Stein operator (cf. \Cref{lem: Stein op int by parts}).
Given a set $\mathcal{G}$ of candidates for the function $g$, a direction of steepest descent corresponds to selecting 
\begin{align}
    g^* \in \argmax_{g \in \mathcal{G}} Q(\mathcal{T}_P g) . \label{eq: opt search dir}
\end{align}
A computationally attractive choice of set $\mathcal{G}$ is the unit ball of a vector-valued \ac{rkhs} $\mathcal{H}(K)$, for which the objective in \eqref{eq: opt search dir} is recognized as a kernel Stein discrepancy (\Cref{def:kernel_stein_discrepancy}).
Indeed, in this case
\begin{align}
    g_{P,Q}^*(y) = \int \mathcal{T}_P^{(1)} K(x,y) \; \mathrm{d}Q(x) \label{equ:ffs}
\end{align}
can be explicitly calculated.
These calculations suggest a practical algorithm, where $Q$ is replaced by a discrete distribution $Q_n^{(t)} = \frac{1}{n} \sum_{i=1}^n \delta_{x_i^{(t)}}$ and the locations of the support points $x_i^{(t)}$ are updated in a time $t$-dependent manner following the direction of steepest descent:
\begin{align}
    x_i^{(t+1)} & = x_i^{(t)} + \epsilon g_{P,Q_n^{(t)}}^*(x_i^{(t)})  \label{eq: SVGD update rule}
\end{align}
For the special case where the matrix-valued kernel $K$ takes the form $K(x,y) = k(x,y) I$, the explicit form of the direction of steepest descent is
\begin{align}
    g_{P,Q_n^{(t)}}^*(x_i^{(t)}) & = \frac{1}{n}\sum_{j=1}^n  (\nabla \log p)(x_j^{(t)}) k( x_j^{(t)}, x_i^{(t)}) + \nabla_1 k(x_j^{(t)}, x_i^{(t)})  \label{eq: steep descent}
\end{align}
and this algorithm is known as \acf{svgd} \citep{Liu2016}.
(The case of a general matrix-valued kernel is discussed in \citet{zhuo2018message,wang2018stein}.)

For any reasonable kernel, such as the Gaussian kernel with fixed length scale $\ell$, the sequence of discrete distributions $Q_n^{(t)}$ converges to an approximation of $P$, as illustrated in \Cref{fig: SVGD} (left). 
In fact, with the right choice of initialization, step size, and step count $t_n$, the SVGD approximation $Q_n^{(t_n)}$ is known to converge to $P$ in KSD \citep{gorham2020stochastic,shi2023finite} at a $O(1/\sqrt{n})$ rate \citep{balasubramanian2024improved}.
In practice, the kernel length scale $\ell$ is usually made $t$-dependent, and a default setting is to take $\ell$ equal to the median pairwise distances between the $\{x_i^{(t)}\}_{i=1}^n$; see \Cref{fig: SVGD} (right). 

\begin{figure}[t!]
    \centering
    \includegraphics[width=\textwidth]{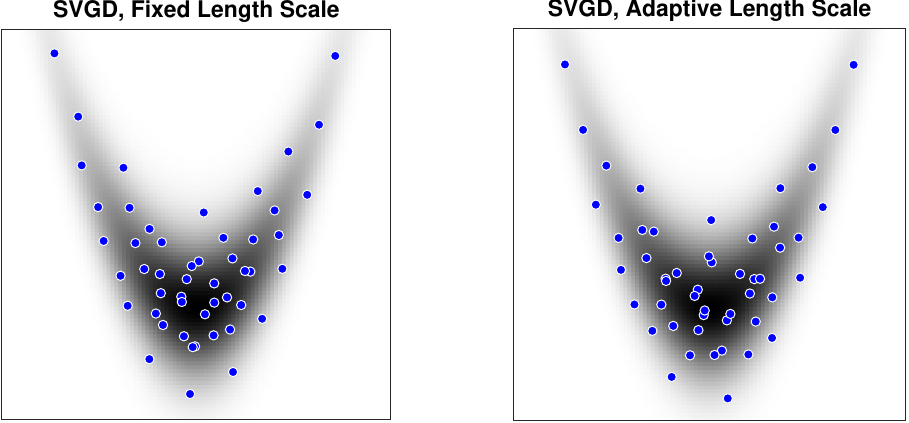}
    \caption{Stein variational gradient descent with fixed kernel length scale $\ell$ (left) and adaptively selected kernel length scale $\ell$ (right) were used to approximate the Rosenbrock target \eqref{eq: Rosenbrock} (black shaded).}
    \label{fig: SVGD}
\end{figure}

The two terms in 
\eqref{eq: steep descent} plays intuitive roles: 
The term with the gradient $(\nabla\log p)(x)$ 
pushes the particles towards 
the high probability regions of $P$,
while the term with $\nabla_1 k(x, x_i)$ 
serves as a repulsive force to enforce  
 a degree of diversity between the particles. 
If there is no repulsive force, 
or when there is only a single particle 
(and the kernel satisfies $\nabla_1 k(x,x') = 0$ for $x = x'$),  
the solution would collapse to the local optima of the density for $P$, reducing to the \ac{map} problem. 
Therefore,  by using different particle sizes, \ac{svgd} provides an interpolation between \ac{map} to a full particle-based approximation,  enabling an efficient trade-off between accuracy and computation cost.

\begin{remark}[Well-definedness of \ac{svgd} for empirical measures] 
\ac{svgd} can be viewed as iteratively updating the empirical particle measure $Q_n^{(t)}$ in order to minimize $\KL{Q_n^{(t)}}{P}$. A crucial but subtle point here is that $\KL{Q_n^{(t)}}{P}$ is technically infinite (or undefined) because the empirical measure $Q_n^{(t)}$ is not absolutely continuous with respect to $P$; more specifically, $$ \KL{Q}{P} = \E_{X\sim Q} [ \log q(X) - \log p(X)]$$
depends on the density $q$ of measure $Q$, and is not properly defined when $Q$ is an empirical measure.

A key observation is that \emph{differences} of KL divergences under an invertible transport
remain finite and are computable from expectations under $Q$, even when $Q$ is empirical.
Let $T:\R^d\to\R^d$ be a $C^1$ diffeomorphism. Then
\begin{align}
& \KL{T_\# Q}{P} - \KL{Q}{P}\notag\\ 
&~~~~~= \KL{Q}{T^{-1}_\# P} - \KL{Q}{P} \label{eq:pushforward_identity}\\
&~~~~~=  -\,\E_{X\sim Q}\!\left[\log p(T(X)) + \log\bigl|\det \nabla T(X)\bigr| - \log p(X)\right], \label{equ:diffkl} 
\end{align}
where $T^{-1}_\#P$ denotes the pushforward of $P$ by $T^{-1}$, whose density is
$
x \mapsto p(T(x))\,\bigl|\det \nabla T(x)\bigr|. 
$ 
We used in \eqref{eq:pushforward_identity} the invariance of KL divergence
under simultaneous invertible transformations, namely  
$\KL{T_\# Q}{T_\# P} = \KL{Q}{P}.$ 

Note that the right-hand side of \eqref{equ:diffkl} depends on $Q$ only through the expectation $\E_{X\sim Q}[\cdot]$,
so it remains well-defined for empirical $Q$.

Moreover, when $T(x)=x+\epsilon g(x)$ with small $\epsilon$, the first-order expansion of
\eqref{equ:diffkl} recovers the \ac{svgd} objective decrease formula (e.g. \eqref{equ:klstein00}),
whose right-hand side also depends on $Q$ only through $\E_{X\sim Q}$.
This is why one can derive the \ac{svgd} particle update by replacing $Q$ with the empirical
particle measure.
\end{remark}

\begin{remark}[Comparison to Stein Discrepancy Descent]
    Stein discrepancy descent and \ac{svgd} are closely related.
    To draw an analogy, if we are interested in minimizing a convex function $f(x)$ over a Euclidean space $x \in \mathbb{R}^d$, then we may attempt to minimize either $f(x)$ or $\|\nabla f(x)\|$; minimizing either objective leads to the same result.
    Accordingly, Stein discrepancy descent aims to minimize Stein discrepancy, which is akin to minimizing the gradient of the Kullback--Leibler divergence (\Cref{thm: deriv KL}), while in \ac{svgd} it is the Kullback--Leibler divergence itself which is minimized.
\end{remark}

Since its introduction in \citet{Liu2016}, \ac{svgd} has been extended and improved in various ways. 
A non-exclusive list of examples include: amortized \ac{svgd} \citep{feng2017learning,wang2016learning,liu2017learning} that learns neural samplers instead of particle approximation; gradient-free \ac{svgd} \citep{han2018stein} which requires no gradient information of the target distribution $P$; 
graphical \ac{svgd} \citep{wang2018stein, zhuo2018message} and matrix-kernel \ac{svgd} \citep{wang2019stein} which incorporate structured information in kernel to improve the performance in high dimensions; 
\ac{svgd} on Riemannian manifolds \citep{liu2017riemannian} and Newton variants of \ac{svgd} \citep{detommaso2018stein, chen2019projected};   
stochastic variants that speed up by sub-sampling the particles \citep{li2020stochastic, gorham2020stochastic}; 
quantile \ac{svgd} that minimize a quantile loss for increased robustness \citep{gong2019quantile}; 
nonlinear \ac{svgd} that minimizes more general nonlinear loss functions beyond KL divergence \citep{wang2019nonlinear}; 
Stein variational importance sampling \citep{han2017stein}; and a 
general particle optimization framework \citep{chen2018unified}.

\subsection{Stein Importance Sampling}
\label{subsec: stein importance sampling}

Up to this point we have considered approximation using particles that are uniformly weighted.
An appealing feature of Stein discrepancies is that they enable a tractable solution to the \emph{optimal weighted approximation} problem where, given \emph{fixed} states $x_1,\dots,x_n \in \mathcal{X}$ (which may have been obtained by any of the aforementioned algorithms in this Section), we seek a weighted measure
\begin{align*}
Q_n^\star = \sum_{i=1}^n w_i^\star \delta(x_i), \qquad w^\star \in \argmin_{w \geq 0, \; 1^\top w = 1} \mathcal{S}_P \left( \sum_{i=1}^n w_i \delta(x_i)  \right) 
\end{align*}
for which the Stein discrepancy between $Q_n^\star$ and $P$ is minimized.
Indeed, consider the \ac{ksd} from \Cref{sec: ksds} with Stein kernel $k_p : \mathcal{X} \times \mathcal{X} \rightarrow \mathbb{R}$.
Since 
\begin{align*}
    \mathcal{S}_P\left( \sum_{i=1}^n w_i \delta(x_i)  \right)^2 = \sum_{i=1}^n \sum_{j=1}^n w_i w_j k_p(x_i,x_j) ,
\end{align*}
the optimal weights $w^\star = (w_1^\star,\dots,w_n^\star)^\top$ are then the solution to the linearly-constrained quadratic program
\begin{align}
\argmin_{w \in \mathbb{R}^d} \; w^\top K_p w \qquad \text{s.t.} \qquad w \geq 0 , \; 1^\top w = 1  \label{eq: constrained program}
\end{align} 
where $[K_p]_{i,j} = k_p(x_i,x_j)$.
The constraints appearing in \eqref{eq: constrained program} ensure that $Q_n^\star$ is a probability distribution on $\mathcal{X}$.
This program does not admit a closed-form solution, but can be numerically solved.

This approach was first introduced in \citet{liu2017black} where it was called \emph{black-box importance sampling}, since in contrast to traditional importance sampling (which samples $x_i \sim \pi$ and assigns $\pi$-dependent weights $\tilde{w}_i \propto p(x_i) / \pi(x_i)$), knowledge of how the states $x_i$ were generated is not required in \eqref{eq: constrained program}.
It has seen success in Bayesian statistics, where $P$ is a posterior distribution that is sampled using a (possibly biased) \ac{mcmc} method; here Stein importance sampling can mitigate systematic biases in an input sample, for example, due to burn-in, tempering, approximate \ac{mcmc}, or other off-target sampling. 
Indeed, under appropriate conditions on the input sample, the Stein importance sampling approximation $Q_n^\star$ is consistent in the sense that  $\mathcal{S}_P(Q_n^\star) \rightarrow 0$ almost surely \citep[Thm.~3]{riabiz2020optimal} with $\mathcal{S}_P(Q_n^\star) = O(1/\sqrt{n})$ in probability \citep[Thm.~1]{hodgkinson2020reproducing,li2024debiased}. In fact, under more stringent conditions, Stein importance sampling converges to $P$ more quickly than an \iid sample from $P$:  $\E[\mathcal{S}_P(Q_n^\star)^2] = o(1/n)$ \citep[Thm.~2]{li2024debiased}.

The algorithm is illustrated in \Cref{fig: SIS} (left), where we first sample states $x_i$ independently from the standard bivariate normal distribution and then assign a weight to each state in order to approximate the Rosenbrock target in \eqref{eq: Rosenbrock}.
Compared to standard importance sampling, the weights assigned in Stein importance sampling are dependent; sampling $x_i$ and $x_j$ too close together due to chance is mitigated in Stein importance sampling, but not in standard importance sampling.
The right hand panel of \Cref{fig: SIS} illustrates the behavior of Stein importance sampling with the positivity constraint removed; it is interesting to see that negative weights are active in enabling improved approximation in the sense of \ac{ksd}.
This observation will become relevant in \Cref{subsec: control variates}.

\begin{figure}[t!]
    \centering
    \includegraphics[width=\textwidth]{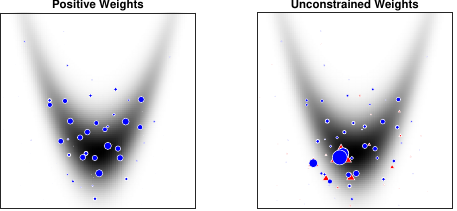}
    \caption{Stein importance sampling with positive weights (left) and unconstrained weights (right) were used to approximate the Rosenbrock target \eqref{eq: Rosenbrock} (black shaded). The size of each marker is proportional to the weight that the associated state is assigned; blue circles indicate a positive weight and red triangles indicate a negative weight.}
    \label{fig: SIS}
\end{figure}

Encouraged by the performance of Stein importance sampling in \Cref{fig: SIS}, we can then ask which states $\{x_1,\dots,x_n\}$ minimize the approximation error $\mathcal{S}_P(Q_n^\star)$.
This is a challenging and open mathematical problem.
Even if we suppose that the states are drawn independently from a distribution $\pi(x)$, then an optimal choice of $\pi$ is likely to be $n$-dependent.
As a heuristic, based on analysis of self-normalised importance sampling, \citet{wang2023stein} proposed to take $\pi(x) \propto p(x) \sqrt{k_p(x,x)}$.
For the Langevin \ac{ksd} with base kernel $k(x,y) = \phi(x-y)$, we have that $k_p(x,x) = - \Delta\phi(0) + \phi(0) \|\nabla \log p(x)\|^2$ which typically increases in the $\|x\| \rightarrow \infty$ tail.
This choice of $\pi$ therefore has the potential to introduce over-dispersion relative to $P$ and to place greater emphasis on the areas of $P$ with the highest magnitude of gradient.

\subsection{Sparse Approximation}
\label{subsec: sparse approximation}

Despite the elegance of Stein importance sampling, the computational burden of solving \eqref{eq: constrained program} can be considerable when the number $n$ of samples is large; roughly speaking the cost is $O(n^3)$.
Unfortunately $n$ is often large for algorithms based on laws of large numbers, such as \ac{mcmc}.
One can sub-sample a smaller number of states $m \ll n$ at regular intervals from the \ac{mcmc} output (this is usually called \emph{thinning}), but then the approximation quality of the sub-samples may be poor in general.
It is therefore natural to seek instead an \emph{optimal} subset of size $m$ from the \ac{mcmc} output.

This issue is closely related to that of optimal weighted approximation in \Cref{subsec: stein importance sampling}, in that we can seek weight vectors for which at most $m \ll n$ entries are non-zero.
This observation has motivated a range of sparse approximation techniques, which aim to iteratively construct an approximation of the form $Q_{n,m} = \frac{1}{m} \sum_{i=1}^m  \delta(y_i)$, where each $y_i$ is an element from a list of candidate states $\{x_1,\dots,x_n\}$, which could be \ac{mcmc} output.
The canonical example is the greedy algorithm which, at iteration $j$, selects a state
\begin{align}
y_j \in \argmin_{y \in \{x_1,\dots,x_n\} } \mathcal{S}_P\left( \frac{1}{j} \delta(y) + \frac{1}{j} \sum_{i=1}^{j-1} \delta(y_i) \right)  \label{eq: greedy alg}
\end{align}
for which the Stein discrepancy is minimized.
In the context of \ac{ksd}, the greedy algorithm \eqref{eq: greedy alg} is called \emph{Stein Thinning} and has computational cost $O(m n)$ if implemented with appropriate sufficient statistics \citep[App.~D.1]{li2024debiased}. 
Furthermore, under appropriate assumptions, the sparse approximation converges to the optimally weighted approximation $Q_n^\star$ as $m \rightarrow \infty$ with $n$ fixed, since
\begin{align}\label{stein_thin_guarantee}
\smash{\mathcal{S}_P(Q_{n,m})^2 \leq \mathcal{S}_P(Q_n^\star)}^2 + \frac{\max_{i}k_p(x_i,x_i)(1+\log(m))}{m}.
\end{align}
Hence, Stein Thinning performs both bias correction and compression, yielding a sparse output nearly as accurate as the best simplex-weighted input. 
See \citet{riabiz2020optimal} for full details and \citet{teymur2021optimal} where non-myopic and mini-batch extensions of the greedy algorithm are also considered.

Sparse approximation is particularly valuable in the context of predictive modeling, where one needs to integrate out posterior uncertainty regarding model parameters for the purpose of probabilistic prediction.
For sophisticated predictive models, such as those based on detailed mechanistic descriptions of physical processes, there can be a considerable computational cost associated with forward simulation from a model.
Then the ability to produce a sparse approximation of the posterior distribution $P$ translates into ability to control the overall simulation cost.

One downside of the guarantee \eqref{stein_thin_guarantee} is that it requires $m=\Omega(n)$ sample points to match the usual $O(1/\sqrt{n})$ convergence rate of $\mathcal{S}_P(Q_n^\star)$. Fortunately, a number of extensions have been developed to recover the standard $O(1/\sqrt{n})$ convergence rate with substantially fewer sample points. For example, the \emph{Stein Kernel Thinning} algorithm of \citet{li2024debiased} can deliver $\mathcal{S}_P(Q_{n,m}) = O(1/\sqrt{n})$ error with $m = \sqrt{n} \polylog(n)$ points, while, by using a weighted approximation, the \emph{Stein Recombination} and \emph{Stein Cholesky} of \citet{li2024debiased} can each match this error with only $m=\polylog(n)$ points.

\subsection{Integral Approximation}
\label{subsec: control variates}

Often one is interested in approximating a posterior $P$ only as a stepping-stone to calculating posterior expectations of interest.
Indeed, given a (possibly weighted) empirical approximation $Q_n = \sum_{i=1}^n w_i \delta_{x_i}$ to $P$, one can construct a corresponding approximation
\begin{align}
    \int f \; \mathrm{d}P \approx \sum_{i=1}^n w_i f(x_i) \label{eq: cubature}
\end{align}
to posterior expectations of interest.
However, in this two-step approach the weights $w_i$ and states $x_i$ have been selected in a manner that is agnostic to the function $f : \mathcal{X} \rightarrow \mathbb{R}$.
It is therefore natural to ask whether one can do better by allowing either the weights, states, or both to be $f$-dependent.

\subsubsection*{Integration via Solution of the Stein Equation}

One approach to improvement, outlined in the early work of \citet{stein1986approximate}, is to choose a Stein operator $\mathcal{T}_P$ and Stein set $\mathcal{G}$ and to attempt to solve the \emph{Stein equation}
\begin{align}
    f(x) & = c + (\mathcal{T}_P g)(x) \label{eq: Stein equation}
\end{align}
for a constant $c \in \mathbb{R}$, an element $g\in\mathcal{G}$, and $P$-almost-all $x \in \mathcal{X}$.
Indeed, from the definition of a Stein operator we would then have $c = \int f \; \mathrm{d}P$ being precisely the integral of interest.
Through the application of numerical methods to \eqref{eq: Stein equation}, one can arrive at $f$-dependent weights $w_i$ and states $x_i$ for approximation as in \eqref{eq: cubature}, as will now be demonstrated.

For the purposes of illustration, take $\mathcal{G}$ to be the unit ball in a vector-valued \ac{rkhs} $\mathcal{H}(K)$, and consider the minimal norm interpolant
\begin{align}
    \argmin_{c' \in \mathbb{R} , \; g' \in \mathcal{G}}  \|g'\|_{\mathcal{H}(K)} \quad \text{such that} \quad f(x_i) = c' + (\mathcal{T}_P g')(x_i), \quad \forall i \label{eq: min norm interp}
\end{align}
at fixed \emph{collocation nodes} $\{x_i\}_{i=1}^n$.
Assuming distinct collocation nodes and $K$ being positive definite, the value $c_n(f)$ of $c'$ that solves \eqref{eq: min norm interp} can be analytically calculated:
\begin{align}
    c_n(f) = \frac{1_n^\top K_p^{-1} f_n}{1_n^\top K_p^{-1} 1_n} \label{eq: CF estimator}
\end{align}
where $[K_p]_{i,j} = k_p(x_i,x_j)$ and $k_p$ is the Stein kernel \eqref{eq:general-stein-kernel}, $1_n$ denotes a column vector of ones of length $n$, and $[f_n]_i = f(x_i)$.

The \emph{control functionals} estimator \eqref{eq: CF estimator} corresponds to a weighted approximation of the form \eqref{eq: cubature} where the weights are $w \propto K_p^{-1} 1_n$, normalized such that $1_n^\top w = 1$. 
Importantly, this non-uniform weighting enables higher-fidelity estimation of the integral $\int f \; \mathrm{d}P$. For example, for suitable $f$, $\mathcal{T}_P$, and $\{x_i\}_{i=1}^n$ drawn independently from $P$, the estimator \eqref{eq: CF estimator} enjoys
\begin{align}
\E\left[\left(\int f \; \mathrm{d}P - c_n(f)\right)^2\right] = O(n^{-7/6})
\end{align} 
mean squared error \citep[Thm.~2]{oates2017control}, a strict improvement over the $\Theta(n^{-1})$ mean squared error of standard Monte Carlo integration. 
The weights $w$ can also be understood as a relaxed solution to the Stein importance sampling problem in \eqref{eq: constrained program} with the positivity constraint removed; cf. the right panel of \Cref{fig: SIS}.

The weights we have just derived are $f$-independent and the states were simply fixed; how can we take the specific function of interest $f$ into account?
The answer comes through $f$-dependent selection of the kernel $K$.
Numerous practical methods for kernel choice are available, such as cross-validation, but in an attempt to give insight we consider here the error bound
\begin{align*}
    \left| \int f \; \mathrm{d}P - c_n(f) \right| & \leq \underbrace{ \left( \inf_{\substack{f = c + \mathcal{T}_P g \\ c \in \mathbb{R}, \; g \in \mathcal{G}}} \|g\|_{\mathcal{H}(K)} \right) }_{\text{(i)}} \underbrace{ \mathcal{S}_P\left( \sum_{i=1}^n w_i \delta_{x_i} \right) }_{\text{(ii)}}
\end{align*}
that decomposes integration error into the product of (i) a term that depends on the true function $f$ and the kernel $K$, but not on the states $\{x_i\}_{i=1}^n$, and (ii) the Stein discrepancy associated to the empirical distribution $\sum_{i=1}^n w_i \delta_{x_i}$ as an approximation to $P$, which is $f$-independent.
Compared to Stein importance sampling, which considers only term (ii) for a fixed kernel, we can instead aim to pick $K$ to simultaneously balance the product of terms (i) and (ii) and thus minimize the overall error bound.
The $f$-dependence of term (i) then leads to $f$-dependent selection of the kernel $K$, and this can substantially improve the suitability of the weights that are used in approximating the integral of interest.

\begin{remark}[Stein Control Variates]
\label{rem: cvs}
    \citet{stein2004use} adopt a related perspective on solving the Stein equation that takes the form of a variance reduction technique for Monte Carlo.
    Indeed, the standard Monte Carlo estimator
    $$
        \frac{1}{m} \sum_{i=1}^m f(X_i), \qquad X_1,\dots,X_m \stackrel{\text{i.i.d.}}{\sim} P
    $$
    can be replaced by the alternative estimator
    \begin{align}
        \frac{1}{m} \sum_{i=1}^m f(X_i) - (\mathcal{T}_P g)(X_i) \label{eq: stein cv}
    \end{align}
    with any element $g$ from the Stein set $\mathcal{G}$.
    An optimal \citep[or \emph{zero variance}, as termed by][]{assaraf2003zero,mira2013zero} choice of $g$ would correspond to a solution of the Stein equation \eqref{eq: Stein equation}, while in practice one can seek $g$ to minimize an estimate for the variance of \eqref{eq: stein cv}.
    For applications involving Bayesian posteriors it is unusual to have access to exact samples from $P$, and therefore direct solution of the Stein equation might be preferable to trying to reduce the variance of an estimator based on \ac{mcmc}.
    On the other hand, Stein control variates have been successfully used for gradient estimation, as we will see in \Cref{subsec: discrete gradient}.
\end{remark}

Several works have contributed to the development of numerical methods for solution of the Stein equation, including 
\citet{oates2017control,oates2019convergence,barp2022riemann,si2020scalable,belomestny2020variance,south2022semi,south2022postprocessing,south2023regularized,sun2023vector,leluc2023speeding,belomestny2024theoretical}.

\subsubsection*{Collocation Nodes via \ac{svgd}} 
\label{sec:svgdmoment}

Given a (possibly $f$-dependent) kernel $K$, \ac{svgd} (\Cref{subsec: batch}) provides an attractive method for selecting collocation nodes $\{x_i\}_{i=1}^n$ for integral approximation; we now discuss the properties of these nodes assuming that they are a fixed point of \ac{svgd}. 
To simplify discussion we suppose that $K(x,y) = k(x,y) I$.
By the update rule \eqref{eq: SVGD update rule}, at a fixed point of \ac{svgd}
\begin{align}\label{equ:avgk}
\frac{1}{n}\sum_{j=1}^n 
\mathcal T_P^{(1)} k(\vx_j,  x_i) 
= 0, \qquad \forall i = 1, \ldots, n. 
\end{align}
On the other hand, by Stein's identity, $\E_{X \sim P} [\mathcal{T}_P^{(1)} k(X, x_i)] = 0$.
This implies that the particles \emph{exactly} estimate the expectation of functions of form $f(\vx) = \mathcal{T}_P^{(1)} k(\vx,\vx_i)$ for each $i = 1,\dots,n$. 
By the linearity of expectation, the same holds for all functions in the linear span of this set.
That is,
$$
\frac{1}{n} \sum_{i=1}^nf(\vx_i) 
=  P(f)
$$
holds for all $f \in \mathcal F^*$, the \emph{Stein matching set}
$$ 
\mathcal F^* \defeq  
\left \{ \sum_{i = 1}^n  a_i^\top  \mathcal{T}_P^{(1)} k(\cdot,x_i) + b \colon \forall    a_i\in \R^d,   ~ b \in \R \right \} .
$$
Note that exactness holds when the collocation nodes $\{x_i\}_{i=1}^n$ are a fixed point set of \ac{svgd}, but there need not be a unique fixed point set.
Extending this, the expectation of functions that are close to $\F^*$ can be estimated better than the ones far away from $\F^*$. 
Specifically, let $\F^*_{\vv\xi}$ be the $\vv\xi$ neighborhood of $\F^*$, that is, 
$\F_\vv\xi^* = \{f \colon \inf_{f'\in \F} \|f -f'\|_\infty \leq \vv\xi  \},$ 
then it is easily shown that 
$$
\left | \frac{1}{n}\sum_{i=1}^n f(\vx_i^* ) -   P(f) \right |  \leq 2 \vv\xi, ~~~~~~~\forall f \in \mathcal F^*_\vv\xi, 
$$
see \citet{liu2018stein}.

An interesting special case is when $P$ is multivariate Gaussian and $k$ is a polynomial kernel of degree $p$ (\Cref{ex: poly space kernel}), in which case the numerical approximation produced by \ac{svgd} exactly matches the first $p$ moments of $P$.
For example, if we use a linear kernel  $k(\vx,x') = 1 + x^\top x'$ and the 
number of independent particles is no smaller than $d$ (more precisely, when the rank of matrix $[x_i;1]_{i=1}^n$ is no smaller than $d+1$), then the matching set $\mathcal F^*$ contains all the linear and quadratic functions, and hence any fixed point of \ac{svgd} exactly estimates both the mean and the covariance matrix of $P$. 
Because many practical distributions are close to Gaussian, thanks to the central limit theorem and Bernstein–von Mises theorem, this observation can be leveraged to design better algorithms to perform particularly well for Gaussian like distributions. %

\section{Training Generative Models}
\label{sec: param est}

This section focuses on the challenges associated with training generative models.
An important class of \emph{energy-based} generative models are associated with an intractable normalizing constant, and for this class Stein discrepancy and Stein dynamics provide useful alternatives to maximum likelihood estimation, in the form of \emph{Stein contrastive divergence} (\Cref{sec:steincd}) and \emph{minimum Stein discrepancy estimation} (\Cref{subsec: MSDE}).
For generating ultra-realistic data, adversarial approaches are often used; here Stein dynamics offers a complementary approach to training \emph{generative adversarial networks} (\Cref{sec:steingan}).
Bayesian approaches can often be computationally challenging outside of simple conjugate settings; here Stein discrepancy can provide a useful objective for \emph{variational Bayesian} methods (\Cref{subsec: var with stein}), while in a related way Stein dynamics can enable more expressive modeling choices in the setting of \emph{variational autoencoders} (\Cref{subsec: VAEs}).

\subsection{Stein Contrastive Divergence}
\label{sec:steincd}

A generic approach to constructing flexible probabilistic models on $\mathbb{R}^d$ is to take a flexible function approximator, such as a deep neural network, and to apply a positivity-enforcing transform to it.
This leads to a so-called \emph{energy-based} model $P_\theta$ with density of the form
\begin{align}
    p_\theta(x) =  \frac{1}{Z_\theta} \exp( f_\theta(x) ) , ~~~  Z_\theta = \int \exp(f_\theta(x))\df x, 
    \label{eq: energy based model} 
\end{align}
where $f_\theta : \mathbb{R}^d \rightarrow \mathbb{R}$ is a flexible (negative) \emph{energy} function with parameters $\theta \in \Theta$.
Given a dataset $\{x_i\}_{i = 1}^n \subset \mathbb{R}^d$, an important task is to select an appropriate value for the parameter $\theta$ so that the energy-based model is capable of generating samples that are statistically similar to those in the dataset (the precise meaning of which will be application-dependent).
Unfortunately, the absence of a normalized density for $P_\theta$ provides a substantial barrier to the use of classical statistical procedures such as maximum likelihood, since the implicit normalization constant in \eqref{eq: energy based model} will be $\theta$-dependent in general.
Indeed, if we inspect the gradient of the log-likelihood
\begin{align*}
    \nabla_\theta \sum_{i=1}^n \log p_\theta(x_i) & = \left[ \sum_{i=1}^n f_\theta(x_i) \right] - \nabla_\theta \log Z_\theta 
\end{align*}
then the first term is computable but the second is
\begin{align*}
    \nabla_\theta \log Z_\theta & = \frac{1}{Z_\theta} \nabla_\theta \int \exp(f_\theta(x)) \mathrm{d}x \\
    & = \frac{1}{Z_\theta} \int \exp(f_\theta(x)) \nabla_\theta f_\theta(x) \mathrm{d}x 
    = \mathbb{E}_{X \sim P_\theta}[\nabla_\theta f_\theta(X)]
\end{align*}
which requires computation of an intractable expectation with respect to $P_\theta$.
Letting $P_n = \frac{1}{n} \sum_{i=1}^n \delta_{x_i}$ denote the empirical distribution of the dataset, one step of gradient ascent can be written as
$$
\vtheta \gets \vtheta + \epsilon \{ \E_{X \sim P_n} [\nabla_\vtheta f_\theta (X)]  -  \E_{X \sim P_\theta} [\nabla_\vtheta f_\theta (X) ] \}
$$
for some learning rate $\epsilon > 0$.
Intuitively, this update rule iteratively increases $f_\theta(\vx)$  on the observed data from $P_n$ (or the \emph{positive sample}), while decreasing $f_\theta(\vx)$ on the data drawn from the hypothesized model $P_\theta$ (\emph{a.k.a. negative sample}).  
When the algorithm converges, the expectations of $\nabla_\vtheta f_\theta (x)$ under the empirical distribution $P_n$ and the model $P_\theta$ should be equal.  

To proceed, practical algorithms need to approximate the expectation with respect to $P_\theta$, which could be done, for example, by using
\ac{mcmc} \citep[e.g.,][]{geyer1991markov}, contrastive divergence \citep{hinton2002training}, or variational inference \citep[e.g.,][]{wainwright2008graphical}.  
In particular, \ac{cd} stands out as a simple approach that approximates the model expectation term by running $m$ steps of a $P_\theta$-invariant Markov transition kernel starting from the data distribution $P_n$: 
\begin{align}\label{equ:cdgrad}
\vtheta \gets \vtheta + \epsilon \{ \underbrace{ \E_{X \sim P_n} [\nabla_\vtheta f_\theta (X)]  -  \E_{X \sim K_m^\theta(P_n , \cdot)} [\nabla_\vtheta f_\theta (X) ] }_{(\star)} \} ,
\end{align}
where $K_\theta$ is a $P_\theta$-invariant Markov transition kernel and $K_m^\theta(P_n,\cdot)$ denotes the distribution obtained by applying $m$ iterations of the Markov chain initialized at a random sample from $P_n$.
For an ergodic Markov chain, $K_m^\theta(P_n , \cdot)$ converges to $P_\theta$ as $m \rightarrow \infty$, and hence \ac{cd} recovers gradient ascent on the log-likelihood. 
In practice, however, it is advised to use a small $m$, to save computation. 
In this case, the increment $(\star)$ can be interpreted (up to a sign) as $\E_{X \sim P_n} [\mathcal{T}_{P_\theta} g(X)]$, where
\begin{align*}
    (\mathcal{T}_{P_\theta} g)(x) = \int g(y) K_m^\theta(x,\mathrm{d}y) - g(x)
\end{align*}
is a Stein operator constructed from the discrete time Markov chain (cf. \Cref{subsec: operators from Markov}) and $g(x) = \nabla_\vtheta f_\theta (x)$.

Instead of using a Markov chain, we may use the \ac{svgd} update as the perturbation in \ac{cd}. 
That is, we can perturb the observed data $\{x_i\}_{i=1}^n$ with a deterministic transform $ x \mapsto x + \epsilon g_\theta(x)$ as in \ac{svgd}, 
where the velocity field
\begin{align}
    g_\theta(x) = g_{P_\theta,P_n}^*(x) = \frac{1}{n} \sum_{i=1}^n \mathcal{T}_{P_\theta}^{(1)} K(x_i,x) \label{eq: CD optimal}
\end{align}
is associated with steepest descent, as in \eqref{equ:ffs}.
Note that the Stein operator in \Cref{eq: CD optimal} is the Langevin Stein operator used in \ac{svgd}, which provides a deterministic counterpart to the Markov transition in contrastive divergence and naturally connects to Stein discrepancy, analogously to how contrastive divergence connects to Fisher divergence.
The gradient update of $\theta$ is then
\begin{align}\label{equ:steincd}
\vtheta \gets \vtheta + \epsilon \{ \E_{X \sim P_n} [\nabla_\vtheta f_\theta (X)  -  \nabla_\vtheta f_\theta (X + \epsilon g_\theta(X)) ] \}, 
\end{align}
where $g_\theta(x)$ denotes the optimal velocity field from \ac{svgd}. 
\emph{Stein contrastive divergence}  \citep{liu2017learning} 
uses a symmetric variant of the perturbation  
\begin{align}\label{equ:steincd_adv}
\theta \gets \theta + \epsilon \{ \E_{X \sim P_n}[\nabla_\theta f_\theta(X - \epsilon g_\theta^*(X)) - \nabla_\theta f_\theta(X + \epsilon g_\theta^*(X))] \} ,
\end{align}
where the contrast is taken with respect to the perturbation $x \pm \epsilon g_\theta^*(x)$ from two opposite directions; the rationale for this is explained in \Cref{subsec: MSDE}.

\subsection{Minimum Stein Discrepancy Estimators}
\label{subsec: MSDE}

Another approach to parameter estimation in  energy-based models \eqref{eq: energy based model} is to 
minimize the Stein discrepancy between the empirical distribution of the data, $P_n$, and the parametric model, $P_\theta$. 
Recall from \Cref{sec: Fisher divergence} that the (classical) score matching approach of \citet{hyvarinen2005estimation} selects $\theta$ to minimize the Fisher divergence
\begin{align*}
    \mathrm{FD}(P_n||P_\theta) = [\text{constant in $\theta$}] + \frac{1}{n} \sum_{i=1}^n  2 (\Delta_x f_\theta)(x_i) + \|(\nabla_x f_\theta)(x_i)\|^2  ,
\end{align*}
which is obtained by plugging in the empirical distribution $P_n$ in place of $Q$ in \eqref{eq: DF2}.
The gradient descent of $\theta$ is then $\vtheta \gets \vtheta - \epsilon  g_{\mathrm{FD}},$ where 
\begin{align}
\label{eq: grad update FD} 
g_{\mathrm{FD}} \defeq  \E_{X \sim P_n} [\nabla_\vtheta (\Delta_x f_\theta) (X) + (\nabla_x f_\theta)(X) \cdot \nabla_\vtheta (\nabla_x f_\theta)(X) ]. 
\end{align}
As it turns out, this update rule is the limit of \ac{cd} where the perturbation is defined by a single step of Langevin dynamics, and where the step size $\epsilon$ approaches zero: 
\begin{align} \label{equ:gdcdfishereq}
\E_{X \sim P_n} [\nabla_\theta f_\theta(\hat X) - \nabla_\theta f_\theta(X)] 
= \epsilon g_{\mathrm{FD}}  +o(\epsilon),  
\end{align}
where $\hat X = X + {\epsilon} \nabla_x f_\theta(X) + \sqrt{2\epsilon} Z$ is the Langevin update and $Z$ is standard normal. 
This can be seen by a second-order Taylor expansion. Let 
\(
\delta = \epsilon \nabla_xf_\theta(X) + \sqrt{2\epsilon}\, Z
\) so that $\hat X = X + \delta$. 
Then
\begin{align*}
\nabla_\theta f_\theta(X+\delta)
&= \nabla_\theta f_\theta(X)
  + \nabla_x \nabla_\theta f_\theta(X)\,\delta
  + \tfrac12\, \delta^\top \nabla_x^2 \nabla_\theta f_\theta(X)\,\delta
  + o(\|\delta\|^2).
\end{align*}
Taking expectation over \(Z\) and using \(\E[Z]=0\) and \(\E[ZZ^\top]=I\), we obtain
\begin{align*}
& \hspace{-20pt} \E_Z[\nabla_\theta f_\theta(\hat X)]- \nabla_\theta f_\theta(X) \\
& \hspace{20pt} = \epsilon (\nabla_x f_\theta(X)\cdot \nabla_\theta\nabla_x f_\theta(X)
+ \,\nabla_\theta \Delta_x f_\theta(X))
+ o(\epsilon).
\end{align*}
Substituting this into \eqref{equ:gdcdfishereq}  and taking expectation over
\(X\sim P_n\) yields the claimed expansion.

Fisher divergence is just one instance of a Stein discrepancy; more generally a \emph{minimum Stein discrepancy estimator} is defined as
\begin{align*}
    \min_\theta \; \mathcal{S}_{P_\theta}(P_n) ,
\end{align*}
for \emph{some} Stein discrepancy $\mathcal{S}_{P_\theta}$  \citep{liu2017learning,liu2019estimation,barp2019minimum}.
Consider for instance the Langevin \ac{ksd} (\Cref{subsec: lksd in detail}), which has been proposed as a convenient way to construct estimators for energy-based models that are robust to outliers in the dataset through appropriate choices of the kernel \citep{barp2019minimum}. 
Gradient descent on the (squared) \ac{ksd} amounts to working with the gradient
\begin{align}
\nabla_{\theta} \mathrm{KSD}_{P_\theta}^2(P_n) 
=  2 \E_{X\sim P_n}[\nabla_\theta (\nabla_xf_\theta)(X) g_\theta(X)], \label{eq: grad KSD2}
\end{align} 
where $g_\theta$ was defined in \eqref{eq: CD optimal}.

Compared with the classical score matching update \eqref{eq: grad update FD}, which involves calculating a third-order derivative $\nabla_\theta\Delta_x f$, gradient descent on the (squared) \ac{ksd} involves only second-order derivatives, and is therefore easier to implement.

Further, performing a first-order Taylor expansion of \eqref{eq: grad KSD2} yields the Stein \ac{cd} update in \eqref{equ:steincd}:
\begin{align*}
& \hspace{-30pt} \mathbb{E}_{X \sim P_n}[ \nabla_\theta f_\theta(\hat X_{\mathrm{SVGD}}) - \nabla_\theta f_\theta(X) ] \\
& \hspace{50pt} =  \epsilon \mathbb{E}_{X \sim P_n}[  \nabla_\theta (\nabla_xf_\theta)(X) g_\theta(X) ] + o(\epsilon), 
\end{align*}
where $\hat X_{\mathrm{SVGD}} = X + \epsilon g_\theta(X)$. 
Therefore, Stein \ac{cd} provides a finite difference approximation of gradient descent on the \ac{ksd}, requiring only first order derivatives to be computed. 
This finite difference interpretation motivates the symmetric perturbation used in \eqref{equ:steincd_adv}, since the centered finite difference approximation provides closer alignment with the gradient of the \ac{ksd}. 

The asymptotic statistical properties of minimum Stein discrepancy estimators can be understood through the framework of $M$-estimators \citep{barp2019minimum,matsubara2022robust,oates2022minimum}.
The specific case of minimum Stein discrepancy estimators based on finite Stein sets was considered in \citet{ebner2025stein}. 
Minimum KSD estimators with diffusion Stein operators (see \cref{subsec: ksds in general}) have also been deployed and analyzed in the setting of causal model learning \citep{lorch2024causal,bleile2026efficient}.

\subsection{Stein GAN}
\label{sec:steingan}

Contrastive divergence has been widely used for learning energy-based models $P_\theta$ \eqref{eq: energy based model} , and can often train models with good test likelihood. 
However, models trained by \ac{cd} often struggle to generate truly realistic images.
This is because \ac{cd} learns the model parameters based on a local perturbation in the neighborhood of the observed data, 
and does not explicitly train the model to create images from scratch. 
This difficult was sidestepped by \acp{gan} \citep{goodfellow2014generative}, 
which explicitly train a \emph{generator} (a deep neural network that takes random noise and outputs
images) to match the observed data with the help of a \emph{discriminator} that acts adversarially, to distinguish the generated data from data in the training set. 
Motivated by \acp{gan}, we can modify the \ac{cd} idea to explicitly incorporate a generator into the training process. 

The idea is based on \emph{amortizing} the sampling process of $P_\theta$ with a generator, and using the simulated samples as the negative samples to update $\theta$. 
To be specific, let $G_\beta$ be the distribution obtained by passing random noise $\xi$ (e.g. standard Gaussian noise) through a neural network $N_\beta(\cdot)$ with parameters $\beta$.
We will first seek to adjust the parameters $\beta$ adaptively to make $G_\beta$ as close as possible to the model $P_\theta$, and then we will update $\theta$ using \ac{cd} with the generator $G_\beta$ in place of the model $P_\theta$, i.e.
\begin{align}\label{equ:thetaG}
\theta \gets \theta + \epsilon \{ \E_{X \sim P_n}[\nabla_\theta f_\theta(X)] - \E_{X \sim G_\beta}[\nabla_\theta f_\theta(X)] \} .
\end{align}
The key question here is how to  update $\beta$ so that the distribution $G_\beta$ closely approximates the model $P_\theta$. 
Given samples $\{\tilde{x}_i\}_{i=1}^m$ from the current generator $G_\beta$, i.e. generated as $\tilde{x}_i = N_\beta(\xi_i)$, applying the \ac{svgd} update leads to new samples $\{\tilde{x}_i'\}_{i=1}^m$, where $\tilde{x}_i' = \tilde{x}_i + \epsilon g_\theta(\tilde{x}_i)$, that should represent a more accurate approximation to $P_\theta$. 
The idea of amortized \ac{svgd}  \citep{feng2017learning} is to update $\beta$ by taking a gradient descent step on the least-squares objective
\begin{align*}
\sum_{i=1}^m \left\| N_\beta(\xi_i) - \tilde{x}_i' \right\|^2 ,
\end{align*}
so that the generator $G_\beta$ is encouraged to approximate the new samples $\{\tilde{x}_i\}_{i=1}^m$, which represent a better representation of $P_\theta$.
An explicit calculation leads to
\begin{align}
\beta \gets \beta  + \epsilon \sum_{i=1}^m (\nabla_\beta N_\beta)(\xi_i) g_\theta(N_\beta(\xi_i)) .  \label{eq: LS fit NN}
\end{align}
This is known as \emph{SteinGAN} \citep{liu2017learning}. 
Formally, this can be
viewed as approximately solving the following minimax objective function based on Kullback--Leibler divergence (\Cref{def: KL divergence}):   
\begin{align*}
\min_\theta \max_{\beta} \big\{ \KL{P_n}{P_\theta} - \KL{G_\beta}{P_\theta} \big \} 
\end{align*}
Here the energy model $P_\theta$, serving as a discriminator,  attempts to get closer to the observed data $P_n$, and keep away from the ``fake'' data distribution $G_\beta$, both in terms of  Kullback--Leibler divergence, while the generator $G_\beta$ attempts to get closer to the energy model $P_\theta$ using amortized \ac{svgd}. 

\emph{SteinGAN} can be viewed as a Kullback–Leibler divergence variant of the \ac{gan}-style adversarial game \citep{goodfellow2014generative}, and draws explicit connections to maximum likelihood training of energy-based models, where the intractable sampling (negative phase) is amortized via a neural generator trained to approximate \ac{svgd} dynamics. 
An example of the realistic images generated by SteinGAN is contained in \Cref{fig: SteinGAN}.

\begin{figure}[t!]
\centering
    \includegraphics[width=0.5\textwidth]{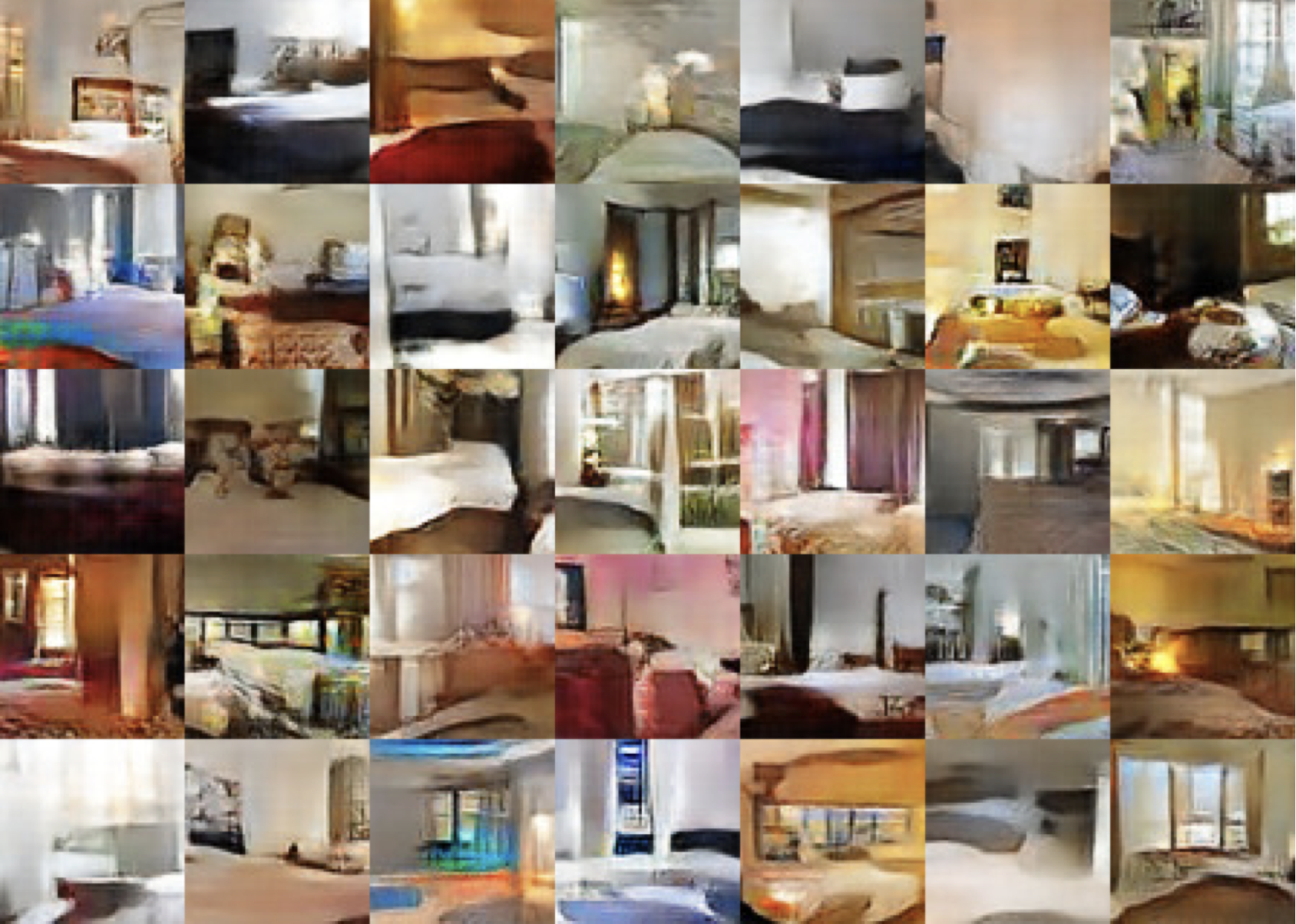}
    \caption{Images generated by SteinGAN trained on the LSUN dataset \citep{yu15lsun}, consisting of nearly 3M images of bedrooms. 
    Reproduced with permission from \citet{liu2017learning}.}
    \label{fig: SteinGAN}
\end{figure}

\subsection{Variational Methods for Posterior Approximation}
\label{subsec: var with stein}

A popular class of numerical methods for Bayesian analysis come under the umbrella of \emph{variational Bayes}; the idea, in brief, is that the posterior distribution is approximated using a generative model.
This can offer several advantages, most notably making it straightforward to employ the posterior as the prior for subsequent analyses, since its density is explicit.
The main technical issue is how one goes about approximating the posterior with a generative model.

Classical approaches to variational inference attempt to approximate the target posterior $P$ by selecting from a tractable family of distributions $\mathcal{Q}$ one for which the Kullback–Leibler divergence (\Cref{def: KL divergence})
\begin{align}
\argmin_{Q \in \mathcal Q} \; \KL{Q}{P},  \label{eq: var bayes}
\end{align}
is minimized.
Letting $p(x,y)$ denote the joint density of the unobserved $x$ and the observed data $y$, and similarly letting $p(x|y)$ denote the conditional and $p(y)$ denote the marginal, and $q(x)$ the density of $Q$, the Kullback--Leibler divergence is
\begin{align*}
\KL{Q}{P} & = \int q(x) \log \frac{q(x)}{p(x|y)} \; \mathrm{d}x \\
& = \int q(x) \log \frac{q(x)}{p(x,y)} \; \mathrm{d}x + \log p(y)
\end{align*}
so that minimization of the Kullback--Leibler divergence is equivalent to minimization of the \ac{elbo} 
\begin{align*}
    \mathcal{L}_Q(y) = \int q(x) \log \frac{p(x,y)}{q(x)} \; \mathrm{d}x .
\end{align*}
For certain combinations of prior, likelihood, and variational family $\mathcal{Q}$, the \ac{elbo} can be analytically computed and thus the optimization problem \eqref{eq: var bayes} can be approached as a numerical optimization task.
However, this restriction to ``conjugate'' combinations limits the extent to which the variational approximation can accurately reflect the posterior in general.
Beyond the conjugate setting, Monte Carlo methods can be used to approximate the \ac{elbo} \citep[and its gradient; see e.g.][]{ruiz2016generalized}, but these require explicit access to the density of $Q$ (and its gradient), inspiring a line of research into \emph{normalizing flows} \citep{rezende2015variational,kingma2016improved,dinh2016density}.

An alternative solution, which does not require access to the density of $Q$, is provided by Stein discrepancy.
That is, we can formulate
\begin{align}
    \argmin_{Q \in \mathcal Q} \; \mathcal{S}_P(Q) \label{eq: Stein VI}
\end{align}
and select a Stein discrepancy so that $\mathcal{S}_P(Q)$ can either be exactly computed or consistently approximated (e.g., from samples from $Q$).
This unlocks the possibility of solving \eqref{eq: Stein VI} as a numerical optimization task.
In particular, this removes the restriction of an explicit density for $Q$, and enables flexible approximations such as $Q_\theta = T_\#^\theta Q_0$ for a general neural network $T^\theta$, parametrized by $\theta$, and some reference distribution $Q_0$.
The latter construction was studied in combination with the Langevin \ac{ksd} in \citet{Fisher2021}, who found that in regular cases the approximations produced by minimizing \ac{ksd} were no less accurate than those produced by minimizing the Kullback--Leibler divergence; see \Cref{fig: KSDVI}.

\begin{figure}[t!]
    \centering
    \includegraphics[width=\textwidth]{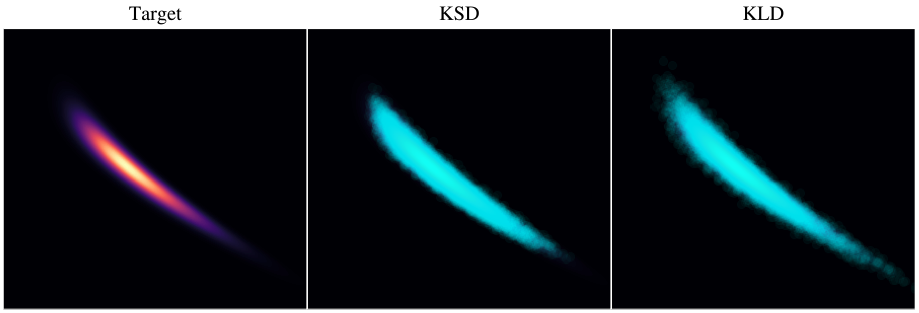}
    \caption{A comparison of variational approximations to a target $P$ (left) produced by minimizing \ac{ksd} (centre) and Kullback--Leibler divergence (right).
    The variational approximation was based on a block neural autoregressive flow and the target $P$ arose from a Bayesian analysis of a biochemical oxygen demand model.
    Reproduced with permission from \citet{Fisher2021}.
    }
    \label{fig: KSDVI}
\end{figure}

A related approach, called \emph{operator variational inference} in \citet{ranganath2016operator}, considers the same minimax problem in \eqref{eq: Stein VI}, i.e.
$$
\min_{\theta} \max_{g \in \mathcal{G}} \; \mathbb{E}_{X \sim Q_\theta}[ \mathcal{T}_P g(X) ]
$$
where $\mathcal{T}_P$ and $\mathcal{G}$ are the Stein operator and Stein set associated to the Stein discrepancy $\mathcal{S}_P$.
However, the authors instead proposed to parametrize $g$ using a neural network and then to alternate between taking a stochastic gradient step to update $\theta$, and a stochastic gradient step to update the parameters of this neural network $g$.
This enables greater flexibility in the Stein set $\mathcal{G}$, in principle enabling the differences between $Q_\theta$ and $P$ to be better detected, but comes at a cost of tuning a more involved numerical optimization method.

\subsection{Learning Variational Autoencoders}
\label{subsec: VAEs}

An elegant approach to constructing expressive generative models begins with a parametric generative model on an extended state space; let $p_\theta(x,z)$ be a parametrized probability density where $x \in \mathbb{R}^d$ are to be observed and $z \in \mathbb{R}^{d'}$ are latent.
Even if $p_\theta(x,z)$ is a relatively simple model, the marginal distribution with density
\begin{align}
    p_\theta(x) = \int p_\theta(x,z) \; \mathrm{d}z \label{eq: VAE marginal}
\end{align}
can capture rather complex dependencies among the variables that are observed.
The aim is thus to learn $\theta$, for example using maximum likelihood based on a dataset $\{x_i\}_{i=1}^n$.
However, this can be challenging because \eqref{eq: VAE marginal} does not possess a closed form in general.

To proceed, explicit approximations are required.
One solution is to train an \emph{encoder} network $q_\phi(z|x)$ to approximate the conditional $p_\theta(z|x)$.
This is the main idea behind \acp{vae}, which maximize an \ac{elbo}
\begin{align}
    \mathcal{L}_{\theta,\phi}(x) = \int q_\phi(z|x) \log \frac{p_\theta(x,z)}{q_\phi(z|x)} \; \mathrm{d}z \label{eq: VAE elbo}
\end{align}
to jointly train both the model and encoder parameters $\theta$ and $\phi$. 
Indeed, we can rewrite \eqref{eq: VAE elbo} as
\begin{align*}
    \mathcal{L}_{\theta,\phi}(x) = \log p_\theta(x) - \mathrm{KL}(q_\phi(\cdot | x) || p_\theta(\cdot | x) )
\end{align*}
which makes clear we are promoting large values of the likelihood $p_\theta(x)$ while also requiring the encoder $q_\phi(z|x)$ to be a good approximation to the conditional $p_\theta(z|x)$.
However, working with \eqref{eq: VAE elbo} usually requires a tractable density for $q_\phi(z | x)$, the same issue we encountered for variational Bayes in \Cref{subsec: var with stein}.
As a result, the expressiveness of the encoder network, and hence the potential of the \ac{vae} to uncover hidden structure, can be limited.

A potential solution is the Stein \ac{vae} \citep{pu2017stein,feng2017learning}.
The idea of the Stein \ac{vae} is that, instead of specifying an encoder network, we can use \ac{svgd} to represent the conditional distributions $p_\theta(z|x_i)$ implicitly via a set of particles $\{z_{i,j}\}_{j=1}^m$.
Compared with the standard VAE, which assumes $q{\phi}(\cdot\mid x)$ to be a Gaussian conditioned on $x$, the nonparametric particle approximation can capture richer posterior structures. 

The approach of \citet{feng2017learning} then updates the model parameters $\theta$ according to
\begin{align*}
    \theta \gets \theta + \epsilon \sum_{i=1}^n \sum_{j=1}^m\nabla_\theta \log p_\theta(x_i,z_{i,j}) ,
\end{align*}
which can be interpreted as gradient ascent on the joint likelihood where the pairs $(x_i,z_{i,j})$ are viewed as a pseudo-dataset.
By the Fisher identity
\begin{align*}
\nabla_\theta \log p_\theta(x)
= \mathbb{E}_{z \sim p_\theta(z \mid x)}
\big[ \nabla_\theta \log p_\theta(x,z) \big].
\end{align*}
Hence, if the particles \(\{z_{i,j}\}_{j=1}^m\) were exact samples from the posterior
\(p_\theta(z \mid x_i)\), the update above would recover (up to Monte Carlo error) the exact gradient of the marginal log-likelihood \(\sum_i \log p_\theta(x_i)\).
To speed up computation, a \emph{recognition} network $N_\beta(\cdot)$ can be trained to mimic the \ac{svgd} particle dynamics, i.e.
$$
z_{ij} = N_\beta(\xi_j, x_i), \qquad \forall i = 1,\dots,n,~~~ j = 1,\dots, m, 
$$
where $\xi_i$ are sampled from a noise distribution (e.g. standard Gaussian) and the parameters $\beta$ are trained based on least squares, implemented via gradient descent as in \eqref{eq: LS fit NN}.
This \emph{amortization} avoids the need to run \ac{svgd} separately for each entry $x_i$ in the dataset.

\section{Gradient Estimation}
\label{sec: gradient est}

The computational task of numerically computing a gradient 
\begin{align}\label{eq:gradient_estimation_objective}
\nabla_\phi \mathbb{E}_{X \sim Q_\phi}[f(X)]    
\end{align}
with respect to the parameters $\phi$ of a distribution $Q_\phi$ is encountered in many learning problems, including variational inference \citep[cf. \Cref{subsec: var with stein} and][]{paisley2012variational} and training variational autoencoders \citep[cf. \Cref{subsec: VAEs} and][]{Kingma2014}, where one seeks to perform gradient ascent on an \ac{elbo}, and in reinforcement learning when one seeks to improve a policy by following a policy gradient \citep[cf. \Cref{subsec: CVs for RL} and][]{williams1992simple}. 
The difficulty arises, at a fundamental level, when the expectation \cref{eq:gradient_estimation_objective} cannot be exactly computed and when evaluation of $f$ incurs a substantial computational cost.
As such, many authors have proposed numerical methods to approximate \eqref{eq:gradient_estimation_objective} using a small number $m$ of samples (often $m \in \{1,2,3\}$); we will see how Stein operators can play a useful role in this context, focusing on the case where $Q_\phi$ is discrete (\Cref{subsec: discrete gradient}) and the case of a \emph{policy gradient} as encountered in applications of reinforcement learning (\Cref{subsec: CVs for RL}).

\subsection{Gradient Estimation with Discrete Stein Operators}
\label{subsec: discrete gradient}

This section focuses on the case where $Q_\phi$ is a discrete distribution on a set $\mathcal{X}$, where the size of $\mathcal{X}$ renders exact calculation of \eqref{eq:gradient_estimation_objective} impractical.
A popular strategy in this instance is to rewrite the gradient as
\begin{align}
\grad_{\phi}\mathbb{E}_{X \sim Q_\phi}[f(X)]
    & = \nabla_\phi \sum_{x \in \mathcal{X}} f(x) q_\phi(x) \nonumber \\ 
    & = \sum_{x \in \mathcal{X}} f(x) \frac{\nabla_\phi q_\phi(x)}{q_\phi(x)} q_\phi(x) \nonumber \\ 
    & = \mathbb{E}_{X \sim Q_\phi}[f(X) \nabla_\phi \log q_\phi(X)] \label{eq:gradient_estimation_gradient}
\end{align}
where $q_\phi$ is the probability mass function for $Q_\phi$.
A direct Monte Carlo approximation to \eqref{eq:gradient_estimation_gradient}, 
\begin{align*}
    \frac{1}{m} \sum_{i=1}^m f(X_i) \nabla_\phi \log q_\phi(X_i) , \qquad X_1,\dots,X_m \stackrel{\text{i.i.d.}}{\sim} Q_\phi
\end{align*}
is an unbiased estimator of \eqref{eq:gradient_estimation_objective} but can have a high variance unless an extremely large number $m$ of Monte Carlo samples is used.
A simple trick to reduce the variance is called REINFORCE \citep{glynn1990likelihood,williams1992simple},
\begin{align}
    \frac{1}{m} \sum_{i=1}^m [f(X_i) - B_i] \nabla_\phi \log q_\phi(X_i)  \label{eq: reinforce}
\end{align}
which preserves unbiasedness as long as 
\begin{align}\label{eq:mean-zero-product}
\E[B_i\grad_{\phi} \log q_\phi(X_i)] = 0,
\end{align}
that is, as long as $B_i\grad_{\phi} \log q_\phi(X_i)$ is a \emph{control variate}, a random variable with known (in this case, zero) mean. 
Fortunately, the control variate condition \cref{eq:mean-zero-product} is straightforward to engineer. For example, $B_i^{\textup{ind}}\grad_{\phi} \log q_\phi(X_i)$ has zero mean whenever $B_i^{\textup{ind}}$ and $X_i$ are independent as
\begin{align*}
    \mathbb{E}[B_i^{\textup{ind}} \nabla_\phi \log q_\phi(X_i)] 
    & = \mathbb{E}[B_i^{\textup{ind}}] \sum_{x \in \mathcal{X}} \frac{\nabla_\phi q_\phi(X)}{q_\phi(X)} q_\phi(X) \\
    & = \mathbb{E}[B_i^{\textup{ind}}] \sum_{x \in \mathcal{X}} \nabla_\phi q_\phi(X) \\
    & = \mathbb{E}[B_i^{\textup{ind}}] \nabla_\phi \sum_{x \in \mathcal{X}} q_\phi(X) 
    = \mathbb{E}[B_i^{\textup{ind}}] \nabla_\phi 1 = 0 .
\end{align*}
Similarly, $B_i^{\textup{zero}}\grad_{\phi} \log q_\phi(X_i)$ is zero mean whenever  $\E[B_i^{\textup{zero}}\mid X_i] = 0$, as, by the tower property, 
\begin{align*}
    \mathbb{E}[B_i^{\textup{zero}} \nabla_\phi \log q_\phi(X_i)] 
    & =     \mathbb{E}[\E[B_i^{\textup{zero}}\mid X_i] \nabla_\phi \log q_\phi(X_i)] = 0.
\end{align*}
More generally, $B_i = B_i^{\textup{ind}}+B_i^{\textup{zero}}$ also satisfies the covariate condition \cref{eq:mean-zero-product}, and we will use the full flexibility of this decomposition in what follows.

This original REINFORCE algorithm set each $B_1,\dots,B_m$ equal to a constant, the value of which was chosen to minimize the variance of \eqref{eq: reinforce}.
For $m > 1$ a popular variant is obtained by replacing $B_i$ by the leave-one-out average of function values, resulting in the \textbf{R}EINFORCE \textbf{L}eave-\textbf{O}ne-\textbf{O}ut (RLOO)  estimator~\citep{salimans2014using,Kool2019Buy4R,richter2020},
\begin{align} \label{eq:rloo}
	\frac{1}{m}\sum_{i=1}^m \left(f(X_i) - \frac{1}{m-1}\sum_{j\neq i} f(X_j)\right) \nabla_\phi \log q_\phi (X_i).
\end{align}
The RLOO estimate is again, by construction, unbiased for the target gradient.
However, despite improving on the former estimators, the variance of RLOO can still be stubbornly high unless a large number $m$ of samples are used. 

Stein operators can serve as a useful tool to further reduce the variance of RLOO without sacrificing unbiasedness \citep{shi2022gradient}.
The idea is that we would like to design a better choice for $B_i$, ideally one for which $B_i \approx f(X_i)$, so that the variance of \eqref{eq: reinforce} in turn is small.
To this end, let $\mathcal{T}_{Q_\phi}$ be a Stein operator for $Q_\phi$; several possible discrete Stein operators were discussed in \Cref{sec: discrete operators}.
Then since $\E[(\mathcal{T}_{Q_\phi} g_j)(X_j)\mid X_i] = 0$ for $j\neq i$ and any $g_j$ in the domain of $\mathcal{T}_{Q_\phi}$, we can consider taking $B_i$ of the form
\begin{align}
\!\!\! B_i 
    =
B_i^{\textup{ind}} 
    +
B_i^{\textup{zero}}   
    =
\frac{1}{m-1}\sum_{j\neq i} f(X_j) + \frac{1}{m-1}\sum_{j\neq i} (\mathcal{T}_{Q_\phi} g_j)(X_j). \label{eq: rodeo b}
\end{align}
Since our goal is to approximate $f(X_i)$, 
a suitable choice for each $g_i$ could in principle be obtained by solving the Stein equation \eqref{eq: Stein equation}
\begin{align*}
    f(x) = \mathbb{E}_{X \sim Q_\phi}[f(X)] + (\mathcal{T}_{Q_\phi} g)(x) , \qquad \forall x \in \mathcal{X},
\end{align*}
as discussed in \Cref{subsec: control variates}, but for the settings that we have in mind the size of $\mathcal{X}$ would make direct solution of the Stein equation impractical.
As a practical way forward, \citet{shi2022gradient} recommended using the Gibbs Stein operator \eqref{eq: Gibbs stein operator} and to select $g_i$ of the form 
\begin{align}\label{eq:discrete_cv_form}
g_j(x) & = \frac{1}{m - 1}\sum_{k\neq j} \! N_{\beta}( f(X_k), \nabla f(X_k)^\top (x - X_k))
\end{align} 
where $N_{\beta}$ is a neural network with parameters $\beta$ that are to be optimized. 
Note that, even though each function $g_j$ depends on $X_i$, we still have $\E[(\mathcal{T}_{Q_\phi} g_j)(X_j)\mid X_i] = 0$, and hence the gradient estimator remains unbiased. 
As such, $\beta$ can be optimized via gradient descent on the sample variance of the gradient estimator, similarly to \citet{grathwohl2017backpropagation}.
The use of the Gibbs Stein estimator was motivated by numerical stability in settings where the components of $X$ take a small number of values (e.g. $\mathcal{X} = \{0,1\}^d$), while for settings where the components of $X$ take many different values the Barker Stein operator (a special case of the Zanella Stein operator; \Cref{def: Zanella SO}) was recommended in \citet{shi2022gradient}.
Further, in certain applications (such as variational autoencoders; \Cref{subsec: VAEs}) the evaluations of $\nabla f$ come at no additional cost \citep{titsias2022double}, meaning that \eqref{eq:discrete_cv_form} requires no more evaluations of $f$ than RLOO.

\begin{figure}[t!]
	\centering
    \includegraphics[width=\textwidth,clip,trim = 20.5cm 0cm 0cm 0cm]{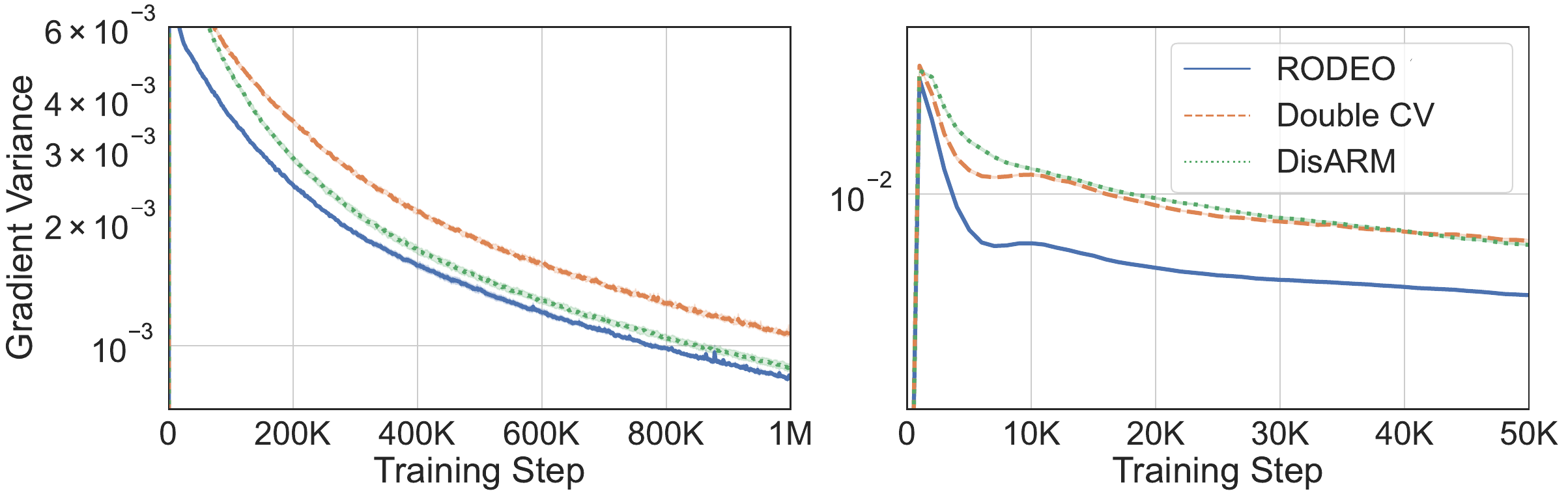}
	\caption{Training variational autoencoders on the Fashion MNIST dataset; a binary latent $z$ of dimension $d'=200$ was employed and $m=2$ samples were used to evaluate each gradient.
        Reproduced with permission from \citet{shi2022gradient}.
	} 
	\label{fig: RODEO}
\end{figure}

On top of the local Stein control variates (cf. \Cref{rem: cvs}) in \cref{eq: rodeo b}, \citet{shi2022gradient} introduce a second set of global Stein control variates with additional learned test functions $\tilde{g}_k$ of the form \cref{eq:discrete_cv_form} to further reduce variance:
\begin{align}
    \frac{1}{m}\sum_{k=1}^m\big[&(f(x_k) - \frac{1}{m-1}\sum_{j\neq k} (f(x_j) + (\mathcal{T}_{Q_\phi}g_j)(x_j))) \nabla_\eta \log q_\eta (x_k)  \\&+ (\mathcal{T}_{Q_\phi}\tilde{g}_k)(x_k)\big].
\end{align}
This overall approach, where the Stein equation is exploited as a \emph{double variance reduction} tool, was termed RODEO (\textbf{R}L\textbf{O}O with \textbf{D}iscrete St\textbf{E}in \textbf{O}perators) in \citet{shi2022gradient}.
The authors applied RODEO to train binary latent variational autoencoders \citep[cf. \Cref{subsec: VAEs} and][]{Kingma2014}. 
The dimension of the latent variable $z$ was $d' = 200$, and the Fashion MNIST dataset \citep{xiao2017fashion} was considered.
In \Cref{fig: RODEO}, we see that RODEO significantly reduces gradient variance along the training path relative to other gradient estimation schemes such as \emph{DisARM} \citep{dong2020disarm} and \emph{Double CV} \citep{titsias2022double}.
This in turn tends to lead to improved performance of the \ac{vae}; for further results see \citet{shi2022gradient}.

\subsection{Control Variates for Policy Gradient}
\label{subsec: CVs for RL}

The final application that we present concerns \emph{reinforcement learning}, whose starting point is a \emph{Markov decision process} consisting of a set of \emph{states} $\mathcal{S}$, a set of \emph{actions} $\mathcal{A}$, and a Markov process $(S_t)_{t =0}^\infty \subset \mathcal{S}$ such that $S_{t+1}$ depends only on $S_t$ and an action $A_t \in \mathcal{A}$.
In addition, we have a sequence of random variables called \emph{rewards} $R(S_t,A_t)$ which depend only on the current state $S_t$ and action $A_t$.
Reinforcement learning seeks a (stochastic) \emph{policy} $\pi : \mathcal{S} \rightarrow \mathcal{P}(\mathcal{A})$, meaning that to decide which action to take at time $t$ we sample $A_t \sim \pi(S_t)$, for which the expected discounted cumulative reward
$$
J(\pi) = \mathbb{E}\left[ \left. \sum_{t=0}^\infty \gamma^t R(S_t,A_t) \right| S_0 = s_0 \right]
$$
is maximal.
Here the initial state $s_0$ is considered to be fixed, and $\gamma \in (0,1)$ is a \emph{discount factor}, prioritizing the early part of the reward sequence, which is user-specified.

To simplify presentation, let $\mathcal{S}$ be a countable set, let $\mathcal{A} = \mathbb{R}^d$, and let $P_\pi$ be the \emph{discounted state visitation} distribution with mass function
$$
p_\pi(s) = (1 - \gamma) \sum_{t=0}^\infty \gamma^t \mathbb{P}_\pi(S_t = s | S_0 = s_0) ,
$$
where $\mathbb{P}_\pi$ indicates that actions $A_t$ are chosen according to the policy $\pi(S_t)$, so that we can use the shorthand
$$
J(\pi) = \mathbb{E}_\pi[R(S,A)] = \mathbb{E}_{S \sim P_\pi} [ \mathbb{E}_{A|S \sim \pi(S)}[ R(S,A) | S ] ] .
$$
In the sequel we slightly overload notation so that $\pi(\cdot | s)$ denotes the density function for the distribution $\pi(s)$, for each $s \in \mathcal{S}$.

In \emph{policy gradient} methods, the policy $\pi$ is parametrized, which we denote as $\pi_\theta$, and the parameters $\theta$ are updated by gradient ascent on $\theta \mapsto J(\pi_\theta)$.
From the \emph{policy gradient theorem} \citep{williams1992simple}, the gradient of this objective is
$$
\nabla_\theta J(\pi_\theta) = \mathbb{E}_{\pi_\theta} \left[ Q^{\pi_\theta}(S_t, A_t) (\nabla_\theta \log \pi_\theta)(A_t|S_t)  \right],
$$
where the \emph{action-value function} $Q^\pi(s, a)$ denotes the expected return if the initial state is $s \in \mathcal{S}$ and the initial action is $a \in \mathcal{A}$, and for all subsequent steps the policy $\pi$ is used.
Monte Carlo can then be used to approximate this gradient, assuming that the action-value function is available or has been estimated.
A key challenge is that Monte Carlo estimation of this gradient often suffers from high variance.

To address this, techniques such as REINFORCE (cf. \Cref{subsec: discrete gradient}) are often used to formulate an equivalent expectation, for which the variance of the associated Monte Carlo estimator is reduced.
A direct application leads to
\begin{align*}
    \nabla_\theta J(\theta) = \mathbb{E}_{\pi_\theta} \left[ \left(Q^{\pi_\theta}(S, A) - B(S)\right) (\nabla_\theta \log \pi_\theta)(A|S) \right] ,
\end{align*}
where $B(S)$ depends on the state $S$ but not the action $A$.
This ensures the value of the gradient is unchanged, via a similar argument to \eqref{eq:rloo}.
Common choices of $B$ are a constant, or the \emph{value function} $B(s) = \max_{a \in \mathcal{A}} Q(s,a)$ \citep{sutton1998reinforcement}.
However, because these traditional baselines are based on functions only of the state $S$, their potential to substantially reduce variance is limited.
Stein’s identity unlocks the possibility of making $B$ both state- and action-dependent, while not altering the value of the gradient.

For any element $g_s$ in the domain of the Langevin Stein operator for $\pi_\theta(s)$ (\Cref{def: Lang SO}),
\begin{align*}
0 & = \mathbb{E}_{A \sim \pi_\theta(s)}[(\mathcal{T}_{\pi_\theta(s)} g_s)(A)] \\
& = \mathbb{E}_{A \sim \pi_\theta(s)} \left[ g_s(A) \cdot (\nabla \log \pi_\theta)(A|s) + (\nabla g_s)(A) \right] = 0 .
\end{align*}
However, here the gradients are taken with respect to the action $A$, while policy gradients involve gradients with respect to parameters $\theta$. 
A reparameterization trick can be used to bridge this gap: namely, if we can express actions sampled from $\pi_\theta(s)$ as $A = f_\theta(s, \xi)$, where $\xi$ is random noise independent of $\theta$ and $s$, then \citet[Theorem 3.1 of][]{liu2018action} showed that
$$
\mathbb{E}_{A \sim \pi_\theta(s)} \left[ g_s(A) \cdot (\nabla_\theta \log \pi_\theta)(A|s)  \right]
= \mathbb{E} \left[ \nabla_\theta f_\theta(s, \xi) \cdot (\nabla g_s)(f_\theta(s, \xi)) \right].
$$
where the final expectation is with respect to the innovation noise $\xi$.
This leads to a Stein operator-based alternative to REINFORCE,
$$
\nabla_\theta J(\pi_\theta) = \mathbb{E}_{\pi_\theta} \left[ \begin{array}{l} (Q^{\pi_\theta}(S,A) - B(S, A)) (\nabla_\theta \log \pi_\theta)(A|S) \\
\hspace{80pt} + \nabla_\theta f_\theta(S, \xi) \cdot \nabla_A B(S, A) \end{array} \right],
$$
where $B(s,a) = g_s(a)$ and each $g_s$ is in the domain of the Langevin Stein operator for $\pi(s)$, for each $s \in \mathcal{S}$.
Thus we have derived state- and action-dependent functions that can be optimized to reduce the variance of the associated Monte Carlo estimator, while ensuring resulting estimator remains unbiased.
An empirical investigation in \citet{liu2018action} found that substantial variance reductions, and thus improved sample efficiency, can be achieved in a spectrum of reinforcement learning applications using $B(S,A)$ compared with simpler state-dependent $B(S)$.

\chapter{Conclusion}

This monograph has collected together rigorous definitions and results that underpin recent and emerging methodological applications of Stein's method.
In doing so, we hope we have provided a convenient and singular reference for practitioners in probabilistic inference and learning.
Since this topic represents an active area of research, we can at best hope to provide a snapshot of the state of knowledge at the time of writing, and our perspectives and understanding of the methodological aspects of Stein's method will surely be further developed.
Indeed, even in this respect we did not aim for a truly comprehensive treatment, limiting scope to only the most canonical Stein discrepancies and the most straightforward methodological applications of Stein's method.
There now is a rich and growing literature, some of which we highlighted in passing, which we hope the reader will be inspired to explore in detail.

\bibliographystyle{abbrvnat}
\bibliography{refs}

\begin{thebibliography}{185}
\providecommand{\natexlab}[1]{#1}
\providecommand{\url}[1]{\texttt{#1}}
\expandafter\ifx\csname urlstyle\endcsname\relax
  \providecommand{\doi}[1]{doi: #1}\else
  \providecommand{\doi}{doi: \begingroup \urlstyle{rm}\Url}\fi

\bibitem[Ambrosio et~al.(2005)Ambrosio, Gigli, and Savar{\'e}]{ambrosio2005gradient}
L.~Ambrosio, N.~Gigli, and G.~Savar{\'e}.
\newblock \emph{Gradient Flows: {I}n Metric Spaces and in the Space of Probability Measures}.
\newblock Springer, 2005.

\bibitem[Anastasiou et~al.(2023)Anastasiou, Barp, Briol, Ebner, Gaunt, Ghaderinezhad, Gorham, Gretton, Ley, Liu, Mackey, Oates, Reinert, and Swan]{anastasiou2021stein}
A.~Anastasiou, A.~Barp, F.-X. Briol, B.~Ebner, R.~E. Gaunt, F.~Ghaderinezhad, J.~Gorham, A.~Gretton, C.~Ley, Q.~Liu, L.~Mackey, C.~J. Oates, G.~Reinert, and Y.~Swan.
\newblock {S}tein's method meets statistics: {A} review of some recent developments.
\newblock \emph{Statistical Science}, 38\penalty0 (1):\penalty0 120--139, 2023.

\bibitem[Assaraf and Caffarel(2003)]{assaraf2003zero}
R.~Assaraf and M.~Caffarel.
\newblock Zero-variance zero-bias principle for observables in quantum monte carlo: Application to forces.
\newblock \emph{The Journal of Chemical Physics}, 119\penalty0 (20):\penalty0 10536--10552, 2003.

\bibitem[Bach et~al.(2012)Bach, Lacoste-Julien, and Obozinski]{bach2012equivalence}
F.~Bach, S.~Lacoste-Julien, and G.~Obozinski.
\newblock On the equivalence between herding and conditional gradient algorithms.
\newblock In \emph{Proceedings of the 29th International Conference on Machine Learning}, 2012.

\bibitem[Banerjee et~al.(2025)Banerjee, Balasubramanian, and Ghosal]{balasubramanian2024improved}
S.~Banerjee, K.~Balasubramanian, and P.~Ghosal.
\newblock Improved finite-particle convergence rates for {S}tein variational gradient descent.
\newblock In \emph{The 13th International Conference on Learning Representations}, 2025.

\bibitem[Barbour(1988)]{barbour1988stein}
A.~D. Barbour.
\newblock {S}tein's method and {P}oisson process convergence.
\newblock \emph{Journal of Applied Probability}, 25\penalty0 (A):\penalty0 175--184, 1988.

\bibitem[Barbour(1990)]{barbour1990stein}
A.~D. Barbour.
\newblock {S}tein's method for diffusion approximations.
\newblock \emph{Probability Theory and Related Fields}, 84\penalty0 (3):\penalty0 297--322, 1990.

\bibitem[Barbour and Chen(2005)]{barbour2005introduction}
A.~D. Barbour and L.~H.~Y. Chen.
\newblock \emph{An introduction to {Stein's} method}.
\newblock World Scientific, 2005.

\bibitem[Barker(1965)]{barker1965monte}
A.~A. Barker.
\newblock Monte {C}arlo calculations of the radial distribution functions for a proton-electron plasma.
\newblock \emph{Australian Journal of Physics}, 18\penalty0 (2):\penalty0 119--134, 1965.

\bibitem[Barp et~al.(2019)Barp, Briol, Duncan, Girolami, and Mackey]{barp2019minimum}
A.~Barp, F.-X. Briol, A.~Duncan, M.~Girolami, and L.~Mackey.
\newblock Minimum {S}tein discrepancy estimators.
\newblock In \emph{Proceedings of the 33rd International Conference on Neural Information Processing Systems}, 2019.

\bibitem[Barp et~al.(2022)Barp, Oates, Porcu, and Girolami]{barp2022riemann}
A.~Barp, C.~J. Oates, E.~Porcu, and M.~Girolami.
\newblock A riemann--stein kernel method.
\newblock \emph{Bernoulli}, 28\penalty0 (4):\penalty0 2181--2208, 2022.

\bibitem[Barp et~al.(2024)Barp, Simon-Gabriel, Girolami, and Mackey]{barp2022targeted}
A.~Barp, C.-J. Simon-Gabriel, M.~Girolami, and L.~Mackey.
\newblock Targeted separation and convergence with kernel discrepancies.
\newblock \emph{Journal of Machine Learning Research}, 25\penalty0 (378):\penalty0 1--50, 2024.

\bibitem[Baum et~al.(2023)Baum, Kanagawa, and Gretton]{baum2023kernel}
J.~Baum, H.~Kanagawa, and A.~Gretton.
\newblock A kernel {S}tein test of goodness of fit for sequential models.
\newblock In \emph{Proceedings of the 40th International Conference on Machine Learning}, 2023.

\bibitem[Belomestny et~al.(2020)Belomestny, Iosipoi, Moulines, Naumov, and Samsonov]{belomestny2020variance}
D.~Belomestny, L.~Iosipoi, E.~Moulines, A.~Naumov, and S.~Samsonov.
\newblock Variance reduction for markov chains with application to {MCMC}.
\newblock \emph{Statistics and Computing}, 30:\penalty0 973--997, 2020.

\bibitem[Belomestny et~al.(2024)Belomestny, Goldman, Naumov, and Samsonov]{belomestny2024theoretical}
D.~Belomestny, A.~Goldman, A.~Naumov, and S.~Samsonov.
\newblock Theoretical guarantees for neural control variates in {MCMC}.
\newblock \emph{Mathematics and Computers in Simulation}, 220:\penalty0 382--405, 2024.

\bibitem[Blei et~al.(2017)Blei, Kucukelbir, and McAuliffe]{blei2017variational}
D.~M. Blei, A.~Kucukelbir, and J.~D. McAuliffe.
\newblock Variational inference: A review for statisticians.
\newblock \emph{Journal of the American Statistical Association}, 112\penalty0 (518):\penalty0 859--877, 2017.

\bibitem[Bleile et~al.(2026)Bleile, Lumpp, and Drton]{bleile2026efficient}
F.~Bleile, S.~Lumpp, and M.~Drton.
\newblock Efficient learning of stationary diffusions with stein-type discrepancies, 2026.

\bibitem[Bomze et~al.(2024)Bomze, Rinaldi, and Zeffiro]{bomze2024frank}
I.~M. Bomze, F.~Rinaldi, and D.~Zeffiro.
\newblock Frank--wolfe and friends: {A} journey into projection-free first-order optimization methods.
\newblock \emph{Annals of Operations Research}, 343\penalty0 (2):\penalty0 607--638, 2024.

\bibitem[Bresler and Nagaraj(2019)]{bresler2019stein}
G.~Bresler and D.~Nagaraj.
\newblock Stein’s method for stationary distributions of {M}arkov chains and application to {I}sing models.
\newblock \emph{The Annals of Applied Probability}, 29\penalty0 (5):\penalty0 3230--3265, 2019.

\bibitem[Briol et~al.(2025)Briol, Karvonen, Gessner, and Mahsereci]{briol2025dictionary}
F.-X. Briol, T.~Karvonen, A.~Gessner, and M.~Mahsereci.
\newblock A dictionary of closed-form kernel mean embeddings.
\newblock In \emph{Proceedings of the 1st International Conference on Probabilistic Numerics}, 2025.

\bibitem[Brooks et~al.(2011)Brooks, Gelman, Jones, and Meng]{brooks2011handbook}
S.~Brooks, A.~Gelman, G.~Jones, and X.-L. Meng.
\newblock \emph{Handbook of {M}arkov Chain {M}onte {C}arlo}.
\newblock CRC Press, 2011.

\bibitem[Brown and Xia(2001)]{brown2001stein}
T.~C. Brown and A.~Xia.
\newblock Stein's method and birth-death processes.
\newblock \emph{The Annals of Probability}, 29\penalty0 (3):\penalty0 1373--1403, 2001.

\bibitem[Buttazzo et~al.(1995)Buttazzo, Ferone, and Kawohl]{buttazzo1995minimum}
G.~Buttazzo, V.~Ferone, and B.~Kawohl.
\newblock Minimum problems over sets of concave functions and related questions.
\newblock \emph{Mathematische Nachrichten}, 173\penalty0 (1):\penalty0 71--89, 1995.

\bibitem[Carmeli et~al.(2006)Carmeli, De~Vito, and Toigo]{carmeli2006vector}
C.~Carmeli, E.~De~Vito, and A.~Toigo.
\newblock Vector valued reproducing kernel {H}ilbert spaces of integrable functions and {M}ercer theorem.
\newblock \emph{Analysis and Applications}, 4\penalty0 (04):\penalty0 377--408, 2006.

\bibitem[Carmeli et~al.(2010)Carmeli, De~Vito, Toigo, and Umanit{\'a}]{Carmeli2010}
C.~Carmeli, E.~De~Vito, A.~Toigo, and V.~Umanit{\'a}.
\newblock Vector valued reproducing kernel hilbert spaces and universality.
\newblock \emph{Analysis and Applications}, 8\penalty0 (01):\penalty0 19--61, 2010.

\bibitem[Chang(1996)]{chang1996strictly}
K.-F. Chang.
\newblock Strictly positive definite functions.
\newblock \emph{Journal of Approximation Theory}, 87\penalty0 (2):\penalty0 148--158, 1996.

\bibitem[Chen et~al.(2018{\natexlab{a}})Chen, Zhang, Wang, Li, and Chen]{chen2018unified}
C.~Chen, R.~Zhang, W.~Wang, B.~Li, and L.~Chen.
\newblock A unified particle-optimization framework for scalable {Bayesian} sampling.
\newblock In \emph{Proceedings of the 34th Conference on Uncertainty on Artificial Intelligence}, 2018{\natexlab{a}}.

\bibitem[Chen et~al.(2010{\natexlab{a}})Chen, Goldstein, and Shao]{chen2010normal}
L.~H. Chen, L.~Goldstein, and Q.-M. Shao.
\newblock \emph{Normal Approximation by Stein's Method}.
\newblock Springer Science \& Business Media, 2010{\natexlab{a}}.

\bibitem[Chen et~al.(2019{\natexlab{a}})Chen, Wu, Chen, O'Leary-Roseberry, and Ghattas]{chen2019projected}
P.~Chen, K.~Wu, J.~Chen, T.~O'Leary-Roseberry, and O.~Ghattas.
\newblock Projected {S}tein variational {N}ewton: {A} fast and scalable {B}ayesian inference method in high dimensions.
\newblock In \emph{Proceedings of the 33rd Conference on Neural Information Processing Systems}, 2019{\natexlab{a}}.

\bibitem[Chen et~al.(2018{\natexlab{b}})Chen, Mackey, Gorham, Briol, and Oates]{chen2018stein}
W.~Y. Chen, L.~Mackey, J.~Gorham, F.-X. Briol, and C.~J. Oates.
\newblock {S}tein points.
\newblock In \emph{Proceedings of the 35th International Conference on Machine Learning}, 2018{\natexlab{b}}.

\bibitem[Chen et~al.(2019{\natexlab{b}})Chen, Barp, Briol, Gorham, Girolami, Mackey, and Oates]{chen2019stein}
W.~Y. Chen, A.~Barp, F.-X. Briol, J.~Gorham, M.~Girolami, L.~Mackey, and C.~J. Oates.
\newblock Stein point {M}arkov chain {M}onte {C}arlo.
\newblock In \emph{Proceedings of the 36th International Conference on Machine Learning}, 2019{\natexlab{b}}.

\bibitem[Chen et~al.(2010{\natexlab{b}})Chen, Welling, and Smola]{chen2010super}
Y.~Chen, M.~Welling, and A.~Smola.
\newblock Super-samples from kernel herding.
\newblock In \emph{Proceedings of the 26th Conference on Uncertainty in Artificial Intelligence}, 2010{\natexlab{b}}.

\bibitem[Chew(1986)]{Chew86}
P.~Chew.
\newblock There is a {P}lanar {G}raph {A}lmost {A}s {G}ood {A}s the {C}omplete {G}raph.
\newblock In \emph{Proceedings of the 2nd Annual Symposium on Computational Geometry}, 1986.

\bibitem[Chewi et~al.(2020)Chewi, Le~Gouic, Lu, Maunu, Rigollet, and Stromme]{chewi2020exponential}
S.~Chewi, T.~Le~Gouic, C.~Lu, T.~Maunu, P.~Rigollet, and A.~Stromme.
\newblock Exponential ergodicity of mirror-{L}angevin diffusions.
\newblock In \emph{Proceedings of the 34th Conference on Neural Information Processing Systems}, 2020.

\bibitem[Chwialkowski et~al.(2016)Chwialkowski, Strathmann, and Gretton]{chwialkowski2016kernel}
K.~Chwialkowski, H.~Strathmann, and A.~Gretton.
\newblock A kernel test of goodness of fit.
\newblock In \emph{Proceedings of the 33rd International Conference on Machine Learning}, 2016.

\bibitem[Chwialkowski et~al.(2015)Chwialkowski, Ramdas, Sejdinovic, and Gretton]{chwialkowski2015fast}
K.~P. Chwialkowski, A.~Ramdas, D.~Sejdinovic, and A.~Gretton.
\newblock Fast two-sample testing with analytic representations of probability measures.
\newblock In \emph{Proceedings of the 29th Conference on Neural Information Processing Systems}, 2015.

\bibitem[Clement and Desch(2008)]{clement2008elementary}
P.~Clement and W.~Desch.
\newblock An elementary proof of the triangle inequality for the {W}asserstein metric.
\newblock \emph{Proceedings of the American Mathematical Society}, 136\penalty0 (1):\penalty0 333--339, 2008.

\bibitem[Conca and Vanninathan(2007)]{conca2007periodic}
C.~Conca and M.~Vanninathan.
\newblock Periodic homogenization problems in incompressible fluid equations.
\newblock \emph{Handbook of Mathematical Fluid Dynamics}, 4:\penalty0 649--698, 2007.

\bibitem[Detommaso et~al.(2018)Detommaso, Cui, Marzouk, Scheichl, and Spantini]{detommaso2018stein}
G.~Detommaso, T.~Cui, Y.~Marzouk, R.~Scheichl, and A.~Spantini.
\newblock A {Stein} variational {Newton} method.
\newblock In \emph{Proceedings of the 22nd Conference on Neural Information Processing Systems}, 2018.

\bibitem[Dinh et~al.(2017)Dinh, Sohl-Dickstein, and Bengio]{dinh2016density}
L.~Dinh, J.~Sohl-Dickstein, and S.~Bengio.
\newblock Density estimation using {Real NVP}.
\newblock In \emph{Proceedings of the 5th International Conference on Learning Representations}, 2017.

\bibitem[Dong et~al.(2020)Dong, Mnih, and Tucker]{dong2020disarm}
Z.~Dong, A.~Mnih, and G.~Tucker.
\newblock Disarm: An antithetic gradient estimator for binary latent variables.
\newblock In \emph{Proceedings of the 34th Conference on Neural Information Processing Systems}, 2020.

\bibitem[Dudley(2018)]{dudley2018real}
R.~M. Dudley.
\newblock \emph{Real Analysis and Probability}.
\newblock CRC Press, 2018.

\bibitem[Duncan et~al.(2016)Duncan, Lelievre, and Pavliotis]{duncan2016variance}
A.~B. Duncan, T.~Lelievre, and G.~Pavliotis.
\newblock Variance reduction using nonreversible {L}angevin samplers.
\newblock \emph{Journal of Statistical Physics}, 163\penalty0 (3):\penalty0 457--491, 2016.

\bibitem[Dunford(1937)]{dunford1937integration}
N.~Dunford.
\newblock Integration of vector-valued functions.
\newblock \emph{Bulletin of the American Mathematical Society}, page~43, 1937.

\bibitem[Eberle(2015)]{Eberle2015}
A.~Eberle.
\newblock Reflection couplings and contraction rates for diffusions.
\newblock \emph{Probability Theory and Related Fields}, 166:\penalty0 851--886, 2015.

\bibitem[Ebner et~al.(2025)Ebner, Fischer, Gaunt, Picker, and Swan]{ebner2025stein}
B.~Ebner, A.~Fischer, R.~E. Gaunt, B.~Picker, and Y.~Swan.
\newblock Stein's method of moments.
\newblock \emph{Scandinavian Journal of Statistics}, 52\penalty0 (4):\penalty0 1594--1624, 2025.

\bibitem[Eichelsbacher and Reinert(2008)]{eichelsbacher2008stein}
P.~Eichelsbacher and G.~Reinert.
\newblock Stein’s method for discrete gibbs measures.
\newblock \emph{The Annals of Applied Probability}, 18\penalty0 (4):\penalty0 1588--1618, 2008.

\bibitem[Ethier(1976)]{ethier1976class}
S.~N. Ethier.
\newblock A class of degenerate diffusion processes occurring in population genetics.
\newblock \emph{Communications on Pure and Applied Mathematics}, 29\penalty0 (5):\penalty0 483--493, 1976.

\bibitem[Feng et~al.(2017)Feng, Wang, and Liu]{feng2017learning}
Y.~Feng, D.~Wang, and Q.~Liu.
\newblock Learning to draw samples with amortized {S}tein variational gradient descent.
\newblock In \emph{Proceedings of the 33rd Conference on Uncertainty in Artificial Intelligence}, 2017.

\bibitem[Fisher et~al.(2021)Fisher, Nolan, Graham, Prangle, and Oates]{Fisher2021}
M.~Fisher, T.~Nolan, M.~Graham, D.~Prangle, and C.~J. Oates.
\newblock Measure transport with kernel {S}tein discrepancy.
\newblock In \emph{Proceedings of the 24th International Conference on Artificial Intelligence and Statistics}, pages 1054--1062, 2021.

\bibitem[Fisher and Oates(2023)]{fisher2023gradient}
M.~A. Fisher and C.~J. Oates.
\newblock Gradient-free kernel {S}tein discrepancy.
\newblock In \emph{Proceedings of the 37th Conference on Neural Information Processing Systems}, 2023.

\bibitem[Fromont et~al.(2012)Fromont, Laurent, Lerasle, and Reynaud-Bouret]{fromont2012kernels}
M.~Fromont, B.~Laurent, M.~Lerasle, and P.~Reynaud-Bouret.
\newblock Kernels based tests with non-asymptotic bootstrap approaches for two-sample problems.
\newblock In \emph{Proceedings of the 25th Conference on Learning Theory}, 2012.

\bibitem[Futami et~al.(2019)Futami, Cui, Sato, and Sugiyama]{futami2019bayesian}
F.~Futami, Z.~Cui, I.~Sato, and M.~Sugiyama.
\newblock Bayesian posterior approximation via greedy particle optimization.
\newblock In \emph{Proceedings of the AAAI Conference on Artificial Intelligence}, volume~33, pages 3606--3613, 2019.

\bibitem[Gallegos-Herrada et~al.(2024)Gallegos-Herrada, Ledvinka, and Rosenthal]{gallegos2024equivalences}
M.~A. Gallegos-Herrada, D.~Ledvinka, and J.~S. Rosenthal.
\newblock Equivalences of geometric ergodicity of markov chains.
\newblock \emph{Journal of Theoretical Probability}, 37\penalty0 (2):\penalty0 1230--1256, 2024.

\bibitem[Gelman et~al.(2014)Gelman, Carlin, Stern, Dunson, Vehtari, and Rubin]{GelmanCaStDuVeRu2014}
A.~Gelman, J.~Carlin, H.~Stern, D.~Dunson, A.~Vehtari, and D.~Rubin.
\newblock \emph{Bayesian data analysis}.
\newblock Texts in Statistical Science Series. CRC Press, Boca Raton, FL, third edition, 2014.
\newblock ISBN 978-1-4398-4095-5.

\bibitem[Geyer(1991)]{geyer1991markov}
C.~J. Geyer.
\newblock Markov chain {M}onte {C}arlo maximum likelihood.
\newblock In \emph{Computer Science and Statistics: Proceedings of the 23rd Symposium of the Interface of Computer Science and Statistics}, 1991.

\bibitem[Glynn(1990)]{glynn1990likelihood}
P.~W. Glynn.
\newblock Likelihood ratio gradient estimation for stochastic systems.
\newblock \emph{Communications of the ACM}, 33\penalty0 (10):\penalty0 75--84, 1990.

\bibitem[Gong et~al.(2019)Gong, Peng, and Liu]{gong2019quantile}
C.~Gong, J.~Peng, and Q.~Liu.
\newblock Quantile {S}tein variational gradient descent for parallel {B}ayesian optimization.
\newblock In \emph{Proceedings of the 36th International Conference on Machine Learning}, 2019.

\bibitem[Gong et~al.(2021)Gong, Li, and Hern{\'a}ndez-Lobato]{gongsliced}
W.~Gong, Y.~Li, and J.~M. Hern{\'a}ndez-Lobato.
\newblock Sliced kernelized {S}tein discrepancy.
\newblock In \emph{Proceedings of the 9th Conference on Learning Representations}, 2021.

\bibitem[Goodfellow et~al.(2014)Goodfellow, Pouget-Abadie, Mirza, Xu, Warde-Farley, Ozair, Courville, and Bengio]{goodfellow2014generative}
I.~Goodfellow, J.~Pouget-Abadie, M.~Mirza, B.~Xu, D.~Warde-Farley, S.~Ozair, A.~Courville, and Y.~Bengio.
\newblock Generative adversarial nets.
\newblock In \emph{Proceedings of the 28th Conference on Neural Information Processing Systems}, 2014.

\bibitem[Gorham and Mackey(2015)]{gorham2015measuring}
J.~Gorham and L.~Mackey.
\newblock Measuring sample quality with {S}tein's method.
\newblock In \emph{Proceedings of the 29th Conference on Neural Information Processing Systems}, pages 226--234, 2015.

\bibitem[Gorham and Mackey(2017)]{gorham2017measuring}
J.~Gorham and L.~Mackey.
\newblock Measuring sample quality with kernels.
\newblock In \emph{Proceedings of the 34th International Conference on Machine Learning}, 2017.

\bibitem[Gorham et~al.(2019)Gorham, Duncan, Vollmer, and Mackey]{Gorham2019}
J.~Gorham, A.~B. Duncan, S.~J. Vollmer, and L.~Mackey.
\newblock Measuring sample quality with diffusions.
\newblock \emph{The Annals of Applied Probability}, 29\penalty0 (5):\penalty0 2884--2928, 2019.

\bibitem[Gorham et~al.(2020)Gorham, Raj, and Mackey]{gorham2020stochastic}
J.~Gorham, A.~Raj, and L.~Mackey.
\newblock Stochastic {S}tein discrepancies.
\newblock In \emph{Proceedings of the 37th Conference on Neural Information Processing Systems}, 2020.

\bibitem[Gotze(1991)]{gotze1991rate}
F.~Gotze.
\newblock On the rate of convergence in the multivariate {CLT}.
\newblock \emph{The Annals of Probability}, 19\penalty0 (2):\penalty0 724--739, 1991.

\bibitem[Grathwohl et~al.(2018)Grathwohl, Choi, Wu, Roeder, and Duvenaud]{grathwohl2017backpropagation}
W.~Grathwohl, D.~Choi, Y.~Wu, G.~Roeder, and D.~Duvenaud.
\newblock Backpropagation through the void: {O}ptimizing control variates for black-box gradient estimation.
\newblock In \emph{Proceedings of the 6th International Conference on Learning Representations}, 2018.

\bibitem[Hagrass et~al.(2026)Hagrass, Sriperumbudur, and Balasubramanian]{hagrass2024minimax}
O.~Hagrass, B.~Sriperumbudur, and K.~Balasubramanian.
\newblock Minimax optimal goodness-of-fit testing with kernel {S}tein discrepancy.
\newblock \emph{Bernoulli}, 32\penalty0 (1):\penalty0 299--324, 2026.

\bibitem[Han and Liu(2017)]{han2017stein}
J.~Han and Q.~Liu.
\newblock Stein variational adaptive importance sampling.
\newblock In \emph{Proceedings of the 33rd Conference on Uncertainty on Artificial Intelligence}, 2017.

\bibitem[Han and Liu(2018)]{han2018stein}
J.~Han and Q.~Liu.
\newblock Stein variational gradient descent without gradient.
\newblock In \emph{Proceedings of the 37th International Conference on Machine Learning}, 2018.

\bibitem[Har-Peled and Mendel(2005)]{har2005fast}
S.~Har-Peled and M.~Mendel.
\newblock Fast construction of nets in low dimensional metrics, and their applications.
\newblock In \emph{Proceedings of the 21st Annual Symposium on Computational Geometry}, 2005.

\bibitem[Henderson(1997)]{henderson1997variance}
S.~G. Henderson.
\newblock \emph{Variance reduction via an approximating Markov process}.
\newblock PhD thesis, Stanford University, 1997.

\bibitem[Hinton(2002)]{hinton2002training}
G.~E. Hinton.
\newblock Training products of experts by minimizing contrastive divergence.
\newblock \emph{Neural Computation}, 14\penalty0 (8):\penalty0 1771--1800, 2002.

\bibitem[Hodgkinson et~al.(2020)Hodgkinson, Salomone, and Roosta]{hodgkinson2020reproducing}
L.~Hodgkinson, R.~Salomone, and F.~Roosta.
\newblock The reproducing stein kernel approach for post-hoc corrected sampling, 2020.

\bibitem[Holmes(2004)]{holmes2004stein}
S.~Holmes.
\newblock Stein's method for birth and death chains.
\newblock In \emph{Stein's Method: {E}xpository Lectures and Applications}, pages 45--67. Institute of Mathematical Statistics, 2004.

\bibitem[Horowitz(1987)]{horowitz1987second}
A.~Horowitz.
\newblock The second order {L}angevin equation and numerical simulations.
\newblock \emph{Nuclear Physics B}, 280:\penalty0 510--522, 1987.

\bibitem[Huggins and Mackey(2018)]{huggins2018random}
J.~Huggins and L.~Mackey.
\newblock Random feature {S}tein discrepancies.
\newblock In \emph{Proceedings of the 32nd Conference on Neural Information Processing Systems}, 2018.

\bibitem[Hyv{\"a}rinen(2005)]{hyvarinen2005estimation}
A.~Hyv{\"a}rinen.
\newblock Estimation of non-normalized statistical models by score matching.
\newblock \emph{Journal of Machine Learning Research}, 6\penalty0 (4):\penalty0 695--709, 2005.

\bibitem[Ikeda and Watanabe(2014)]{ikeda2014stochastic}
N.~Ikeda and S.~Watanabe.
\newblock \emph{Stochastic Differential Equations and Diffusion Processes}.
\newblock Elsevier, 2014.

\bibitem[Jitkrittum et~al.(2017)Jitkrittum, Xu, Szab{\'o}, Fukumizu, and Gretton]{jitkrittum2017linear}
W.~Jitkrittum, W.~Xu, Z.~Szab{\'o}, K.~Fukumizu, and A.~Gretton.
\newblock A linear-time kernel goodness-of-fit test.
\newblock In \emph{Proceedings of the 31st Conference on Neural Information Processing Systems}, 2017.

\bibitem[Kanagawa et~al.(2023)Kanagawa, Jitkrittum, Mackey, Fukumizu, and Gretton]{kanagawa2023kernel}
H.~Kanagawa, W.~Jitkrittum, L.~Mackey, K.~Fukumizu, and A.~Gretton.
\newblock A kernel stein test for comparing latent variable models.
\newblock \emph{Journal of the Royal Statistical Society Series B: Statistical Methodology}, 85\penalty0 (3):\penalty0 986--1011, 2023.

\bibitem[Kanagawa et~al.(2025)Kanagawa, Barp, Gretton, and Mackey]{kanagawa2022controlling}
H.~Kanagawa, A.~Barp, A.~Gretton, and L.~Mackey.
\newblock Controlling moments with kernel {S}tein discrepancies.
\newblock \emph{The Annals of Applied Probability}, 35\penalty0 (6):\penalty0 3818--3843, 2025.

\bibitem[Karlin and McGregor(1957)]{karlin1957classification}
S.~Karlin and J.~McGregor.
\newblock The classification of birth and death processes.
\newblock \emph{Transactions of the American Mathematical Society}, 86\penalty0 (2):\penalty0 366--400, 1957.

\bibitem[Kingma and Welling(2013)]{Kingma2014}
D.~P. Kingma and M.~Welling.
\newblock Auto-encoding variational {Bayes}.
\newblock In \emph{Proceedings of the 1st International Conference on Learning Representations}, 2013.

\bibitem[Kingma et~al.(2016)Kingma, Salimans, Jozefowicz, Chen, Sutskever, and Welling]{kingma2016improved}
D.~P. Kingma, T.~Salimans, R.~Jozefowicz, X.~Chen, I.~Sutskever, and M.~Welling.
\newblock Improved variational inference with inverse autoregressive flow.
\newblock In \emph{Proceedings of the 30th Conference on Neural Information Processing Systems}, 2016.

\bibitem[Kool et~al.(2019)Kool, Hoof, and Welling]{Kool2019Buy4R}
W.~Kool, H.~V. Hoof, and M.~Welling.
\newblock Buy 4 {REINFORCE} samples, get a baseline for free!
\newblock In \emph{Proceedings of the Workshop DeepRLStructPred@ICLR}, 2019.

\bibitem[Korba et~al.(2021)Korba, Aubin-Frankowski, Majewski, and Ablin]{korba2021kernel}
A.~Korba, P.-C. Aubin-Frankowski, S.~Majewski, and P.~Ablin.
\newblock Kernel stein discrepancy descent.
\newblock In \emph{Proceedings of the 40th International Conference on Machine Learning}, 2021.

\bibitem[Landim et~al.(1998)Landim, Olla, and Yau]{landim1998convection}
C.~Landim, S.~Olla, and H.~Yau.
\newblock Convection--diffusion equation with space--time ergodic random flow.
\newblock \emph{Probability Theory and Related Fields}, 112\penalty0 (2):\penalty0 203--220, 1998.

\bibitem[Le et~al.(2024)Le, Lewis, Bharath, and Fallaize]{le2020diffusion}
H.~Le, A.~Lewis, K.~Bharath, and C.~Fallaize.
\newblock A diffusion approach to {S}tein’s method on {R}iemannian manifolds.
\newblock \emph{Bernoulli}, 30\penalty0 (2):\penalty0 1079--1104, 2024.

\bibitem[Leluc et~al.(2025)Leluc, Portier, Segers, and Zhuman]{leluc2023speeding}
R.~Leluc, F.~Portier, J.~Segers, and A.~Zhuman.
\newblock Speeding up {M}onte {C}arlo integration: {C}ontrol neighbors for optimal convergence.
\newblock \emph{Bernoulli}, 31\penalty0 (2):\penalty0 1160--1180, 2025.

\bibitem[Leucht and Neumann(2013)]{leucht2013dependent}
A.~Leucht and M.~H. Neumann.
\newblock Dependent wild bootstrap for degenerate u-and v-statistics.
\newblock \emph{Journal of Multivariate Analysis}, 117:\penalty0 257--280, 2013.

\bibitem[Ley and Swan(2013)]{ley2013stein}
C.~Ley and Y.~Swan.
\newblock Stein's density approach and information inequalities.
\newblock \emph{Electronic Communications in Probability}, 18\penalty0 (7):\penalty0 1--14, 2013.

\bibitem[Ley et~al.(2017)Ley, Reinert, and Swan]{LeyReSw2017}
C.~Ley, G.~Reinert, and Y.~Swan.
\newblock Stein's method for comparison of univariate distributions.
\newblock \emph{Probability Surveys}, 14:\penalty0 1--52, 2017.

\bibitem[Li et~al.(2020)Li, Li, Liu, Liu, and Lu]{li2020stochastic}
L.~Li, Y.~Li, J.-G. Liu, Z.~Liu, and J.~Lu.
\newblock A stochastic version of {S}tein variational gradient descent for efficient sampling.
\newblock \emph{Communications in Applied Mathematics and Computational Science}, 15\penalty0 (1):\penalty0 37--63, 2020.

\bibitem[Li et~al.(2024)Li, Dwivedi, and Mackey]{li2024debiased}
L.~Li, R.~Dwivedi, and L.~Mackey.
\newblock Debiased distribution compression.
\newblock In R.~Salakhutdinov, Z.~Kolter, K.~Heller, A.~Weller, N.~Oliver, J.~Scarlett, and F.~Berkenkamp, editors, \emph{Proceedings of the 41st International Conference on Machine Learning}, volume 235 of \emph{Proceedings of Machine Learning Research}, pages 27675--27731. PMLR, 21--27 Jul 2024.
\newblock URL \url{https://proceedings.mlr.press/v235/li24r.html}.

\bibitem[Liu and Zhu(2018)]{liu2017riemannian}
C.~Liu and J.~Zhu.
\newblock Riemannian {Stein} variational gradient descent for {Bayesian} inference.
\newblock In \emph{Proceedings of the 32nd AAAI Conference on Artificial Intelligence}, 2018.

\bibitem[Liu et~al.(2018)Liu, Feng, Mao, Zhou, Peng, and Liu]{liu2018action}
H.~Liu, Y.~Feng, Y.~Mao, D.~Zhou, J.~Peng, and Q.~Liu.
\newblock Action-dependent control variates for policy optimization via {S}tein’s identity.
\newblock In \emph{Proceedings of the 6th International Conference on Learning Representations}, 2018.

\bibitem[Liu et~al.(2019)Liu, Mehta, Tao, and Carin]{liu2019estimation}
J.~Liu, N.~Mehta, C.~Tao, and L.~Carin.
\newblock Estimation and sampling of unnormalized statistical models with {S}tein score matching.
\newblock In \emph{ICML 2019 Workshop on Stein’s Method for Machine Learning and Statistics}, 2019.

\bibitem[Liu and Lee(2017)]{liu2017black}
Q.~Liu and J.~Lee.
\newblock Black-box importance sampling.
\newblock In \emph{Proceedings of the 20th International Conference on Artificial Intelligence and Statistics}, 2017.

\bibitem[Liu and Wang(2016)]{Liu2016}
Q.~Liu and D.~Wang.
\newblock Stein variational gradient descent: {A} general purpose {B}ayesian inference algorithm.
\newblock In \emph{Proceedings of the 30th Annual Conference on Neural Information Processing Systems}, 2016.

\bibitem[Liu and Wang(2017)]{liu2017learning}
Q.~Liu and D.~Wang.
\newblock Learning deep energy models: {C}ontrastive divergence vs. amortized {MLE}, 2017.

\bibitem[Liu and Wang(2018)]{liu2018stein}
Q.~Liu and D.~Wang.
\newblock Stein variational gradient descent as moment matching.
\newblock In \emph{Proceedings of the 32nd Conference on Neural Information Processing Systems}, 2018.

\bibitem[Liu et~al.(2016)Liu, Lee, and Jordan]{liu2016kernelized}
Q.~Liu, J.~Lee, and M.~Jordan.
\newblock A kernelized stein discrepancy for goodness-of-fit tests.
\newblock In \emph{Proceedings of the 35th International Conference on Machine Learning}, 2016.

\bibitem[Liu and Briol(2025)]{liu2024robustness}
X.~Liu and F.-X. Briol.
\newblock On the robustness of kernel goodness-of-fit tests.
\newblock \emph{Journal of Machine Learning Research}, 26\penalty0 (262), 2025.

\bibitem[Liu et~al.(2023)Liu, Duncan, and Gandy]{liu2023using}
X.~Liu, A.~Duncan, and A.~Gandy.
\newblock Using perturbation to improve goodness-of-fit tests based on kernelized {S}tein discrepancy.
\newblock In \emph{Proceedings of the 40th International Conference on Machine Learning}, 2023.

\bibitem[Lorch et~al.(2024)Lorch, Krause, and Sch{\"o}lkopf]{lorch2024causal}
L.~Lorch, A.~Krause, and B.~Sch{\"o}lkopf.
\newblock Causal modeling with stationary diffusions.
\newblock In \emph{Proceedings of the 27th International Conference on Artificial Intelligence and Statistics}, 2024.

\bibitem[Ma et~al.(2015)Ma, Chen, and Fox]{ma2015complete}
Y.~Ma, T.~Chen, and E.~Fox.
\newblock A complete recipe for stochastic gradient {MCMC}.
\newblock In \emph{Proceedings of the 29th Conference on Neural Information Processing Systems}, 2015.

\bibitem[Mackey and Gorham(2016)]{MackeyGo16}
L.~Mackey and J.~Gorham.
\newblock Multivariate {S}tein factors for a class of strongly log-concave distributions.
\newblock \emph{Electronic Communications in Probability}, 21:\penalty0 1--14, 2016.

\bibitem[Martinez-Taboada and Ramdas(2025)]{martinez2024sequential}
D.~Martinez-Taboada and A.~Ramdas.
\newblock Sequential kernelized stein discrepancy.
\newblock In \emph{Proceedings of the 28th International Conference on Artificial Intelligence and Statistics}, 2025.

\bibitem[Matsubara et~al.(2022)Matsubara, Knoblauch, Briol, and Oates]{matsubara2022robust}
T.~Matsubara, J.~Knoblauch, F.-X. Briol, and C.~J. Oates.
\newblock Robust generalised {B}ayesian inference for intractable likelihoods.
\newblock \emph{Journal of the Royal Statistical Society: Series B}, 84\penalty0 (3):\penalty0 997--1022, 2022.

\bibitem[Matsubara et~al.(2024)Matsubara, Knoblauch, Briol, and Oates]{matsubara2023generalized}
T.~Matsubara, J.~Knoblauch, F.-X. Briol, and C.~J. Oates.
\newblock Generalized bayesian inference for discrete intractable likelihood.
\newblock \emph{Journal of the American Statistical Association}, 119\penalty0 (547):\penalty0 2345--2355, 2024.

\bibitem[Mira et~al.(2013)Mira, Solgi, and Imparato]{mira2013zero}
A.~Mira, R.~Solgi, and D.~Imparato.
\newblock Zero variance {M}arkov chain {M}onte {C}arlo for {B}ayesian estimators.
\newblock \emph{Statistics and Computing}, 23\penalty0 (5):\penalty0 653--662, 2013.

\bibitem[Muandet et~al.(2017)Muandet, Fukumizu, Sriperumbudur, and Sch{\"o}lkopf]{muandet2016kernel}
K.~Muandet, K.~Fukumizu, B.~Sriperumbudur, and B.~Sch{\"o}lkopf.
\newblock Kernel mean embedding of distributions: {A} review and beyond.
\newblock \emph{Foundations and Trends{\textregistered} in Machine Learning}, 10\penalty0 (1-2):\penalty0 1--141, 2017.

\bibitem[M{\"u}ller(1997)]{muller1997integral}
A.~M{\"u}ller.
\newblock Integral probability metrics and their generating classes of functions.
\newblock \emph{Advances in Applied Probability}, 29\penalty0 (2):\penalty0 429--443, 1997.

\bibitem[Neal(1993)]{neal1993probabilistic}
R.~M. Neal.
\newblock Probabilistic inference using markov chain monte carlo methods.
\newblock 1993.

\bibitem[Newman and Barkema(1999)]{newman1999monte}
M.~E. Newman and G.~T. Barkema.
\newblock \emph{Monte Carlo methods in Statistical Physics}.
\newblock Clarendon Press, 1999.

\bibitem[Oates(2022)]{oates2022minimum}
C.~J. Oates.
\newblock Minimum kernel discrepancy estimators.
\newblock In \emph{Proceedings of the 15th International Conference on Monte Carlo and Quasi-Monte Carlo Methods in Scientific Computing}, 2022.

\bibitem[Oates et~al.(2017)Oates, Girolami, and Chopin]{oates2017control}
C.~J. Oates, M.~Girolami, and N.~Chopin.
\newblock Control functionals for {M}onte {C}arlo integration.
\newblock \emph{Journal of the Royal Statistical Society, Series B}, 79:\penalty0 695--718, 2017.

\bibitem[Oates et~al.(2019)Oates, Cockayne, Briol, and Girolami]{oates2019convergence}
C.~J. Oates, J.~Cockayne, F.-X. Briol, and M.~Girolami.
\newblock Convergence rates for a class of estimators based on {S}tein’s method.
\newblock \emph{Bernoulli}, 25\penalty0 (2):\penalty0 1141--1159, 2019.

\bibitem[Oksendal(2013)]{oksendal2013stochastic}
B.~Oksendal.
\newblock \emph{Stochastic Differential Equations: {A}n Introduction with Applications}.
\newblock Springer Science \& Business Media, 2013.

\bibitem[Owen(2016)]{owen2016constraint}
A.~B. Owen.
\newblock A constraint on extensible quadrature rules.
\newblock \emph{Numerische Mathematik}, 132:\penalty0 511--518, 2016.

\bibitem[Paisley et~al.(2012)Paisley, Blei, and Jordan]{paisley2012variational}
J.~Paisley, D.~M. Blei, and M.~I. Jordan.
\newblock Variational {B}ayesian inference with stochastic search.
\newblock In \emph{Proceedings of the 31st International Conference on Machine Learning}, 2012.

\bibitem[Patterson and Teh(2013)]{patterson2013stochastic}
S.~Patterson and Y.~Teh.
\newblock Stochastic gradient {R}iemannian {L}angevin dynamics on the probability simplex.
\newblock In \emph{Proceedings of the 27th Conference on Neural Information Processing Systems}, 2013.

\bibitem[Pavliotis(2016)]{pavliotis2016stochastic}
G.~A. Pavliotis.
\newblock \emph{Stochastic Processes and Applications}.
\newblock Springer, 2016.

\bibitem[Peleg and Sch{\"a}ffer(1989)]{peleg1989graph}
D.~Peleg and A.~A. Sch{\"a}ffer.
\newblock Graph spanners.
\newblock \emph{Journal of Graph Theory}, 13\penalty0 (1):\penalty0 99--116, 1989.

\bibitem[Pfeffer(2012)]{pfeffer2012divergence}
W.~F. Pfeffer.
\newblock \emph{The Divergence Theorem and Sets of Finite Perimeter}.
\newblock CRC Press Boca Raton, 2012.

\bibitem[Pigola and Setti(2014)]{pigola2014global}
S.~Pigola and A.~G. Setti.
\newblock Global divergence theorems in nonlinear pdes and geometry.
\newblock \emph{Ensaios Matem{\'a}ticos}, 26\penalty0 (1-77):\penalty0 2, 2014.

\bibitem[Pu et~al.(2017)Pu, Gan, Henao, Li, Han, and Carin]{pu2017stein}
Y.~Pu, Z.~Gan, R.~Henao, C.~Li, S.~Han, and L.~Carin.
\newblock Stein variational autoencoder.
\newblock In \emph{Proceedings of the 31st Conference on Neural Information Processing Systems}, 2017.

\bibitem[Qu and Vemuri(2025)]{qu2025theory}
X.~Qu and B.~C. Vemuri.
\newblock Theory and applications of kernel stein discrepancy on riemannian manifolds, 2025.

\bibitem[Rahimi and Recht(2007)]{rahimi2007random}
A.~Rahimi and B.~Recht.
\newblock Random features for large-scale kernel machines.
\newblock In \emph{Proceedings of the 21st Conference on Neural Information Processing Systems}, 2007.

\bibitem[Ranganath et~al.(2016)Ranganath, Tran, Altosaar, and Blei]{ranganath2016operator}
R.~Ranganath, D.~Tran, J.~Altosaar, and D.~Blei.
\newblock Operator variational inference.
\newblock In \emph{Proceedings of the 30th Conference on Neural Information Processing Systems}, 2016.

\bibitem[Reinert and Ross(2019)]{reinert2019approximating}
G.~Reinert and N.~Ross.
\newblock Approximating stationary distributions of fast mixing glauber dynamics, with applications to exponential random graphs.
\newblock \emph{Annals of Applied Probability}, 29\penalty0 (5):\penalty0 3201--3229, 2019.

\bibitem[Rey-Bellet and Spiliopoulos(2015)]{rey2014irreversible}
L.~Rey-Bellet and K.~Spiliopoulos.
\newblock Irreversible {L}angevin samplers and variance reduction: {A} large deviations approach.
\newblock \emph{Nonlinearity}, 28\penalty0 (7):\penalty0 2081--2103, 2015.

\bibitem[Rezende and Mohamed(2015)]{rezende2015variational}
D.~Rezende and S.~Mohamed.
\newblock Variational inference with normalizing flows.
\newblock In \emph{Proceedings of the 34th International Conference on Machine Learning}, 2015.

\bibitem[Riabiz et~al.(2022)Riabiz, Chen, Cockayne, Swietach, Niederer, Mackey, and Oates]{riabiz2020optimal}
M.~Riabiz, W.~Chen, J.~Cockayne, P.~Swietach, S.~A. Niederer, L.~Mackey, and C.~J. Oates.
\newblock Optimal thinning of {MCMC} output.
\newblock \emph{Journal of the Royal Statistical Society, Series B}, 84\penalty0 (4):\penalty0 1059--1081, 2022.

\bibitem[Richter et~al.(2020)Richter, Boustati, N\"{u}sken, Ruiz, and Akyildiz]{richter2020}
L.~Richter, A.~Boustati, N.~N\"{u}sken, F.~Ruiz, and O.~D. Akyildiz.
\newblock {VarGrad}: {A} low-variance gradient estimator for variational inference.
\newblock In \emph{Proceedings of the 34th Conference on Neural Information Processing Systems}, 2020.

\bibitem[Ross(2011)]{Ross2011}
N.~Ross.
\newblock Fundamentals of {S}tein's method.
\newblock \emph{Probability Surveys}, 8:\penalty0 210--293, 2011.

\bibitem[Rudin(1987)]{rudin1987real}
W.~Rudin.
\newblock \emph{Real and Complex Analysis}.
\newblock McGraw-Hill, Inc., USA, 3 edition, 1987.

\bibitem[Ruiz et~al.(2016)Ruiz, Titsias, and Blei]{ruiz2016generalized}
F.~J. Ruiz, M.~K. Titsias, and D.~M. Blei.
\newblock The generalized reparameterization gradient.
\newblock In \emph{Proceedings of the 30th Conference on Neural Information Processing Systems}, 2016.

\bibitem[Salimans and Knowles(2014)]{salimans2014using}
T.~Salimans and D.~A. Knowles.
\newblock On using control variates with stochastic approximation for variational {B}ayes and its connection to stochastic linear regression, 2014.

\bibitem[Schrab et~al.(2022)Schrab, Guedj, and Gretton]{schrab2022ksd}
A.~Schrab, B.~Guedj, and A.~Gretton.
\newblock Ksd aggregated goodness-of-fit test.
\newblock In \emph{Proceedings of the 36th Conference on Neural Information Processing Systems}, 2022.

\bibitem[Schwabik and Ye(2005)]{schwabik2005topics}
S.~Schwabik and G.~Ye.
\newblock \emph{Topics in Banach Space Integration}.
\newblock World Scientific, 2005.

\bibitem[Shao(2010)]{shao2010dependent}
X.~Shao.
\newblock The dependent wild bootstrap.
\newblock \emph{Journal of the American Statistical Association}, 105\penalty0 (489):\penalty0 218--235, 2010.

\bibitem[Shi and Mackey(2023)]{shi2023finite}
J.~Shi and L.~Mackey.
\newblock A finite-particle convergence rate for stein variational gradient descent.
\newblock \emph{Advances in Neural Information Processing Systems}, 36:\penalty0 26831--26844, 2023.

\bibitem[Shi et~al.(2022{\natexlab{a}})Shi, Liu, and Mackey]{shi2022sampling}
J.~Shi, C.~Liu, and L.~Mackey.
\newblock Sampling with mirrored stein operators.
\newblock In \emph{Proceedings of the 10th International Conference on Learning Representations}, 2022{\natexlab{a}}.

\bibitem[Shi et~al.(2022{\natexlab{b}})Shi, Zhou, Hwang, Titsias, and Mackey]{shi2022gradient}
J.~Shi, Y.~Zhou, J.~Hwang, M.~Titsias, and L.~Mackey.
\newblock Gradient estimation with discrete {S}tein operators.
\newblock In \emph{Proceedings of the 36th Conference on Neural Information Processing Systems}, 2022{\natexlab{b}}.

\bibitem[Si et~al.(2020)Si, Oates, Duncan, Carin, and Briol]{si2020scalable}
S.~Si, C.~J. Oates, A.~B. Duncan, L.~Carin, and F.-X. Briol.
\newblock Scalable control variates for {Monte Carlo} methods via stochastic optimization.
\newblock In \emph{Proceedings of the 14th International Conference on Monte Carlo and Quasi-Monte Carlo Methods in Scientific Computing}, 2020.

\bibitem[Simon-Gabriel and Sch{\"o}lkopf(2018)]{simon2018kernel}
C.-J. Simon-Gabriel and B.~Sch{\"o}lkopf.
\newblock Kernel distribution embeddings: Universal kernels, characteristic kernels and kernel metrics on distributions.
\newblock \emph{Journal of Machine Learning Research}, 19\penalty0 (44):\penalty0 1--29, 2018.

\bibitem[Simon-Gabriel et~al.(2023)Simon-Gabriel, Barp, Sch{\"o}lkopf, and Mackey]{simon2020metrizing}
C.-J. Simon-Gabriel, A.~Barp, B.~Sch{\"o}lkopf, and L.~Mackey.
\newblock Metrizing weak convergence with maximum mean discrepancies.
\newblock \emph{Journal of Machine Learning Research}, 24\penalty0 (184):\penalty0 1--20, 2023.

\bibitem[Sohl-Dickstein et~al.(2011)Sohl-Dickstein, Battaglino, and DeWeese]{sohldickstein2011minimumprobabilityflowlearning}
J.~Sohl-Dickstein, P.~Battaglino, and M.~R. DeWeese.
\newblock Minimum probability flow learning, 2011.

\bibitem[South et~al.(2022{\natexlab{a}})South, Karvonen, Nemeth, Girolami, and Oates]{south2022semi}
L.~F. South, T.~Karvonen, C.~Nemeth, M.~Girolami, and C.~J. Oates.
\newblock Semi-exact control functionals from {Sard’s} method.
\newblock \emph{Biometrika}, 109\penalty0 (2):\penalty0 351--367, 2022{\natexlab{a}}.

\bibitem[South et~al.(2022{\natexlab{b}})South, Riabiz, Teymur, and Oates]{south2022postprocessing}
L.~F. South, M.~Riabiz, O.~Teymur, and C.~J. Oates.
\newblock Postprocessing of {MCMC}.
\newblock \emph{Annual Review of Statistics and Its Application}, 9\penalty0 (1):\penalty0 529--555, 2022{\natexlab{b}}.

\bibitem[South et~al.(2023)South, Oates, Mira, and Drovandi]{south2023regularized}
L.~F. South, C.~J. Oates, A.~Mira, and C.~Drovandi.
\newblock Regularized zero-variance control variates.
\newblock \emph{Bayesian Analysis}, 18\penalty0 (3):\penalty0 865--888, 2023.

\bibitem[Sriperumbudur et~al.(2011)Sriperumbudur, Fukumizu, and Lanckriet]{sriperumbudur2011universality}
B.~K. Sriperumbudur, K.~Fukumizu, and G.~R. Lanckriet.
\newblock Universality, characteristic kernels and rkhs embedding of measures.
\newblock \emph{Journal of Machine Learning Research}, 12:\penalty0 2389--2410, 2011.

\bibitem[Sriperumbudur et~al.(2012)Sriperumbudur, Fukumizu, Gretton, Sch{\"o}lkopf, and Lanckriet]{sriperumbudur2012empirical}
B.~K. Sriperumbudur, K.~Fukumizu, A.~Gretton, B.~Sch{\"o}lkopf, and G.~R. Lanckriet.
\newblock On the empirical estimation of integral probability metrics.
\newblock \emph{Electronic Journal of Statistics}, 6:\penalty0 1550--1599, 2012.

\bibitem[Stein(1972)]{stein1972bound}
C.~Stein.
\newblock A bound for the error in the normal approximation to the distribution of a sum of dependent random variables.
\newblock In \emph{Proceedings of the 6th Berkeley Symposium on Mathematical Statistics and Probability, Volume 2: Probability Theory}, pages 583--602. University of California Press, 1972.

\bibitem[Stein(1986)]{stein1986approximate}
C.~Stein.
\newblock \emph{Approximate Computation of Expectations}.
\newblock Lecture Notes-Monograph Series. Institute of Mathematical Statistics, 1986.

\bibitem[Stein et~al.(2004)Stein, Diaconis, Holmes, Reinert, et~al.]{stein2004use}
C.~Stein, P.~Diaconis, S.~Holmes, G.~Reinert, et~al.
\newblock Use of exchangeable pairs in the analysis of simulations.
\newblock In \emph{{Stein's} Method}, pages 1--25. Institute of Mathematical Statistics, 2004.

\bibitem[Steinwart and Christmann(2008)]{steinwart2008support}
I.~Steinwart and A.~Christmann.
\newblock \emph{Support Vector Machines}.
\newblock Springer Science \& Business Media, 2008.

\bibitem[Steinwart et~al.(2006)Steinwart, Hush, and Scovel]{steinwart2006explicit}
I.~Steinwart, D.~Hush, and C.~Scovel.
\newblock An explicit description of the reproducing kernel hilbert spaces of gaussian rbf kernels.
\newblock \emph{IEEE Transactions on Information Theory}, 52\penalty0 (10):\penalty0 4635--4643, 2006.

\bibitem[Stuart et~al.(2004)Stuart, Voss, Wilberg, et~al.]{stuart2004conditional}
A.~Stuart, J.~Voss, P.~Wilberg, et~al.
\newblock Conditional path sampling of {SDE}s and the {Langevin} {MCMC} method.
\newblock \emph{Communications in Mathematical Sciences}, 2\penalty0 (4):\penalty0 685--697, 2004.

\bibitem[Sun et~al.(2023)Sun, Barp, and Briol]{sun2023vector}
Z.~Sun, A.~Barp, and F.-X. Briol.
\newblock Vector-valued control variates.
\newblock In \emph{Proceedings of the 42nd International Conference on Machine Learning}, 2023.

\bibitem[Sutton and Barto(1998)]{sutton1998reinforcement}
R.~S. Sutton and A.~G. Barto.
\newblock \emph{Reinforcement Learning: {A}n Introduction}.
\newblock MIT Press, 1998.

\bibitem[Teymur et~al.(2021)Teymur, Gorham, Riabiz, and Oates]{teymur2021optimal}
O.~Teymur, J.~Gorham, M.~Riabiz, and C.~J. Oates.
\newblock Optimal quantisation of probability measures using maximum mean discrepancy.
\newblock In \emph{Proceedings of the 24th International Conference on Artificial Intelligence and Statistics}, 2021.

\bibitem[Tian et~al.(2016)Tian, Bi, and Taylor]{tian2016magic}
X.~Tian, N.~Bi, and J.~Taylor.
\newblock Magic: a general, powerful and tractable method for selective inference.
\newblock \emph{arXiv preprint arXiv:1607.02630}, 2016.

\bibitem[Titsias and Shi(2022)]{titsias2022double}
M.~Titsias and J.~Shi.
\newblock Double control variates for gradient estimation in discrete latent variable models.
\newblock In \emph{Proceedings of the 25th International Conference on Artificial Intelligence and Statistics}, 2022.

\bibitem[Wainwright et~al.(2008)Wainwright, Jordan, et~al.]{wainwright2008graphical}
M.~J. Wainwright, M.~I. Jordan, et~al.
\newblock Graphical models, exponential families, and variational inference.
\newblock \emph{Foundations and Trends{\textregistered} in Machine Learning}, 1\penalty0 (1--2):\penalty0 1--305, 2008.

\bibitem[Wang et~al.(2023)Wang, Chen, Kanagawa, and Oates]{wang2023stein}
C.~Wang, W.~Chen, H.~Kanagawa, and C.~J. Oates.
\newblock Stein $\pi$-importance sampling.
\newblock In \emph{Proceedings of the 37th Conference on Neural Information Processing Systems}, 2023.

\bibitem[Wang and Liu(2016)]{wang2016learning}
D.~Wang and Q.~Liu.
\newblock Learning to draw samples: With application to amortized mle for generative adversarial learning, 2016.

\bibitem[Wang and Liu(2019)]{wang2019nonlinear}
D.~Wang and Q.~Liu.
\newblock Nonlinear {S}tein variational gradient descent for learning diversified mixture models.
\newblock In \emph{Proceedings of the 36th International Conference on Machine Learning}, 2019.

\bibitem[Wang et~al.(2017)Wang, Zeng, and Liu]{wang2018stein}
D.~Wang, Z.~Zeng, and Q.~Liu.
\newblock Stein variational message passing for continuous graphical models.
\newblock In \emph{Proceedings of the 34th International Conference on Machine Learning}, 2017.

\bibitem[Wang et~al.(2019)Wang, Tang, Bajaj, and Liu]{wang2019stein}
D.~Wang, Z.~Tang, C.~Bajaj, and Q.~Liu.
\newblock Stein variational gradient descent with matrix-valued kernels.
\newblock In \emph{Proceedings of the 33rd Conference on Neural Information Processing Systems}, 2019.

\bibitem[Welling and Teh(2011)]{welling2011bayesian}
M.~Welling and Y.~W. Teh.
\newblock {{Bayesian}} learning via stochastic gradient {Langevin} dynamics.
\newblock In \emph{International Conference on Machine Learning}, 2011.

\bibitem[Wendland(2004)]{wendland2004scattered}
H.~Wendland.
\newblock \emph{Scattered Data Approximation}.
\newblock Cambridge University Press, 2004.

\bibitem[Williams(1992)]{williams1992simple}
R.~J. Williams.
\newblock Simple statistical gradient-following algorithms for connectionist reinforcement learning.
\newblock \emph{Machine Learning}, 8\penalty0 (3):\penalty0 229--256, 1992.

\bibitem[Wynne and Duncan(2022)]{wynne2022kernel}
G.~Wynne and A.~B. Duncan.
\newblock A kernel two-sample test for functional data.
\newblock \emph{Journal of Machine Learning Research}, 23\penalty0 (73):\penalty0 1--51, 2022.

\bibitem[Wynne et~al.(2025)Wynne, Kasprzak, and Duncan]{wynne2025fourier}
G.~Wynne, M.~J. Kasprzak, and A.~B. Duncan.
\newblock A fourier representation of kernel stein discrepancy with application to goodness-of-fit tests for measures on infinite dimensional hilbert spaces.
\newblock \emph{Bernoulli}, 31\penalty0 (2):\penalty0 868--893, 2025.

\bibitem[Xiao et~al.(2017)Xiao, Rasul, and Vollgraf]{xiao2017fashion}
H.~Xiao, K.~Rasul, and R.~Vollgraf.
\newblock Fashion-{MNIST}: {A} novel image dataset for benchmarking machine learning algorithms, 2017.

\bibitem[Xifara et~al.(2014)Xifara, Sherlock, Livingstone, Byrne, and Girolami]{xifara2014langevin}
T.~Xifara, C.~Sherlock, S.~Livingstone, S.~Byrne, and M.~Girolami.
\newblock Langevin diffusions and the metropolis-adjusted langevin algorithm.
\newblock \emph{Statistics \& Probability Letters}, 91:\penalty0 14--19, 2014.

\bibitem[Xu and Matsuda(2020)]{xu2020stein}
W.~Xu and T.~Matsuda.
\newblock A {S}tein goodness-of-fit test for directional distributions.
\newblock In \emph{Proceedings of the 23rd International Conference on Artificial Intelligence and Statistics}, 2020.

\bibitem[Xu and Matsuda(2021)]{xu2021interpretable}
W.~Xu and T.~Matsuda.
\newblock Interpretable stein goodness-of-fit tests on riemannian manifolds.
\newblock In \emph{Proceedings of the 38th International Conference on Machine Learning}, 2021.

\bibitem[Yang et~al.(2018)Yang, Liu, Rao, and Neville]{yang2018goodness}
J.~Yang, Q.~Liu, V.~A. Rao, and J.~Neville.
\newblock Goodness-of-fit testing for discrete distributions via {S}tein discrepancy.
\newblock In \emph{Proceeding of the 35th International Conference on Machine Learning}, 2018.

\bibitem[Yu et~al.(2015)Yu, Zhang, Song, Seff, and Xiao]{yu15lsun}
F.~Yu, Y.~Zhang, S.~Song, A.~Seff, and J.~Xiao.
\newblock Lsun: Construction of a large-scale image dataset using deep learning with humans in the loop, 2015.

\bibitem[Zanella(2019)]{zanella2019informed}
G.~Zanella.
\newblock Informed proposals for local mcmc in discrete spaces.
\newblock \emph{Journal of the American Statistical Association}, 115\penalty0 (530):\penalty0 852--865, 2019.

\bibitem[Zhang et~al.(2020)Zhang, Peyr{\'e}, Fadili, and Pereyra]{zhang2020wasserstein}
K.~S. Zhang, G.~Peyr{\'e}, J.~Fadili, and M.~Pereyra.
\newblock Wasserstein control of mirror {L}angevin {M}onte {C}arlo.
\newblock In \emph{Proceedings of the 33rd Annual Conference on Learning Theory}, 2020.

\bibitem[Zhuo et~al.(2018)Zhuo, Liu, Shi, Zhu, Chen, and Zhang]{zhuo2018message}
J.~Zhuo, C.~Liu, J.~Shi, J.~Zhu, N.~Chen, and B.~Zhang.
\newblock Message passing {Stein} variational gradient descent.
\newblock In \emph{Proceedings of the 35th International Conference on Machine Learning}, 2018.

\end{thebibliography}

\end{document}